\documentclass[12pt]{article}

\usepackage{newtxtext,newtxmath}

\usepackage{graphicx}
\usepackage{longtable}

\usepackage[margin=1in,letterpaper]{geometry}

\renewenvironment{abstract}
	{\quotation}
	{\endquotation}

\date{}

\makeatletter
\renewcommand{\fnum@figure}{\textbf{Fig. \thefigure}}
\renewcommand{\fnum@table}{\textbf{Table \thetable}}
\makeatother

\usepackage{scicite}

\usepackage[hyphens]{url}
\usepackage{xcolor}
\newtheorem{theorem}{Theorem}
\newtheorem{lemma}{Lemma}
\newtheorem{proposition}{Proposition}
\newtheorem{corollary}{Corollary}
\newtheorem{assumption}{Assumption}

\newcommand{\qedsymbol}{\ensuremath{\square}}
\newenvironment{proof}{\par\noindent\textit{Proof.}\ }{\hfill\qedsymbol\par}

\usepackage{xcolor}

\newcommand{\nl}[1]{{\color{black} #1}}
\newcommand{\vs}[1]{{\color{black} #1}}

\newcommand{\edit}[1]{\textcolor{black}{#1}}

\usepackage[hidelinks,hypertexnames=false]{hyperref}

\newcommand{\scititle}{Neuromorphic Decision Dynamics for Deadlock-free Safe Navigation in Congested Environments}
\title{\bfseries\boldmath\scititle}
\author{
Himani~Sinhmar$^{1\ast}$,
Vaibhav~Srivastava$^{2}$,
Naomi~Ehrich~Leonard$^{1}$\and
\small$^{1}$Department of Mechanical and Aerospace Engineering, Princeton University, NJ, USA.\and
\small$^{2}$Department of Electrical and Computer Engineering, Michigan State University, MI, USA.\and
\small$^\ast$Corresponding author. Email: himani.sinhmar@princeton.edu
}

\begin{document}
\maketitle

\noindent\textbf{Short title:} Safe navigation in congested spaces


\begin{abstract}\bfseries\boldmath
Safe navigation in congested environments requires both collision avoidance and coordinated passage. We introduce neuromorphic decision dynamics (NDD), a decentralized, event-based controller, that provides navigation in congested spaces with guarantees on safety, liveness, robustness, and scalability. NDD makes \edit{safe-and-live deconfliction} a controllable, locally tunable emergent property of local agent interactions, without requiring communication, preassigned priorities, models of neighbors' objectives or future actions, or estimates of group size or global network properties. By design, NDD couples the dynamics of internal decision and attention states with changing physical states and locally observed relative motion. Each agent continuously modulates its speed along its prescribed path, 
responding as soon and as much 
as conflict resolution requires. Agents self-organize into disjoint, locally engaged groups that resolve conflicts concurrently. When any group's conflict becomes safety-critical, decision-action feedback destabilizes collective indecision, and coordinated conflict-resolution emerges 
that stabilizes a safe traversal order within the group. 
Order persists as conflicts clear, but groups may split or merge as agents autonomously adapt to new conflicts and those with resolved conflicts resume their nominal motion.  
We derive explicit conditions on sensing range, feedback gains, response time scales, and actuation limits that guarantee collision avoidance and traversal-order formation in finite time with no deadlock or livelock for finite, heterogeneous agent populations of arbitrary, changing size. Computational cost scales with the number of local interactions sensed by each agent. {Large-scale simulations, comparisons with existing approaches, and hardware experiments demonstrate NDD in complex congested scenarios and 
validate its scalability.}


\end{abstract}

\noindent\textbf{One-sentence summary:} Neuromorphic decision dynamics enable robust safe-and-live multi-robot navigation with tunable emergent collective behavior.



\noindent\textbf{Keywords:} Neuromorphic decision dynamics; decentralized control; multi-robot navigation; collision avoidance; deadlock avoidance; emergent coordination; \edit{safe-and-live deconfliction}. 

\section*{Introduction}
Autonomous systems are increasingly operating in environments populated by other intelligent agents where, as a single agent decides and acts, it simultaneously shapes the choices available to others.
For moving agents navigating shared physical spaces, any one agent's movement decisions may influence and be influenced by the movement decisions of other nearby agents.
Moreover, bottlenecks will create local competition over limited available space, such as when movers (pedestrians, vehicles, robots) heading in different directions, and without a common protocol, seek to safely pass by one another through narrow passages, intersections, and busy, congested spaces.

Resolving the competition is challenging  because it requires coordination, not only to ensure safe (collision-free) passage, but also to avoid deadlocks and livelocks. Deadlocks can {manifest}  as mutual yielding~\cite{grover2020symmetry} or motion freezing when an agent deems every forward motion unsafe~\cite{trautman2010unfreezing}. Livelock occurs when there is repeated motion adjustment without progress~\cite{berg2008reciprocal}.
{When agents fail to form a \textit{traversal order} through the contested region, they can land in a deadlock or livelock.} Even immediate locally feasible and safe actions can eventually lead to  deadlock or livelock~\cite{zhou2020distributed}.

\textit{\edit{Safe-and-live deconfliction}} refers to collective behavior that preserves both safety and liveness in contested environments. To the best of our knowledge, a general decentralized design of \edit{safe-and-live deconfliction} that jointly guarantees safety and liveness without preassigned traversal priorities remains challenging, even when agents can communicate.
{In decentralized settings, coordination requires each agent to account for \textit{interaction structure} including:} (i) the set of others with which the agent is in conflict, (ii) how much each conflict matters, and (iii) which agent involved in the conflict should pass through the conflict region first. \textit{In this paper, we address decentralized \edit{safe-and-live deconfliction} with guarantees by designing neuromorphic decision dynamics that leverage interaction structure on-the-fly.} Our solution addresses the harder problem in which agents have no communication, only partial observations of their neighbors' actions, limited actuation, and the number of agents is 
finite but otherwise arbitrary and time-varying.

Existing coordination methods~\cite{trautman2010unfreezing,sadigh2016planning,schwarting2019social,roh2021multimodal,nakao2025winding}  often seek to simplify the problem by separating the assessment of interactions from movement decisions: an agent \textit{first predicts} the intended behavior of its neighbors and \textit{then selects} an action conditioned on those predictions. This reduces the interactive multi-agent problem to a single-agent optimal control problem. However, the approach rests on the strong assumption that the interaction structure needed for coordination can be determined \emph{before} an action is selected and remains valid while the agents react to one another. Congested environments violate this assumption because the same agent can be involved in multiple conflicts simultaneously, which makes the interaction structure an outcome of the movement decisions it is meant to inform. \edit{For example, an agent cannot predict a neighbor's response from their shared interaction alone because that response may depend on the agent's own action, which has yet to be selected.}
{As a result, the traversal orders agents arrive at need not agree, leading to mutual yielding or repeated readjustment.} Moreover, if an agent predicts conservatively and plans against the worst admissible behavior of every neighbor, it may find no forward motion safe and freeze. Near-symmetric encounters introduce additional ambiguity because several choices remain equally likely based on the available observations. Anticipating the resulting configurations and planning for an appropriate response become combinatorially intractable~\cite{hopcroft1984complexity,yu2013structure}.

The behavior of human crowds shows that coordination can emerge from \textit{fast, differentiated responses to local motion cues}~\cite{helbing1995social,moussaid2011simple}. In busy spaces, people usually keep moving along their preferred routes while making small speed adjustments: they slow slightly to let another person pass, accelerate into an opening, or time their motion through a merge or unsignalized intersection. They rarely need to stop completely or make a discrete change of direction.
{Instead, continuous speed modulations create gaps, resolve local conflicts, and allow traversal order to emerge~\cite{helbing1995social,moussaid2011simple}.}
{Inspired by these studies, we define \textit{coordination cues} to be local motion cues, such as speeding up, slowing down, and small differences in projected arrival times to contested regions.}

We are thus motivated by the following scientific question: \emph{Can a fully decentralized feedback controller utilize transient, endogenous coordination cues to generate provably-robust \edit{safe-and-live deconfliction} in interacting groups of arbitrary and changeable size?} Our thesis is that to successfully exploit coordination cues that emerge from interaction, \edit{safe-and-live deconfliction} must be designed to emerge from the closed loop between the decisions and actions of agents and the transient cues they generate.
Deadlocks and livelocks should not be treated as configurations that agents must detect and repair. Instead, the controller should, by construction, make deadlocks and livelocks unstable states of evolving interactions,  so they are autonomously destabilized by agents' movements.

Recent work on collision-avoidance control policies based on control barrier functions (CBF) also identified unstable equilibria as the key mechanism behind deadlock-free behavior in multi-agent navigation~\cite{jankovic2023instinctive,jankovic2022instability}.  However, the analysis is descriptive rather than constructive, limited to small groups, and certified only with centralized information or by co-optimizing agents' actions. Under purely local, model-free control policies, the deadlock equilibria remain stable, so the mechanism does not carry over to the decentralized setting considered here. According to~\cite{jankovic2022instability} it is an open problem ``how to design a method that creates unstable equilibria and controls how agents depart from unresolved configurations.''

We address the open problem by introducing \emph{Neuromorphic Decision Dynamics} (NDD), a dynamical system in continuous time for decision formation that synthesizes controlled instability by coupling decisions and actions across the interacting group. NDD \emph{constructs} and \emph{certifies} \edit{safe-and-live deconfliction} as a \emph{controllable emergent property} of local interactions. \vs{Its activation--persistence--recovery cycle mirrors the response of an excitable neural dynamical system, motivating the term \emph{neuromorphic decision dynamics}.} Each agent makes decisions locally using anonymous, egocentric measurements of nearby agents' relative positions and velocities.
{Whereas communication- and schedule-based approaches coordinate through shared information~\cite{jager2001decentralized,gregoire2013priority,dresner2008multiagent,divaio2019distributed}, and prediction- and game-theoretic approaches model neighbors' behavior~\cite{trautman2015robot,sadigh2016planning,fridovich2019efficient,le2022algames}, NDD couples agents through their observed motion.} It requires no communication, common clock, synchronized updates, global information, or models of neighboring agents' control objectives or future actions. NDD can interface with different robotic platforms and low-level tracking controllers that satisfy the stated actuation and tracking conditions. \edit{NDD preserves each agent's preferred path, continuously modulating only its traversal speed as conflict evidence arises and decays.}

{For an interacting group of arbitrary and time-varying size, we show analytically that NDD induces a traversal order {and sufficient temporal spacing} by means of a controlled group-level instability.} Locally tunable control parameters enable the emergence of group-level positive feedback
through which agents' speed changes alter the conflict evidence driving their decisions. As conflicts become safety-critical, this positive feedback 
overcomes decision damping, destabilizing the collective-indecision equilibrium in which the group has not selected a traversal order.
{Infinitesimally small endogenous asymmetries in local motion cues can then select a stable \emph{differentiated} motion pattern that establishes this order (Fig.~\ref{fig:ndd_deconfliction}, A to F).}
Decision values rapidly depart from a resting state near zero and remain differentiated as long as the conflict persists. The resulting motion pattern creates temporal separation by accelerating earlier-arriving agents, slowing later-arriving agents, or coordinating both responses. As this separation grows, the evidence of conflict weakens, and the decision values return to rest, restoring nominal motion. This allows agents to \emph{adapt} continuously as new local evidence arises in rapidly changing environments.
{The response requires no discrete proceed-or-yield decisions or rule-based switching logic.}

We derive explicit conditions on sensing range, decision and control gains, and actuation limits that ensure the controller is tuned to exploit emerging coordination cues before they decay and is robust under bounded sensing noise, uncertainty in predicted relative motion, and actuation error. We validate deployability of our framework in hardware experiments and demonstrate scalability in large-scale simulations of complex multi-agent scenarios.

\section*{Results}
Notation is collected in Supplementary Tables~\ref{tab:principal_notation}--\ref{tab:notation_spectral}.
\subsection*{Neuromorphic decision dynamics and decentralized speed control}

Each agent augments its physical state with an internal decision state and an attention state. Let $\boldsymbol  q_i$ denote the planar position of agent $i$, $\boldsymbol  v_i$ its physical velocity, and $v_i$ its realized traversal speed along its nominal path. For a path with unit tangent $ \boldsymbol  t_i(\boldsymbol  q_i)$, $\boldsymbol  v_i=v_i\boldsymbol  t_i(\boldsymbol  q_i)$. We assume each nominal path is smooth, i.e., has a continuously varying tangent direction and no abrupt changes in heading.

The real-valued decision state $z_i\in\mathbb R$ encodes the instantaneous tendency of agent $i$ to speed up ($z_i>0$) or slow down ($z_i<0$). The attention state $u_i\geq 0$ sets the gain applied to the input that drives $z_i$. Agent $i$ forms the time-varying observation state $\mathcal X_i$ from its velocity $\boldsymbol  v_i$, egocentric measurements of the relative positions $\boldsymbol  q_j-\boldsymbol  q_i$ and relative velocities $\boldsymbol  v_j-\boldsymbol  v_i$ of each neighbor $j\in\mathcal N_i(t)$, where $\mathcal N_i(t)$ is the set of agents in agent $i$'s sensing range at time $t$. Each sensed neighbor defines one directed interaction. Let $\mathcal E_i(t)=\{e=(i,j):j\in\mathcal N_i(t)\}$ be the interaction set of agent $i$ at time $t$, and let $\mathcal X_i^e$ contain the pairwise observations associated with interaction $e$. An interaction is \emph{conflicting} at time $t$ if, should the agents continue with their current motion, their spatial separation can fall below the required minimum $d^{\mathrm{safe}}>0$. Agent $i$ assigns to each {conflicting} interaction a signed evidence contribution whose magnitude reflects the spatial and temporal urgency of the potential conflict, weighted relative to its other simultaneous interactions, and whose sign indicates whether the interaction favors proceeding or yielding. Aggregating the contributions from all sensed {conflicting} interactions gives agent $i$'s scalar conflict evidence $h(\mathcal X_i)$.

\edit{Each agent's sensing radius must meet the lower bound in Proposition~\ref{prop:fixed_component_edge_safety}. We conservatively analyze a common radius equal to this bound, ensuring reciprocal sensing; agents with larger radii can begin responding earlier.}

\subsubsection*{Neuromorphic decision dynamics}
NDD builds on the nonlinear opinion dynamics (NOD) framework~\cite{gray2018multiagent,bizyaeva2023nonlinear,leonard2023fast}. The defining advance of NDD is the tight coupling of decision and physical dynamics: the nonlinear input $h(\mathcal X_i)$ driving each agent's decision dynamics depends implicitly on the decision states of other agents through the motion they generate, rather than through an explicit communication network. Denote $\dot x:= dx/dt$.  Agent $i$ evolves its decision and attention states 
continuously over time according to
\begin{subequations}
\label{eq:ndd}
\begin{align}
\tau_z\dot z_i
&=
-\rho_z z_i+S\bigl(u_i\,h(\mathcal X_i)\bigr),
\label{eq:ndd:decision}\\
\tau_u\dot u_i
&=
-\rho_u u_i+u_0+\kappa_u z_i^2.
\label{eq:ndd:attention}
\end{align}
\end{subequations}
Here, $S$ is a bounded odd sigmoid with $S'(0)=1$; we use $S(x)=\tanh(x)$. The parameters $\tau_z$ and $\tau_u$ set the decision and attention time scales, and $\rho_z$ and $\rho_u$ set their decay rates. The ratio $u_0/\rho_u$ is the basal attention level. The coefficient $\kappa_u$ sets the strength of decision-dependent self-reinforcement during decision formation. 

\subsubsection*{Decentralized control policy}
Agent $i$'s decision state adjusts its commanded traversal speed without changing its path:
\begin{subequations}
\label{eq:speed}
\begin{align}
v_i^{\mathrm{cmd}}
&= v_i^{\mathrm{nom}} + V(z_i),
\qquad V(0)=0,
\label{eq:cmd_speed}\\
\tau_v\dot{v}_i
&= f_{v}\left(v_i,v_i^{\mathrm{cmd}}\right),
\label{eq:speed_response}
\end{align}
\end{subequations}
where $v_i^{\mathrm{nom}}$ is agent $i$'s nominal speed provided by the upstream planner and $v_i$ is its realized traversal speed along the nominal path. The map $V$ is continuous, bounded, monotonically increasing, and sign-preserving, so positive decisions raise the commanded speed and negative decisions lower it (see the decision and speed trajectories in Fig.~\ref{fig:ndd_deconfliction}, C and D, respectively). Equation~\eqref{eq:speed_response} represents the low-level speed tracking, with time scale $\tau_v>0$ and agent-specific tracking dynamics $f_v$ that account for actuation limits. This formulation accommodates heterogeneous platforms and low-level controllers, provided they track admissible speed commands with a known bounded transient time. Because the decision input and speed command depend only on locally available observations, the feedback control policy remains fully decentralized.

{Figure~\ref{fig:overview} illustrates the closed-loop architecture, which composes planning, sensing, embodied decision dynamics, and physical control across distinct time scales. Within this hierarchy, NDD serves as a modular, event-based dynamic controller that resolves generic encounters using speed modulation along the prescribed path, invoking computationally intensive path replanning only sparsely---such as for direct head-on encounters. Much like human pedestrians who maintain their routes while adjusting pace, NDD bypasses the inefficiency and intractability of jointly replanning trajectories for every encounter (see Supplementary Text S1 for a practical implementation guide).}

\subsection*{Conflict evidence from local observations}

\paragraph{Contested region of an interaction}
For each interaction $e=(i,j)\in\mathcal E_i(t)$, agent $i$ uses its 
 observation $\mathcal X_i^e(t)$ of its position and velocity and agent $j$'s relative position and velocity at time $t$ to construct the pairwise contested region at time $t$.  
 The instantaneous projected heading of agent $i$ (agent $j$) is the ray that originates at agent $i$'s (agent $j$'s) position and extends along its velocity vector at time $t$, {also given by the path tangent $\mathbf{t}(\mathbf{q}_i)$ $(\mathbf{t}(\mathbf{q}_j))$.}
 The intersection of these rays defines the geometric crossing point $\boldsymbol  q_{\mathrm c}^e$ and crossing angle $\alpha^e$ (Fig.~\ref{fig:contested_regions}, A).
The contested segment on each ray is the set of points along the ray that are a distance less than $d^{\mathrm{safe}}$ from the other ray.
This conservative construction includes every point where the projected paths could violate the required spatial separation. The two segments form the contested region of interaction $e$ at time $t$. 

For each agent $k\in\{i,j\}$, upstream and downstream endpoints, $k^-$ and $k^+$, lie at a distance $r_{\mathrm c}^e=d^{\mathrm{safe}}/\sin\alpha^e$ from $\boldsymbol  q_{\mathrm c}^e$ and define the entry and exit boundaries, respectively (Fig.~\ref{fig:contested_regions}, B). The along-ray distance and speed of agent $k$ determine its projected entry and exit times, $t_{k^-}^{e}$ and $t_{k^+}^{e}$. These times define the occupancy interval $\mathcal I_k^e=[t_{k^-}^{e},t_{k^+}^{e}]$. The interaction is conflicting when the two occupancy intervals overlap, $\mathcal I_i^e\cap\mathcal I_j^e\neq\varnothing$. Overlap indicates that the agents can simultaneously occupy their contested segments and may violate the required clearance. Disjoint occupancy intervals instead define a safe temporal traversal order. Shallower crossing angles increase $r_{\mathrm c}^e$ and therefore lengthen both the contested segments and their projected occupancy intervals.

By assumption, if $e=(i,j) \in \mathcal E_i(t)$, then $\bar e = (j,i) \in\mathcal E_j(t) $, and $j$ will calculate the same contested region as $i$ at time $t$  from its observation $\mathcal X_j^{\bar e}(t)$. 

At a given time $t$, an agent may participate in several interactions and therefore encounter several pairwise contested regions along its projected path.

\paragraph{Local conflict evidence}
At time $t$, agent $i$ computes its conflict evidence by aggregating  signed contributions from its simultaneous interactions:
\begin{equation}
h(\mathcal X_i)
=
\sum_{e\in\mathcal E_i(t)}
P(\mathcal X_i^e)\,
W(\mathcal X_i^e;\mathcal X_i)\,
G(\mathcal X_i^e).
\label{eq:ndd:evidence}
\end{equation}
Each factor in Equation~\eqref{eq:ndd:evidence} has a distinct role. {See Materials and Methods (\textit{Smooth speed-control and conflict-evidence design}) for details on constructing these factors from local kinematics.}
 
 The conflict-pressure function $P(\mathcal X_i^e)\in\mathbb R_{\geq0}$ quantifies the {severity} of interaction $e$. It increases when the projected closest-approach separation decreases or when the temporal overlap of the occupancy intervals grows, thereby distinguishing weak from safety-critical interaction regimes (\edit{Lemmas~\ref{prop:uniform_pressure_bands} and~\ref{prop:pressure_response_tuning}}).

The pressure-weight function $W(\mathcal X_i^e;\mathcal X_i)\in[0,1]$ assigns relative importance to agent $i$'s simultaneous interactions. It normalizes the weights so that $\sum_{e\in\mathcal E_i(t)}W(\mathcal X_i^e;\mathcal X_i)=1$. A softmax over the conflict pressures is a candidate choice for $W$.

The arrival-order function $G(\mathcal X_i^e)\in[-1,1]$ assigns the sign. 
It maps the entry-time difference $t_{j^-}^{e}-t_{i^-}^{e}$ to a positive value when the instantaneous kinematics favor agent $i$ traversing first, a negative value when they favor agent $j$, and zero at a tie. Choosing $G$ to be odd gives $G(\mathcal X_j^{\bar e})=-G(\mathcal X_i^e)$ for $\bar e=(j,i)$.

We define the interaction weight as $\omega_i^e(t)=P(\mathcal X_i^e)\,W(\mathcal X_i^e;\mathcal X_i)$, combining the severity of conflict $e$ with its relative importance among agent $i$'s simultaneous sensed interactions. The product $\omega_i^e(t)G(\mathcal X_i^e)$ is the weighted arrival-order cue contributed by interaction $e$. Summing these signed contributions gives agent $i$'s conflict evidence, $h(\mathcal{X}_i)=\sum_{e\in\mathcal E_i(t)}\omega_i^e(t)G(\mathcal X_i^e)$.

\subsection*{Graph formulation of \edit{safe-and-live deconfliction}}
\label{sec:results_graph_decomposition}

{Physical conflicts} define the time-varying weighted directed conflict graph
$$\mathcal G(t)=(\mathcal V(t),\mathcal E^{\mathrm{conf}}(t),\boldsymbol \omega(t)),$$
where $\mathcal V(t)$ contains the agents in conflicts, $\mathcal E^{\mathrm{conf}}(t)$ contains the directed interactions associated with those conflicts, and $\boldsymbol \omega(t)$ collects the weights $\omega_i^e(t)$ for all directed interactions $e=(i,j)\in\mathcal E^{\mathrm{conf}}(t)$. Under the convention $e=(i,j)$, an edge $i\rightarrow j$ identifies a physical conflict for which the evidence of agent $i$ is influenced by the motion of agent $j$. The graph records the interactions that currently impose traversal-order constraints (see Figure~\ref{fig:graph_hierarchy}). 

\paragraph{Asymmetric self-allocation of responsibility for conflict resolution}
Agents self-allocate conflict-resolution responsibility through their locally normalized interaction weights, which can be asymmetric even at a pairwise arrival-time tie ($t_{i^-}^{e}=t_{j^-}^{e}$).  Each physical conflict generates reciprocal interactions $e=(i,j)$ and $\bar e=(j,i)$ with shared geometry and pressure. Away from a tie, the reciprocal arrival-order values have opposite signs. A tilted softmax (Equations~\eqref{eq:tilted_pressure}--\eqref{eq:pressure_softmax}) further biases responsibility toward the later-arriving agent. At a pairwise tie, both arrival-order values are zero, but the weights $\omega_i^e$ and $\omega_j^{\bar e}$ can still differ because each agent normalizes against its own simultaneous interactions. Thus, the same conflict can receive different levels of responsibility from the two agents without communication, preassigned priorities, or global normalization. 
In Fig.~\ref{fig:graph_hierarchy}, B, agent~7 weights its dominant conflict with agent~2 highly, whereas agent~2 prioritizes conflicts with agents~1 and~3.

\paragraph{Self-organization into locally engaged groups}
Every conflict edge in $\mathcal G(t)$ has its reciprocal, and weights on edges between a pair are zero only when the pair has no conflict. Thus, $\mathcal G(t)$ will be the union of strongly connected components, each
defining a \emph{locally engaged group} (see Fig.~\ref{fig:graph_hierarchy}, C). \edit{Strong connectivity means that every agent in the group can influence every other agent through a directed path.} 
Distinct groups have no physical conflicts between them, although they may be weakly coupled through sensed nonconflicting interactions, including those involving other agents.
Under the admissible weak intergroup-influence conditions of Supplementary Sec.~\ref{sec:supp_spectral_tuning}, groups can negotiate and resolve their conflicts concurrently without predefined group membership or external coordination.

A \emph{group tie} occurs when every {physical conflict} in the group is pairwise tied, so all {conflict} contributions vanish. 
The Results subsection \textit{Collective interaction dynamics} shows how NDD destabilizes a group tie and stabilizes a safe traversal order.

\paragraph{Desired emergent \edit{safe-and-live deconfliction} behavior}
The desired emergent behavior has three features: 1) every {conflicting pair} remains collision-free and establishes a strict, robust traversal order through its contested region; (2) the resulting pairwise orders are mutually consistent within each locally engaged group, meaning that the traversal-order graph formed by \edit{orienting each physical-conflict pair from the agent selected to pass first to the agent that follows} is acyclic; and (3) \edit{all conflicts in each admissible episode clear in finite time}. The safe-speed and safe-decision sets defined in Materials and Methods (\textit{Formulation of \edit{safe-and-live deconfliction}}) formalize the pairwise temporal-clearance requirement, while the following analysis shows how local NDD interactions establish the required traversal orders, make them mutually consistent within each group, and ensure finite progress across the population.

\subsection*{Collective interaction dynamics} 
\label{sec:results_collective_dynamics}
For a locally engaged group of $N$ agents, let $\boldsymbol  q$, $\boldsymbol  v$, $\boldsymbol  z$, $\boldsymbol  u$, and $\boldsymbol  h$ denote the stacked positions, realized speeds, decision states, attention states, and conflict evidence. The collective closed-loop dynamics are
\begingroup
\begin{equation}
\label{eq:stacked_ndd}
\begin{aligned}
\tau_z\dot{\boldsymbol  z}
&=
-\rho_z\boldsymbol  z
+
S\left(
\boldsymbol  u\odot
\boldsymbol  h\left(\mathcal X(\boldsymbol  q,\boldsymbol  v)\right)
\right),
\\
\tau_u\dot{\boldsymbol  u}
&=
-\rho_u\boldsymbol  u
+u_0\boldsymbol  1
+\kappa_u\left(\boldsymbol  z\odot\boldsymbol  z\right),
\\
\tau_v\dot{\boldsymbol  v}
&=
\boldsymbol  f_v\left(
\boldsymbol  v,
\boldsymbol  v^{\mathrm{nom}}+\boldsymbol  V(\boldsymbol  z)
\right),
\end{aligned}
\end{equation}
\endgroup
where $\odot$ denotes the Hadamard product, $S$ and $V$ act componentwise, and $\boldsymbol  f_v$ stacks the agent-specific tracking dynamics $f_v$. The commanded speed vector is $\boldsymbol  v^{\mathrm{nom}}+\boldsymbol  V(\boldsymbol  z)$, which the realized speed vector $\boldsymbol  v$ tracks. The positions evolve along the nominal paths according to $\dot{\boldsymbol  q} = \operatorname{col}\left( v_1{\boldsymbol  t}_1(\boldsymbol {q}_1), \ldots, v_N{\boldsymbol  t}_N(\boldsymbol  q_N) \right)$.

{These equations form a multitime-scale continuous-time decision ($\boldsymbol z$)--action ($\boldsymbol v$)--perception ($\mathcal X$) closed feedback loop. Decisions change commanded speeds through $\boldsymbol  V(\boldsymbol  z)$, the tracking dynamics produce realized motion, and that motion changes the observations and conflict evidence driving subsequent decisions. Attention regulates the response to this evidence. Agents are coupled through locally sensed motion without exchanging internal states or intended actions. For a sensed interaction $e=(i,j)$, this coupling follows} \( (z_i,z_j)\longrightarrow(v_i,v_j)\longrightarrow\mathcal X_i^e \longrightarrow h_i(\mathcal X_i)\longrightarrow z_i, \) with $(u_i,u_j)$ regulating the decision responses.

\paragraph{Collective recurrent-network interpretation} The closed loop can therefore be viewed as an embodied continuous-time recurrent network. Each agent is a dynamical node with state $(z_i,u_i,v_i)$, and {sensed interactions} form directed edges whose strengths and connectivity evolve with agent motion. The leaky decision and attention dynamics and stable speed tracking retain and gradually attenuate the effects of past evidence and commands, giving the network a fading memory. The same instantaneous observation can therefore produce different responses depending on the internal state inherited from the interaction history.

\paragraph{Time-scale separation and emergence of traversal order} The decision--action feedback must establish a traversal order before the agents traverse an appreciable fraction of the distance to their contested segments. The characteristic response time of the coupled decision, attention, and speed dynamics is
$T_{\mathrm{fast}}=\max\{ \tau_z/\rho_z, \tau_u/\rho_u, \tau_v\}$. Let $\mathcal I^{\mathrm{neg}}=[t_0,t_0+T^{\mathrm{neg}}]$ denote the negotiation interval, from conflict detection at $t_0$ until a traversal order is established at $t_0+T^{\mathrm{neg}}$. Successful negotiation requires
\begin{equation}
T_{\mathrm{fast}}
\lesssim
T^{\mathrm{neg}}
\ll
T_{\mathrm{trav}},
\label{eq:time_scale_separation}
\end{equation}
where $T_{\mathrm{trav}}$ is the conservative travel time available from conflict detection before any agent in the group enters its contested segment. 
The first relation expresses a response-time lower bound on the negotiation duration $T^{\mathrm{neg}}$, during which the agents establish a traversal order
through decision-induced speed adjustments and feedback from the resulting arrival-order cues. 
The second relation ensures that this order is established before agents travel an appreciable fraction of the distance to their contested segments.
\edit{Actuation limits constrain how quickly speed differences develop, so slower actuation requires earlier detection and a larger sensing range.} \edit{Lemmas~\ref{prop:pressure_response_tuning} and~\ref{prop:sensing_range_detection_margin}} quantify these joint response and sensing requirements.


To show how traversal-order forms on the fast time scale, we hold the traversal distances and the set of sensed interactions fixed, while retaining the dependence of conflict evidence on the evolving realized speeds and, through the decision-to-speed map, on decisions. 
Under the conditions of \edit{Lemma~\ref{prop:pressure_response_tuning}}, 
pressure on conflicting edges remains high and varies little throughout the negotiation time interval. This sustained pressure supports the decision--action feedback as decision, attention and speed dynamics evolve and agents develop differentiated projected entry times into their contested regions. 

We next derive the interaction-sensitivity matrix, which quantifies how decision-induced speed changes affect conflict evidence $h(\mathcal X)$. We then analyze the equilibria of the collective interaction dynamics to show how agents in a group establish a safe traversal order, where safety is ensured by sufficiently large temporal spacing between successive agent entry times into the contested region.

\paragraph{Interaction sensitivity} {Given an observation snapshot $\mathcal X$, let $\mathcal X_{\boldsymbol  z}$ denote the observations at realized speeds $\boldsymbol  v^{\mathrm{nom}}+\boldsymbol  V(\boldsymbol  z)$.}
The Jacobian of the conflict evidence with respect to the decision states defines the \emph{interaction-sensitivity matrix}
\begin{equation}
\boldsymbol  A(\mathcal X,\boldsymbol  z)
:=
D_{\boldsymbol  z}
\boldsymbol  h(\mathcal X_{\boldsymbol  z})
=
D_{\boldsymbol  v}
\boldsymbol  h(\mathcal X_{\boldsymbol  z})
D_{\boldsymbol  z}\boldsymbol  V(\boldsymbol  z).
\label{eq:evidence_jacobian}
\end{equation}
Because $\boldsymbol  V$ acts componentwise, $D_{\boldsymbol  z}\boldsymbol  V(\boldsymbol  z)=\operatorname{diag}\left(V'(z_1),\ldots,V'(z_N)\right)$, and the individual sensitivities are
\begingroup
\begin{equation*}
\begin{aligned}
A_{ik}(\mathcal X,\boldsymbol  z)
&=
V'(z_k)
\frac{\partial h(\mathcal X_i)}{\partial v_k}=
V'(z_k)
\sum_{e\in\mathcal E_i}
\left(
G(\mathcal X_i^e)
\frac{\partial\omega_i^e}{\partial v_k}
+
\omega_i^e
\frac{\partial G(\mathcal X_i^e)}{\partial v_k}
\right).
\end{aligned}
\end{equation*}
\endgroup
The diagonal entry $A_{ii}$ is the \emph{proprioceptive sensitivity} of agent $i$'s evidence to its own decision-induced motion. An off-diagonal entry $A_{ik}$, $k\neq i$, is the \emph{exteroceptive sensitivity} to agent $k$'s decision-induced motion. Because the observations, pressures, and weights evolve with the collective state, the interaction-sensitivity matrix is itself state-dependent.

\paragraph{Destabilization of collective indecision and stabilization of traversal order}
When a locally engaged group reaches a group tie, the arrival-order evidence provides no preference for either speeding up or slowing down. The state $\boldsymbol  z=\boldsymbol 0$, at which the agents retain their nominal speeds, is therefore an equilibrium that we call \emph{collective indecision}. Inherited nonzero decisions decay toward this equilibrium while it is stable. However, as conflict pressure rises above an implicit threshold, determined by design parameters, collective indecision loses stability and equilibria with differentiated decision states emerge through a local bifurcation. The resulting differentiated decisions drive differentiated speed changes, which separate the agents' projected entry times and establish a {traversal order} through the contested region with safe spacing. The order persists during safety-critical interactions. Once the agents clear their contested regions, the conflict evidence decreases, the decision states relax toward zero, and the agents return to their nominal speeds.
 
This behavior prevents two unresolved collective responses: \emph{deadlock}, which occurs when all agents uniformly yield and progress stops, and \emph{livelock}, which occurs when the agents continue adjusting their motion without sustaining a traversal order. A stable traversal order, established by the differentiated equilibrium of the collective interaction dynamics, ensures \textit{liveness} by excluding both behaviors, while sufficient relative temporal separation ensures \textit{safety} throughout traversal.

We determine when collective indecision loses stability and stable differentiated decision patterns emerge using a local bifurcation analysis of the collective interaction dynamics in Equation~\eqref{eq:stacked_ndd}. Let $\boldsymbol \Theta$ collect the tunable parameters of the conflict-evidence functions, decision and attention dynamics, and decision-to-speed map. For a fixed observation snapshot $\mathcal X$, the equilibria satisfy
\begin{equation}
\boldsymbol  0
=
\boldsymbol  F(\boldsymbol  z;\mathcal X,\boldsymbol \Theta)
:=
-\rho_z\boldsymbol  z
+
S\left(
\left(
\frac{u_0}{\rho_u}\boldsymbol  1
+
\frac{\kappa_u}{\rho_u}
(\boldsymbol  z\odot\boldsymbol  z)
\right)
\odot
\boldsymbol  h(\mathcal X_{\boldsymbol  z})
\right).
\label{eq:reduced_steady_state}
\end{equation}
At a group tie $\boldsymbol  h=\boldsymbol 0$, so the collective-indecision equilibrium $\left( \boldsymbol  z^\ast, \boldsymbol  u^\ast, \boldsymbol  v^\ast \right) = \left( \boldsymbol  0, \frac{u_0}{\rho_u}\boldsymbol  1, \boldsymbol  v^{\mathrm{nom}} \right)$ is a solution of~Equation~\eqref{eq:reduced_steady_state}. Varying $\boldsymbol \Theta$ changes the locations and stability of the equilibria. {Therefore, identifying the parameter values at which collective indecision changes stability provides a basis for tuning NDD so that weak interactions preserve collective indecision and safety-critical interactions trigger decision-making.} {Local stability of collective indecision is determined by the eigenvalues of the Jacobian evaluated at collective indecision:}
\begin{equation}
D_{\boldsymbol  z}\boldsymbol  F(\boldsymbol  0;\mathcal X,\boldsymbol \Theta)
=
-\rho_z\boldsymbol  I
+
S'(0)\frac{u_0}{\rho_u}\boldsymbol  A(\mathcal X_z,\boldsymbol  0).
\label{eq:steady_state_jacobian}
\end{equation}
{The Jacobian in Equation~\eqref{eq:steady_state_jacobian} is an affine function of the interaction-sensitivity matrix $\boldsymbol  A$ computed at $\boldsymbol  z^\ast = \boldsymbol  0$, so its eigenvalues follow directly from those of $\boldsymbol  A$. A simple eigenvalue Equation~\eqref{eq:steady_state_jacobian} crossing zero marks the stability boundary above which collective indecision is destabilized, and equilibrium branches with differentiated decision states can emerge.} If $\lambda$ is an eigenvalue of $\boldsymbol  A$ with right eigenvector $\boldsymbol \phi$, the corresponding eigenvalue of~Equation~\eqref{eq:steady_state_jacobian} is $-\rho_z+S'(0)(u_0/\rho_u)\lambda$, which crosses zero at
\begin{equation}
\lambda^\ast
:=
\frac{\rho_z\rho_u}{S'(0)u_0}.
\label{eq:formation_spectral_condition}
\end{equation}
{For reciprocal directed interactions, the interaction-sensitivity matrix $\boldsymbol  A(\mathcal X_z,\boldsymbol  0)$ generically has a simple dominant eigenvalue $\lambda_{\max}$ (\edit{Proposition~\ref{prop:interaction_sensitivity_spectrum}}), i.e., $\lambda_{\max}$ will be the first  eigenvalue of $\boldsymbol  A(\mathcal X_z,\boldsymbol  0)$ to  reach $\lambda^\ast$. For $\lambda_{\max}<\lambda^\ast$, decision damping dominates the first-order {(positive)} interaction feedback. At $\lambda_{\max}=\lambda^\ast$, the two effects balance, causing the steady-state Jacobian $D_{\boldsymbol  z}\boldsymbol  F(\boldsymbol  0)$ to become singular and marking the bifurcation point.}

{ To describe the equilibria near the bifurcation point, we define the parameter $\mu:=\lambda_{\max}-\lambda^\ast$ and let \(\bar{\boldsymbol \tau}\) collect the projected entry-time offsets at collective indecision. The nearby steady-state condition is then $\boldsymbol  F(\boldsymbol  z;\mu,\bar{\boldsymbol \tau})=\boldsymbol 0$. At the bifurcation point, differentiated equilibrium branches emerge tangent to the right eigenvector $\boldsymbol \phi_{\max}$ associated with $\lambda_{\mathrm max}(\boldsymbol  A)$. The Lyapunov--Schmidt reduction in Supplementary Sec.~\ref{sec:supp_ls_split} characterizes these solution branches near the collective-indecision equilibrium \(\boldsymbol  z^\ast=\boldsymbol 0\) as
\begin{equation}
\boldsymbol  z^\dagger(\mu,\bar{\boldsymbol \tau})
=
a^\dagger(\mu,\bar{\boldsymbol \tau})
\boldsymbol \phi_{\max}
+\mathcal O\!\left((a^\dagger)^2+\|\bar{\boldsymbol\tau}\|\right),
\label{eq:main_differentiated_branch}
\end{equation}
where $a^\dagger$ is an algebraic function directly related to $\mu$ and $\bar \tau$. The higher-order terms in~\eqref{eq:main_differentiated_branch} govern the shape, saturation, and local stability of the emerging branches.
\edit{Proposition~\ref{prop:interaction_sensitivity_spectrum}} establishes that $\boldsymbol \phi_{\max}$ has both positive and negative components. Equation~\eqref{eq:main_differentiated_branch} therefore implies that, for sufficiently small nonzero \(a^\dagger\), the emerging decision state $\boldsymbol  z^{c,\dagger}$ is differentiated. \edit{To leading order, $a^\dagger\phi_{\max,i}$ determines agent $i$'s decision: positive and negative values command speeds above and below nominal, respectively, and the absolute value sets the decision magnitude.} For $\lambda_{\max}>\lambda^\ast$, the collective-indecision equilibrium $\boldsymbol z^\ast=\boldsymbol  0$ is unstable, and the differentiated equilibrium branch $\boldsymbol  z^\dagger \neq \boldsymbol  0$ is locally attracting (see the local bifurcation analysis in Supplementary Sec.~\ref{sec:supp_local_bifurcation}).}

{Generic closed-loop asymmetries---such as unequal projected distances or entry times, measurement variations, asynchronous updates, or differences in tracking transients---perturb the fast state away from the collective indecision equilibrium $\boldsymbol z^\ast=\boldsymbol  0$, select an attracting differentiated branch with equilibrium $\boldsymbol  z^\dagger \neq \boldsymbol  0$, and bring the fast state into its attraction neighborhood within $T^{\mathrm{neg}}$. Only an exactly symmetric collective state initialized precisely at $\boldsymbol  z^\ast=\boldsymbol  0$ would remain at the unstable indecision equilibrium, and this is practically impossible since any small asymmetry or variation drives it away.} Through $\boldsymbol  V$, a collective decision state $\boldsymbol  z^\dagger \neq \boldsymbol  0$ produces unequal commanded speeds. \edit{Propositions~\ref{prop:strict_acyclic_traversal_order} and~\ref{prop:fixed_component_edge_safety}} show that these differences produce a mutually consistent traversal order with safe temporal separation, while Theorem~\ref{thm:fixed_component_safe_order} establishes finite progress without deadlock or livelock.

{As the interaction evolves, changes in $\mathcal X_{\boldsymbol  z}$ alter the pairwise pressures and normalized weights, and hence $\boldsymbol  A$ and its leading eigenpair $(\lambda_{\max},\boldsymbol \phi_{\max})$. We tune the pressure, weighting, and basal-attention parameters, just once and in advance, so that safety-critical interactions autonomously drive $\lambda_{\max}$ across $\lambda^\ast$, destabilizing collective indecision whenever conflict resolution requires differentiation (Theorem~\ref{thm:autonomous_spectral_tuning}).}

\subsection*{Finite-time guarantees for decentralized \edit{safe-and-live deconfliction}}
\label{sec:results_safe_deconfliction}
\edit{The following theorem states the safety and finite-progress guarantees for populations in which every agent implements NDD.}

\paragraph{Admissible conflict episodes and scope}
\edit{We consider a cooperative population in which each agent follows its prescribed path and local control law, without inter-agent communication or access to neighbors' internal states. A conflict episode---a maximal time interval with at least one active physical conflict --- is \emph{admissible} if its conflicts are physically resolvable by speed modulation and each agent senses a uniformly bounded number of neighbors at any time.} Admissibility also requires: 1) crossing angles bounded away from zero, to exclude head-on encounters; 2) bounded tracking errors, actuation delays, and stopping distances; 3) a positive minimum traversal speed attainable by every leading mover; and 4) finitely many conflict events without Zeno switching (Assumption~\ref{ass:episode}). 
Finally, traversal-order negotiation also requires a 
small symmetry-breaking cue---arising naturally from asynchronous updates, heterogeneous actuation or tracking, measurement noise, or geometric asymmetries---to generate the entry-time offsets that select a differentiated equilibrium branch.

\begin{theorem}\textnormal{\textbf{Decentralized continuous-time globally \edit{safe-and-live deconfliction}.}}
\label{thm:finite_time_safe_deconfliction}
\edit{For any generic admissible conflict episode and the joint design conditions of Supplementary Sec.~\ref{sec:supp_joint_design} for a finite, possibly time-varying population of heterogeneous agents evolving in continuous time, there exist finite sensing radii and fixed local controller parameters, independent of population and engaged-group sizes, that guarantee:}
\begin{enumerate}
\item \emph{Decentralized and scalable operation:} \edit{each agent uses only its state and local relative-motion measurements, with computation linear in its bounded number of sensed interactions. No communication, centralized scheduling, online group-state or interaction-structure estimation, or combinatorial traversal-order assignment is required.}

\item \emph{Continuous safety:} all agents maintain the required separation for all time:
$\|\boldsymbol  q_i(t)-\boldsymbol  q_j(t)\|\geq d^{\mathrm{safe}}$,
for all $t$ and for all $\{i,j\}$.

\item \emph{\edit{Emergence of traversal orders:}}
\edit{For each unresolved group whose physical-conflict set remains fixed, the group establishes, during the time interval $T^{\mathrm{neg}}$, a strict acyclic traversal order with the required temporal-clearance margins. Each pairwise traversal order persists until its conflict clears. Physical clearance and group splitting preserve established orders on remaining conflicts. After new conflict activations, including those involving agents entering the population or merging groups, the emergent traversal order continuously adapts while preserving safety; the same safe-and-live deconfliction guarantees apply to the updated conflict set under the same conditions.}

\item \emph{Finite progress and termination:} \edit{All conflicts in the episode clear in finite time. Distinct groups and leading movers that share no physical conflict may progress concurrently.}

\item \emph{Release to nominal motion:} \edit{Once all of agent $i$'s physical conflicts in the episode clear, its states converge as residual evidence vanishes, provided no new conflicts involving $i$ arise:}
\[
z_i(t)\to0,
\qquad
u_i(t)\to\frac{u_0}{\rho_u},
\qquad
v_i(t)\to v_i^{\mathrm{nom}}.
\]
\end{enumerate}
\end{theorem}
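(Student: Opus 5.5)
The proof is an assembly argument. It stitches together the component results the paper has already referenced: Lemmas~\ref{prop:uniform_pressure_bands}, \ref{prop:pressure_response_tuning} and \ref{prop:sensing_range_detection_margin}, Propositions~\ref{prop:interaction_sensitivity_spectrum}, \ref{prop:strict_acyclic_traversal_order} and \ref{prop:fixed_component_edge_safety}, and Theorems~\ref{thm:autonomous_spectral_tuning} and \ref{thm:fixed_component_safe_order}. The stitching runs along the finite sequence of conflict events guaranteed by Assumption~\ref{ass:episode}.

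\textbf{Fixing parameters.} I would first fix the parameters once and independently of $N$. Take the common sensing radius equal to the lower bound of Proposition~\ref{prop:fixed_component_edge_safety}. Choose $(\tau_z,\rho_z,\tau_u,\rho_u,u_0,\kappa_u,\tau_v)$ and the pressure, weight and decision-to-speed parameters to satisfy the joint design conditions of Supplementary Sec.~\ref{sec:supp_joint_design}. These conditions are stated per edge and per agent: pressure bands, the time-scale separation \eqref{eq:time_scale_separation}, and the spectral threshold $\lambda^\ast$. They involve only the bounded neighbor count, the crossing-angle lower bound, and the actuation and tracking bounds, never the group size.

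Claim 1 is then immediate from the form of \eqref{eq:ndd}--\eqref{eq:speed} and the sum \eqref{eq:ndd:evidence}. Each edge costs $O(1)$, so the cost is linear in $|\mathcal E_i(t)|$, which is bounded. Claim 5 follows from a cascade argument. Once $i$ has no conflicts, $h(\mathcal X_i)\to0$, so the $z_i$-equation is an exponentially stable linear system with a vanishing input, giving $z_i\to0$. Then $u_i\to u_0/\rho_u$ because the input $\kappa_u z_i^2\to0$. Finally $v_i\to v_i^{\mathrm{nom}}$ by stable tracking, since $V(0)=0$.

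\textbf{Inter-event intervals.} Next I would partition the episode into finitely many inter-event intervals $[t_k,t_{k+1})$, each with a fixed physical-conflict set and hence fixed locally engaged groups. On each interval and for each group:
\begin{itemize}
\item Lemma~\ref{prop:sensing_range_detection_margin} guarantees detection early enough that $T_{\mathrm{trav}}\gg T^{\mathrm{neg}}$.
\item Lemma~\ref{prop:pressure_response_tuning} keeps pressure in the safety-critical band throughout.
\item Theorem~\ref{thm:autonomous_spectral_tuning} therefore drives $\lambda_{\max}>\lambda^\ast$.
\item Proposition~\ref{prop:interaction_sensitivity_spectrum} and the Lyapunov--Schmidt analysis yield a locally attracting differentiated branch \eqref{eq:main_differentiated_branch} with sign-mixed $\boldsymbol\phi_{\max}$, selected by the genericity (symmetry-breaking) hypothesis.
\item Proposition~\ref{prop:strict_acyclic_traversal_order} converts it into a strict acyclic order within $T^{\mathrm{neg}}$.
\item Proposition~\ref{prop:fixed_component_edge_safety} gives temporal clearance and hence $\|\boldsymbol q_i-\boldsymbol q_j\|\ge d^{\mathrm{safe}}$ on that interval.
\end{itemize}
Weak intergroup coupling is handled by the admissible conditions of Supplementary Sec.~\ref{sec:supp_spectral_tuning}, treated as a small perturbation of each group's hyperbolic equilibrium. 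Nonconflicting pairs are safe by definition of a conflict together with the sensing radius. Theorem~\ref{thm:fixed_component_safe_order} supplies finite progress within the interval: acyclicity gives a source (leading mover) with positive minimum speed, which clears in finite time, and induction on the finite group gives the rest.

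\textbf{Gluing and the main obstacle.} Finally I would glue the intervals, and I expect this to be the main obstacle. At an event time the conflict graph changes. The issue is that the state inherited from the previous interval (established orders, nonzero $z$) must lie in the basin or safe-decision set required to restart the fixed-graph argument.
\begin{itemize}
\item \emph{Clearances and splits.} An already differentiated pair stays on its branch by continuity. Removing edges preserves acyclicity of the restricted order and does not shrink clearance margins.
\item \emph{Activations and merges.} A new conflict enters at the sensing boundary with margin $T_{\mathrm{trav}}$. Existing orders are not yet safety-critical for it, so the same negotiation applies to the updated graph.
\end{itemize}
Safety is checked on each interval and the intervals are finite in number with no Zeno switching, so safety holds for all $t$. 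Each conflict clears in finite time, so the episode terminates, which completes claims 2--4. The delicate point is showing that a merged group's new $\boldsymbol A$ still admits a branch consistent with orders already fixed in the subgroups. I would handle it by invoking persistence of hyperbolic equilibria under the bounded perturbation introduced by the new edges' initially small weights.
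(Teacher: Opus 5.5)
Your overall architecture matches the paper's: one parameter set from the joint design, the fixed-group results between graph events, and induction over the finitely many events allowed by Assumption~\ref{ass:episode}. Your arguments for claims~1 and~5 also match it. You correctly flag the gluing at conflict activations and merges as the main obstacle, but your proposed resolution does not work. The paper handles this step with Theorem~\ref{thm:graph_reconfiguration_induction}; you instead rely on persistence of the previously selected hyperbolic equilibria under a small perturbation from the new edges.

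\textbf{Why the persistence argument fails.}
\begin{enumerate}
\item \emph{The new edges are not a small perturbation.} A new conflict that is, or becomes, safety-critical has pressure at least $p_{\mathrm c}$ (Lemma~\ref{prop:uniform_pressure_bands}). Lemma~\ref{lem:uniform_spectral_bands} is designed precisely so that one such edge by itself lifts $\lambda_{\max}$ above $\lambda^\ast$. This is an order-one change of $\boldsymbol A$ that triggers renegotiation, not a perturbation inside the old branch's basin.
\item \emph{Your target property is neither true in general nor needed.} You set out to prove that the merged group admits a branch consistent with the subgroups' existing orders. Theorem~\ref{thm:graph_reconfiguration_induction} states explicitly that the re-established order may differ from the previous one on any conflict.
\item \emph{New conflicts need not enter at the sensing boundary.} An already-sensed nonconflicting pair can become conflicting when one agent changes speed in response to another conflict. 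This is why the paper imposes Equation~\eqref{eq:complete_safe_detection_distance} as a hypothesis on every newly activated conflict rather than deriving it from $R_{\mathrm{sens}}$.
\end{enumerate}

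\textbf{What is needed instead.} You need a safety-during-renegotiation argument in place of your continuity argument. When renegotiation changes the speed of an agent in a continuing conflict, one of two things must hold until clearance is restored:
\begin{enumerate}
\item the conflict retains positive temporal clearance after the buffer $\Delta_t^e$; or
\item its instantaneous follower retains enough negative yielding evidence and remaining distance to generate a stopping command and stop outside its contested segment.
\end{enumerate}
This is supplied by Proposition~\ref{prop:following_decision_invariance}, Lemma~\ref{lem:stopping_distance_detection_margin}, and condition~5 of Supplementary Sec.~\ref{sec:supp_joint_design}.

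The same fallback covers two further cases your proposal does not handle.
\begin{enumerate}
\item \emph{Inter-event intervals shorter than $T^{\mathrm{neg}}$.} On such intervals no order has yet formed, so Proposition~\ref{prop:fixed_component_edge_safety} does not apply. Your bullet list tacitly assumes every interval is long enough for negotiation.
\item \emph{Clearances.} Here the paper combines continuation and tracking (Lemma~\ref{prop:smooth_dominant_eigenvector_branch}) with the buffer $\Delta_t^e$, and uses the fallback if negotiation is still in progress. Contrary to your assertion, removing an edge shifts the selected equilibrium, so margins on the remaining conflicts can shrink. In the three-agent example of Supplementary Sec.~\ref{sec:numerical_bifurcation_illustration}, agent~3's decision changes sign after the $1$--$3$ clearance.
\end{enumerate}

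\textbf{Liveness.} Argue liveness only after the final activation. From any time with unresolved conflicts, either a physical conflict clears within $T^{\mathrm{neg}}$, or the conflict set stays fixed for $T^{\mathrm{neg}}$. In the second case Theorem~\ref{thm:fixed_component_safe_order} supplies a leading mover that clears one. Induction on the number of unresolved conflicts then gives $\mathcal P(t_{\mathrm f})=\varnothing$.
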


The proof is given in Supplementary Sec.~\ref{sec:supp_proof_global}. 

\paragraph{Roadmap of guarantees and \edit{mechanisms underlying decentralized} \edit{safe-and-live deconfliction}}

\textit{Negotiation for local safe traversal-order formation:} Safe traversal-order formation requires enough time to negotiate while preserving collision avoidance. (\edit{Lemma~\ref{prop:sensing_range_detection_margin}}). 
\edit{Lemma~\ref{prop:pressure_response_tuning}} gives conditions for choosing the spatial and temporal pressure gains to maintain sustained, slowly varying safety-critical pressure \edit{during negotiation.} With the basal attention tuning of Theorem~\ref{thm:autonomous_spectral_tuning}, this pressure drives $\lambda_{\max}(\boldsymbol A)$ above the threshold $\lambda^\ast$ in Equation~\eqref{eq:formation_spectral_condition}, destabilizing collective indecision.

As collective indecision destabilizes, small asymmetries can grow through decision--action feedback. Even an infinitesimal projected-entry-time difference produces pairwise evidence that favors yielding by the later-arriving agent. This does not yet establish a safe traversal order: the agents' projected occupancy intervals can still overlap. However, under the yielding and stopping conditions of \edit{Proposition~\ref{prop:following_decision_invariance}}, the instantaneous follower remains outside its contested segment until safe passage is possible, with a certified stop if needed. Several agents may therefore yield simultaneously in response to their respective conflicts. In practice, agents yield by different amounts, creating or modifying projected entry-time offsets. \edit{Developing timing offsets can then unfold} the bifurcation and bias branch selection (Fig.~\ref{fig:supp_cubic_bifurcation_comparison}), while infinitesimal decision asymmetries with a nonzero component along the critical eigenmode can select a branch even in the absence of timing bias. Under the conditions of Theorem~\ref{thm:autonomous_spectral_tuning}, the \edit{fast subsystem state} reaches the attraction neighborhood of the selected differentiated branch within $T^{\mathrm{neg}}$. The resulting steady-state speed differences define a strict acyclic traversal order with safe temporal separation before any agent enters its contested segment (\edit{Propositions~\ref{prop:strict_acyclic_traversal_order} and~\ref{prop:fixed_component_edge_safety}}).
Figure~\ref{fig:ndd_deconfliction}, C and D, show the transition from initial yielding to differentiated decisions and speeds. Figure~\ref{fig:ndd_deconfliction}, E plots the differentiated decision magnitude $\|\widetilde{\boldsymbol z}\|=\bigl[\sum_{i=1}^{6}(z_i-\bar z)^2\bigr]^{1/2}$, where $\bar z=\frac{1}{6}\sum_{i=1}^{6}z_i$. Subtracting the mean isolates differences among the six agents' decisions from a common response such as collective yielding during negotiation (panel~F). \edit{Together, panels E and F show an emergent population-level spiking--hold--recovery response: the mean decision returns to near zero during hold even as the differences among agents' decisions persist.}
When decision formation is faster than speed tracking, decisions may briefly continue to differentiate before realized speeds establish temporal separation, producing the peak in $\|\widetilde{\boldsymbol z}\|$ in Fig.~\ref{fig:ndd_deconfliction}, E.
\newline

\textit{Persistence of local safe traversal orders}: Conflict evidence is continuously recomputed from local observations, allowing the closed loop to adjust decision differentiation as conflicts evolve. At the selected attracting equilibrium, the attention-modulated aggregate decision input balances decision damping. As projected conflict severity decreases, lower pressure can weaken the decision input and allow speeds to relax toward nominal values. \edit{If this relaxation increases projected conflict severity}, pressure can rise again (\edit{Lemma~\ref{prop:uniform_pressure_bands}}), reinforcing decision differentiation along the selected branch. These corrective responses autonomously regulate the magnitude of differentiation around the selected equilibrium, which shifts along the attracting branch as the agents move and the interaction weights evolve slowly. The \edit{fast subsystem state} tracks this moving equilibrium while preserving the selected traversal order
(\edit{Lemma~\ref{prop:smooth_dominant_eigenvector_branch}}; Theorem~\ref{thm:autonomous_spectral_tuning}). \edit{Proposition~\ref{prop:fixed_component_edge_safety}} supplies the corresponding temporal-clearance conditions for safe passage. The response adjusts smoothly without abrupt speed switching, while nonconflicting agents and disjoint groups progress concurrently (Theorem~\ref{thm:fixed_component_safe_order}).  The nonzero hold-phase plateau in Figure~\ref{fig:ndd_deconfliction} E illustrates sustained decision differentiation with subsequent gradual decline in differentiated response as conflicts clear physically. \newline

\textit{Continuous-time graph reconfiguration:}
Once the leader clears its contested segment, the pair associated with that segment no longer needs to maintain a traversal order, and \edit{its reciprocal conflict edges are removed} from $\mathcal G(t)$. If the agents remain within sensing range, they still contribute to each other's conflict evidence, but this contribution weakens smoothly as pressure falls into the release band (\edit{Lemma~\ref{prop:uniform_pressure_bands}}). \edit{Removing a conflict's reciprocal edges} from $\mathcal G(t)$ may split the group into smaller groups with no unresolved conflicts between groups. The agents' positions, speeds, decision states, and attention states remain continuous, and the traversal orders and temporal-clearance margins on all remaining conflicts are preserved (Theorem~\ref{thm:graph_reconfiguration_induction}). Without new conflicts, each clearance removes an ordering constraint, reducing the acyclic graph.
{Figure~\ref{fig:bifurcation_graph_reconfiguration} illustrates traversal-order formation and adaptation as the interaction network evolves in a three-agent engaged group. Initially, all three pairs are in conflict (panel~A), and the differentiated equilibrium $\boldsymbol z^{\dagger,-}$ gives the order $z_1>z_3>z_2$, with agent~1 as the leading mover. Clearance of the $1$--$3$ conflict \edit{removes its reciprocal conflict edges (panel~B)}. As this pair's residual influence decays, the equilibrium shifts smoothly to $\boldsymbol z^{\dagger,+}$. The decision amplitude decreases, and agent~3's decision changes from negative to positive. The orders on the remaining conflicts persist, allowing agents~1 and~3 to lead concurrently. Supplementary Sec.~\ref{sec:numerical_bifurcation_illustration} provides the calculations.}

\edit{A new physical conflict adds reciprocal edges to $\mathcal G(t)$, and the affected group's traversal order adapts as existing agents' states evolve continuously. The new conflict may connect agents within an existing group or merge otherwise separate groups. Conflicts involving an agent entering the population may join that agent to one group or merge several groups. An agent entering the population without a new physical conflict does not by itself require a change in an established traversal order. 
} With limited sensing ranges, groups may establish orders independently before detecting conflicts that merge them. Their continuously evolving \edit{decisions, attention states, and speeds} provide the initial conditions for coordination over the slowly evolving conflict graph. These inherited decision differences and projected entry-time offsets select an attracting differentiated branch for the updated group. Small entry-time offsets ($\eta_\tau \neq 0$) can further unfold the bifurcation and bias branch selection (Fig.~\ref{fig:supp_cubic_bifurcation_comparison}). \edit{If the updated physical-conflict set remains fixed for $T^{\mathrm{neg}}$, a strict acyclic traversal order is established within that interval}
(Theorem~\ref{thm:graph_reconfiguration_induction}). The lower bound on sensing radius ensures that each new conflict is necessarily detected early enough for the agents to respond safely throughout this adaptation (\edit{Lemma~\ref{prop:sensing_range_detection_margin} and Proposition~\ref{prop:fixed_component_edge_safety}}). \newline

\textit{Release to nominal motion:} \edit{After all of an agent's physical conflicts in the current episode clear, residual sensed pressure decays through the release band and its states relax toward nominal values. If no new conflict involving that agent activates, decision damping gives $z_i\to0$, attention returns to $u_0/\rho_u$, and speed tracking restores $v_i^{\mathrm{nom}}$ as residual evidence vanishes.} {Figure~\ref{fig:ndd_deconfliction}, C to E show staggered recovery and the eventual loss of decision differentiation: agents return to nominal motion as their own conflicts clear, without waiting for all others.}
\edit{With finitely many physical-conflict activations, safe adaptation to new conflicts and successive clearances resolve all conflicts in the episode in finite time, while distinct groups can progress concurrently (Theorem~\ref{thm:finite_time_safe_deconfliction}).}

{Thus, a locally engaged group undergoes a self-organized \edit{spiking--hold--recovery cycle} analogous to that of an excitable neural system~\cite{wongwang2006recurrent,nagumo1962active,izhikevich2006dynamical}.}
\newline

\textit{Scalability and plasticity:} {NDD achieves scalability through local design and computation. Under the prescribed uniform bounds on local congestion, geometry, speed-response heterogeneity, sensing and tracking errors, and actuation limits, one controller design accommodates any arbitrary finite population and group size (\edit{Lemma~\ref{lem:uniform_spectral_bands}} and Theorem~\ref{thm:finite_time_safe_deconfliction}). Each agent evaluates one projection per sensed interaction, keeping computation proportional to its local interaction degree. NDD achieves plasticity by reorganizing coordination as agents enter or leave and conflicts change. Engaged groups form, merge, split, and dissolve on the slower physical time scale, retaining traversal orders on remaining conflicts after clearance and smoothly adapting when new conflicts arise (Theorem~\ref{thm:graph_reconfiguration_induction}).} \newline

\textit{Sensing range and the safety--throughput trade-off:} A larger sensing range provides more time to establish the required temporal gap across a contested region, permitting smaller speed differences and motion closer to the nominal group speed. By contrast, a shorter sensing range requires stronger differentiation, more pronounced yielding, or, in the limiting case, certified stopping (\edit{Proposition~\ref{prop:fixed_component_edge_safety}}, particularly Equation~\eqref{eq:required_clearance_time}). Because safety depends on relative rather than absolute speeds, both agents may slow down or speed up while preserving the required clearance margin. Concurrency further enhances throughput: leading movers that share no physical conflict can clear independent conflicts simultaneously.

\subsection*{Simulation benchmarks}
We evaluated NDD against optimal reciprocal collision avoidance (ORCA)~\cite{berg2011reciprocal,alonsomora2013optimal}, LivePoint~\cite{chen2025livepoint}, and centralized and decentralized control barrier function quadratic programs (CBF-QP)~\cite{ames2016control,wang2017safety} using homogeneous Dubins-car agents. All methods used the same 50 randomized start--goal scenarios for each $N\in\{4,8,12,24,32\}$. Across all 250 NDD rollouts, every NDD agent reached its goal without a safety violation, pairwise separation stayed above the safety threshold, and agents retained their prescribed paths (Fig.~\ref{fig:simulation_benchmarks}, A to C). LivePoint achieved the highest baseline safe-agent success, although not even close to NDD's 100\% success and with performance that declined with group size. ORCA and both CBF-QP variants had lower success and mean minimum separation below the safety threshold. Two highly congested 15-agent scenarios---a near-antipodal exchange creating a central bottleneck and intersecting groups of nearly parallel paths---highlight this contrast: NDD maintained collision-free passage along the prescribed paths, whereas all three displayed baselines incurred collisions in the contested region (Fig.~\ref{fig:simulation_trajectories}, A to H).

\subsection*{Hardware Experiments}

We evaluated NDD using six TurtleBot3 differential-drive robots on prescribed straight routes in an indoor workspace (Fig.~\ref{fig:hardware_experiments}). A Vicon motion-capture system supplied positions and orientations through ROS; planar velocities were estimated from successive position measurements and low-pass filtered. Each robot's controller ran in a separate ROS node with asynchronous pose updates and no inter-robot communication, modulating forward speed around the nominal $0.2\,\mathrm{m/s}$. \edit{Two antipodal repetitions with starting positions differing by only $0.8$--$2.5$~cm per robot produced different traversal orders: robot~2 (orange) passed first in one trial and last in the other (Fig.~\ref{fig:hardware_experiments}, A to F). These contrasting outcomes illustrate that traversal order emerges from coupled motion without preassigned priorities. In NDD, small asymmetries in local motion cues bias decisions; the resulting speed changes alter neighbors' cues and amplify differentiation across the group. Orthogonal and mixed-lane encounters further show adaptive yielding: robots adjust their speeds throughout passage, and their decisions relax toward zero as they complete their crossings (Fig.~\ref{fig:hardware_experiments}, G to L).}

\section*{Discussion}

NDD establishes a constructive design principle for controlling desired emergent behavior in embodied multi-agent systems. \nl{No communication, centralized scheduling, or combinatorial traversal-order assignment is required.} Each agent extracts task-relevant relational information from local observations, constructs continuous-time interaction evidence, acts on that evidence, and uses the resulting motion cues to update its decision. Weak evidence keeps the group near collective indecision. For \edit{safe-and-live deconfliction}, safety-critical conflict drives the group through a bifurcation that amplifies even infinitesimally small asymmetries in motion into rapid differentiation in decisions. The tightly coupled decision--action--perception dynamics then create the temporal separation required for collision avoidance and the strict traversal order required for progress, allowing safety and liveness to emerge from a single closed loop. Agents recover nominal motion as conflicts clear. We derive explicit conditions on sensing, actuation, and feedback design parameters that set the commitment threshold and guarantee finite-time, robust, \nl{safe, deadlock-free, livelock-free} deconfliction  under heterogeneous actuation, sensing noise, and time-varying interaction topologies. \nl{The Supplementary Text provides complete certification details. 
Applying the NDD design with a certified guarantee requires only one choice of controller design parameters that satisfy the five compatibility conditions listed in Supplementary Section~\ref{sec:supp_joint_design}. 
This choice is independent of the total population and realized sizes of engaged groups.}

\subsection*{Related work}

\edit{Safe-and-live deconfliction} methods broadly comprise reactive avoidance and safety filters, explicit liveness mechanisms, and game-, optimization-, or learning-based coordination. Separating safety from liveness risks rejecting progress-enabling motion or disrupting the cues that establish traversal order. 
Methods that address safety and liveness within a single formulation account for coupled interactions, but shift the burden to representing those interactions in the decision problem. They must model or learn how each action propagates across connected conflicts before selecting a response. As the conflict group grows or changes, real-time computation can become infeasible or intractable. 
The unresolved challenge is therefore to allow a mutually consistent traversal order to emerge and adapt as interactions unfold. 

\paragraph{Reactive Collision Avoidance and Deadlock Recovery}
Reactive collision-avoidance methods compute motion commands from geometry or relative velocity using velocity obstacles~\cite{fiorini1998motion}, reciprocal variants~\cite{berg2008reciprocal,berg2011reciprocal,alonsomora2013optimal}, and potential fields or navigation functions~\cite{khatib1986real,rimon1992exact,loizou2008navigation}. Safety filters provide stronger guarantees by modifying nominal inputs that would violate safety constraints. Control barrier functions establish forward invariance of a safe set~\cite{ames2016control,borrmann2015control,wang2017safety}, while data-driven variants learn agent dynamics, barrier certificates, or distributed safety controllers from data~\cite{taylor2020learning,lindemann2021learning,wabersich2023data,zhang2025gcbfplus}. These mechanisms do not generally determine which agent should pass first through shared space: agents may yield simultaneously or repeatedly revise their motion. Geometric symmetry can force reactive controllers into deadlock even with heterogeneous control gains~\cite{grover2020symmetry}.

Deadlock-avoidance methods add recovery or preventive mechanisms to preserve liveness when reactive avoidance is insufficient. These include formation-level escape motions certified for two- and three-robot formations~\cite{grover2020deadlock,grover2023before}, trajectory exchange and replanning~\cite{jager2001decentralized}, leader-guided recovery~\cite{garg2024foundation}, deadlock-free liveness sets~\cite{chandra2025deadlock}, point-cloud barrier functions and speed regulation~\cite{chen2025livepoint}, or model predictive control (MPC)-based recovery and preventive risk conditions~\cite{chen2024deadlock,zhang2026adaptive}. These methods enforce liveness through explicit recovery maneuvers, planning rules, or constraints designed to exclude deadlock, rather than through the continuous formation of a traversal order during local interactions. \edit{Recovery can require communication~\cite{jager2001decentralized,garg2024foundation} or departures from nominal paths~\cite{grover2023before}, limiting applicability when communication is unavailable or paths must remain fixed.}

\paragraph{Prediction- and Planning-Based Coordination}
Planning methods encode passage alternatives through mixed-integer or rule-based trajectory branches~\cite{schouwenaars2001mixed,richards2002aircraft}, topological predictions~\cite{roh2021multimodal}, learned winding strategies~\cite{nakao2025winding}, or opinion-guided planning and game reduction~\cite{alghamdi2025opinion}. Multirobot planners establish precedence through optimized planning sequences~\cite{bennewitz2001optimizing}, priority graphs or search~\cite{cap2015prioritized,gregoire2013priority,ma2018searching}. When multiple conflicts become coupled in real time, local priorities selected independently from partial observations may not define a consistent group-wide traversal order.

Nonlinear opinion dynamics enable rapid commitment from social cues~\cite{cathcart2023proactive} or dynamic-game values~\cite{hu2023emergent}; spiking extensions permit fast switching as environmental information changes~\cite{cathcart2024spiking}. Social-cue approaches increase attention as a human approaches without requiring a predictive model of human behavior, while game-derived opinions guide trajectory optimization. \edit{Hu et al.~\cite{hu2023emergent} establish stability conditions for the game-induced opinion equilibria of two agents with two options each.} These methods resolve two-option intent ambiguity, but do not establish group-wide safe traversal orders for arbitrary time-varying conflict networks. Interaction-aware prediction and games model reciprocal responses through cooperative crowd forecasts~\cite{trautman2010unfreezing,trautman2015robot}, influence-aware planning~\cite{sadigh2016planning}, coupled strategies~\cite{fridovich2019efficient,le2022algames}, social preferences~\cite{schwarting2019social}, bounded rationality~\cite{mehr2023maxent}, or mixed strategies~\cite{muchen2024mixed}. They require models or data for agents' dynamics, objectives, and responses, and their coupled decision problem grows with the interacting group.

Trajectory optimization and MPC incorporate dynamics, collision constraints, and progress objectives~\cite{augugliaro2012generation,schulman2013finding,zhang2017optimization,ferranti2023distributed,zhu2020trajectory}, including centralized nonlinear MPC with collision- and deadlock-free motion guarantees for nonholonomic robots~\cite{lafmejani2021nonlinear}. Growing conflict coupling challenges tractability and feasibility. Pairwise collision constraints characterize unsafe states, whereas deadlock and livelock depend on sequences of closed-loop responses and require an explicit liveness construction.

Learning-based methods encode interaction reasoning in contextual crowd models~\cite{liu2025emergent} or policies mapping observations to actions~\cite{chen2017decentralized,everett2018motion,chen2019crowd,fan2020fully}. They capture patterns difficult to specify analytically, but depend on training coverage. Changing group composition and conflict geometry complicate safety and liveness certification.

\paragraph{Urban Transportation and Connected Vehicle Systems}
Urban transportation research coordinates connected and automated vehicles at intersections and merges by solving for access order and timing. Approaches include space--time reservations~\cite{dresner2004multiagent,dresner2008multiagent}, auctions~\cite{carlino2013auction}, conflict-point optimization~\cite{levin2017conflict}, and polling~\cite{miculescu2020polling}. Networked-control approaches compute vehicle trajectories after the crossing order or safety constraints are established. Decentralized optimal-control and distributed schemes coordinate crossing times and vehicle controls while enforcing rear-end and lateral safety constraints, including in experimental implementations~\cite{zhang2019decentralized,hult2019optimal,divaio2019distributed}. Recent mixed-traffic formulations jointly optimize traffic signals and connected automated vehicle trajectories using distributed optimization~\cite{le2024distributed}, or learn human-driver behavior online to generate robust, time-optimal merging trajectories under prediction uncertainty~\cite{le2025stochastic}. 

These methods exploit structured lanes and shared conflict zones. Reservations and auctions rely on intersection managers and vehicle-to-infrastructure communication; distributed controllers still exchange states, schedules, or crossing orders~\cite{divaio2019distributed}. Dense, unstructured robot navigation may offer only local observations, without communication or a fixed coordinator. Traversal order must therefore form and adapt through local interaction.

\subsection*{Importance and future directions}
Our multitime-scale autonomy architecture embeds NDD's fast, continuous-time decision dynamics as a lightweight interaction layer within a hierarchical autonomy stack (Fig.~\ref{fig:overview}). 
NDD resolves crossings and merges through local speed modulation while preserving nominal steering along the planned path, including turns. The planning module replans paths when speed modulation alone is insufficient, as with static obstacles or direct head-on encounters.
Future work could extend NDD to modulate nominal steering and handle static obstacles within the local interaction layer, potentially improving efficiency.
Alternative interaction-evidence and decision-to-control maps could extend this architecture to other tasks.
For example the decision state could regulate aerial-vehicle altitude or manipulator speed and task timing, or incorporate traffic conventions. 
An inference layer could add intent, semantic context, communication, and cooperation estimates. Although fixed local NDD parameters remain valid as groups form, split, or merge, adaptation to congestion could improve performance. Future work should establish safety and liveness conditions for adapting controller gains, attention parameters, and evidence weights through local observations or online learning. NDD provides a general, resource-aware foundation for shaping and certifying collective behavior through local physical interactions.

\subsection*{Limitations}
\edit{Our guarantees assume that all agents cooperate and use NDD. Extending them to mixed crowds requires analysis of other decision policies; potentially noncooperative agents also require real-time cooperation inference. Nonlinear opinion dynamics~\cite{bizyaeva2023nonlinear} can support this inference and inform NDD responses. Developing the NOD--NDD coupling and establishing mixed-population guarantees remain future work.}

\section*{Materials and Methods}

\subsection*{Formulation of \edit{safe-and-live deconfliction}}
\label{sec:mathematical_formulation}

Consider a finite population of agents moving along prescribed paths, with position $\boldsymbol  q_i$ and realized traversal speed $v_i$ for each agent $i$. At time $t$, let $\mathcal P(t)$ contain the unordered pairs $\{i,j\}$ in physical conflict. Each pair generates the directed interactions $e=(i,j)$ and $\bar e=(j,i)$. Define $ \mathcal E^{\mathrm{conf}}(t) := \bigcup_{\{i,j\}\in\mathcal P(t)}\{(i,j),(j,i)\}. $ With directional weights $\omega_i^e(t):=p^e(t)w_i^e(t)$, define the weighted directed conflict graph $ \mathcal G(t):=(\mathcal V(t),\mathcal E^{\mathrm{conf}}(t),\boldsymbol \omega(t)). $

Let $\mathcal C(t)$ index the strongly connected components of $\mathcal G(t)$. For $c\in\mathcal C(t)$, let $\mathcal V^c(t)$ be its agent set, $ \mathcal E^{\mathrm{conf},c}(t) := \{e\in\mathcal E^{\mathrm{conf}}(t):e\text{ has both endpoints in }\mathcal V^c(t)\}, $ and let $ \mathcal P^c(t):=\{\{i,j\}\in\mathcal P(t):i,j\in\mathcal V^c(t)\} $ be its physical-conflict pairs. We write $ \mathcal G^c(t):=(\mathcal V^c(t),\mathcal E^{\mathrm{conf},c}(t),\boldsymbol \omega^c(t)) $ for the corresponding strongly connected component. \edit{Each component is a locally engaged group. 
}

For $\{i,j\}\in\mathcal P^c(t)$ and $e=(i,j)$, define the temporal-clearance gaps $ \pi_{i\prec j}^e(\boldsymbol  q,\boldsymbol  v):=t^e_{j^-}-t^e_{i^+}, $ $ \pi_{j\prec i}^e(\boldsymbol  q,\boldsymbol  v):=t^e_{i^-}-t^e_{j^+}. $ The relation $i\prec_e j$ means that agent $i$ clears its contested segment before agent $j$ \edit{enters its contested segment}; $j\prec_e i$ denotes the reverse order. 
\nl{\textit{An engaged group $c$ is temporally safe} when every physical conflict pair $\{i,j\}\in\mathcal P^c(t)$ admits either $i\prec_e j$ or $j\prec_e i$ \edit{with positive temporal separation}}. Define
\begin{align}
\mathcal V_{\mathrm{safe}}^c(\boldsymbol  q)
&=
\left\{
\boldsymbol  v:
\max\left\{
\pi_{i\prec j}^e(\boldsymbol  q,\boldsymbol  v),
\pi_{j\prec i}^e(\boldsymbol  q,\boldsymbol  v)
\right\}
\edit{>0}
\quad
\forall\{i,j\}\in\mathcal P^c(t)
\right\},
\label{eq:safe_speed_set}
\\
\mathcal Z_{\mathrm{safe}}^c(\boldsymbol  q)
&=
\left\{
\boldsymbol  z:
\boldsymbol  v^{\mathrm{nom}}+\boldsymbol  V(\boldsymbol  z)
\in
\mathcal V_{\mathrm{safe}}^c(\boldsymbol  q)
\right\}.
\label{eq:safe_decision_set}
\end{align}

\edit{Collision avoidance alone can mean agents remaining stopped or repeatedly adjusting their motion without clearing a conflict. Thus, finite-time progress must also be ensured, and this relies on the emergence of leading movers: agents that advance while leading all their unresolved conflicts.} For agent $i\in\mathcal V^c(t)$, define its set of conflicting neighbors as $\mathcal N_i^c(t):=\{j\in\mathcal V^c(t):\{i,j\}\in\mathcal P^c(t)\}$. Component $c$'s leading-mover set is then
\begingroup
\begin{equation*}
\mathcal M^c(t):=
\left\{
i\in\mathcal V^c(t):
\mathcal N_i^c(t)\neq\varnothing,
\ v_i(t)\geq\underline v,
\ \pi_{i\prec j}^e(\boldsymbol  q(t),\boldsymbol  v(t))\edit{>0}
\ \forall j\in\mathcal N_i^c(t)
\right\},
\qquad
\underline v>0.
\end{equation*}
\endgroup
Every agent in $\mathcal M^c(t)$ moves with positive speed and leads each of its conflicts with \edit{positive temporal clearance}. 
\edit{A group is \emph{unresolved} when $\mathcal P^c(t)\neq\varnothing$. \emph{Negotiation} establishes a safe traversal order. During \emph{persistence}, each selected pairwise order remains until physical clearance, provided no new physical conflict activates. Each interaction begins release at clearance.
Under the conditions of Theorem~\ref{thm:fixed_component_safe_order}, a safe acyclic traversal order provides a leading mover that clears a conflict in finite time.} 
\edit{A \emph{conflict activation} adds a previously absent physical-conflict pair to $\mathcal P(t)$ and its reciprocal edges to $\mathcal G(t)$; physical clearance removes them. These \emph{graph events} are treated in Theorem~\ref{thm:graph_reconfiguration_induction} which extends safe-and-live deconfliction under time-varying graph reconfiguration and ensures finite-time clearance of every conflict in an admissible episode.}

\subsection*{Smooth speed-control and conflict-evidence design}
\label{sec:admissible_maps}

For the speed command in~Equation~\eqref{eq:cmd_speed}, we choose the saturating law
\begin{equation}
v_i^{\mathrm{cmd}}
=
v_i^{\mathrm{nom}}
\left(1+\kappa_V\tanh(\kappa_z z_i)\right),
\qquad
\kappa_V>0,
\quad
\kappa_z>0.
\label{eq:tanh_speed_map}
\end{equation}
\edit{Thus $V_i(z_i)=v_i^{\mathrm{nom}}\kappa_V\tanh(\kappa_z z_i)$ in Equation~\eqref{eq:cmd_speed}. The command equals nominal speed at $z_i=0$ and increases with $z_i$. The gain $\kappa_V$ sets the maximum fractional-speed departure; $\kappa_z$ sets the sensitivity to the decision state. Lemma~\ref{lem:decision_range_and_speed_extrema} gives gains spanning each agent's admissible speed range within any prescribed local decision neighborhood.}

\edit{For each directed interaction $e=(i,j)$, the pairwise observation $\mathcal X_i^e$ gives the closest-approach distance $d_{\mathrm{ca}}^{e}$ and time $t_{\mathrm{ca}}^{e}$, and each agent's contested-segment entry and exit times $t^e_{k^-},t^e_{k^+}$, $k\in\{i,j\}$ (Fig.~\ref{fig:contested_regions}, A). Agent $i$ evaluates its sensed interaction set $\mathcal E_i(t)$. The pressure factors below use the logistic sigmoid $\sigma(x;k):=(1+\exp(-kx))^{-1}$, $k>0$.}

At each time, \nl{agent $i$ computes} conflict pressure \nl{for interaction $e=(i,j)$} 
from current relative-position and relative-velocity measurements and combines spatial and temporal relevance. The spatial factor depends on the predicted distance at closest approach, rather than the agents' current separation, while the temporal factor measures whether closest approach occurs before the exit times of each agent. Their product \nl{of spatial and temporal factors} provides a graded measure of conflict severity, particularly at shallow crossing angles, where long contested segments can produce substantial occupancy overlap even when the predicted closest-approach distance remains above $d^{\mathrm{safe}}$. Because pressure depends on the projected relative geometry, it may remain constant as the agents move if that geometry does not change.

Set $d_{\mathrm{prox}}=k_{\mathrm{prox}}d^{\mathrm{safe}}$, where $k_{\mathrm{prox}}>1$, so that pressure begins to rise outside the required clearance boundary. \edit{The spatial-proximity factor is $P_{\mathrm d}(\mathcal X_i^e)=\sigma(d_{\mathrm{prox}}-d_{\mathrm{ca}}^{e};k_{\mathrm d})$, with $k_{\mathrm d}>0$.}
The closest approach is temporally relevant only if it occurs before both projected exit times, that is, $t_{\mathrm{ca}}^{e}<\min\{t^e_{i^+},t^e_{j^+}\}$. To retain the interaction near either exit boundary, we introduce a temporal margin $t_{\mathrm{prox}}>0$. The resulting temporal-proximity factor is the smooth conjunction
\begin{equation*}
P_{\mathrm t}(\mathcal X_i^e)
=
\sigma\left(
t_{\mathrm{prox}}
+
t^e_{i^+}
-
t_{\mathrm{ca}}^{e};
k_{\mathrm t}
\right)
\sigma\left(
t_{\mathrm{prox}}
+
t^e_{j^+}
-
t_{\mathrm{ca}}^{e};
k_{\mathrm t}
\right),
\qquad
k_{\mathrm t}>0.
\end{equation*}
Each factor decays as the projected closest-approach time moves beyond the corresponding buffered exit time. The total pressure is
\begin{equation}
P(\mathcal X_i^e)
=
P_{\mathrm d}(\mathcal X_i^e)
P_{\mathrm t}(\mathcal X_i^e).
\label{eq:conflict_pressure}
\end{equation}
\edit{The pressure lies in $(0,1)$ and decreases when either spatial or temporal relevance falls.}

\edit{The arrival-order function maps projected entry-time differences to signed evidence. An admissible map $\gamma:\mathbb R\rightarrow[-1,1]$ is smooth, bounded, and odd, with $\gamma(0)=0$ and $s\gamma(s)>0$ for $s\neq0$. We use}
\begin{equation}
G(\mathcal X_i^e)
=
\tanh\left(
k_{\mathrm g}
\left(
t^e_{j^-}
-
t^e_{i^-}
\right)
\right),
\qquad
k_{\mathrm g}>0.
\label{eq:arrival_order_function}
\end{equation}
\edit{The value is positive when agent $i$ is projected to enter its contested segment before agent $j$ enters its own, negative for the reverse order, and zero at a pairwise tie.} Write $p_i^e:=P(\mathcal X_i^e)$ and $g_i^e:=G(\mathcal X_i^e)$. \edit{Reciprocal interactions share the closest-approach geometry and reverse the entry-time difference, so $p_i^e=p_j^{\bar e}=:p^e$ and $g_j^{\bar e}=-g_i^e$.}

\edit{To weight simultaneous conflicts, agent $i$ tilts the pressure of each directed interaction $e=(i,j)$ according to its signed arrival-order evidence:}
\begin{equation}
\edit{\hat{p}^e}
=
p_i^e
\left(
1-
\frac{k_{\mathrm{tilt}}g_i^e}{1-p_i^e}
\right),
\qquad
k_{\mathrm{tilt}}>0.
\label{eq:tilted_pressure}
\end{equation}
Here $0<p_i^e\leq\bar p<1$, so the denominator remains bounded away from zero. \edit{Lemma~\ref{prop:uniform_pressure_bands}} establishes this bound and gives conditions for tuning the pressure gains, \nl{$k_t$, $k_{\mathrm{tilt}}$, $k_g$,} to separate the interaction regimes. The normalized weight in~Equation~\eqref{eq:ndd:evidence} is
\begin{equation}
W(\mathcal X_i^e;\mathcal X_i)
=
\frac{
\exp\left(\edit{\hat{p}^e}/T_{\mathrm{sm}}\right)
}{
\displaystyle
\sum_{e'\in\mathcal E_i(t)}
\exp\left(\edit{\hat{p}^{e'}}/T_{\mathrm{sm}}\right)
},
\qquad
0<T_{\mathrm{sm}}<1.
\label{eq:pressure_softmax}
\end{equation}
\edit{The tilt lowers the score for $G>0$ and raises it for $G<0$, assigning greater responsibility to the agent projected to enter its contested segment later. The weights satisfy $\sum_{e\in\mathcal E_i(t)}W(\mathcal X_i^e;\mathcal X_i)=1$ for nonempty $\mathcal E_i(t)$; otherwise, $h_i=0$.}

\edit{Define $w_i^e:=W(\mathcal X_i^e;\mathcal X_i)$ and $\omega_i^e:=p^e w_i^e$, giving $h_i^e=\omega_i^e g_i^e$. Each agent normalizes over its own sensed set, so generally $w_i^e\neq w_j^{\bar e}$ and $\omega_i^e\neq\omega_j^{\bar e}$ despite the shared pressure. With fixed sensed sets, positive traversal speeds, and crossing angles bounded away from zero, the kinematic quantities and functions in Equations~\eqref{eq:conflict_pressure}--\eqref{eq:pressure_softmax} are smooth in the observations. The evidence is therefore piecewise smooth throughout an episode.}

\edit{Supplementary Text, Section~\ref{sec:supp_pressure_regimes}, gives conditions for alternative pressure functions.}

\clearpage
\bibliographystyle{sciencemag}
\bibliography{references}

\subsection*{Supplementary materials}
Figure~\ref{fig:supp_cubic_bifurcation_comparison}: Bifurcation diagrams and unfolding by timing bias.\\
{Supplementary Text (Sections~\ref{sec:supp_pressure_regimes}--\ref{sec:numerical_bifurcation_illustration})}\\
Table~\ref{tab:principal_notation}: Notation: agent states and controller parameters\\
Table~\ref{tab:notation_interactions}: Notation: interaction structure and conflict evidence\\
Table~\ref{tab:notation_geometry}: Notation: conflict geometry, timing, and safety\\
Table~\ref{tab:notation_spectral}: Notation: spectral and bifurcation quantities\\
{Captions for Movies~S1--S9}\\
References \cite{kato1966perturbation,khalil2008nonlinear,golubitsky1985singularities}

\section*{Acknowledgments}
\edit{We acknowledge the Leonard Lab at Princeton University for providing the facilities used for the hardware experiments.}
\paragraph*{Funding:} \edit{This work was supported by the U.S. Army Research Office through a grant to N.E.L. under Grant Number W911NF2410126, and the Office of Naval Research through a grant to V.S. under Grant Number N00014-22-1-2813. The views and conclusions contained in this document are those of the authors and should not be interpreted as representing the official policies, either expressed or implied, of the Army Research Office, the Office of Naval Research, or the U.S. Government. The U.S. Government is authorized to reproduce and distribute reprints for Government purposes notwithstanding any copyright notation herein.} 
\paragraph*{Author contributions:} \edit{H.S., V.S., and N.E.L. contributed to developing the ideas presented in this work and prepared the theoretical proofs. H.S. wrote the manuscript, and all authors reviewed and edited it. H.S. performed the hardware experiments and simulations. N.E.L. and V.S. secured funding.}
\paragraph*{Competing interests:} \edit{H.S., V.S., and N.E.L. are inventors on a patent application being prepared by Princeton University covering the neuromorphic decision dynamics described in this work.}
\paragraph*{Data, code and materials availability:} 
Videos accompanying this work are available at {\urlstyle{same}\url{https://drive.google.com/drive/folders/1OCUdXAbFT4UJu-GTchzBHUohFRTxB_vd?usp=drive_link}}.

\clearpage
\begin{figure}[p]
    \centering    \includegraphics[width=\textwidth,height=0.42\textheight,keepaspectratio]{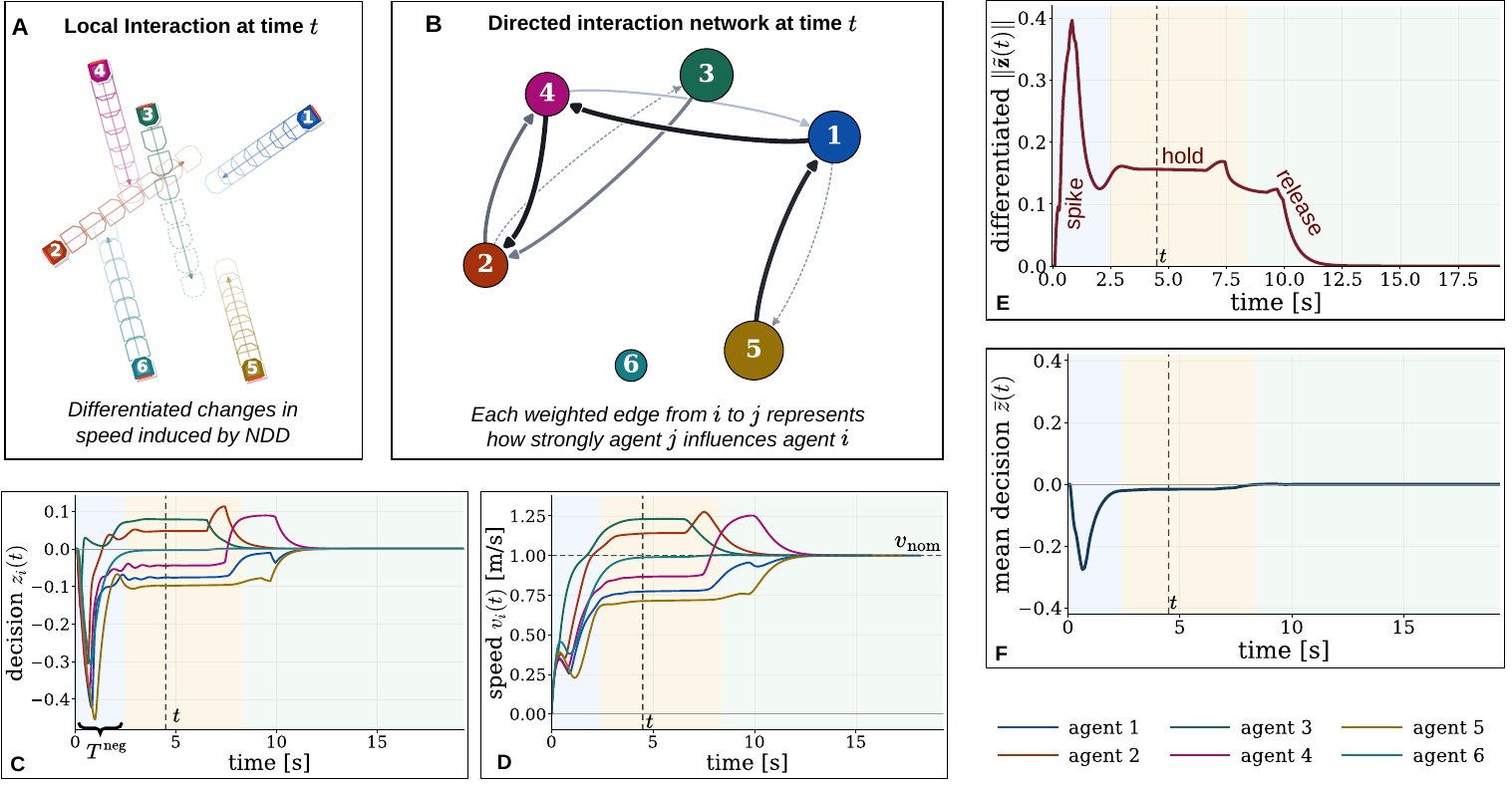}
\caption{\textbf{Emergence of \edit{safe-and-live deconfliction} under an NDD-driven controller.}
\edit{\textbf{(A)} Colored bodies show six agents at time $t$; faded poses project motion along prescribed paths at equally spaced future times, assuming current speeds persist. Closer spacing indicates slower motion. Dotted portions show motion after projected conflict clearance. \textbf{(B)} An arrow $i\to j$ means agent~$j$'s motion influences agent~$i$'s conflict evidence. Darker edges indicate stronger influence; node size represents aggregate conflict-evidence magnitude $|h_i(t)|$. \textbf{(C)} Decisions $z_i(t)$ initially indicate collective yielding, then differentiate. The bracket marks negotiation time $T^{\mathrm{neg}}$; shading identifies negotiation, hold, and release. \textbf{(D)} Realized speeds $v_i(t)$ track decision-dependent commands. Positive and negative decisions raise and lower commanded speeds, respectively, relative to nominal $v^{\mathrm{nom}}=1\,\mathrm{m/s}$ (horizontal dashed line). \textbf{(E)} The differentiated decision magnitude $\|\widetilde{\boldsymbol z}(t)\|=\sqrt{\sum_{i=1}^{6}(z_i(t)-\bar z(t))^2}$, with mean $\bar z(t)=\frac{1}{6}\sum_{i=1}^{6}z_i(t)$, spikes during negotiation, persists during hold, and vanishes during release. \textbf{(F)} The mean decision becomes negative during collective yielding, then returns to near zero during hold while decision differences persist. After release, decisions vanish and speeds return to nominal. Vertical dashed lines in C--F mark snapshot time $t$.}}
\label{fig:ndd_deconfliction}
\end{figure}

\clearpage
\begin{figure}[p]
    \centering
\includegraphics[width=\textwidth,height=0.42\textheight,keepaspectratio]{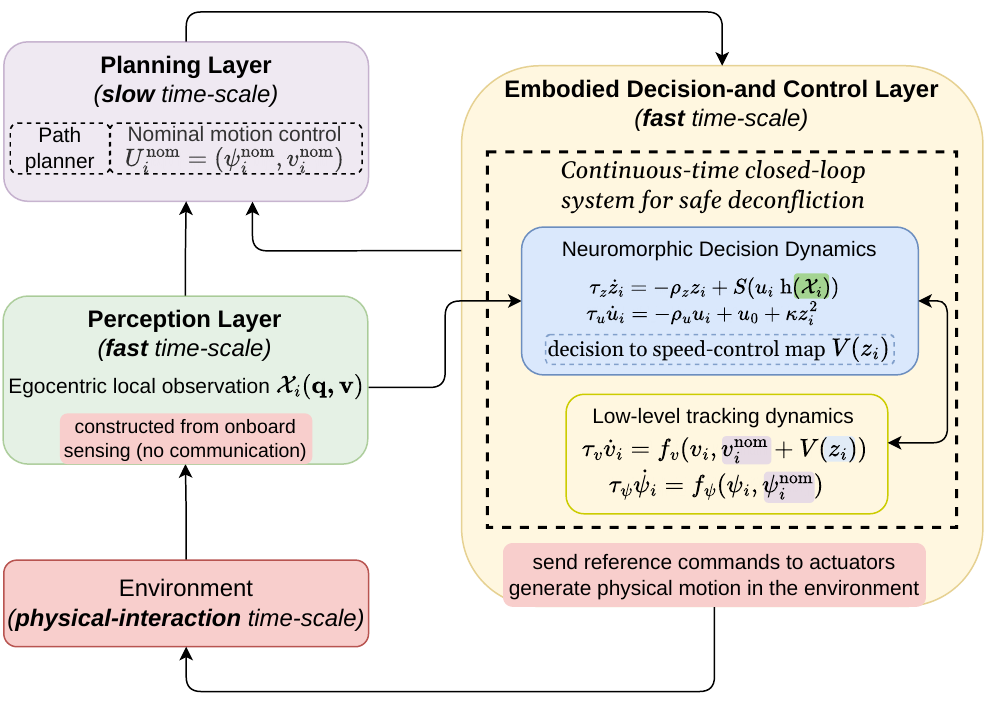}
    \caption{\textbf{Multitime scale embodied control architecture for \edit{safe-and-live deconfliction}.}
\edit{All layers operate concurrently at distinct characteristic time scales. The slow planning layer supplies a prescribed path and nominal motion command $U_i^{\mathrm{nom}}=(\psi_i^{\mathrm{nom}},v_i^{\mathrm{nom}})$. At a fast sensing and processing rate, the perception layer constructs the egocentric observation $\mathcal X_i(\boldsymbol q,\boldsymbol v)$ from onboard measurements of the agent's own motion and its neighbors' relative motion, without communication. These observations drive the fast decision and control layer: conflict evidence $h(\mathcal X_i)$ drives decision state $z_i$, while attention state $u_i$ regulates the response. The map $V(z_i)$ continuously adjusts commanded speed above or below nominal while retaining the nominal angular-rate command. Low-level tracking dynamics execute these evolving references through the actuators, so decision formation, attention adaptation, and physical action co-evolve. The resulting motion changes inter-agent geometry on the slower physical-interaction time scale. These changes generate new measurements that feed back into the ongoing decision dynamics, closing the continuous decision-action-perception loop.}}
\label{fig:overview}
\end{figure}

\clearpage
\begin{figure}[p]
\centering
\includegraphics[width=\textwidth,height=0.42\textheight,keepaspectratio]{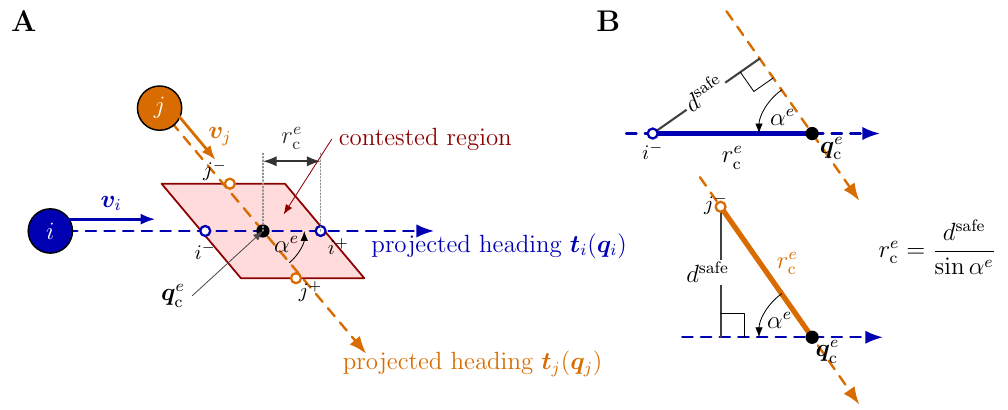}
\caption{ \textbf{Contested-region construction from instantaneous local kinematics.} \textbf{(A)} The dashed rays extend the currently measured velocity directions and are instantaneous heading projections; the physical paths may subsequently smoothly curve. The  intersection of the rays defines the geometric crossing point $\boldsymbol  q_{\mathrm c}^{e}$ and crossing angle $\alpha^e$. On each projected ray, the markers $i^{-},i^{+}$ and $j^{-},j^{+}$ delimit the segment over which simultaneous occupancy can reduce inter-agent separation below $d^{\mathrm{safe}}$. Together, these segments define the contested region. The current along-ray distances and speeds determine the projected entry and exit times and hence the occupancy intervals of the two agents. Conflict evidence depends on possible temporal overlap of these intervals and does not require equal arrival at $\boldsymbol  q_{\mathrm c}^{e}$. \textbf{(B)} The right-triangle constructions locate the upstream segment boundaries. The perpendicular clearance satisfies $d^{\mathrm{safe}}=r_{\mathrm c}^{e}\sin\alpha^e$, giving $r_{\mathrm c}^{e}=d^{\mathrm{safe}}/\sin\alpha^e$. Shallower crossing angles therefore produce longer contested segments and, for a fixed speed, longer projected occupancy intervals. }
\label{fig:contested_regions}
\end{figure}

\clearpage
\begin{figure}[p]
\centering
\includegraphics[width=\textwidth,height=0.42\textheight,keepaspectratio]{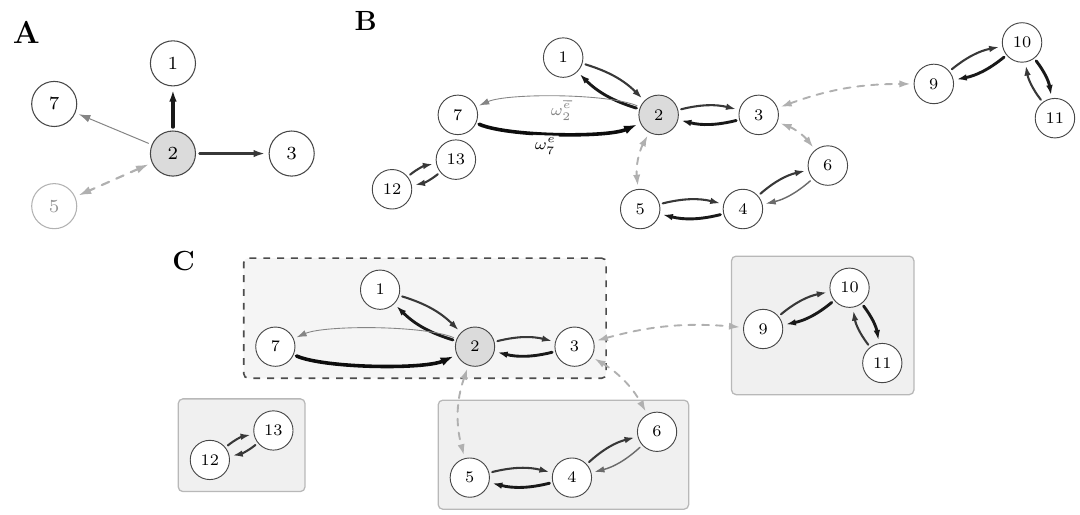}
\caption{
\textbf{Graph hierarchy from local sensing to locally engaged groups.} 
\textbf{(A)} Agent~2 evaluates the egocentric sensed-interaction set $\mathcal E_2(t)$. Under the convention $e=(i,j)$, an arrow $i\to j$ indicates that agent~$j$'s motion enters agent~$i$'s evidence. Solid arrows are the current conflict interactions $\mathcal E_2(t)\cap\mathcal E^{\mathrm{conf}}(t)$. Dashed gray double-headed links denote mutually sensed but nonconflicting pairs--the dashed gray edge from agent~2 belongs to $\mathcal E_2(t)\setminus\mathcal E^{\mathrm{conf}}(t)$. \textbf{(B)} The weighted directed conflict graph $\mathcal G(t)=(\mathcal V(t),\mathcal E^{\mathrm{conf}}(t),\boldsymbol \omega(t))$ contains the reciprocal pair $(i,j)$ and $(j,i)$ generated by every physical-conflict pair $\{i,j\}$. For $e=(i,j)$, arrow thickness represents the directional weight $\omega_i^e(t)$, so reciprocal directions may have unequal weights. The $2$--$7$ conflict illustrates pronounced asymmetry: taking $e=(7,2)$ and $\bar e=(2,7)$, the graph depicts $\omega_7^e(t)\gg\omega_2^{\bar e}(t)>0$. Dashed gray double-headed links are mutually sensed nonconflicting pairs shown for context and do not belong to $\mathcal G(t)$. \textbf{(C)} The strongly connected components $\{\mathcal G^c(t)\}_{c\in\mathcal C(t)}$ of $\mathcal G(t)$ are the locally engaged groups. The dashed box highlights the component containing agent~2. Sensed nonconflicting interactions do not merge these components. }
\label{fig:graph_hierarchy}
\end{figure}

\clearpage
\begin{figure}[p]
    \centering    \includegraphics[width=\textwidth,height=0.42\textheight,keepaspectratio]{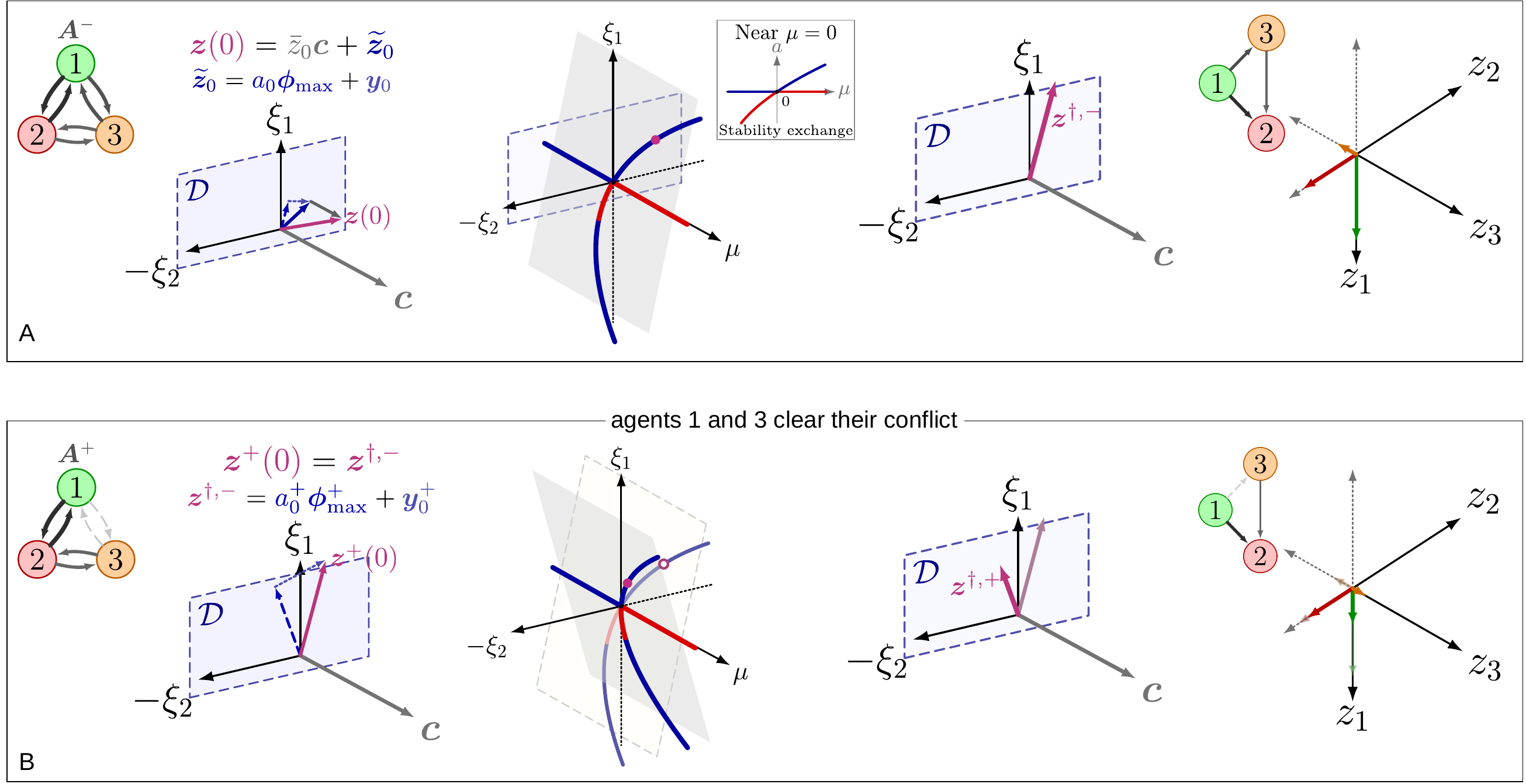}
    \caption{\textbf{Conflict release smoothly reconfigures the selected differentiated response.}
    \edit{Left graphs show reciprocal interaction sensitivities; right graphs show traversal orders, with arrows from the agent selected to pass first to its follower. Coordinates $\xi_1=(z_1-z_2)/\sqrt{2}$ and $\xi_2=(z_1+z_2-2z_3)/\sqrt{6}$ describe decision differences in $\mathcal D=\boldsymbol c^\perp$; $\boldsymbol c=(1,1,1)^{\mathsf T}/\sqrt{3}$ represents equal decision changes. The unit dominant eigenvector $\boldsymbol\phi_{\max}$ of the interaction-sensitivity matrix $\boldsymbol A$ determines the decision pattern; $a$ is its signed amplitude and $\mu=\lambda_{\max}-\lambda^\ast$ measures offset from the instability threshold.
    \textbf{(A)} Before clearance ($-$), all pairs are in conflict. Initial decisions separate into common and differentiated components, $\bar z_0\boldsymbol c$ and $\widetilde{\boldsymbol z}_0$. Positive initial amplitude $a_0$ selects the upper attracting branch. Blue and red curves denote attracting and unstable equilibria, respectively; the inset shows their stability exchange at $\mu=0$.
    \textbf{(B)} After conflict $1$--$3$ clears ($+$), the inherited state $\boldsymbol z^+(0)=\boldsymbol z^{\dagger,-}$ decomposes into $a_0^+\boldsymbol\phi_{\max}^+$ along the updated dominant eigenvector and the perpendicular remainder $\boldsymbol y_0^+$. States evolve continuously as the dominant eigenvector rotates and the equilibrium amplitude decreases. Agent~3's decision becomes positive; orders $1\prec2$ and $3\prec2$ permit agents~1 and~3 to progress concurrently. Pale elements show the previous configuration; the dashed $1$--$3$ interaction indicates residual sensitivity. Supplementary Sec.~\ref{sec:numerical_bifurcation_illustration} provides the matrices, amplitude model, and calculations.}}
    \label{fig:bifurcation_graph_reconfiguration}
\end{figure}

\clearpage
\begin{figure}[p]
    \centering
    \includegraphics[width=\textwidth,height=0.42\textheight,keepaspectratio]{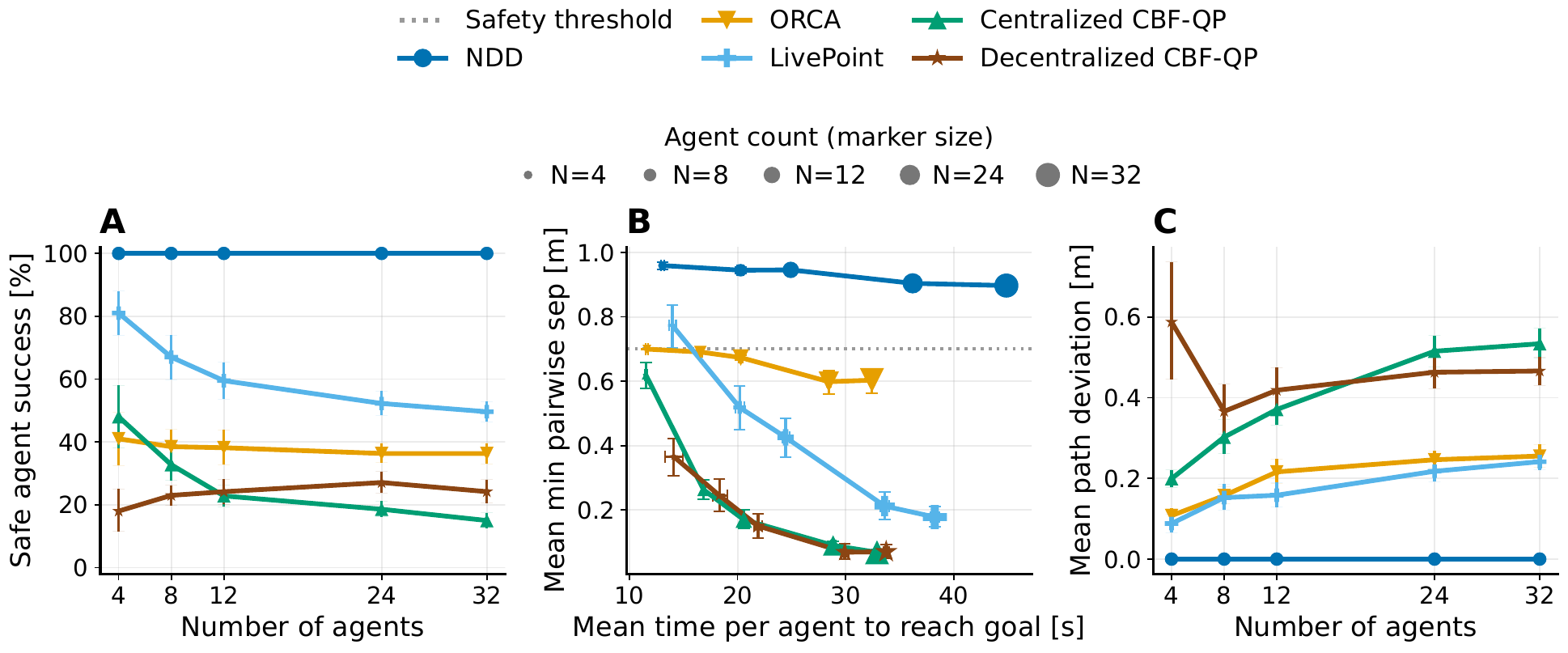}
    \caption{ {\textbf{NDD remains safe and on-path as congestion increases.} Methods are compared using homogeneous Dubins-car agents in the same 50 randomized start--goal scenarios per population size, $N\in\{4,8,12,24,32\}$ (minimum crossing angle $\alpha^e = 70^\circ$). Points are means over $n=50$ scenarios for each method and population size; error bars are $95\%$ bootstrap confidence intervals. \textbf{(A)} Safe-agent success is the percentage of agents reaching their goals without a safety violation. NDD maintains $100\%$ success, whereas the success rate of LivePoint, the strongest baseline by this measure, remains below $100\%$ and declines as congestion increases. \textbf{(B)} Mean minimum pairwise separation versus mean time per agent to reach its goal. Marker size denotes population size; the dotted line marks the $d^{\mathrm{safe}}=0.7$~m safety threshold. NDD retains a safe separation margin across all group sizes, although its completion time increases with population size. Shorter baseline times should not be interpreted as faster \edit{safe-and-live deconfliction} when separation falls below the safety threshold. \textbf{(C)} Mean deviation from prescribed paths. NDD maintains zero deviation, resolving conflicts through speed modulation rather than spatial detours. The panels distinguish safe completion from goal attainment alone and show that baseline path deviation does not ensure safety in these tests.} }
    \label{fig:simulation_benchmarks}
\end{figure}

\clearpage
\begin{figure}[p]
    \centering
    \includegraphics[width=\textwidth]{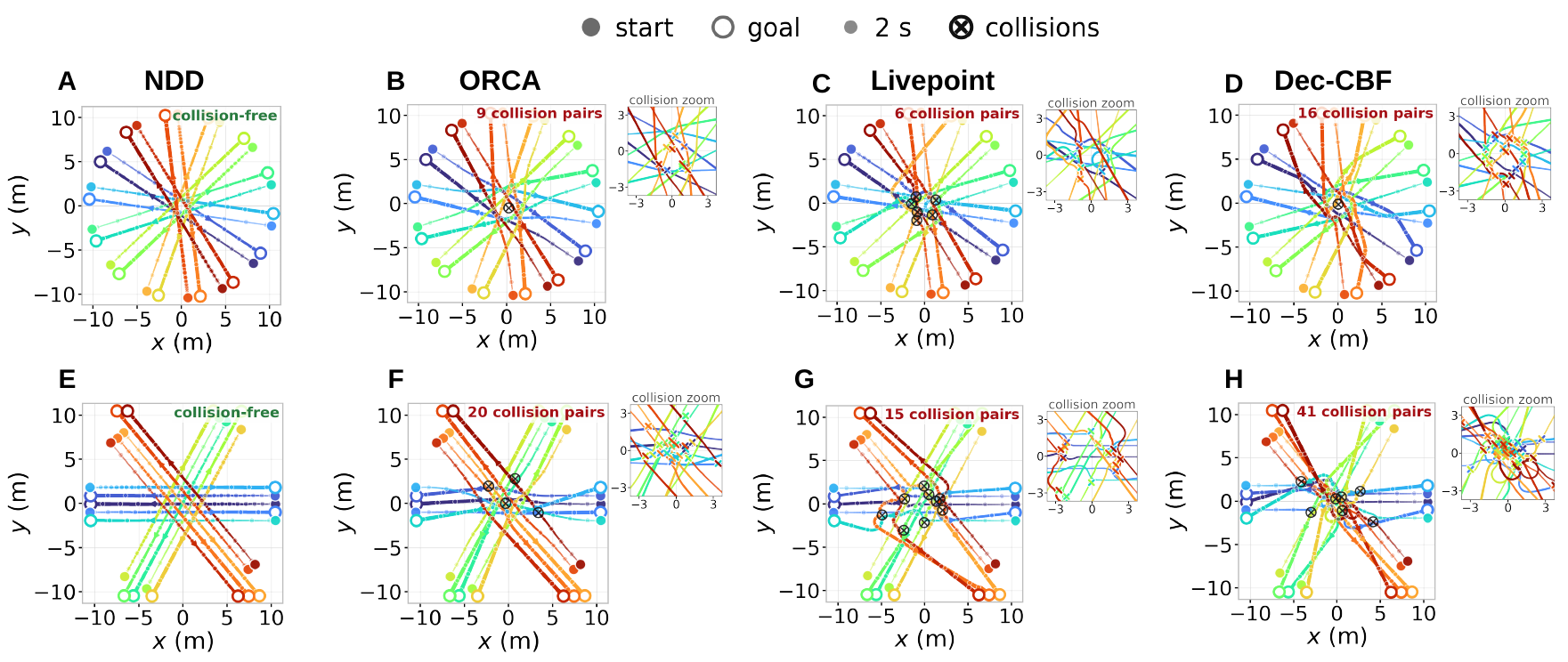}
     \caption{ {\textbf{NDD enables \edit{safe-and-live deconfliction} through shared crossing regions.} \edit{Panels show Dubins-car agents with unlimited sensing range and nominal speed $1$~m/s.} \textbf{(A to D)} Near-antipodal exchange, with agents starting around a circle and moving toward roughly opposite goals. \textbf{(E to H)} Intersecting groups of approximately parallel paths. Columns compare NDD (A and E), ORCA (B and F), LivePoint (C and G), and decentralized CBF-QP (D and H) using the same start--goal configuration within each row. Colors identify agents; filled and open circles mark starts and goals, respectively. Small dots mark positions at $2$~s intervals, so closer spacing indicates slower motion. Crossed circles mark pairs whose separation fell below the $d^{\mathrm{safe}}=0.7$~m safety distance; insets enlarge the corresponding regions. NDD-controlled agents remain collision-free in both scenarios while following their prescribed straight paths: spatial path intersections need not cause collisions because agents can traverse the same region at different times by modulating their speeds to create temporal separation. In contrast, ORCA, LivePoint, and decentralized CBF-QP record 9, 6, and 16 colliding pairs in the top row and 20, 15, and 41 in the bottom row, respectively.} }
    \label{fig:simulation_trajectories}
\end{figure}

\clearpage
\begin{figure}[p]
    \centering\includegraphics[width=\textwidth,height=0.72\textheight,keepaspectratio]{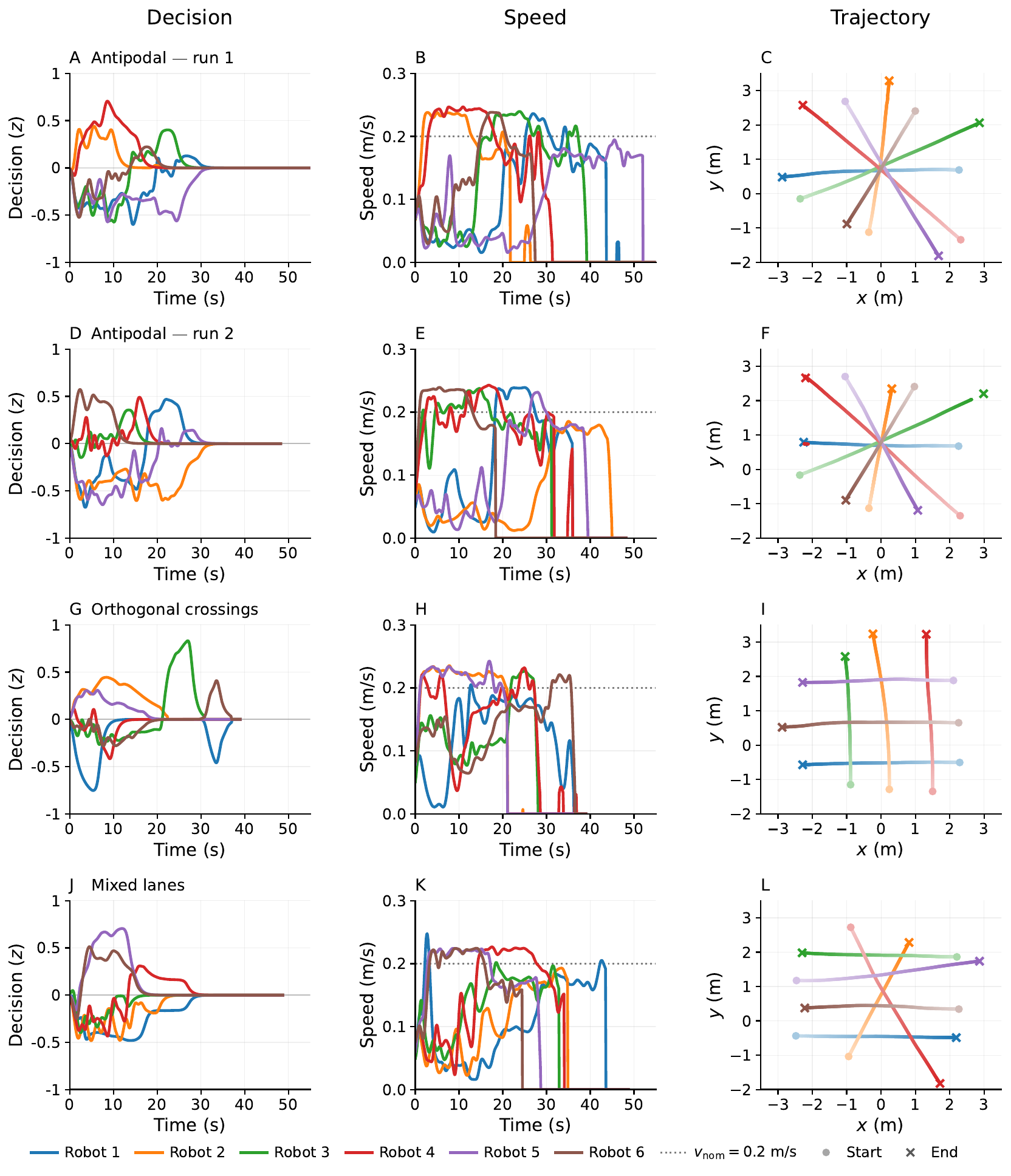}
    \caption{\small\textbf{Emergent traversal order and continuous adaptation in six-robot experiments.} Six TurtleBot3 robots follow prescribed paths without inter-robot communication. Columns show decisions $z_i$, measured speeds $v_i$, and planar trajectories; colors identify robots. Positive and negative decisions raise and lower commanded speeds relative to nominal. \textbf{(A to F)} Two antipodal repetitions with initial positions differing by $0.8$--$2.5$~cm per robot produce traversal orders $2\prec4\prec6\prec3\prec1\prec5$ (A to C) and $6\prec3\prec4\prec1\prec5\prec2$ (D to F). Robot~2 (orange) passes first in one trial and last in the other, illustrating how small motion asymmetries can be collectively amplified into distinct traversal orders without preassigned priorities. \textbf{(G to I)} In orthogonal crossings, robots~3 and~6 switch from yielding to speeding up at different stages of the encounter, illustrating continuous adaptation of their responses. \textbf{(J to L)} Mixed lanes show differentiated speed adjustments across multiple intersections. Across trials, decisions relax toward zero as robots complete their crossings. Gray dotted lines mark nominal speed $v^{\mathrm{nom}}=0.2\,\mathrm{m/s}$. Trajectories darken with distance traveled; circles and crosses mark recorded starts and ends.}
    \label{fig:hardware_experiments}
\end{figure}

\clearpage
\renewcommand{\thefigure}{S\arabic{figure}}
\renewcommand{\thetable}{S\arabic{table}}
\renewcommand{\theequation}{S\arabic{equation}}
\renewcommand{\thepage}{S\arabic{page}}
\renewcommand{\thesubsection}{S\arabic{subsection}}
\renewcommand{\thesubsubsection}{\thesubsection.\arabic{subsubsection}}
\renewcommand{\thetheorem}{S\arabic{theorem}}
\renewcommand{\thelemma}{S\arabic{lemma}}
\renewcommand{\theproposition}{S\arabic{proposition}}
\renewcommand{\thecorollary}{S\arabic{corollary}}
\setcounter{figure}{0}
\setcounter{table}{0}
\setcounter{equation}{0}
\setcounter{page}{1}
\setcounter{subsection}{0}
\setcounter{subsubsection}{0}
\setcounter{theorem}{0}
\setcounter{lemma}{0}
\setcounter{proposition}{0}
\setcounter{corollary}{0}

\begin{center}
\section*{Supplementary Materials for\\ \scititle}
Himani~Sinhmar$^{\ast}$,
Vaibhav~Srivastava,
Naomi~Ehrich~Leonard\\
\small$^\ast$Corresponding author. Email: himani.sinhmar@princeton.edu
\end{center}

\subsubsection*{This PDF file includes:}
Supplementary Text (Sections~\ref{sec:supp_practical_guide}--\ref{sec:numerical_bifurcation_illustration})\\
Figure~\ref{fig:supp_cubic_bifurcation_comparison}: Bifurcation diagrams and unfolding by timing bias.\\
Table~\ref{tab:principal_notation}: Notation: agent states and controller parameters\\
Table~\ref{tab:notation_interactions}: Notation: interaction structure and conflict evidence\\
Table~\ref{tab:notation_geometry}: Notation: conflict geometry, timing, and safety\\
Table~\ref{tab:notation_spectral}: Notation: spectral and bifurcation quantities\\
{Captions for Movies~S1--S9}\\
References \cite{kato1966perturbation,khalil2008nonlinear,golubitsky1985singularities}

\subsubsection*{{Other Supplementary Materials for this manuscript:}}
{Movies~S1--S9}

\clearpage

\section*{Supplementary Text}
\setlength{\emergencystretch}{2em}

\subsection{Practical guide to implementing NDD for a given mobile robot}
\label{sec:supp_practical_guide}

NDD is an event-triggered dynamic controller designed to sit between a robot's existing global path planner and its nominal motion controller. During nominal execution, the robot follows a planned straight or smoothly curved path in continuous time. When NDD predicts a conflict—evaluated in closed form using only local relative-motion observations—it autonomously modulates the robot's traversal speed to resolve the encounter.
If speed adaptation alone is insufficient (e.g., encountering a static obstacle blocking the path or a head-on conflict in a narrow corridor), NDD signals the architecture to trigger path replanning (Fig.~\ref{fig:overview}). Below, we outline a parameter-tuning workflow to implement NDD on any mobile robot platform, enabling real-time safe-and-live deconfliction under strict compute budgets.

\paragraph{Robot platform information.}
Start with the admissible speed range $[\underline v_i,\overline v_i]$, nominal speed $v_i^{\mathrm{nom}}$, speed-response time $\tau_v$, and stopping distance $D_i^{\mathrm{stop}}$ from the largest operating speed, including actuator delay. Also use an estimate of sensing/processing delay $T_{\mathrm{delay}}$ and an estimate of largest number $M_{\max}$ of neighbors the robot can simultaneously sense. Choose safety distance $d^{\mathrm{safe}}$ to cover robot size and any estimated tracking error. 

\paragraph{Pressure: choose the buffers, then the gains.}
For a representative encounter $e=(i,j)$ between robots $i$ and $j$, the conflict pressure is
\begin{equation}
p^e=p_i^e=\sigma(d_{\mathrm{prox}}-d_{\mathrm{ca}}^e;k_{\mathrm d})
\prod_{k\in\{i,j\}}\sigma(t_{\mathrm{prox}}+t_{k^+}^e-t_{\mathrm{ca}}^e;k_{\mathrm t}).
\label{eq:practical_pressure}
\end{equation}
Here $\sigma(x;k)=(1+e^{-kx})^{-1}$. The controller uses current motion to estimate the closest-approach distance and time $(d_{\mathrm{ca}}^e,t_{\mathrm{ca}}^e)$ and each robot's entry and exit times $t_{k^-}^e,t_{k^+}^e$. Entry and exit refer to the contested segment, i.e., the part of the path around a crossing where simultaneous occupancy could violate $d^{\mathrm{safe}}$.

First choose a small distance buffer $\Delta d>0$ for sensing and motion uncertainty, and set $d_{\mathrm{prox}}=d^{\mathrm{safe}}+\Delta d=k_{\mathrm{prox}}d^{\mathrm{safe}}$. A larger $k_{\mathrm{prox}}$ raises pressure for encounters in which the robots are expected to pass farther apart. Next choose a time buffer $t_{\mathrm{prox}}>0$. For example, $t_{\mathrm{prox}}=0.5$~s keeps an encounter relevant when closest approach is expected up to roughly half a second after the earlier of the two exit times. Pressure decreases smoothly around this margin.

Then choose gains that make each factor in Equation~\eqref{eq:practical_pressure} nearly equal to 1 in a critical encounter so that closest approach is within $d^{\mathrm{safe}}$ and occurs before either robot exits. As a starting point, target at least $0.95$ per factor by setting $k_{\mathrm d}=\log(19)/\Delta d$ and $k_{\mathrm t}=\log(19)/t_{\mathrm{prox}}$, where $19=0.95/0.05$. The resulting pressure satisfies $p^e\geq p_{\mathrm c}=0.95^3\approx0.86$.

For weak encounters, $p^e\leq p_{\mathrm w}=0.05$ if $d_{\mathrm{ca}}^e\geq d_{\mathrm{prox}}+\Delta d$ or closest approach occurs more than $2t_{\mathrm{prox}}$ after either exit. All encounters satisfy $p^e\leq\bar p=\sigma(d_{\mathrm{prox}};k_{\mathrm d})<1$. Smaller gains make the transitions gentler; larger gains make them sharper and more sensitive to uncertainty near the boundaries. To achieve other values of $p_{\mathrm c}$, $p_{\mathrm w}$, and $\bar p$ iterate on the above procedure. The method is robust, so for sparser congestion a critical pressure $p_c=0.5$ could suffice.  

\paragraph{Weights: choose temperature first, then pressure tilt.}
Choose $0<T_{\mathrm{sm}}<1$; $T_{\mathrm{sm}}=0.1$ is an illustrative starting value. Lower temperature $T_{\mathrm{sm}}$ concentrates weight on the most urgent interactions; higher temperature spreads weight and can dilute an urgent yielding cue in congestion. Choose a reference time gap $\Delta t$ between the estimated times at which robots $i$ and $j$ enter their respective contested segments. Make this gap larger than the uncertainty in the estimated time difference. Set $g_{\mathrm c}\in(0,1)$, for example $0.75$, as the desired magnitude $|g_i^e|$ of the arrival-order signal when the robots' estimated entry times differ by $\Delta t$. For $M_{\max}>1$, calculate
\[
\begin{aligned}
k_{\mathrm{tilt}}^{\min}
&=\frac{1-p_{\mathrm c}}{p_{\mathrm c}g_{\mathrm c}}
\left(\bar p-p_{\mathrm c}+T_{\mathrm{sm}}
\log\!\left(\frac{(M_{\max}-1)\bar p}{p_{\mathrm c}g_{\mathrm c}}\right)\right)_+,\\
k_{\mathrm{tilt}}^{\max}
&=\frac{(p_{\mathrm c}-p_{\mathrm w})(1-p_{\mathrm w})}{p_{\mathrm w}},
\qquad [x]_+=\max\{x,0\}.
\end{aligned}
\]
Choose $k_{\mathrm{tilt}}$ above the lower bound and below the upper bound, starting near the lower end. The lower bound makes a critical yielding cue dominate competing proceed cues; the upper bound keeps weak encounters from receiving excessive priority. If the interval is empty, reduce $T_{\mathrm{sm}}$ or increase pressure separation. With only one neighbor, its weight is one and tilt is unnecessary.

\paragraph{Gate: resolve meaningful timing differences.}
Set $k_{\mathrm g}=\operatorname{arctanh}(g_{\mathrm c})/\Delta t$ so that $|g_i^e|=g_{\mathrm c}$ when the estimated entry times differ by $\Delta t$. Larger $k_{\mathrm g}$ responds to smaller differences, including noise; smaller values give gentler speed differentiation. For each neighbor, compute
\[
\begin{aligned}
g_i^e&=\tanh\!\left(k_{\mathrm g}(t_{j^-}^e-t_{i^-}^e)\right),
&\widehat p_i^e&=p_i^e\left(1-\frac{k_{\mathrm{tilt}}g_i^e}{1-p_i^e}\right),\\
w_i^e&=\frac{\exp(\widehat p_i^e/T_{\mathrm{sm}})}{\sum_{e'}\exp(\widehat p_i^{e'}/T_{\mathrm{sm}})},
&h_i&=\sum_e w_i^e p_i^e g_i^e.
\end{aligned}
\]
The sums run over sensed neighbors. Positive $g_i^e$ favors proceeding; negative $g_i^e$ favors yielding, and the tilt increases its weight.

\paragraph{Decision and speed response.}
Implement
\[
\begin{aligned}
\tau_z\dot z_i&=-\rho_z z_i+\tanh(u_i h_i),
&\tau_u\dot u_i&=-\rho_u u_i+u_0+\kappa_u z_i^2,\\
v_i^{\mathrm{cmd}}&=v_i^{\mathrm{nom}}(1+\kappa_V\tanh(\kappa_z z_i)).
\end{aligned}
\]
Unity is a convenient starting choice for $\rho_z,\rho_u$. For fixed speed authority $\kappa_V$, increasing $\kappa_z$ produces larger speed changes from smaller decisions; the fractional-speed slope is $\kappa_V\kappa_z$. Choose it to reach the needed speed range and stopping trigger, and enforce hardware limits in the command interface. Exact zero-speed commands require $\kappa_V\tanh(\kappa_z/\rho_z)>1$ from zero initialization; a platform with a low-speed stopping trigger can use that threshold instead. Start with $\tau_z/\rho_z$ comparable to or slightly below $\tau_v$, and $\tau_u/\rho_u$ on the same scale. Smaller $\tau_z$ gives faster decisions; larger values smooth and delay the response. Actuator lag still limits how quickly speed can change.

\paragraph{Basal attention.}
A direct starting scale is $u_{\mathrm b}=1/(p_{\mathrm c}g_{\mathrm c})$, with $u_0=\rho_u u_{\mathrm b}$. This gives an order-one input to $\tanh$ for one dominant critical interaction. Increase $u_{\mathrm b}$ if competing cues leave the response too weak; decrease it if weak encounters cause unnecessary speed changes. Start with modest reinforcement $\kappa_u\geq0$: increasing it sustains decisions more strongly but can delay recovery.

\paragraph{Sensing range from platform response and stopping distance.}
Use several response time constants as an initial coordination allowance, for example
\[
T_{\mathrm{allow}}=5\max\{\tau_z/\rho_z,\tau_u/\rho_u,\tau_v\},
\qquad
L_i=D_i^{\mathrm{stop}}+\overline v_i(T_{\mathrm{delay}}+T_{\mathrm{allow}}).
\]
Choose the sensing range so both robots detect one another with at least $L_i$ remaining before contested-segment entry, plus an uncertainty margin. For identical robots with equal entry distances at crossing angle $0<\alpha<\pi$, this gives
\[
R_{\mathrm{sens}}\geq2(L_i+r_{\mathrm c})\sin(\alpha/2),
\qquad r_{\mathrm c}=d^{\mathrm{safe}}/|\sin\alpha|.
\]
Use the projected crossing geometry for unequal speeds or arrival times. Longer stopping distances and slower responses require more sensing range. Near ties or weak decision feedback can require more than the initial allowance; the factor five is a starting margin, not a universal negotiation-time bound. Increasing sensing range also increases the possible neighbor count, so use the corresponding $M_{\max}$ in the tilt calculation.

\paragraph{Pseudocode: NDD on robot $i$.}
\begin{enumerate}
\setlength{\itemsep}{0pt}
\item Initialize $z_i=0$, $u_i=u_{\mathrm b}$; obtain the path, nominal steering, and nominal speed from the existing navigation stack.
\item At each update, measure own motion and neighbors' relative positions and velocities; compute closest approach and contested-segment entry/exit times, with an explicit near-zero-speed convention. Request re-routing for head-on encounters.
\item If no neighbors are sensed, set $h_i=0$. Otherwise compute $p_i^e,g_i^e,\widehat p_i^e,w_i^e$ and sum $h_i$. 
\item Integrate the decision and attention equations with a step well below their response times. Send the admissible speed command together with nominal steering to the tracking controller.
\item Repeat, retaining the decision--attention--speed states as neighbors appear or disappear. As evidence vanishes, decisions relax and nominal speed returns. 
\end{enumerate}


\subsection{Assumptions and Outline of Remaining Supplementary Sections}

The following sections build up to prove finite-time safe-and-live deconfliction in Theorem~\ref{thm:finite_time_safe_deconfliction} of the main text. Under the conditions stated in each result, local NDD feedback establishes a traversal order, produces the temporal separation needed to realize that order safely, and adjusts decisions and speeds continuously as new physical conflicts arise. 
We start with the following assumptions:

\begin{assumption}[Finite conflict activations and no Zeno behavior]
\label{ass:episode}
A graph event is a conflict activation or physical clearance, as defined in Material and Methods Section. Every bounded time interval contains only finitely many graph events (no Zeno behavior). Each conflict episode contains finitely many conflict activations, including those involving agents entering the population and those that merge groups.
\end{assumption}

Assumption~\ref{ass:episode} holds naturally for a finite population of agents traversing straight paths with nondegenerate crossings. Each pair has at most one crossing encounter, and retaining an activated conflict until physical clearance prevents repeated switching during that encounter. Consequently, the number of graph events is finite. The same reasoning extends to piecewise-straight paths with finitely many segments and to curved paths with finitely many crossing encounters. Episodes involving agent arrivals are also covered when finitely many agents participate in each episode.

The term ``final'' is relative to the current episode; an agent's final conflict is the last of its physical conflicts to clear in that episode.

\begin{assumption}[Reciprocal sensing]
\label{ass:reciprocal_sensing}
For each conflicting pair considered in the analysis, both agents can sense one another.
\end{assumption}

\paragraph{Remark (Heterogeneous sensing radii).}
The reciprocal-sensing assumption is conservative: with heterogeneous sensing radii, we analyze each pair from mutual detection onward. An agent with a larger radius may detect the conflict and begin responding earlier.

\begin{assumption}[Stable speed tracking]
\label{ass:tracking}
For each admissible constant speed command, the realized-speed dynamics have a locally unique, exponentially stable equilibrium at the commanded speed, with a known bounded transient response within the agent's actuation limits.
\end{assumption}
At an equilibrium with a constant speed command, Assumption~\ref{ass:tracking} and Equations~\eqref{eq:cmd_speed}--\eqref{eq:speed_response} give $v_i=v_i^{\mathrm{nom}}+V(z_i)$.

The proof proceeds as follows:
\begin{itemize}
\item \textbf{Section~\ref{sec:supp_pressure_regimes}: Separation of interaction regimes and time-scale by design.} The pressure gains separate weak from safety-critical interactions and keep pressure nearly constant during negotiation. The response-time and sensing conditions give agents enough time to negotiate a traversal order before they enter their contested segments.

\item \textbf{Section~\ref{sec:supp_spectral_tuning}: Local control design ensures the emergence of a traversal order across an engaged group.} Theorem~\ref{thm:autonomous_spectral_tuning} shows how local feedback autonomously destabilizes collective indecision and selects a differentiated equilibrium of the decision--attention--speed subsystem. The subsystem state tracks this equilibrium as the conflict geometry slowly evolves, and the resulting fractional-speed differences define a strict acyclic traversal order.

\item \textbf{Section~\ref{sec:parameter_design}: Safety during negotiation and traversal.} The yielding and stopping bounds preserve safety during negotiation. Proposition~\ref{prop:fixed_component_edge_safety} gives a sensing range that allows the selected speed differences to produce positive temporal clearance. Theorem~\ref{thm:fixed_component_safe_order} combines these conditions to guarantee safe traversal and a leading mover that clears a physical conflict in finite time.

\item \textbf{Section~\ref{sec:decision_to_global_deconfliction}: Safe-and-live deconfliction across a changing conflict graph.} Theorem~\ref{thm:graph_reconfiguration_induction} preserves safety when conflicts clear, groups split or merge, and agents entering the population introduce new conflicts. After a conflict activation, the affected group establishes a new traversal order whenever its updated physical-conflict set remains fixed for $T^{\mathrm{neg}}$. After the final conflict activation in an episode, all remaining physical conflicts in that episode clear in finite time. Section~\ref{sec:supp_proof_global} combines these results to prove the main theorem.

\item \textbf{Section~\ref{sec:supp_local_bifurcation}: Local bifurcation analysis at collective indecision.} We perform a Lyapunov-Schmidt reduction to derive the scalar critical equation that determines the local branches of differentiated fast subsystem equilibria used in Section~\ref{sec:supp_spectral_tuning}. Its coefficients describe how controller parameters and projected entry-time offsets affect these branches and their stability.

\item \textbf{Section~\ref{sec:numerical_bifurcation_illustration}: Numerical illustration of bifurcation-driven deconfliction.} A three-agent example compares equilibria before and after a physical conflict clears. It illustrates how equilibrium decisions can change sign while the traversal orders on the remaining conflicts persist.
\end{itemize}

\subsection{Separation of interaction regimes and time-scale by design}
\label{sec:supp_pressure_regimes}

We consider the admissible conflict episodes specified in Materials and Methods (\textit{Admissible conflict episodes and scope}) and the collective interaction dynamics in Equation~\eqref{eq:stacked_ndd}. For agent $k$, let $\underline v_k$ and $\overline v_k$ denote its lower and upper admissible traversal speeds. The upper bound applies to realized motion, $v_k(t)\leq\overline v_k$, including tracking and actuation transients; hence the agent travels at most $\overline v_kT$ over an interval of duration $T$.
We choose the controller and sensing parameters to give agents enough time to negotiate a traversal order before they enter their contested segments. We first tune the spatial and temporal pressure gains $k_{\mathrm d},k_{\mathrm t}$ in Equation~\eqref{eq:conflict_pressure} to maintain high, slowly varying pressure for safety-critical interactions and low pressure for weak interactions. We then choose the negotiation duration $T^{\mathrm{neg}}$ to accommodate the coupled decision, attention, and speed response within actuation limits. The sensing radius $R_{\mathrm{sens}}$ must provide mutual detection at $t_0$ with $\overline v_kT^{\mathrm{neg}}\ll d_{k^-}^e(t_0)$, so each agent travels only a small fraction of its remaining distance to its contested segment during negotiation, as required by Equation~\eqref{eq:time_scale_separation}.

\paragraph{Enforcing separation of interaction regimes by design.}
Conflict pressure $p^e$ measures how strongly a projected encounter calls for a response. When the projected motion already provides sufficient spatial or temporal separation, the interaction is weak. Its pressure is at most $p_{\mathrm w}$, the \emph{weak-pressure level}. When the projected encounter brings the agents within the safety distance before either exits its contested segment, the interaction is critical. The pressure gains are chosen so that every such interaction has pressure at least $p_{\mathrm c}$, the \emph{critical-pressure level}.
After one agent passes the exit of its contested segment ($i^+$ or $j^+$ in Fig.~\ref{fig:contested_regions}, A), the pair's pressure can decrease through either the spatial or temporal factor in Equation~\eqref{eq:conflict_pressure}. An increase in the predicted closest-approach distance $d_{\mathrm{ca}}^e$ reduces the spatial factor $P_{\mathrm d}$. An earlier projected exit from either contested segment relative to closest approach reduces the temporal factor $P_{\mathrm t}$. Both mechanisms contribute to release of the critical interaction
and gives $p^e\leq p_{\mathrm r}$, the \emph{release-pressure level}.

The sigmoid gains $k_{\mathrm d},k_{\mathrm t}$ in Equation~\eqref{eq:conflict_pressure} enforce a gap $p_{\mathrm w}<p_{\mathrm c}$ between weak and critical interactions and a uniform upper bound $\bar p<1$. Pressure varies continuously between these regimes.

\begin{lemma}[Separation of interaction regimes]
\label{prop:uniform_pressure_bands}
Consider the pressure in~\eqref{eq:conflict_pressure}, with $d_{\mathrm{prox}}>d^{\mathrm{safe}}$ and $t_{\mathrm{prox}}>0$. Fix $\delta_{\mathrm d},\delta_{\mathrm t}>0$ and $\delta_{\mathrm r}\geq\delta_{\mathrm t}$. Finite positive gains $k_{\mathrm d},k_{\mathrm t}$ can be chosen so that
\begin{equation}
0<p_{\mathrm r}\leq p_{\mathrm w}<p_{\mathrm c}\leq\bar p<1.
\label{eq:ordered_pressure_bounds}
\end{equation}
Every interaction satisfies $0<p^e\leq\bar p$, and:
\begin{enumerate}
\item $p^e\geq p_{\mathrm c}$ if $d_{\mathrm{ca}}^e\leq d^{\mathrm{safe}}$ and $t^e_{i^+},t^e_{j^+}\geq t_{\mathrm{ca}}^e$;
\item $p^e\leq p_{\mathrm w}$ if $d_{\mathrm{ca}}^e\geq d_{\mathrm{prox}}+\delta_{\mathrm d}$, or $t^e_{k^+}-t_{\mathrm{ca}}^e\leq-t_{\mathrm{prox}}-\delta_{\mathrm t}$ for some $k\in\{i,j\}$;
\item after leader $\ell$ clears its contested segment, $p^e\leq p_{\mathrm r}$ whenever $t^e_{\ell^+}-t_{\mathrm{ca}}^e\leq-t_{\mathrm{prox}}-\delta_{\mathrm r}$.
\end{enumerate}
\end{lemma}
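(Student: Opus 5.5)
The plan is to define the four levels explicitly as functions of the gains, and then to show that sufficiently large gains order them as in \eqref{eq:ordered_pressure_bounds}. Write $\Delta_{\mathrm d}:=d_{\mathrm{prox}}-d^{\mathrm{safe}}>0$. Since $\sigma(\cdot;k)$ is increasing and lies in $(0,1)$, every pressure \eqref{eq:conflict_pressure} is strictly positive. The only nonconstant factor in the bounds below is $P_{\mathrm d}$, and we can bound it uniformly by using $d_{\mathrm{ca}}^e\geq0$: $P_{\mathrm d}\leq\sigma(d_{\mathrm{prox}};k_{\mathrm d})$. Bounding both temporal factors by $1$ then gives $p^e\leq\bar p:=\sigma(d_{\mathrm{prox}};k_{\mathrm d})<1$ for any finite $k_{\mathrm d}$. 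This settles the global claim $0<p^e\leq\bar p$.

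Next, we treat each regime by monotonicity of $\sigma$.
\begin{enumerate}
\item \emph{Critical case.} If $d_{\mathrm{ca}}^e\leq d^{\mathrm{safe}}$, then $P_{\mathrm d}\geq\sigma(\Delta_{\mathrm d};k_{\mathrm d})$. If also $t^e_{k^+}\geq t_{\mathrm{ca}}^e$ for both $k$, each temporal factor is at least $\sigma(t_{\mathrm{prox}};k_{\mathrm t})$. We therefore set $p_{\mathrm c}:=\sigma(\Delta_{\mathrm d};k_{\mathrm d})\,\sigma(t_{\mathrm{prox}};k_{\mathrm t})^2$.
\item \emph{Weak case.} If $d_{\mathrm{ca}}^e\geq d_{\mathrm{prox}}+\delta_{\mathrm d}$, then $P_{\mathrm d}\leq\sigma(-\delta_{\mathrm d};k_{\mathrm d})$ and $P_{\mathrm t}\leq1$. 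If instead some $k$ has $t^e_{k^+}-t_{\mathrm{ca}}^e\leq-t_{\mathrm{prox}}-\delta_{\mathrm t}$, that temporal factor is at most $\sigma(-\delta_{\mathrm t};k_{\mathrm t})$, and the remaining factors are at most $1$. We therefore set $p_{\mathrm w}:=\max\{\sigma(-\delta_{\mathrm d};k_{\mathrm d}),\sigma(-\delta_{\mathrm t};k_{\mathrm t})\}$.
\item \emph{Release case.} The same argument applied to the leader's factor gives $p_{\mathrm r}:=\sigma(-\delta_{\mathrm r};k_{\mathrm t})$. Because $\delta_{\mathrm r}\geq\delta_{\mathrm t}$ and $\sigma$ is increasing, $p_{\mathrm r}\leq\sigma(-\delta_{\mathrm t};k_{\mathrm t})\leq p_{\mathrm w}$, and clearly $p_{\mathrm r}>0$.
\end{enumerate}

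It remains to show $p_{\mathrm w}<p_{\mathrm c}\leq\bar p$. The inequality $p_{\mathrm c}\leq\bar p$ is immediate, because $\Delta_{\mathrm d}<d_{\mathrm{prox}}$ and the temporal factors are below one. For the strict gap we take limits as $k_{\mathrm d},k_{\mathrm t}\to\infty$:
\begin{itemize}
\item $p_{\mathrm c}\to1$, since all of its arguments are positive;
\item $p_{\mathrm w}\to0$, since all of its arguments are negative.
\end{itemize}
Hence finite gains exist with $p_{\mathrm w}<p_{\mathrm c}$. To make this quantitative, we can require $\sigma(\Delta_{\mathrm d};k_{\mathrm d})\,\sigma(t_{\mathrm{prox}};k_{\mathrm t})^2>1/2>p_{\mathrm w}$. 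It is enough to impose $\sigma(x;k)\geq 0.8$ for each positive argument (so that $0.8^3>1/2$), and to note that any $k$ large enough gives $\sigma(-\delta;k)<1/2$. Choosing $k_{\mathrm d}\geq\log 4/\min\{\Delta_{\mathrm d},\delta_{\mathrm d}\}$ and $k_{\mathrm t}\geq\log 4/\min\{t_{\mathrm{prox}},\delta_{\mathrm t}\}$ suffices. This mirrors the $\log 19$ recipe in Supplementary Sec.~\ref{sec:supp_practical_guide}.

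The main subtlety, rather than a real obstacle, is the uniform bound $\bar p<1$. The spatial factor approaches $1$ only as $d_{\mathrm{ca}}^e\to-\infty$, so the bound must use the fact that closest-approach distance is nonnegative. This bound is what keeps the tilt denominator in \eqref{eq:tilted_pressure} away from zero. A second point to check is that item~3 must hold regardless of the other factors. That is why we bound the other factors by $1$, instead of relying on the leader's clearance to change the spatial factor.
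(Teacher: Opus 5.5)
Your proof is correct and follows the paper's argument essentially step for step. You use the same explicit levels $\bar p=\sigma(d_{\mathrm{prox}};k_{\mathrm d})$ and $p_{\mathrm c}=\sigma(d_{\mathrm{prox}}-d^{\mathrm{safe}};k_{\mathrm d})\,\sigma^2(t_{\mathrm{prox}};k_{\mathrm t})$, the same monotonicity bounds in each regime, and the same limits $p_{\mathrm c}\to1$, $p_{\mathrm w}\to0$ as the gains grow. The differences are cosmetic: the paper multiplies the temporal terms in $p_{\mathrm w}$ and $p_{\mathrm r}$ by $\bar p$ (it bounds the spatial factor by $\bar p$ rather than $1$), and you add an explicit gain threshold, which is valid and slightly stronger than needed since any $k>0$ already gives $\sigma(-\delta;k)<1/2$.
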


\begin{proof}
Set
\begin{equation*}
\begin{aligned}
\bar p&:=\sigma(d_{\mathrm{prox}};k_{\mathrm d}),\\
p_{\mathrm c}&:=\sigma(d_{\mathrm{prox}}-d^{\mathrm{safe}};k_{\mathrm d})
\sigma^2(t_{\mathrm{prox}};k_{\mathrm t}),\\
p_{\mathrm w}&:=\max\left\{
\sigma(-\delta_{\mathrm d};k_{\mathrm d}),
\bar p\,\sigma(-\delta_{\mathrm t};k_{\mathrm t})
\right\},\\
p_{\mathrm r}&:=\bar p\,\sigma(-\delta_{\mathrm r};k_{\mathrm t}).
\end{aligned}
\end{equation*}
Since $d_{\mathrm{ca}}^e\geq0$, the spatial factor is at most $\bar p$. In the critical regime it is at least $\sigma(d_{\mathrm{prox}}-d^{\mathrm{safe}};k_{\mathrm d})$, and both temporal factors are at least $\sigma(t_{\mathrm{prox}};k_{\mathrm t})$. In the weak regime, either the spatial factor is at most $\sigma(-\delta_{\mathrm d};k_{\mathrm d})$ or a temporal factor is at most $\sigma(-\delta_{\mathrm t};k_{\mathrm t})$. The release condition similarly bounds the leader's temporal factor by $\sigma(-\delta_{\mathrm r};k_{\mathrm t})$. As the gains increase, $p_{\mathrm c}\to1$ and $p_{\mathrm w}\to0$, so finite gains ensure $p_{\mathrm w}<p_{\mathrm c}$. The remaining inequalities follow from $\delta_{\mathrm r}\geq\delta_{\mathrm t}$ and the sigmoid bounds.
\end{proof}

\paragraph{Enforcing time-scale separation by design.}
As established in the Results (\textit{Collective interaction dynamics}), a safe traversal order must emerge within a finite negotiation duration, $T^{\mathrm{neg}}>0$. Let $\mathcal I^{\mathrm{neg}}:=[t_0,t_0+T^{\mathrm{neg}}]$ denote this negotiation interval, triggered at the moment of mutual detection $t_0$. For NDD to successfully adjust speeds based on changing arrival-order cues (carried by $G$), the conflict pressure must remain high and nearly constant throughout this phase. While Lemma~\ref{prop:uniform_pressure_bands} guarantees a lower bound of $p^e\geq p_{\mathrm c}$ during safety-critical interactions, we must explicitly tune the pressure gains to bound the maximum pressure variation over $\mathcal I^{\mathrm{neg}}$ by a small constant $\varepsilon_p>0$. We also select the sensing range $R_{\mathrm{sens}}$ to ensure that this negotiation naturally concludes before any agent reaches a contested segment.



\begin{assumption}[Regular projected geometry]
\label{ass:regular_projected_geometry}
On each interval with fixed sensed interactions, the predicted closest-approach distance and exit-time offsets satisfy $|\dot d_{\mathrm{ca}}^e|\leq L_{\mathrm d}$ and $|\dot t^e_{k^+}-\dot t_{\mathrm{ca}}^e|\leq L_{\mathrm t}$ for $e=(i,j)$ and $k\in\{i,j\}$, and finite constants $L_{\mathrm d},L_{\mathrm t}\geq0$.
\end{assumption}
Here $t^e_{k^+}$ is the predicted time at which agent $k$ reaches the exit $k^+$ of its contested segment ($i^+$ or $j^+$ in Fig.~\ref{fig:contested_regions}, A). 
The distance $d_{\mathrm{ca}}^e$ and the offsets $t^e_{k^+}-t_{\mathrm{ca}}^e$ determine the spatial and temporal factors of Equation~\eqref{eq:conflict_pressure}, respectively. Known speed and acceleration bounds provide $L_{\mathrm d},L_{\mathrm t}$ on intervals where the kinematic projections have bounded sensitivity to the observed motion. 

\begin{lemma}[Pressure variation during negotiation]
\label{prop:pressure_response_tuning}
Under Assumption~\ref{ass:regular_projected_geometry}, suppose the sensed interactions are fixed and $d_{\mathrm{ca}}^e\leq d^{\mathrm{safe}}$, $t^e_{i^+},t^e_{j^+}\geq t_{\mathrm{ca}}^e$ throughout $\mathcal I^{\mathrm{neg}}$. The pressure rate admits a uniform bound $|\dot p^e|\leq L_p(k_{\mathrm d},k_{\mathrm t})$. For any $\varepsilon_p>0$, finite positive gains $k_{\mathrm d},k_{\mathrm t}$ can be chosen to satisfy both Equation~\eqref{eq:ordered_pressure_bounds} and
\begin{equation}
T^{\mathrm{neg}}L_p(k_{\mathrm d},k_{\mathrm t})\leq\varepsilon_p.
\label{eq:supp_pressure_variation_design}
\end{equation}
Every such choice gives $p^e(t)\geq p_{\mathrm c}$ and $|p^e(t)-p^e(s)|\leq\varepsilon_p$ for $s,t\in\mathcal I^{\mathrm{neg}}$.
\end{lemma}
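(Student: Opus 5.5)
The plan is a direct chain-rule estimate on the product of sigmoids in~\eqref{eq:conflict_pressure}, combined with the observation that, inside the critical regime, every sigmoid argument is bounded away from zero on the saturated side. There the sigmoid slopes are exponentially small in the gains. Increasing $k_{\mathrm d},k_{\mathrm t}$ then simultaneously sharpens the regime separation of Lemma~\ref{prop:uniform_pressure_bands} and flattens the pressure.

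\textbf{Step 1: rate bound.} Write $\sigma'(x;k)=k\,\sigma(x;k)\bigl(1-\sigma(x;k)\bigr)$. On an interval with fixed sensed interactions the kinematic quantities are smooth, so differentiate $p^e=P_{\mathrm d}P_{\mathrm t}$. Bound each of the other factors by $1$ and use Assumption~\ref{ass:regular_projected_geometry}. This gives
\[
|\dot p^e|\leq k_{\mathrm d}\,\sigma_{\mathrm d}(1-\sigma_{\mathrm d})\,L_{\mathrm d}+\sum_{k\in\{i,j\}}k_{\mathrm t}\,\sigma_{\mathrm t,k}(1-\sigma_{\mathrm t,k})\,L_{\mathrm t},
\]
where $\sigma_{\mathrm d}$ and $\sigma_{\mathrm t,k}$ denote the spatial factor and the two temporal factors.

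\textbf{Step 2: exploit the critical regime.} Throughout $\mathcal I^{\mathrm{neg}}$ the hypotheses give $d_{\mathrm{prox}}-d_{\mathrm{ca}}^e\geq d_{\mathrm{prox}}-d^{\mathrm{safe}}=:\Delta>0$ and $t_{\mathrm{prox}}+t^e_{k^+}-t^e_{\mathrm{ca}}\geq t_{\mathrm{prox}}>0$. For $x\geq a>0$ we have $\sigma(x;k)(1-\sigma(x;k))\leq 1-\sigma(x;k)\leq e^{-ka}$. Hence a uniform bound holds:
\[
|\dot p^e|\leq L_p(k_{\mathrm d},k_{\mathrm t}):=k_{\mathrm d}e^{-k_{\mathrm d}\Delta}L_{\mathrm d}+2k_{\mathrm t}e^{-k_{\mathrm t}t_{\mathrm{prox}}}L_{\mathrm t}.
\]
This bound is independent of the interaction and of time, and it satisfies $L_p\to0$ as $k_{\mathrm d},k_{\mathrm t}\to\infty$, since $ke^{-ka}\to0$.

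\textbf{Step 3: joint choice of gains.} From the proof of Lemma~\ref{prop:uniform_pressure_bands}, $p_{\mathrm c}\to1$ and $p_{\mathrm w},p_{\mathrm r}\to0$ as the gains grow, while $\bar p<1$ for every finite $k_{\mathrm d}$. So there is a threshold beyond which~\eqref{eq:ordered_pressure_bounds} holds. There is also a threshold beyond which $T^{\mathrm{neg}}L_p\leq\varepsilon_p$. Taking finite gains above both thresholds yields~\eqref{eq:supp_pressure_variation_design}.

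\textbf{Step 4: conclusions.} For every such choice, item~1 of Lemma~\ref{prop:uniform_pressure_bands} applies at each $t\in\mathcal I^{\mathrm{neg}}$, giving $p^e(t)\geq p_{\mathrm c}$. The fundamental theorem of calculus then gives $|p^e(t)-p^e(s)|\leq|t-s|\,L_p\leq T^{\mathrm{neg}}L_p\leq\varepsilon_p$ for $s,t\in\mathcal I^{\mathrm{neg}}$. If the evidence is only piecewise smooth, the same argument is applied on each smooth piece; with fixed sensed interactions there is a single piece.

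\textbf{Main obstacle.} The delicate point is the logical independence of $T^{\mathrm{neg}}$ from the pressure gains. $T^{\mathrm{neg}}$ should be set by the decision, attention and speed time scales $T_{\mathrm{fast}}$, together with the spectral margin of Theorem~\ref{thm:autonomous_spectral_tuning}. That margin depends on $p_{\mathrm c}$, which only improves as the gains grow. I would argue that fixing an upper bound on $T^{\mathrm{neg}}$ valid for all gains above a threshold removes any circularity. Similarly, $L_{\mathrm d}$ and $L_{\mathrm t}$ must not depend on $k_{\mathrm d},k_{\mathrm t}$; this holds because they come from speed and acceleration bounds rather than from the evidence map.
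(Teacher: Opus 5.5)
Your proposal is correct and follows the paper's proof essentially step for step. Like the paper, you differentiate the product of sigmoids and use $1-\sigma(x;k)\le e^{-kx}$ with the critical-regime margins $d_{\mathrm{prox}}-d^{\mathrm{safe}}$ and $t_{\mathrm{prox}}$, which gives the same bound $L_p(k_{\mathrm d},k_{\mathrm t})=L_{\mathrm d}k_{\mathrm d}e^{-k_{\mathrm d}(d_{\mathrm{prox}}-d^{\mathrm{safe}})}+2L_{\mathrm t}k_{\mathrm t}e^{-k_{\mathrm t}t_{\mathrm{prox}}}$; you then let the gains grow and integrate. Your remark that $T^{\mathrm{neg}}$ and $L_{\mathrm d},L_{\mathrm t}$ must be fixed independently of the gains also matches the paper, which chooses $T^{\mathrm{neg}}$ from the response times before tuning the pressure gains.
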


\begin{proof}
The lower pressure bound follows from Lemma~\ref{prop:uniform_pressure_bands}. Differentiating Equation~\eqref{eq:conflict_pressure} with $\partial_x\sigma(x;k)=k\sigma(x;k)[1-\sigma(x;k)]$ gives
\begin{equation*}
\begin{aligned}
\dot p^e
={}&-k_{\mathrm d}p^e\bigl[1-\sigma(d_{\mathrm{prox}}-d_{\mathrm{ca}}^e;k_{\mathrm d})\bigr]\dot d_{\mathrm{ca}}^e\\
&+k_{\mathrm t}p^e\sum_{k\in\{i,j\}}
\bigl[1-\sigma(t_{\mathrm{prox}}+t^e_{k^+}-t_{\mathrm{ca}}^e;k_{\mathrm t})\bigr]
(\dot t^e_{k^+}-\dot t_{\mathrm{ca}}^e).
\end{aligned}
\end{equation*}
Using $0<p^e\leq1$ and $1-\sigma(x;k)\leq e^{-kx}$, the critical-regime conditions and Assumption~\ref{ass:regular_projected_geometry} yield
\begin{equation*}
\begin{aligned}
|\dot p^e|
&\leq k_{\mathrm d}e^{-k_{\mathrm d}(d_{\mathrm{prox}}-d_{\mathrm{ca}}^e)}|\dot d_{\mathrm{ca}}^e|
+k_{\mathrm t}\sum_{k\in\{i,j\}}e^{-k_{\mathrm t}(t_{\mathrm{prox}}+t^e_{k^+}-t_{\mathrm{ca}}^e)}
|\dot t^e_{k^+}-\dot t_{\mathrm{ca}}^e|\\
&\leq L_{\mathrm d}k_{\mathrm d}e^{-k_{\mathrm d}(d_{\mathrm{prox}}-d^{\mathrm{safe}})}
+2L_{\mathrm t}k_{\mathrm t}e^{-k_{\mathrm t}t_{\mathrm{prox}}}
=:L_p(k_{\mathrm d},k_{\mathrm t}).
\end{aligned}
\end{equation*}
Since $d_{\mathrm{prox}}-d^{\mathrm{safe}}>0$ and $t_{\mathrm{prox}}>0$, each term in $L_p$ tends to zero as its corresponding gain tends to infinity. Thus finite gains satisfy both Equations~\eqref{eq:ordered_pressure_bounds} and~\eqref{eq:supp_pressure_variation_design}. Integration gives $|p^e(t)-p^e(s)|\leq L_p|t-s|\leq L_pT^{\mathrm{neg}}\leq\varepsilon_p$.
Note that, sufficiently large $k_{\mathrm d},k_{\mathrm t}$ place the critical-regime sigmoids on nearly flat portions near one. 
\end{proof}

Because conflict pressure depends on closest-approach quantities, it remains largely invariant during the negotiation phase, even as speed adjustments alter the projected encounter. For instance, two agents traveling at nearly constant speeds along straight segments prior to their closest approach will maintain a constant predicted closest-approach distance. In this scenario, their projected exit and closest-approach times shift in tandem, leaving the difference $t^e_{k^+}-t_{\mathrm{ca}}^e$ unchanged. Hence, the conflict pressure remains constant despite the agents' ongoing motion.

The speed tracker and actuator limits constrain how quickly agents can change their realized speeds during negotiation.
Given the speed-response time $\tau_v$ and actuation limits, choose a feasible $T^{\mathrm{neg}}$ and tune $\tau_z,\tau_u$ so that $T_{\mathrm{fast}}:=\max\{\tau_z/\rho_z,\tau_u/\rho_u,\tau_v\}\lesssim T^{\mathrm{neg}}$, with the pressure gains satisfying Equation~\eqref{eq:supp_pressure_variation_design} for that duration.
To ensure that negotiation concludes well before the agents reach their contested segments, we limit the distance each agent travels during negotiation to a prescribed small fraction $0<\varepsilon_{\mathrm{trav}}\ll1$ of its remaining distance to its contested segment, measured at mutual detection. We enforce this limit by selecting a sensing radius that provides mutual detection with
\begin{equation}
d_{k^-}^e(t_0)\geq\frac{\overline v_kT^{\mathrm{neg}}}{\varepsilon_{\mathrm{trav}}},
\qquad k\in\{i,j\}.
\label{eq:supp_negotiation_sensing_budget}
\end{equation}
With $T_{\mathrm{trav}}^e:=\min_{k\in\{i,j\}}d_{k^-}^e(t_0)/\overline v_k$, the response and detection choices give
\begin{equation}
T_{\mathrm{fast}}\lesssim T^{\mathrm{neg}}
\leq\varepsilon_{\mathrm{trav}}T_{\mathrm{trav}}^e
\ll T_{\mathrm{trav}}^e.
\label{eq:supp_joint_timescale_budget}
\end{equation}
The chosen travel fraction $\varepsilon_{\mathrm{trav}}$ therefore sets the time-scale separation, and the sensing condition enforces it. Lemmas~\ref{prop:uniform_pressure_bands} and~\ref{prop:pressure_response_tuning} keep pressure high and slowly varying as long as the interaction remains safety-critical. Pressure continues to be updated from observed motion.

\paragraph{Remark (Practical choice of response times and sensing range).}
In practice, we start from the speed-response time $\tau_v$, actuation limits, and perception update rate. We choose the decision and attention response times $\tau_z/\rho_z,\tau_u/\rho_u$ and negotiation duration $T^{\mathrm{neg}}$ so that the actuators can produce the commanded speed changes and perception updates can resolve the resulting changes in relative motion. Once $\tau_z/\rho_z$ is shorter than the speed-response time or the time needed for motion to produce observable changes in arrival-order evidence, further reducing $\tau_z$ may bring little improvement in the overall negotiation time. 

Slower speed tracking, limited acceleration, or a lower perception update rate may require a longer negotiation interval. We accommodate that interval by increasing $R_{\mathrm{sens}}$ to detect conflicts earlier and retain $T^{\mathrm{neg}}\leq\varepsilon_{\mathrm{trav}}T_{\mathrm{trav}}^e$ in Equation~\eqref{eq:supp_joint_timescale_budget}.
A larger sensing range may also increase $|\mathcal E_i(t)|$ and its design bound $M_{\max}$. We therefore check that the processor can handle these additional interactions at the selected perception rate. Evaluating the evidence in Equation~\eqref{eq:ndd:evidence} takes $\mathcal O(|\mathcal E_i(t)|)$ work per update, so this cost grows linearly with the number of sensed interactions.


\paragraph{Remark (admissible pressure functions)}
The pressure map in~Equation~\eqref{eq:conflict_pressure} is one admissible realization. An alternative map to $[0,1)$ may be used if it is 1) pair-invariant, so $P(\mathcal X_i^e)=P(\mathcal X_j^{\bar e})$; 2) sufficiently smooth on each fixed-interaction domain; 3) increasing with projected conflict severity and decaying after clearance; and 4) able to realize the ordered critical, weak, and release levels of Lemma~\ref{prop:uniform_pressure_bands} while keeping safety-critical pressure slowly varying during the coupled decision, attention, and speed response described in Lemma~\ref{prop:pressure_response_tuning}. Pair invariance makes pressure a shared severity measure, while $G$ supplies the currently favored pairwise traversal-order sign.

\subsection{Local control design ensures the emergence of a traversal order across an engaged group}
\label{sec:supp_spectral_tuning}

We analyze the steady states of the fast decision--attention--speed subsystem to show how a collective traversal order emerges from local NDD controllers and how local parameter choices tune this emergent response.
The time-scale separation established in Section~\ref{sec:supp_pressure_regimes} allows us to hold the variables evolving on slower time-scales (remaining traversal distances and sensed interaction network) fixed, during the negotiation, while retaining the feedback between decisions, attention, realized speeds, and speed-dependent conflict evidence.

The entries of the interaction-sensitivity matrix $\boldsymbol A$ in Equation~\eqref{eq:evidence_jacobian} quantify how decision-induced speed changes affect each agent's conflict evidence. They depend on the locally tunable pressure and weighting parameters and the decision-to-speed map.
As agents sense and respond to one another's motion, their local NDD feedback autonomously shapes the entries of $\boldsymbol A$. Its dominant eigenvalue $\lambda_{\max}(\boldsymbol A)$ captures how these local interactions collectively govern the stability of indecision.
We identify a threshold $\lambda^\ast$ such that collective indecision loses stability when $\lambda_{\max}(\boldsymbol A)$ crosses it from below. We derive local controller conditions that ensure pressure-dependent feedback drives this eigenvalue above $\lambda^\ast$ when any interaction becomes safety-critical and returns it below $\lambda^\ast$ when all interactions are weak.
Under the local bifurcation conditions in Section~\ref{sec:supp_local_bifurcation}, this transition gives rise to attracting equilibria with differentiated decisions and speeds.
We show how the resulting speed differences between conflicting agents, measured relative to their nominal speeds, define a consistent traversal order.
We then establish conditions under which the decision--attention--speed state tracks the changing equilibrium closely enough to preserve this order as the geometry evolves slowly.
As conflicts clear, declining pressure brings the system back below threshold, allowing decisions and attention to relax and agents to resume nominal motion.


\subsubsection{The interaction-sensitivity matrix governs the onset of collective differentiation}

At a group tie, agents moving at their nominal speeds have equal projected entry times on each physical conflict. The signed arrival-order evidence therefore vanishes, and the decision--attention--speed subsystem has a collective-indecision equilibrium. Equation~\eqref{eq:reduced_steady_state} describes its equilibria using only the decision variables. The following lemma establishes that this reduction preserves the equilibria of the coupled subsystem. It also identifies when the full subsystem Jacobian has a zero eigenvalue and relates the associated eigenvector to the interaction-sensitivity matrix $\boldsymbol A$. 

\begin{lemma}[Spectral condition for steady-state bifurcation]
\label{lem:jacobian_steady_state}
Under Assumption~\ref{ass:tracking} and the time-scale separation established in Section~\ref{sec:supp_pressure_regimes}, the fast subsystem has the following properties.
\begin{enumerate}
\item The zeros of Equation~\eqref{eq:reduced_steady_state} correspond locally one-to-one with equilibria of the fast decision--attention--speed subsystem in Equation~\eqref{eq:stacked_ndd}.
\item At a group tie, let $\boldsymbol J_\ast$ be the Jacobian of this subsystem evaluated at collective indecision, let $\boldsymbol K:=D_{\boldsymbol z}\boldsymbol V(\boldsymbol0)$, and evaluate the interaction-sensitivity matrix $\boldsymbol A$ from Equation~\eqref{eq:evidence_jacobian} at the tie. The Jacobian $\boldsymbol J_\ast$ has a zero eigenvalue exactly when $\boldsymbol A$ has the eigenvalue $\lambda^\ast=\rho_z\rho_u/[S'(0)u_0]$ defined in Equation~\eqref{eq:formation_spectral_condition}. The associated eigenvectors of $\boldsymbol J_\ast$  are exactly $\operatorname{col}(\boldsymbol\phi,\boldsymbol0,\boldsymbol K\boldsymbol\phi)$, where $\boldsymbol A\boldsymbol\phi=\lambda^\ast\boldsymbol\phi$ and $\boldsymbol\phi\neq\boldsymbol0$.
\end{enumerate}
\end{lemma}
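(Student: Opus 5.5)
The plan is to treat the fast subsystem of Equation~\eqref{eq:stacked_ndd} with the observation snapshot $\mathcal X$ frozen, as permitted by the time-scale separation of Section~\ref{sec:supp_pressure_regimes}. In that setting the evidence $\boldsymbol h$ depends on the fast state only through the realized speeds $\boldsymbol v$. The state is $(\boldsymbol z,\boldsymbol u,\boldsymbol v)$.

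For part 1, I solve the equilibrium equations block by block.
\begin{itemize}
\item The attention equation is linear in $\boldsymbol u$. Setting it to zero gives uniquely $\boldsymbol u=(u_0/\rho_u)\boldsymbol 1+(\kappa_u/\rho_u)(\boldsymbol z\odot\boldsymbol z)$.
\item The speed equation is $\boldsymbol f_v(\boldsymbol v,\boldsymbol c)=\boldsymbol 0$ with constant command $\boldsymbol c=\boldsymbol v^{\mathrm{nom}}+\boldsymbol V(\boldsymbol z)$. By Assumption~\ref{ass:tracking}, for each admissible command the only nearby zero is $\boldsymbol v=\boldsymbol c$, and it is exponentially stable.
\item I take exponential stability in its hyperbolic form: $\boldsymbol B:=-D_1\boldsymbol f_v$ at $(\boldsymbol c,\boldsymbol c)$ is nonsingular, with Hurwitz $-\boldsymbol B$. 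The implicit function theorem then makes the map $\boldsymbol z\mapsto\boldsymbol v=\boldsymbol v^{\mathrm{nom}}+\boldsymbol V(\boldsymbol z)$ the locally unique smooth solution.
\item Substituting both expressions into the decision equation gives exactly $\boldsymbol F(\boldsymbol z;\mathcal X,\boldsymbol\Theta)=\boldsymbol 0$ of Equation~\eqref{eq:reduced_steady_state}, with $\mathcal X_{\boldsymbol z}$ the observations at these realized speeds.
\item Conversely, each zero $\boldsymbol z$ of $\boldsymbol F$ lifts to the unique nearby triple $(\boldsymbol z,\boldsymbol u(\boldsymbol z),\boldsymbol v(\boldsymbol z))$. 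This gives the local one-to-one correspondence.
\end{itemize}

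For part 2, I compute the block Jacobian at $(\boldsymbol 0,(u_0/\rho_u)\boldsymbol 1,\boldsymbol v^{\mathrm{nom}})$, where the group tie gives $\boldsymbol h=\boldsymbol 0$.
\begin{itemize}
\item \emph{Decision row.} The $\boldsymbol u$-derivative is $S'(0)\operatorname{diag}(\boldsymbol h)/\tau_z=\boldsymbol 0$. The $\boldsymbol z$-block is $-\rho_z\boldsymbol I/\tau_z$, since $\boldsymbol h$ depends on $\boldsymbol z$ only through $\boldsymbol v$. The $\boldsymbol v$-block is $S'(0)(u_0/\rho_u)D_{\boldsymbol v}\boldsymbol h/\tau_z$.
\item \emph{Attention row.} It is $(2\kappa_u\operatorname{diag}(\boldsymbol z),-\rho_u\boldsymbol I,\boldsymbol 0)/\tau_u=(\boldsymbol 0,-\rho_u\boldsymbol I,\boldsymbol 0)/\tau_u$.
\item \emph{Speed row.} Because $\boldsymbol f_v(\boldsymbol c,\boldsymbol c)\equiv\boldsymbol 0$ for all admissible $\boldsymbol c$, differentiating gives $D_1\boldsymbol f_v+D_2\boldsymbol f_v=\boldsymbol 0$ on the diagonal. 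The row is therefore $(\boldsymbol B\boldsymbol K,\boldsymbol 0,-\boldsymbol B)/\tau_v$, with $\boldsymbol K=D_{\boldsymbol z}\boldsymbol V(\boldsymbol 0)$.
\end{itemize}

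Now take a kernel vector $(\boldsymbol\zeta_z,\boldsymbol\zeta_u,\boldsymbol\zeta_v)$.
\begin{itemize}
\item The attention row forces $\boldsymbol\zeta_u=\boldsymbol 0$.
\item Invertibility of $\boldsymbol B$ in the speed row forces $\boldsymbol\zeta_v=\boldsymbol K\boldsymbol\zeta_z$.
\item Substituting into the decision row gives $-\rho_z\boldsymbol\zeta_z+S'(0)(u_0/\rho_u)D_{\boldsymbol v}\boldsymbol h\,\boldsymbol K\boldsymbol\zeta_z=\boldsymbol 0$. Since $\boldsymbol A=D_{\boldsymbol v}\boldsymbol h\,\boldsymbol K$ by Equation~\eqref{eq:evidence_jacobian}, this reads $\boldsymbol A\boldsymbol\zeta_z=\lambda^\ast\boldsymbol\zeta_z$.
\item If $\boldsymbol\zeta_z=\boldsymbol 0$, then $\boldsymbol\zeta_v=\boldsymbol 0$ as well, so every nonzero kernel vector has $\boldsymbol\zeta_z=\boldsymbol\phi\neq\boldsymbol 0$.
\item Reversing the computation shows that every eigenvector $\boldsymbol\phi$ of $\boldsymbol A$ for $\lambda^\ast$ yields $\operatorname{col}(\boldsymbol\phi,\boldsymbol 0,\boldsymbol K\boldsymbol\phi)\in\ker\boldsymbol J_\ast$.
\end{itemize}
Together these give both the "exactly when" claim and the exact form of the eigenvectors.

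The main obstacle is justifying invertibility of $D_1\boldsymbol f_v$ from Assumption~\ref{ass:tracking}. Exponential stability of a smooth equilibrium only implies a nonsingular linearization if it is linearly exponential. I will state that the "known bounded transient response" is read as a Hurwitz linearization. Since $\boldsymbol f_v$ stacks agent-specific scalar (or decoupled) trackers, this is a blockwise condition. The identity $D_1\boldsymbol f_v=-D_2\boldsymbol f_v$ needs $\boldsymbol f_v(\boldsymbol c,\boldsymbol c)=\boldsymbol 0$ on an open set of commands, and this holds for admissible commands in the interior of the actuation limits.
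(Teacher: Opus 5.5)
Your proposal is correct and matches the paper's proof essentially step for step: the same block elimination of attention and speed for part~1, the same block Jacobian using $D_1\boldsymbol f_v+D_2\boldsymbol f_v=\boldsymbol 0$ (the paper's $\boldsymbol Q+\boldsymbol C=\boldsymbol 0$), and the same kernel computation $\delta\boldsymbol u=\boldsymbol 0$, $\delta\boldsymbol v=\boldsymbol K\delta\boldsymbol z$, $\boldsymbol A\delta\boldsymbol z=\lambda^\ast\delta\boldsymbol z$; the only difference is that the paper also records the Schur-complement factorization of $\det\boldsymbol J_{\mathrm{eq}}$, which your direct kernel argument makes unnecessary for the ``exactly when'' claim. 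The obstacle you flag is not really one: for a continuously differentiable tracker, exponential stability in the usual sense already forces a Hurwitz, hence nonsingular, linearization by the standard converse result (see \cite{khalil2008nonlinear}, chap.~4), and this is what the paper relies on when it asserts that $\boldsymbol Q$ is nonsingular.
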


\begin{proof}
For the first claim, the attention equation and Assumption~\ref{ass:tracking} give the locally unique equilibrium states
\begin{equation}
\label{eq:supp_fast_equilibrium_states}
\boldsymbol u_{\mathrm{ss}}(\boldsymbol z)
=\frac{u_0}{\rho_u}\boldsymbol 1
+\frac{\kappa_u}{\rho_u}(\boldsymbol z\odot\boldsymbol z),
\qquad
\boldsymbol v_{\mathrm{ss}}(\boldsymbol z)
=\boldsymbol v^{\mathrm{nom}}+\boldsymbol V(\boldsymbol z).
\end{equation}
Substitution into the decision equation yields Equation~\eqref{eq:reduced_steady_state}. Conversely, each zero of $\boldsymbol F$ reconstructs a unique local equilibrium through these expressions.

For the second claim, we invoke the time-scale separation in Section~\ref{sec:supp_pressure_regimes} to treat the slowly evolving traversal distances as fixed during the fast response. 
Define the derivatives at collective indecision by
\begin{equation*}
\boldsymbol H_v:=D_{\boldsymbol v}\boldsymbol h\vert_\ast,
\qquad
\boldsymbol Q:=D_{\boldsymbol v}\boldsymbol f_v\vert_\ast,
\qquad
\boldsymbol C:=D_{\boldsymbol v^{\mathrm{cmd}}}\boldsymbol f_v\vert_\ast.
\end{equation*}
Here $\boldsymbol H_v$ is the evidence Jacobian with respect to realized speeds; $\boldsymbol Q$ and $\boldsymbol C$ are the Jacobians of the speed-tracking dynamics with respect to realized and commanded speeds, respectively. Exponential stability of speed tracking makes $\boldsymbol Q$ nonsingular. Assumption~\ref{ass:tracking} also states that equilibrium speed equals commanded speed. Changing the realized and commanded speeds by the same infinitesimal amount therefore leaves the speed tracker at equilibrium, giving $\boldsymbol Q+\boldsymbol C=\boldsymbol0$ and hence $-\boldsymbol Q^{-1}\boldsymbol C=\boldsymbol I$.
Let $\delta\boldsymbol x=\operatorname{col}(\delta\boldsymbol z,\delta\boldsymbol u,\delta\boldsymbol v)$ denote small deviations from collective indecision $(\boldsymbol0,(u_0/\rho_u)\boldsymbol1,\boldsymbol v^{\mathrm{nom}})$. Their linearized dynamics are $\delta\dot{\boldsymbol x}=\boldsymbol J_\ast\delta\boldsymbol x$, with
\begin{equation*}
\begin{gathered}
\boldsymbol J_\ast
=\operatorname{diag}\!\left(\tau_z^{-1}\boldsymbol I,
\tau_u^{-1}\boldsymbol I,\tau_v^{-1}\boldsymbol I\right)
\boldsymbol J_{\mathrm{eq}},\\[1ex]
\boldsymbol J_{\mathrm{eq}}
=\begin{pmatrix}
-\rho_z\boldsymbol I&\boldsymbol0&S'(0)\dfrac{u_0}{\rho_u}\boldsymbol H_v\\
\boldsymbol0&-\rho_u\boldsymbol I&\boldsymbol0\\
\boldsymbol C\boldsymbol K&\boldsymbol0&\boldsymbol Q
\end{pmatrix}.
\end{gathered}
\end{equation*}
The decision--attention block vanishes because $\boldsymbol h=\boldsymbol0$ at the tie; the attention--decision block vanishes because the decision-dependent attention input is quadratic. The block $\boldsymbol C\boldsymbol K$ accounts for how decisions change the speed command.
Eliminating the nonsingular attention and speed blocks by a Schur complement gives
\begin{equation*}
\det\boldsymbol J_{\mathrm{eq}}
=\det(-\rho_u\boldsymbol I)\det(\boldsymbol Q)
\det\!\left(-\rho_z\boldsymbol I
+S'(0)\frac{u_0}{\rho_u}\boldsymbol A\right),
\end{equation*}
where $-\boldsymbol Q^{-1}\boldsymbol C=\boldsymbol I$ and the interaction-sensitivity matrix is $\boldsymbol A=\boldsymbol H_v\boldsymbol K$. The first two determinant factors are nonzero. The positive time constants also rescale the determinant by a nonzero factor. Thus $\boldsymbol J_\ast$ has a zero eigenvalue exactly when $\boldsymbol A$ has the eigenvalue $\lambda^\ast=\rho_z\rho_u/[S'(0)u_0]$.

The block equations $\boldsymbol J_\ast\delta\boldsymbol x=\boldsymbol0$ give $\delta\boldsymbol u=\boldsymbol0$, $\delta\boldsymbol v=\boldsymbol K\delta\boldsymbol z$, and $\boldsymbol A\delta\boldsymbol z=\lambda^\ast\delta\boldsymbol z$. Hence the eigenvectors of $\boldsymbol J_\ast$ associated with zero eigenvalues are precisely
\begin{equation*}
\delta\boldsymbol x=
\begin{pmatrix}
\boldsymbol\phi\\
\boldsymbol0\\
\boldsymbol K\boldsymbol\phi
\end{pmatrix},
\qquad
\boldsymbol A\boldsymbol\phi=\lambda^\ast\boldsymbol\phi,
\quad \boldsymbol\phi\neq\boldsymbol0.
\end{equation*}
\end{proof}

Equation~\eqref{eq:ndd:evidence} includes both physical-conflict and nonconflicting sensed interactions. Recall that $\mathcal E_i^{\mathrm{conf}}(t)\subseteq\mathcal E_i(t)$ is the set of physical-conflict interactions. We decompose the evidence as
	\[
	h(\mathcal X_i)=h_i^{\mathrm{conf}}+h_i^{\mathrm{nc}},
	\]
	where
	\[
	h_i^{\mathrm{conf}}
	=\sum_{e\in\mathcal E_i^{\mathrm{conf}}(t)}
	P(\mathcal X_i^e)W(\mathcal X_i^e;\mathcal X_i)G(\mathcal X_i^e)
	\quad \text{and} \quad 
	h_i^{\mathrm{nc}}=h(\mathcal X_i)-h_i^{\mathrm{conf}}.
	\]
	
	Consider a locally engaged group $\mathcal G^c$, a strongly connected component of the physical-conflict graph $\mathcal G$. Its agent set is $\mathcal V^c$, and its physical-conflict pairs form $\mathcal P^c$. At the reference group tie, the corresponding within-component block of the interaction-sensitivity matrix $\boldsymbol A$ decomposes as
	\[
	\boldsymbol A_{cc}
	=\boldsymbol A_{cc}^{0}+\Delta\boldsymbol A_{cc},
	\]
	where, for $i,j\in\mathcal V^c$,
	\[
	(\boldsymbol A_{cc}^{0})_{ij}
	=\left.\frac{\partial h_i^{\mathrm{conf}}}{\partial z_j}\right|_*
	\quad \text{and} \quad 
	(\Delta\boldsymbol A_{cc})_{ij}
	=\left.\frac{\partial h_i^{\mathrm{nc}}}{\partial z_j}\right|_*.
	\]
	
	We next characterize the spectrum of the physical-conflict interaction-sensitivity matrix $\boldsymbol A_{cc}^{0}$.

We next characterize the spectrum of the interaction-sensitivity matrix $\boldsymbol A$ on a locally engaged group $\mathcal G^c$, a strongly connected component of the physical-conflict graph $\mathcal G$. Its agent set is $\mathcal V^c$, and its physical-conflict pairs form $\mathcal P^c$. 
\begin{proposition}[Reciprocal interactions yield differentiated critical eigenmodes]
\label{prop:interaction_sensitivity_spectrum}
Consider a connected conflict component $\mathcal G^c$ with $|\mathcal V^c|\geq2$, reciprocal sensing as specified in Assumption~\ref{ass:reciprocal_sensing}, and the design in Section~\ref{sec:supp_pressure_regimes}. At a group tie during negotiation, the {within component conflict-sensitivity $\boldsymbol A_{cc}^{0}$} is diagonally symmetrizable with a real, nonnegative, semisimple spectrum. Its zero eigenvalue is simple and every eigenvector associated with a positive eigenvalue contains both positive and negative components.

In a nondegenerate parameter family of reciprocal interactions, the dominant eigenvalue $\lambda_{\max}>0$ is generically simple, with eigenvector $\boldsymbol\phi_{\max}$. At a first crossing of $\lambda^\ast$ from below by a simple dominant eigenvalue, only this eigenmode of {$\boldsymbol A_{cc}^{0}$} can become critical near the crossing.
\end{proposition}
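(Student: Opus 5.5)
The plan is to compute $\boldsymbol A_{cc}^{0}$ explicitly at a group tie and show that it has the structure of a weighted graph Laplacian times a positive diagonal matrix. At a tie every conflict arrival-order value vanishes, $g_i^e=0$. In the sensitivity formula, the term $G\,\partial\omega_i^e/\partial v_k$ therefore drops out. Only $\omega_i^e\,\partial G(\mathcal X_i^e)/\partial v_k$ survives, evaluated with $G'=k_{\mathrm g}$ at zero argument. The entry times satisfy $t^e_{k^-}=d^e_{k^-}/v_k$, so $\partial t^e_{k^-}/\partial v_k=-d^e_{k^-}/v_k^2=:-\beta^e_k<0$. Since the entry-time difference $t^e_{j^-}-t^e_{i^-}$ is the argument of $G$, we get $\partial g_i^e/\partial v_i=k_{\mathrm g}\beta^e_i>0$ and $\partial g_i^e/\partial v_j=-k_{\mathrm g}\beta^e_j$. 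At the tie the entry times coincide, $t^e_{i^-}=t^e_{j^-}=:T^e$, so $\beta^e_k=T^e/v_k^{\mathrm{nom}}$. Multiplying by $K=\operatorname{diag}(V'(0))$, with $V'(0)=v_k^{\mathrm{nom}}\kappa_V\kappa_z$, gives $\beta^e_kV'_k(0)=T^e\kappa_V\kappa_z=:c^e$. This quantity is symmetric in the pair. Hence
\[
(\boldsymbol A_{cc}^{0})_{ii}=k_{\mathrm g}\sum_{e=(i,j)}\omega_i^e c^e,\qquad (\boldsymbol A_{cc}^{0})_{ij}=-k_{\mathrm g}\,\omega_i^e c^e,
\]
so $\boldsymbol A_{cc}^{0}$ has zero row sums and nonpositive off-diagonals. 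The pressure derivatives do not enter because they multiply $g=0$, and Lemma~\ref{prop:pressure_response_tuning} is not needed here beyond positivity $p^e\geq p_{\mathrm c}>0$. For general tracking maps $V$ the pair-symmetry of $c^e$ may fail. In that case I would carry the factor $\beta^e_kV'_k(0)$ as a right diagonal scaling and absorb it into the symmetrizer below.

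Next I write $\omega_i^e=p^e w_i^e$ with $w_i^e>0$ and factor $\boldsymbol A_{cc}^{0}=\boldsymbol D\boldsymbol L$. Here $\boldsymbol L$ is the symmetric Laplacian with edge weights $k_{\mathrm g}p^ec^e m^e$, and $\boldsymbol D$ is a positive diagonal matrix. This is the main obstacle, because the weights $w_i^e$ are normalized per agent and are not of product form $d_i m^e$ in general. The route I would try first uses the reversibility (Kolmogorov) criterion. A matrix with zero row sums and reciprocal nonzero off-diagonals is diagonally symmetrizable iff the products of off-diagonal entries around every cycle agree in both directions. The conflict graph is a union of reciprocal pairs, so on a tree component this criterion holds automatically. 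For cyclic components I would instead show the cycle products coincide at the tie: all $g=0$ gives $\hat p^e=p^e$, so $w_i^e=e^{p^e/T_{\mathrm{sm}}}/Z_i$. This has exactly the product form with $d_i=1/Z_i$ and $m^e=e^{p^e/T_{\mathrm{sm}}}$, using pair-invariance $p^e=p_i^e=p_j^{\bar e}$. Nonconflicting sensed edges contribute to $Z_i$ only, and they belong to $\Delta\boldsymbol A_{cc}$ anyway. Then $\boldsymbol D^{-1/2}\boldsymbol A_{cc}^{0}\boldsymbol D^{1/2}=\boldsymbol D^{1/2}\boldsymbol L\boldsymbol D^{1/2}$ is symmetric positive semidefinite. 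This yields a real, nonnegative, semisimple spectrum.

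Simplicity of zero then follows because $\boldsymbol L$ is the Laplacian of a connected graph with positive weights, so $\ker\boldsymbol L=\operatorname{span}\{\boldsymbol 1\}$ and $\ker\boldsymbol A_{cc}^{0}=\ker\boldsymbol L$. For sign-mixing, the left kernel is spanned by $\boldsymbol D^{-1}\boldsymbol 1$, since $\boldsymbol 1^{\mathsf T}\boldsymbol D^{-1}\boldsymbol A^0_{cc}=\boldsymbol 1^{\mathsf T}\boldsymbol L=0$. Any right eigenvector $\boldsymbol\phi$ with $\lambda>0$ satisfies $(\boldsymbol D^{-1}\boldsymbol 1)^{\mathsf T}\boldsymbol\phi=0$ by biorthogonality. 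Because $\boldsymbol D^{-1}\boldsymbol 1$ is strictly positive, $\boldsymbol\phi\neq0$ must have entries of both signs.

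Finally, genericity: the symmetrized matrix depends analytically on the geometric parameters $(p^e,T^e,\dots)$, and the set of symmetric matrices with a repeated top eigenvalue has codimension two. In a nondegenerate family I will assume the map to these parameters is submersive, as the statement's hypothesis permits. Then a repeated $\lambda_{\max}$ occurs only on a measure-zero set, and along generic one-parameter paths it does not occur. At a first crossing of $\lambda^\ast$ by a simple $\lambda_{\max}$, all other eigenvalues lie strictly below $\lambda^\ast$. By continuity of the eigenvalues (Kato), they remain below $\lambda^\ast$ nearby. By Lemma~\ref{lem:jacobian_steady_state}, only $\operatorname{col}(\boldsymbol\phi_{\max},\boldsymbol0,\boldsymbol K\boldsymbol\phi_{\max})$ spans the kernel of $\boldsymbol J_\ast$, so this is the only mode that can become critical.
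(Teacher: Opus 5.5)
Your proposal is correct and takes essentially the same route as the paper. At the tie the pressure and weight derivatives drop out, the softmax weights take the product form $e^{p^e/T_{\mathrm{sm}}}/Z_i$, and so $\boldsymbol A_{cc}^{0}=\boldsymbol D_s^{-1}\boldsymbol L_{\mathrm g}\boldsymbol D_\nu$ is diagonally symmetrizable; the positive left null vector $\boldsymbol D_s\boldsymbol 1$ then forces the sign-mixing, and eigenvalue continuity isolates the critical mode. The one point to make explicit is positivity of the edge weights, which needs $t^{e,\ast}>0$; the paper obtains this from the sensing budget, which keeps both agents outside their contested segments during negotiation.
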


\begin{proof}
For each physical-conflict pair $\{i,j\}\in\mathcal P^c$, let $e=(i,j)$ denote the ordered interaction and $\bar e=(j,i)$ its reciprocal. At a group tie, their common projected entry time is $t^{e,\ast}:=d^e_{i^-}/v_i^{\mathrm{nom}}=d^e_{j^-}/v_j^{\mathrm{nom}}$.
Let $\boldsymbol B\in\mathbb R^{|\mathcal P^c|\times|\mathcal V^c|}$ be the incidence matrix, with one row per physical conflict and one column per agent. For each pair $\{i,j\}\in\mathcal P^c$, choose either ordering $(i,j)$ and place $+1$ in column $i$, $-1$ in column $j$, and zeros elsewhere in its row. This arbitrary sign convention does not specify which agent passes first. Define
\begin{equation}
\label{eq:supp_spectral_factor_definitions}
\begin{gathered}
s_i^\ast:=\sum_{e'\in\mathcal E_i}\exp(p^{e',\ast}/T_{\mathrm{sm}}),
\qquad
c^{e,\ast}:=k_{\mathrm g}t^{e,\ast}p^{e,\ast}\exp(p^{e,\ast}/T_{\mathrm{sm}}),\\
\boldsymbol D_s:=\operatorname{diag}(s_i^\ast),\qquad
\boldsymbol D_c:=\operatorname{diag}(c^{e,\ast}),\qquad
\boldsymbol D_\nu:=\operatorname{diag}(\nu_i),\qquad
\nu_i:=\frac{V_i'(0)}{v_i^{\mathrm{nom}}}.
\end{gathered}
\end{equation}
Here $s_i^\ast$ is agent $i$'s softmax normalization, and $c^{e,\ast}$ is the shared timing and pressure coefficient of conflict $e$.
Agent $i$'s fractional speed change is $(v_i-v_i^{\mathrm{nom}})/v_i^{\mathrm{nom}}$. Linearizing the steady-state speed relation in Equation~\eqref{eq:supp_fast_equilibrium_states} at collective indecision gives $\delta v_i/v_i^{\mathrm{nom}}=\nu_i\delta z_i$.

Lemma~\ref{prop:uniform_pressure_bands} gives $p^{e,\ast}>0$, and the softmax in Equation~\eqref{eq:pressure_softmax} gives $s_i^\ast>0$ and $w_i^{e,\ast}>0$. Thus every conflict has positive interaction weight $p^{e,\ast}w_i^{e,\ast}$. The sensing condition in Equation~\eqref{eq:supp_negotiation_sensing_budget} keeps each agent outside its contested segment throughout negotiation, so $t^{e,\ast}>0$. Together with $k_{\mathrm g}>0$, these facts give $c^{e,\ast}>0$ in Equation~\eqref{eq:supp_spectral_factor_definitions}. The speed-command law in Equation~\eqref{eq:tanh_speed_map} also gives $\nu_i=\kappa_V\kappa_z>0$.

Under the time-scale separation established in Section~\ref{sec:supp_pressure_regimes}, we treat the remaining traversal distances as fixed during negotiation, while the projected entry times $t^e_{k^-}=d^e_{k^-}/v_k$ retain their dependence on realized speeds. Linearizing the arrival-order map in Equation~\eqref{eq:arrival_order_function} at nominal speeds gives
\begin{equation*}
\delta G_i^e=k_{\mathrm g}t^{e,\ast}
\left(\frac{\delta v_i}{v_i^{\mathrm{nom}}}-\frac{\delta v_j}{v_j^{\mathrm{nom}}}\right).
\end{equation*}
Since $G_i^{e,\ast}=0$, the first-order contribution of conflict $e$ to agent $i$'s evidence is $p^{e,\ast}w_i^{e,\ast}\delta G_i^e$; pressure and weight derivatives do not enter this expression. At the tie, $w_i^{e,\ast}=\exp(p^{e,\ast}/T_{\mathrm{sm}})/s_i^\ast$. Substituting the tracking-equilibrium relation $\delta v_i=V_i'(0)\delta z_i$ and stacking the reciprocal interaction contributions gives
\begin{equation}
\boldsymbol A_{cc}^{0}=\boldsymbol D_s^{-1}\boldsymbol L_{\mathrm g}\boldsymbol D_\nu,
\qquad
\boldsymbol L_{\mathrm g}:=\boldsymbol B^{\mathsf T}\boldsymbol D_c\boldsymbol B.
\label{eq:symmetrizable_A_factorization}
\end{equation}
The matrix $\boldsymbol  L_{\mathrm g}$ is a symmetric positive-semidefinite weighted graph Laplacian. With $\boldsymbol H_c:=\boldsymbol D_s\boldsymbol D_\nu$,
\begin{equation*}
\boldsymbol H_c^{1/2}\boldsymbol A_{cc}^{0}A\boldsymbol H_c^{-1/2}
=\boldsymbol H_c^{-1/2}\boldsymbol D_\nu\boldsymbol L_{\mathrm g}
\boldsymbol D_\nu\boldsymbol H_c^{-1/2}
\end{equation*}
is symmetric positive semidefinite. This similarity therefore makes $\boldsymbol A_{cc}^{0}$ diagonally symmetrizable with a real, nonnegative, semisimple spectrum. Connectedness gives $\ker\boldsymbol A_{cc}^{0}=\operatorname{span}\{\boldsymbol D_\nu^{-1}\boldsymbol 1\}$. Since $|\mathcal V^c|\geq2$ and zero is simple, all remaining eigenvalues are strictly positive; hence $\lambda_{\max}>0$. The zero eigenmode of $\boldsymbol A_{cc}^{0}$ gives equal fractional-speed perturbations for all agents and preserves pairwise entry-time ties to first order. The positive left zero vector $\boldsymbol\psi_0=\boldsymbol D_s\boldsymbol 1$ satisfies $\boldsymbol\psi_0^{\mathsf T}\boldsymbol\phi=0$ for every positive-eigenvalue eigenvector $\boldsymbol\phi$, which therefore must contain {components with positive and negative} signs.

The matrix $\boldsymbol A_{cc}^{0}$ is similar to a real symmetric matrix. Repeated eigenvalues separate under generic perturbations available in a nondegenerate structured parameter family~\cite{kato1966perturbation}. For a continuous family of such group-tie reference matrices $\boldsymbol A_{cc}^{0}(t)$ on the fixed interaction graph, let $t_{\mathrm c}$ denote the first threshold-crossing time from below, $\lambda_{\max}(\boldsymbol A_{cc}^{0} (t_{\mathrm c}))=\lambda^\ast$, at which the dominant eigenvalue is simple. The next-largest eigenvalue $\lambda_{N-1}$, where $N:=|\mathcal V^c|$, then satisfies $\lambda_{N-1}(\boldsymbol A(t_{\mathrm c}))<\lambda^\ast$. By continuity, there are an open time interval $I_{\mathrm c}$ containing $t_{\mathrm c}$ and a constant $\delta_{\mathrm{sp}}>0$ such that
\begin{equation*}
\lambda_{\max}(\boldsymbol A_{cc}^{0}(t))-\lambda_{N-1}(\boldsymbol A_{cc}^{0}(t))\geq\delta_{\mathrm{sp}},
\qquad
\lambda_{N-1}(\boldsymbol A_{cc}^{0}(t))\leq\lambda^\ast-\delta_{\mathrm{sp}},
\qquad t\in I_{\mathrm c}.
\end{equation*}
Only the dominant eigenmode can therefore become critical on $I_{\mathrm c}$. In the absence of nonconflicting evidence, Lemma~\ref{lem:jacobian_steady_state} gives the corresponding eigenvector in the full decision--attention--speed subsystem. If exact structural symmetry preserves a repeated dominant eigenvalue, its eigenspace remains semisimple and requires a multidimensional critical-space analysis.
\end{proof}

\paragraph{Remark (Boundedness of weak nonconflicting interactions).}
\label{rem:weak_nonconflicting}
Let $\mathcal{G}^1,\ldots,\mathcal{G}^m$ denote the physical-conflict components, including singletons. For each component $c$, let $\mathcal{U}_c$ be a neighborhood of its nominal group-tie configuration, including the states of sensed agents outside the component. Because the pressure gains and softmax tilt separate weak from safety-critical regimes (Lemmas~\ref{prop:uniform_pressure_bands} and~\ref{lem:tilt_upper_bound}) and local congestion is uniformly bounded by $M_{\max}$ (Assumption~\ref{ass:uniform_spectral_observations}), sensed nonconflicting interactions carry pressure at most $p_w \ll p_c$ and receive exponentially suppressed softmax weights whenever a safety-critical conflict is present. Consequently, the aggregate nonconflicting evidence $h_i^{\mathrm{nc}}$ (computed using the softmax normalization over all sensed interactions) satisfies $\sup_{\boldsymbol{x}\in\mathcal{U}_c} |h_i^{\mathrm{nc}}(\boldsymbol{x})| \leq \varepsilon_h$ and $\sup_{\boldsymbol{x}\in\mathcal{U}_c} \|\frac{\partial h_i^{\mathrm{nc}}}{\partial\boldsymbol{z}}(\boldsymbol{x})\|_2 \leq \varepsilon_h'$ for all $i\in\mathcal{V}^c$, where $\boldsymbol{z}$ collects the decision states of component $c$ and its sensed neighbors. 

Correspondingly, let $\boldsymbol{A} = \boldsymbol{A}_0 + \boldsymbol{E}$ be the interaction-sensitivity matrix of the full sensed-interaction system, where $\boldsymbol{A}_0 = \operatorname{blkdiag}(\boldsymbol{A}_{11}^{0},\ldots,\boldsymbol{A}_{mm}^{0})$ collects the reciprocal physical-conflict contributions $\boldsymbol{A}_{cc}^{0}$ to each component $c$ (evaluated under the actual softmax normalization) and $\boldsymbol{E}$ collects the remaining sensitivity contributions, including cross-component coupling. With $\boldsymbol{H} = \operatorname{blkdiag}(\boldsymbol{H}_1,\ldots,\boldsymbol{H}_m)$, where $\boldsymbol{H}_c = \boldsymbol{D}_s\boldsymbol{D}_\nu$ is the positive diagonal symmetrizer from Proposition~\ref{prop:interaction_sensitivity_spectrum} for nonsingleton components and $\boldsymbol{H}_c = 1$ for singletons, the weak-pressure design ensures $\|\boldsymbol{H}^{1/2}\boldsymbol{E}\boldsymbol{H}^{-1/2}\|_2 \leq \varepsilon_A$. The constants $\varepsilon_h, \varepsilon_h', \varepsilon_A > 0$ are uniform over the admissible operating domain and can be made arbitrarily small by tuning $p_w/p_c$ and $T_{\mathrm{sm}}$.

\begin{corollary}[Spectral robustness under weak nonconflicting interactions]
\label{cor:weak_spectral_robustness}
Under the weak-interaction bounds of the preceding Remark, every eigenvalue of $\boldsymbol{A}$ lies within $\varepsilon_A$ of the spectrum of $\boldsymbol{A}_0$. Furthermore, for each physical-conflict component $c$, suppose $\boldsymbol{A}_{cc}^{0}$ has a simple dominant eigenvalue $\lambda_{c,1}^{0}$ separated from its remaining eigenvalues by $\delta_{\mathrm{sp},c} := \lambda_{c,1}^{0} - \lambda_{c,2}^{0} > 0$. If $\varepsilon_A < \delta_{\mathrm{sp},c}/2$, then the perturbed block $\boldsymbol{A}_{cc} := \boldsymbol{A}_{cc}^{0} + \boldsymbol{E}_{cc}$ has a simple real dominant eigenvalue $\widehat{\lambda}_{c,1}$ satisfying $|\widehat{\lambda}_{c,1} - \lambda_{c,1}^{0}| \leq \varepsilon_A$, and every other eigenvalue $\lambda$ of $\boldsymbol{A}_{cc}$ satisfies $\operatorname{Re}\lambda \leq \lambda_{c,2}^{0} + \varepsilon_A$.
\end{corollary}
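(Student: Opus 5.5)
The plan is to work throughout in the weighted coordinates given by the block-diagonal symmetrizer $\boldsymbol H$. By Proposition~\ref{prop:interaction_sensitivity_spectrum}, each nonsingleton block satisfies that $\boldsymbol H_c^{1/2}\boldsymbol A_{cc}^{0}\boldsymbol H_c^{-1/2}$ is real symmetric; singleton blocks are scalars. Hence $\boldsymbol S_0:=\boldsymbol H^{1/2}\boldsymbol A_0\boldsymbol H^{-1/2}$ is symmetric. Set $\boldsymbol F:=\boldsymbol H^{1/2}\boldsymbol E\boldsymbol H^{-1/2}$, so $\|\boldsymbol F\|_2\le\varepsilon_A$ by the preceding Remark. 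Since $\boldsymbol A$ is similar to $\boldsymbol S_0+\boldsymbol F$, it suffices to bound the spectrum of a symmetric matrix under a possibly nonsymmetric perturbation of spectral norm at most $\varepsilon_A$.

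\textbf{First claim (global localization).} I would use the Bauer--Fike theorem for the normal (indeed symmetric) matrix $\boldsymbol S_0=\boldsymbol U\boldsymbol\Lambda\boldsymbol U^{\mathsf T}$ with $\boldsymbol U$ orthogonal. Its condition number is $1$, so every eigenvalue $\lambda$ of $\boldsymbol S_0+\boldsymbol F$ satisfies $\min_k|\lambda-\lambda_k(\boldsymbol A_0)|\le\|\boldsymbol F\|_2\le\varepsilon_A$. Similarity preserves eigenvalues, so this bound transfers directly to $\boldsymbol A$.

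\textbf{Second claim (component block).} I would apply the same reasoning to $\boldsymbol A_{cc}=\boldsymbol A_{cc}^0+\boldsymbol E_{cc}$. Note that $\boldsymbol H_c^{1/2}\boldsymbol E_{cc}\boldsymbol H_c^{-1/2}$ is a principal submatrix of $\boldsymbol F$, so its norm is at most $\varepsilon_A$. Bauer--Fike then places the eigenvalues of $\boldsymbol A_{cc}$ in the union of closed discs of radius $\varepsilon_A$ centered at $\lambda^0_{c,1}\ge\lambda^0_{c,2}\ge\cdots$.

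When $\varepsilon_A<\delta_{\mathrm{sp},c}/2$, the disc $D_1$ about $\lambda^0_{c,1}$ is disjoint from the union $D_{\ge2}$ of the other discs, since all of their centers lie at or below $\lambda^0_{c,2}$. To count eigenvalues in $D_1$, I would use a homotopy argument. Consider $\boldsymbol A_{cc}^0+s\boldsymbol E_{cc}$ for $s\in[0,1]$. Every member of the family satisfies the same inclusion, and eigenvalues depend continuously on $s$. Because $D_1$ and $D_{\ge2}$ stay disjoint, the number of eigenvalues in $D_1$ is constant in $s$. At $s=0$ that number is one, by simplicity of $\lambda^0_{c,1}$ (semisimplicity matters only at $s=0$).

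The resulting single eigenvalue $\widehat\lambda_{c,1}$ must be real, because $\boldsymbol A_{cc}$ is a real matrix and $D_1$ is symmetric about the real axis. A nonreal eigenvalue in $D_1$ would bring its conjugate into $D_1$ as well, contradicting the count of one. This gives $|\widehat\lambda_{c,1}-\lambda^0_{c,1}|\le\varepsilon_A$. Every other eigenvalue lies in $D_{\ge2}$, so its real part is at most $\lambda^0_{c,2}+\varepsilon_A$. Finally, $\lambda^0_{c,2}+\varepsilon_A<\lambda^0_{c,1}-\varepsilon_A\le\widehat\lambda_{c,1}$, so $\widehat\lambda_{c,1}$ is indeed dominant.

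\textbf{Main obstacle.} The delicate step is the count-preservation (continuity) argument, together with ensuring that the block perturbation norm is inherited from the global bound. I would justify the first through continuity of the roots of the characteristic polynomial, or via the argument principle applied on $\partial D_1$. The second follows because compressing a matrix to a principal block does not increase its spectral norm, and $\boldsymbol H$ is block diagonal, so it respects the block structure.
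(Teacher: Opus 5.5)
Your proposal is correct and follows the paper's proof essentially step for step. You symmetrize with the block-diagonal $\boldsymbol H$, localize the spectrum with the normal-matrix (Bauer--Fike) bound, inherit the norm bound on the principal block $\boldsymbol F_{cc}$, and run the homotopy $s\boldsymbol E_{cc}$ with conjugate symmetry to obtain a single real eigenvalue in the isolated disc; the only differences are that the paper derives the Bauer--Fike estimate directly from $(z\boldsymbol I-\boldsymbol B_0)\boldsymbol v=\boldsymbol F\boldsymbol v$ rather than citing it, and that you additionally check explicitly that $\widehat\lambda_{c,1}$ dominates the remaining real parts.
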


\begin{proof}
By Proposition~\ref{prop:interaction_sensitivity_spectrum}, each nonsingleton block $\boldsymbol{A}_{cc}^{0}$ is diagonally symmetrizable, so $\boldsymbol{B}_0 := \boldsymbol{H}^{1/2}\boldsymbol{A}_0\boldsymbol{H}^{-1/2}$ is symmetric. Let $\boldsymbol{F} := \boldsymbol{H}^{1/2}\boldsymbol{E}\boldsymbol{H}^{-1/2}$, which satisfies $\|\boldsymbol{F}\|_2 \leq \varepsilon_A$. For any eigenvalue $z$ of $\boldsymbol{B}_0 + \boldsymbol{F}$ with unit eigenvector $\boldsymbol{v}$, we have $(z\boldsymbol{I} - \boldsymbol{B}_0)\boldsymbol{v} = \boldsymbol{F}\boldsymbol{v}$. Symmetry of $\boldsymbol{B}_0$ yields $\min_{\lambda_0\in\sigma(\boldsymbol{B}_0)} |z - \lambda_0| \leq \|(z\boldsymbol{I} - \boldsymbol{B}_0)\boldsymbol{v}\|_2 = \|\boldsymbol{F}\boldsymbol{v}\|_2 \leq \varepsilon_A$. Since $\boldsymbol{A}$ and $\boldsymbol{A}_0$ are similar to $\boldsymbol{B}_0 + \boldsymbol{F}$ and $\boldsymbol{B}_0$, respectively, every eigenvalue of $\boldsymbol{A}$ lies within $\varepsilon_A$ of an eigenvalue of $\boldsymbol{A}_0$.

For an individual component $c$, the corresponding principal submatrix $\boldsymbol{F}_{cc}$ has norm at most $\varepsilon_A$. Applying the same perturbation bound to $\boldsymbol{A}_{cc}^{0}$ and $\boldsymbol{E}_{cc}$ shows that when $\delta_{\mathrm{sp},c} > 2\varepsilon_A$, the $\varepsilon_A$-disk around $\lambda_{c,1}^{0}$ is disjoint from those around the remaining eigenvalues. As the perturbation $s\boldsymbol{E}_{cc}$ varies continuously from $s=0$ to $s=1$, exactly one eigenvalue (counting algebraic multiplicity) remains inside this isolated disk. Because $\boldsymbol{A}_{cc}$ is real and nonreal eigenvalues occur in conjugate pairs, this isolated eigenvalue $\widehat{\lambda}_{c,1}$ must be real and simple, satisfying $|\widehat{\lambda}_{c,1} - \lambda_{c,1}^{0}| \leq \varepsilon_A$, while all other eigenvalues satisfy $\operatorname{Re}\lambda \leq \lambda_{c,2}^{0} + \varepsilon_A$.
\end{proof}

In the absence of nonconflicting evidence, Lemma~\ref{lem:jacobian_steady_state} and Proposition~\ref{prop:interaction_sensitivity_spectrum} locate the bifurcation point of an isolated physical-conflict component at $\lambda_{\max}(\boldsymbol{A}_{cc}^{0}) = \lambda^\ast$, where positive motion feedback balances decision damping, $S'(0)(u_0/\rho_u)\lambda^\ast = \rho_z$, and the dominant decision mode becomes critical. Crossing this threshold generates differentiated equilibrium branches along the critical decision eigenvector $\boldsymbol{\phi}_{\max}$ and speed pattern $\boldsymbol{K}\boldsymbol{\phi}_{\max}$ (Section~\ref{sec:supp_local_bifurcation}); the speed-tracking dynamics and subsystem time constants govern transient growth and decay rates without shifting this steady-state threshold. Under weak nonconflicting interactions, Corollary~\ref{cor:weak_spectral_robustness} guarantees that the interaction-sensitivity spectrum is perturbed by at most $\varepsilon_A$ and preserves each component's isolated simple dominant mode whenever $\varepsilon_A < \delta_{\mathrm{sp},c}/2$. Although weak nonconflicting evidence slightly unfolds the collective-indecision equilibrium, sufficiently small $\varepsilon_h, \varepsilon_h', \varepsilon_A$ preserve the spectral separation between weak and safety-critical regimes.


\subsubsection{Local tuning makes a single safety-critical conflict destabilize collective indecision}

An agent's basal-attention input $u_0$ lets us place the bifurcation point at a desired interaction strength.
At fixed $\rho_u$, increasing $u_0$ lowers $\lambda^\ast$, shifting the bifurcation toward weaker interaction feedback.
We now choose fixed local NDD parameters so that pressure-dependent feedback keeps $\lambda_{\max}(\boldsymbol A)<\lambda^\ast$ when all sensed interactions are weak and autonomously drives it above $\lambda^\ast$ when even one interaction becomes safety-critical.


\begin{assumption}[Bounded local congestion]
\label{ass:uniform_spectral_observations}
Each agent has at most $M_{\max}$ sensed interactions. 
\end{assumption}
The bound $M_{\max}$ is chosen conservatively for the number of simultaneous interactions an agent may sense.

\begin{lemma}[Uniform spectral bands for weak and safety-critical interactions]
\label{lem:uniform_spectral_bands}
Under the hypotheses of Proposition~\ref{prop:interaction_sensitivity_spectrum} and Assumption~\ref{ass:uniform_spectral_observations}, let $\bar t$ be a uniform upper bound on the magnitude of the projected time until each agent reaches its contested segment and $\underline t_{\mathrm c}>0$ a uniform lower bound on that time for safety-critical interactions. Let $\underline\nu>0$ and $\bar\nu$ be uniform lower and upper bounds on the local fractional-speed gains $\nu_i$. With the weak-pressure level $p_{\mathrm w}$, critical-pressure level $p_{\mathrm c}$, and pressure upper bound $\bar p$ from Lemma~\ref{prop:uniform_pressure_bands},
\begin{equation}
\lambda_{\max}(\boldsymbol A)\leq
\overline\Lambda_{\mathrm w}:=2k_{\mathrm g}\bar t\bar\nu p_{\mathrm w}
\label{eq:supp_weak_spectral_bound}
\end{equation}
when all sensed interactions are weak, and
\begin{equation*}
\lambda_{\max}(\boldsymbol A)\geq
\underline\Lambda_{\mathrm c}:=
\frac{k_{\mathrm g}\underline t_{\mathrm c}\underline\nu p_{\mathrm c}}
{1+(M_{\max}-1)\exp\!\left((\bar p-p_{\mathrm c})/T_{\mathrm{sm}}\right)}
\end{equation*}
when at least one interaction is safety-critical. The pressure gains and softmax temperature $T_{\mathrm{sm}}$ can be chosen so that $\overline\Lambda_{\mathrm w}<\underline\Lambda_{\mathrm c}$.
\end{lemma}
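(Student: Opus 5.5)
The plan is to work at the group-tie reference point, where $G_i^{e,\ast}=0$. There the tilt in Equation~\eqref{eq:tilted_pressure} is inactive, and by Proposition~\ref{prop:interaction_sensitivity_spectrum} the conflict block factors as $\boldsymbol A_{cc}^{0}=\boldsymbol D_s^{-1}\boldsymbol L_{\mathrm g}\boldsymbol D_\nu$. Its entries are
\[
A_{ii}=\nu_i\sum_{e\in\mathcal E_i}k_{\mathrm g}t^{e,\ast}p^{e,\ast}w_i^{e,\ast},
\qquad
A_{ij}=-\nu_j\,k_{\mathrm g}t^{e,\ast}p^{e,\ast}w_i^{e,\ast}\quad (e=(i,j)).
\]
The key point is that $\boldsymbol A_{cc}^{0}$ is similar, via $\boldsymbol H_c^{1/2}$, to the symmetric positive-semidefinite matrix $\boldsymbol M:=\boldsymbol H_c^{-1/2}\boldsymbol D_\nu\boldsymbol L_{\mathrm g}\boldsymbol D_\nu\boldsymbol H_c^{-1/2}$. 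Hence $\lambda_{\max}$ is real, and I can bound it from above by any induced matrix norm and from below by a Rayleigh quotient. Nonconflicting sensed interactions, and the full-system matrix $\boldsymbol A$ rather than its blocks, are handled afterwards by Corollary~\ref{cor:weak_spectral_robustness}. That corollary shifts both bounds by at most $\varepsilon_A$, and $\varepsilon_A$ can be absorbed into the final strict gap.

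For the weak bound \eqref{eq:supp_weak_spectral_bound}, I will use that the spectral radius is at most the $\infty$-norm, that is, the maximum absolute row sum of $\boldsymbol A$. Row $i$ contributes $A_{ii}+\sum_{j}|A_{ij}|\le 2\bar\nu k_{\mathrm g}\bar t\sum_{e}p^{e}w_i^{e}$. Since every pressure is at most $p_{\mathrm w}$ (Lemma~\ref{prop:uniform_pressure_bands}) and the softmax weights sum to one, this row sum is at most $2k_{\mathrm g}\bar t\bar\nu p_{\mathrm w}$. The bound does not depend on $M_{\max}$, because the normalization of the weights removes the degree.

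For the critical bound, I will use $\lambda_{\max}(\boldsymbol M)\ge \max_i M_{ii}$, which is the Rayleigh quotient at a coordinate vector. I will also use $M_{ii}=(\boldsymbol H_c^{-1/2}\boldsymbol D_\nu\boldsymbol L_{\mathrm g}\boldsymbol D_\nu\boldsymbol H_c^{-1/2})_{ii}=\nu_i(L_{\mathrm g})_{ii}/s_i^\ast=A_{ii}$. Take agent $i$ incident to a safety-critical conflict $e_{\mathrm c}$ and drop all other nonnegative terms. This gives $A_{ii}\ge \underline\nu k_{\mathrm g}\underline t_{\mathrm c}p_{\mathrm c}w_i^{e_{\mathrm c}}$. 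The weight bound comes from the softmax at the tie: the numerator satisfies $\exp(p^{e_{\mathrm c}}/T_{\mathrm{sm}})\ge \exp(p_{\mathrm c}/T_{\mathrm{sm}})$, and each of the at most $M_{\max}-1$ competitors (Assumption~\ref{ass:uniform_spectral_observations}) satisfies $\exp(p^{e'}/T_{\mathrm{sm}})\le \exp(\bar p/T_{\mathrm{sm}})$. Together these give $w_i^{e_{\mathrm c}}\ge[1+(M_{\max}-1)e^{(\bar p-p_{\mathrm c})/T_{\mathrm{sm}}}]^{-1}$, which is exactly $\underline\Lambda_{\mathrm c}$.

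Finally, I will show that the gap $\overline\Lambda_{\mathrm w}<\underline\Lambda_{\mathrm c}$ is achievable. From the explicit formulas in the proof of Lemma~\ref{prop:uniform_pressure_bands}, increasing $k_{\mathrm d},k_{\mathrm t}$ drives $p_{\mathrm w}\to0$ while $p_{\mathrm c},\bar p\to1$. Consequently $\bar p-p_{\mathrm c}\to0$, and the denominator of $\underline\Lambda_{\mathrm c}$ stays bounded by roughly $M_{\max}$ for any fixed $T_{\mathrm{sm}}\in(0,1)$. So $\underline\Lambda_{\mathrm c}$ stays bounded away from zero while $\overline\Lambda_{\mathrm w}\to0$, and finite gains suffice. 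These gains must remain compatible with Lemma~\ref{prop:pressure_response_tuning}, which also favors large gains.

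I expect the main obstacle to be justifying the use of the tie-reference matrix in both regimes. In the weak regime, the group need not be at a tie, so $\boldsymbol A$ picks up weight-derivative terms $G\,\partial\omega/\partial v$. I would try to bound these by $O(p_{\mathrm w})$ as well, using that the softmax derivatives are at most $T_{\mathrm{sm}}^{-1}$ times pressure-derivative terms, which the large gains suppress away from the sigmoid transitions. Alternatively, I would fold them into $\varepsilon_A$ via Corollary~\ref{cor:weak_spectral_robustness}.
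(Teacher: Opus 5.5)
Your proposal is correct and follows the paper's proof essentially step for step. The weak bound comes from the maximum absolute row sum of the tie-reference matrix, using that the softmax weights sum to one. The critical bound comes from the Rayleigh quotient of the diagonally symmetrized matrix at a coordinate vector, whose diagonal entry equals $A_{ii}$, combined with the tie softmax bound $w_i^{e,\ast}\geq[1+(M_{\max}-1)e^{(\bar p-p_{\mathrm c})/T_{\mathrm{sm}}}]^{-1}$.

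Your explicit observation that large pressure gains alone drive $\overline\Lambda_{\mathrm w}\to0$ while $\underline\Lambda_{\mathrm c}$ stays bounded below sharpens the paper's brief gap argument. The paper also leaves the off-tie and nonconflicting corrections you flag to the subsequent use of Corollary~\ref{cor:weak_spectral_robustness}, as you propose.
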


\begin{proof}
At a group tie, the softmax scores equal the pressures. For a safety-critical interaction $e=(i,j)$, Lemma~\ref{prop:uniform_pressure_bands} bounds its score below by $p_{\mathrm c}$ and every competing score above by $\bar p$. With at most $M_{\max}$ interactions per agent, $w_i^{e,\ast},w_j^{\bar e,\ast}\geq\underline w_{\mathrm c}$, where
\begin{equation}
\underline w_{\mathrm c}:=
\left[1+(M_{\max}-1)\exp\!\left(\frac{\bar p-p_{\mathrm c}}{T_{\mathrm{sm}}}\right)\right]^{-1}.
\label{eq:supp_critical_weight_bound}
\end{equation}
Equation~\eqref{eq:symmetrizable_A_factorization} gives
\begin{equation*}
(\boldsymbol A\boldsymbol\xi)_i
=\sum_{e=(i,j)\in\mathcal E_i}
 k_{\mathrm g}t^{e,\ast}p^{e,\ast}w_i^{e,\ast}(\nu_i\xi_i-\nu_j\xi_j).
\end{equation*}
For weak interactions, $p^{e,\ast}\leq p_{\mathrm w}$ and $\sum_e w_i^{e,\ast}=1$, hence
\begin{equation*}
\sum_j|A_{ij}|
\leq\sum_{e=(i,j)\in\mathcal E_i}
k_{\mathrm g}|t^{e,\ast}|p^{e,\ast}w_i^{e,\ast}(\nu_i+\nu_j)
\leq 2k_{\mathrm g}\bar t\bar\nu p_{\mathrm w}.
\end{equation*}
Because the normalized weights sum to one, adding more weak interactions does not increase this row-sum bound. Bounding the largest eigenvalue by the maximum absolute row sum proves Equation~\eqref{eq:supp_weak_spectral_bound}.

For a safety-critical interaction $e=(i,j)$, Equation~\eqref{eq:supp_critical_weight_bound} gives $w_i^{e,\ast},w_j^{\bar e,\ast}\geq\underline w_{\mathrm c}$. By Proposition~\ref{prop:interaction_sensitivity_spectrum}, the symmetric matrix similar to $\boldsymbol  A$ has the same diagonal entries. Its $i$th diagonal contains the nonnegative contribution $k_{\mathrm g}t^{e,\ast}p^{e,\ast}w_i^{e,\ast}\nu_i$.
The Rayleigh--Ritz characterization therefore gives
$
\lambda_{\max}(\boldsymbol  A)
\geq
k_{\mathrm g}\underline t_{\mathrm c}\underline\nu
p_{\mathrm c}\underline w_{\mathrm c}.
$
Lemma~\ref{prop:uniform_pressure_bands} permits $p_{\mathrm w}/p_{\mathrm c}$ to be made sufficiently small, while $T_{\mathrm{sm}}$ controls $\underline w_{\mathrm c}$.
Choosing $k_{\mathrm d},k_{\mathrm t}$ and $T_{\mathrm{sm}}$ jointly makes the eigenvalue upper bound for a component with only weak sensed interactions smaller than the lower bound supplied by a single safety-critical conflict.
\end{proof}

{
By Corollary~\ref{cor:weak_spectral_robustness}, the preceding spectral bounds extend to the perturbed within-component sensitivity $\boldsymbol{A}_{cc}$ whenever $\varepsilon_A < \min\{\lambda^\ast - \overline{\Lambda}_{\mathrm{w}}, \underline{\Lambda}_{\mathrm{c}} - \lambda^\ast\}$. Consequently, all eigenvalues of $\boldsymbol{A}_{cc}$ remain below $\lambda^\ast$ in the weak-interaction regime, whereas a safety-critical interaction drives the dominant eigenvalue above $\lambda^\ast$. Under the spectral-gap condition $\varepsilon_A < \delta_{\mathrm{sp},c}/2$ of Corollary~\ref{cor:weak_spectral_robustness}, this dominant eigenvalue remains simple and real.
}

Choose $\lambda^\ast\in(\overline\Lambda_{\mathrm w},\underline\Lambda_{\mathrm c})$ and set the local basal-attention input to $u_0=\rho_z\rho_u/[S'(0)\lambda^\ast]$. The lemma then gives $\lambda_{\max}(\boldsymbol A)<\lambda^\ast$ when all interactions are weak and $\lambda_{\max}(\boldsymbol A)>\lambda^\ast$ when at least one is safety-critical.
Pressure and softmax weighting set the eigenvalue bounds of the interaction-sensitivity matrix $\boldsymbol A$, while the basal-attention input $u_0$ fixes the bifurcation threshold $\lambda^\ast$ between them. As relative motion changes the projected conflict severity, locally updated pressures and softmax weights regulate the strength of NDD feedback across the component. When a group with only weak interactions develops a safety-critical conflict, this pressure regulation drives $\lambda_{\max}(\boldsymbol A)$ across the fixed threshold, autonomously triggering collective differentiation without agents computing eigenvalues or knowing the component size.
%


\subsubsection{Collective differentiation establishes an acyclic traversal order}

Once collective indecision loses stability, the nonlinear decision--attention--speed dynamics determine the differentiated equilibrium toward which the agents move. Under the nondegeneracy and local stability conditions in Section~\ref{sec:supp_local_bifurcation}, an attracting differentiated branch emerges just above the bifurcation threshold. At zero projected entry-time bias, its decision state satisfies $\boldsymbol z^\dagger(a)=a\boldsymbol\phi_{\max}+\mathcal O(a^2)$. Thus the dominant eigenvector $\boldsymbol\phi_{\max}$ of the interaction-sensitivity matrix $\boldsymbol A$ determines the leading collective decision pattern, while the scalar $a$ sets its amplitude and sign. Projected entry-time bias and inherited decisions influence which equilibrium is selected.

The speed-command law in Equation~\eqref{eq:tanh_speed_map} converts this decision pattern into equilibrium fractional speeds
\begin{equation*}
\vartheta_i^\dagger(a):=\frac{v_i^\dagger(a)}{v_i^{\mathrm{nom}}}
=1+a\nu_i\phi_{\max,i}+\mathcal O(a^2).
\end{equation*}
Each fractional speed uses that agent's own nominal speed; its fractional speed change from nominal is $\vartheta_i^\dagger-1$.
For a physical conflict $\{i,j\}\in\mathcal P^c$ with directed interaction $e=(i,j)$, let $t^{e,\ast}$ be the shared projected entry time at the reference group tie. Under time-scale separation, the projected entry times at the differentiated equilibrium are $t_{i^-}^{e,\dagger}=t^{e,\ast}/\vartheta_i^\dagger(a)$ and $t_{j^-}^{e,\dagger}=t^{e,\ast}/\vartheta_j^\dagger(a)$.
The agent with the larger fractional speed is therefore projected to enter its contested segment first.

A strict traversal order at every conflict requires different fractional speeds for each conflicting pair. Mixed signs of $\boldsymbol\phi_{\max}$ guarantee that some agents speed up and others slow down, but do not by themselves exclude a tied pair. The required separation is expressed by a positive minimum gap:
\begin{equation}
\delta_\vartheta(a)
:=\min_{\{i,j\}\in\mathcal P^c}
\left|\vartheta_i^\dagger(a)-\vartheta_j^\dagger(a)\right|>0.
\label{eq:edge_fractional_speed_margin}
\end{equation}
When $\delta_\vartheta(a)>0$, we select $i\prec_e j$ as the traversal order for a conflict $e=(i,j)$ whenever $\vartheta_i^\dagger(a)>\vartheta_j^\dagger(a)$.
We represent these selected orders by a traversal-order graph with the agents in $\mathcal V^c$ as vertices and a directed edge $i\to j$ for each selected order $i\prec_e j$. A source has no incoming edge, so no conflicting agent is selected to precede it.
Fractional speeds decrease strictly along its edges, excluding a directed cycle such as $i\to j\to k\to i$.
The following proposition establishes $\delta_\vartheta(a)>0$ for generic measurements and characterizes the resulting traversal order.

\begin{proposition}[A traversal order emerges across an engaged group]
\label{prop:strict_acyclic_traversal_order}
Under the time-scale separation established in Section~\ref{sec:supp_pressure_regimes}, let $\boldsymbol z^\dagger(a)=a\boldsymbol\phi_{\max}+\mathcal O(a^2)$ be a local attracting branch emerging from a group tie on a fixed component. For generic reciprocal-interaction measurements and small nonzero $a$, $\delta_\vartheta(a)>0$. The traversal-order graph induced by the selected orders is acyclic. An agent of maximal fractional speed is a source and has equilibrium speed at least its nominal speed. Every nonsource can be reached from a source along directed traversal-order edges, and no two sources share a physical conflict.
\end{proposition}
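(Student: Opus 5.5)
The plan is to read everything off the leading-order fractional speeds $\vartheta_i^\dagger(a)=1+a\nu_i\phi_{\max,i}+\mathcal O(a^2)$. Then $\vartheta_i^\dagger(a)-\vartheta_j^\dagger(a)=a(\nu_i\phi_{\max,i}-\nu_j\phi_{\max,j})+\mathcal O(a^2)$. So the first step is to show that, generically, $\nu_i\phi_{\max,i}\neq\nu_j\phi_{\max,j}$ for every physical-conflict pair $\{i,j\}\in\mathcal P^c$. With the factorization of Proposition~\ref{prop:interaction_sensitivity_spectrum}, write $\boldsymbol\chi:=\boldsymbol D_\nu\boldsymbol\phi_{\max}$. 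The required condition is then $(\boldsymbol B\boldsymbol\chi)_e\neq0$ for every row $e$ of the incidence matrix.

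\textbf{Step 1 (genericity).} For each fixed pair, the set of parameters $(\boldsymbol D_s,\boldsymbol D_c,\boldsymbol D_\nu)$ with $(\boldsymbol B\boldsymbol\chi)_e=0$ is the zero set of a function that is real-analytic wherever $\lambda_{\max}$ is simple. This holds because a simple eigenvector depends analytically on the matrix entries~\cite{kato1966perturbation}. It therefore suffices to exhibit one admissible configuration where the function is nonzero, so that the zero set is nowhere dense; a finite union over $\mathcal P^c$ of nowhere dense sets is still nowhere dense.

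To see that the function cannot vanish identically, use the eigen-equation $\boldsymbol L_{\mathrm g}\boldsymbol\chi=\lambda_{\max}\boldsymbol D_s\boldsymbol D_\nu^{-1}\boldsymbol\chi$. Perturb only the coefficient $c^{e,\ast}$ of the edge in question. If $(\boldsymbol B\boldsymbol\chi)_e=0$ held for all such perturbations, the edge would not contribute to the Rayleigh quotient. By Hellmann--Feynman on the symmetrized form, $\partial\lambda_{\max}/\partial c^{e}\propto(\boldsymbol B\boldsymbol\chi)_e^2=0$. I would then perturb $s_i^\ast$ or $\nu_i$, which enter asymmetrically, to move $\boldsymbol\chi$ off the hyperplane $\chi_i=\chi_j$.

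This step is the main obstacle. The entries are not free parameters: they are constrained by the geometry, the softmax and $t^{e,\ast}$. I would lean on the "nondegenerate parameter family" hypothesis already used in Proposition~\ref{prop:interaction_sensitivity_spectrum} and treat the variation of $(t^{e,\ast},p^{e,\ast})$ and the $\nu_i$ as independent enough to make the relevant analytic functions nontrivial.

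\textbf{Step 2 (strictness for small $a$).} Let $m:=\min_{e}|(\boldsymbol B\boldsymbol\chi)_e|>0$. For $0<|a|$ below a threshold, the $\mathcal O(a^2)$ remainder is smaller than $|a|m/2$, which gives $\delta_\vartheta(a)\geq|a|m/2>0$.

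\textbf{Step 3 (acyclicity and sources).} Each edge $i\to j$ goes strictly downhill in $\vartheta^\dagger$. Along any directed cycle the fractional speed would have to strictly decrease and then return to its starting value, which is impossible, so the graph is acyclic.

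An agent $k$ with maximal $\vartheta_k^\dagger$ can have no incoming edge, since that would require $\vartheta_j^\dagger>\vartheta_k^\dagger$ for some $j$; hence $k$ is a source. Its speed is at least nominal for the following reason. The left null vector $\boldsymbol\psi_0=\boldsymbol D_s\boldsymbol 1$ is positive and orthogonal to $\boldsymbol\phi_{\max}$, so $\boldsymbol\phi_{\max}$ has components of both signs, and so does $\boldsymbol\chi$ since $\nu_i>0$. Hence $\max_i a\chi_i>0$, giving $\vartheta_k^\dagger>1$ for small $a$.

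\textbf{Step 4 (reachability and independence of sources).} Take any nonsource and walk backward along incoming edges. By acyclicity and finiteness, this walk terminates at a source. Finally, suppose two sources $i,j$ shared a physical conflict. By strictness that pair carries an edge, so one of them has an incoming edge, contradicting that both are sources.
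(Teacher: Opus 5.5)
Your proposal is correct and follows the paper's proof essentially step for step. The shared steps are: the edge difference $a(\nu_i\phi_{\max,i}-\nu_j\phi_{\max,j})+\mathcal O(a^2)$, made nonzero by genericity and dominant for small $a$; strict decrease of fractional speed along edges for acyclicity; mixed signs of $\boldsymbol\phi_{\max}$ for a source above nominal speed; and the backward walk and shared-edge argument for the last two claims. The only variation is the genericity step, where the paper argues from the node equations instead of your analytic zero-set framing: a tie $\eta_i=\eta_j$ removes the shared edge from both equations, which is your Hellmann--Feynman observation, and leaves a balance between the two agents' remaining neighborhoods that the paper declares nongeneric absent structural symmetry. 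The paper is no more detailed than you are on the point you flag as the main obstacle.
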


\begin{proof}
Set $\boldsymbol\eta:=\boldsymbol D_\nu\boldsymbol\phi_{\max}$, so $\eta_i=\nu_i\phi_{\max,i}$ is agent $i$'s first-order fractional-speed coefficient. The factorization in Equation~\eqref{eq:symmetrizable_A_factorization} gives the node equation
\begin{equation*}
\frac{\nu_i}{s_i^\ast}\sum_{k\in\mathcal N_i^c}c^{ik,\ast}(\eta_i-\eta_k)
=\lambda_{\max}\eta_i.
\end{equation*}
If conflicting agents have $\eta_i=\eta_j=r$, their common edge contributes zero to both node equations. Their interactions with the rest of the component must then satisfy the balance
\begin{equation*}
\frac{\nu_i}{s_i^\ast}\sum_{k\in\mathcal N_i^c\setminus\{j\}}c^{ik,\ast}(r-\eta_k)
=\frac{\nu_j}{s_j^\ast}\sum_{k\in\mathcal N_j^c\setminus\{i\}}c^{jk,\ast}(r-\eta_k).
\end{equation*}
This equality constrains the two agents' interaction neighborhoods, projected entry times, pressures, normalizations, and speed gains. Unless structural symmetry enforces the balance, its zero set has empty interior. Since the component has finitely many physical conflicts, generic measurements therefore give $\delta_\eta:=\min_{\{i,j\}\in\mathcal P^c}|\eta_i-\eta_j|>0$.
The equilibrium speed expansion above gives
\begin{equation}
\vartheta_i^\dagger(a)-\vartheta_j^\dagger(a)
=a(\eta_i-\eta_j)+\mathcal O(a^2),
\qquad \{i,j\}\in\mathcal P^c.
\label{eq:branch_edge_fractional_speed}
\end{equation}
For small nonzero $a$, the nonzero first-order difference on each edge dominates its higher-order correction, giving $\delta_\vartheta(a)>0$. Along each traversal-order edge, the equilibrium fractional speed decreases strictly. A directed cycle would require it to decrease and return to its starting value, which is impossible.
By Proposition~\ref{prop:interaction_sensitivity_spectrum}, $\boldsymbol\phi_{\max}$ has both positive and negative entries. Positivity of the $\nu_i$ gives the same property for $\boldsymbol\eta$, so for either sign of small nonzero $a$, some agent has a fractional speed greater than one. A maximizer has no incoming edge and is a source. In a finite acyclic graph, tracing incoming edges from a nonsource ends at a source. Two sources cannot share an edge, since one endpoint would then have an incoming edge.
\end{proof}


\subsubsection{Persistence of the traversal order as agents move}

After the order forms, motion-dependent pressure feedback continues to regulate differentiation. As projected conflict severity decreases, lower pressure can permit speeds to relax toward nominal values; if this relaxation increases projected conflict severity, rising pressure reinforces decision differentiation. On intervals with a fixed conflict graph, smooth equilibrium continuation and the time-scale separation in Section~\ref{sec:supp_pressure_regimes} allow the decision--attention--speed subsystem to track the evolving differentiated equilibrium while preserving the strict traversal order locally.
Write $\boldsymbol x=(\boldsymbol z,\boldsymbol u,\boldsymbol v)$ for the decision--attention--speed state. Let $\boldsymbol\theta$ collect the group-tie projected times and pressures, fractional-speed gains, and controller parameters, and let $a^\dagger(\boldsymbol\theta)$ be the selected equilibrium amplitude.

\begin{lemma}[Smooth continuation preserves the selected traversal order]
\label{prop:smooth_dominant_eigenvector_branch}
Fix the conflict graph. In the nondegenerate local regime of Section~\ref{sec:supp_local_bifurcation}, suppose the interaction-sensitivity matrix $\boldsymbol A$ has a simple dominant eigenvalue and the selected differentiated equilibrium $\boldsymbol x^\dagger(\boldsymbol\theta_0)$ is hyperbolic and attracting. Then the dominant eigenpair varies smoothly near $\boldsymbol\theta_0$, and the selected equilibrium has a unique smooth continuation as attracting equilibria $\boldsymbol x^\dagger(\boldsymbol\theta)$. If $\delta_\vartheta(a^\dagger(\boldsymbol\theta_0))>0$, this continuation preserves the pairwise traversal order in a sufficiently small neighborhood of $\boldsymbol\theta_0$.
\end{lemma}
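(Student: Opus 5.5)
The argument has three parts. First, continuation of the dominant eigenpair by analytic perturbation theory. Second, continuation of the selected equilibrium by the implicit function theorem applied to the full fast-subsystem vector field. Third, preservation of the strict fractional-speed gaps by continuity. Throughout, $\boldsymbol\theta$ enters the fast subsystem in Equation~\eqref{eq:stacked_ndd} smoothly. On a fixed conflict graph with positive speeds and crossing angles bounded away from zero, the evidence is smooth in the observations (Materials and Methods), and $S$, $\boldsymbol V$ and $\boldsymbol f_v$ are smooth. The map $\boldsymbol\Phi(\boldsymbol x;\boldsymbol\theta)$ defining $\dot{\boldsymbol x}$ is therefore $C^1$ (indeed $C^k$) jointly in $(\boldsymbol x,\boldsymbol\theta)$, and so is $\boldsymbol A(\boldsymbol\theta)$ from Equation~\eqref{eq:evidence_jacobian}.

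\emph{Eigenpair.} By Proposition~\ref{prop:interaction_sensitivity_spectrum} and Corollary~\ref{cor:weak_spectral_robustness}, $\lambda_{\max}(\boldsymbol\theta_0)$ is a simple real eigenvalue isolated from the rest of the spectrum. Fix a small circle $\Gamma$ around it that encloses no other eigenvalue. By continuity of the spectrum this persists for $\boldsymbol\theta$ near $\boldsymbol\theta_0$. The Riesz projection $\boldsymbol\Pi(\boldsymbol\theta)=\frac{1}{2\pi \mathrm i}\oint_\Gamma(\zeta\boldsymbol I-\boldsymbol A(\boldsymbol\theta))^{-1}d\zeta$ is then smooth and of rank one~\cite{kato1966perturbation}. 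This gives $\lambda_{\max}(\boldsymbol\theta)=\operatorname{tr}(\boldsymbol A\boldsymbol\Pi)$ and, after normalization, $\boldsymbol\phi_{\max}(\boldsymbol\theta)=\boldsymbol\Pi(\boldsymbol\theta)\boldsymbol\phi_{\max}(\boldsymbol\theta_0)/\|\cdot\|$, both smooth. Realness is kept because conjugate pairs cannot enter a disk containing one eigenvalue.

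\emph{Equilibrium.} Hyperbolicity of $\boldsymbol x^\dagger(\boldsymbol\theta_0)$ means $D_{\boldsymbol x}\boldsymbol\Phi$ is nonsingular there. The implicit function theorem then gives a unique smooth $\boldsymbol x^\dagger(\boldsymbol\theta)$ with $\boldsymbol\Phi(\boldsymbol x^\dagger(\boldsymbol\theta);\boldsymbol\theta)=\boldsymbol 0$ in a neighborhood. By Lemma~\ref{lem:jacobian_steady_state}(1) this is equivalent to continuing the zero of the reduced map $\boldsymbol F$ in Equation~\eqref{eq:reduced_steady_state}. The eigenvalues of $D_{\boldsymbol x}\boldsymbol\Phi(\boldsymbol x^\dagger(\boldsymbol\theta);\boldsymbol\theta)$ depend continuously on $\boldsymbol\theta$ and start strictly in the open left half-plane. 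After shrinking the neighborhood they stay there, so the continuation is attracting. The amplitude $a^\dagger(\boldsymbol\theta)$ can be defined by projecting $\boldsymbol z^\dagger$ onto $\boldsymbol\phi_{\max}$ with the left eigenvector. It inherits smoothness, though only the continuity of $\boldsymbol z^\dagger$ is needed below.

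\emph{Order preservation.} Under Assumption~\ref{ass:tracking}, the equilibrium fractional speeds are $\vartheta_i^\dagger(\boldsymbol\theta)=1+V_i(z_i^\dagger(\boldsymbol\theta))/v_i^{\mathrm{nom}}$. These are continuous in $\boldsymbol\theta$, so each of the finitely many differences $\vartheta_i^\dagger-\vartheta_j^\dagger$ for $\{i,j\}\in\mathcal P^c$ is continuous. At $\boldsymbol\theta_0$ every difference has magnitude at least $\delta_\vartheta>0$. Each difference therefore keeps its sign on a neighborhood of radius chosen so that the variation is below $\delta_\vartheta/2$; intersect these finitely many neighborhoods. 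Signs are preserved, so the selected relations $i\prec_e j$ and the acyclic traversal-order graph of Proposition~\ref{prop:strict_acyclic_traversal_order} are unchanged, with margin at least $\delta_\vartheta/2$.

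The main obstacle is keeping the continued equilibrium identified as the \emph{selected} branch and not a nearby one. Uniqueness in the implicit function theorem is only local, in a ball around $\boldsymbol x^\dagger(\boldsymbol\theta_0)$. The neighborhood in $\boldsymbol\theta$ must be shrunk so that the continuation stays inside that ball, away from collective indecision $\boldsymbol z=\boldsymbol 0$ and from the opposite branch. Near the bifurcation point, hyperbolicity degenerates as $\mu\to0$, so the admissible neighborhood shrinks with $|a^\dagger|$. I would state the result for fixed $\boldsymbol\theta_0$ with $\mu>0$ and nonzero $a^\dagger$, using the local stability conditions of Section~\ref{sec:supp_local_bifurcation} to guarantee hyperbolicity there.
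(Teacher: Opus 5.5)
Your proposal is correct and follows the same route as the paper's proof. Like the paper, you use standard (Kato) perturbation theory for the simple dominant eigenpair, the implicit function theorem at the hyperbolic equilibrium with attraction persisting by continuity of the Jacobian spectrum, and continuity of the finitely many nonzero pairwise fractional-speed differences to keep their signs. The difference is only that you spell out details the paper leaves implicit: the Riesz projection, the conjugate-symmetry argument for realness, the explicit $\delta_\vartheta/2$ neighborhood, and the caveat that the admissible neighborhood shrinks as $\mu\to 0$.
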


\begin{proof}
Standard perturbation theory gives smooth continuation of the simple dominant eigenpair~\cite{kato1966perturbation}. When the diagonal symmetrization remains well conditioned, the sensitivity bound for $\boldsymbol\phi_{\max}$ to changes in the interaction-sensitivity matrix $\boldsymbol A$ scales inversely with the spectral gap. Hyperbolicity permits the implicit function theorem to continue the equilibrium, with attraction persisting locally. Continuity then preserves the sign of each nonzero pairwise fractional-speed difference and hence the traversal order.
\end{proof}

Let $\boldsymbol\zeta(t)$ collect the slowly evolving conflict geometry and conflict variables as agents move, and let $\boldsymbol x^\dagger(\boldsymbol\zeta(t))$ be the corresponding differentiated equilibrium of the decision--attention--speed subsystem. On a compact portion of this equilibrium branch with uniform exponential attraction and bounded parameter sensitivities, standard time-scale separation results bound the error between the subsystem state and this evolving equilibrium after the state enters the branch's attraction neighborhood and the initial transient decays (see~\cite{khalil2008nonlinear}, chap.~11):
\begin{equation}
\label{eq:supp_branch_tracking_error}
\|\boldsymbol x(t)-\boldsymbol x^\dagger(\boldsymbol\zeta(t))\|
=\mathcal O(\varepsilon),
\qquad
\varepsilon:=\max_e\frac{T_{\mathrm{fast}}}{T_{\mathrm{trav}}^e}\ll1.
\end{equation}
Here $T_{\mathrm{fast}}$ is the time scale of the coupled decision--attention--speed subsystem, and $T_{\mathrm{trav}}^e$ is a lower bound on the time from detection until either agent reaches its contested segment; their separation follows from the response and sensing design in Section~\ref{sec:supp_pressure_regimes}.
Thus, if the subsystem state remains sufficiently close to the evolving differentiated equilibrium and the equilibrium fractional-speed differences remain positive in the selected order, the agents preserve that order as their speeds change.

{Generic symmetry breaking ensures departure from collective indecision but does not, by itself, provide a uniform negotiation-time bound. We therefore assume that, over the domain, the unstable decision mode receives uniformly nonvanishing excitation and the coupled decision--attention--speed dynamics reach the selected attracting branch's neighborhood within a common finite time $T^{\mathrm{neg}}$.}The pressure and spectral conditions, branch-selection analysis, and equilibrium-tracking estimate give the following result for a fixed group.

\begin{theorem}[Autonomous traversal-order emergence in a fixed group]
\label{thm:autonomous_spectral_tuning}
Suppose the pressure and response-time conditions of Section~\ref{sec:supp_pressure_regimes}, the separated spectral bands of Lemma~\ref{lem:uniform_spectral_bands}, and the generic attracting-branch conditions of Section~\ref{sec:supp_local_bifurcation} hold. Fixed local controller parameters can then be chosen independently of population and realized group sizes so that each locally engaged group has the following response while its sensed-interaction sets remain fixed:
\begin{enumerate}
\item Weak interactions leave collective indecision stable. A safety-critical interaction makes collective indecision unstable before either conflicting agent enters its contested segment.
\item A generic projected entry-time bias or inherited amplitude of the critical decision mode selects an attracting differentiated branch of Equation~\eqref{eq:main_differentiated_branch}. Under the prescribed response-time conditions, the decision--attention--speed state reaches this branch's attraction neighborhood within $T^{\mathrm{neg}}$.
\item As the conflict geometry varies slowly, the decision--attention--speed state tracks the corresponding differentiated equilibrium, with the error given by Equation~\eqref{eq:supp_branch_tracking_error}. Under the strict fractional-speed gap condition~\eqref{eq:edge_fractional_speed_margin}, the selected traversal-order graph remains locally unchanged and acyclic, and has a source with positive equilibrium speed.
\end{enumerate}
After the group's all physical conflict in the current episode clears, $\boldsymbol z\to\boldsymbol0$, $\boldsymbol u\to(u_0/\rho_u)\boldsymbol1$, and $\boldsymbol v\to\boldsymbol v^{\mathrm{nom}}$ as residual conflict evidence vanishes, provided no new conflicts involve these agents. Each agent uses local observations and fixed parameters, without online eigenvalue or group-size estimates.
\end{theorem}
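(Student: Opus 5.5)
The proof assembles results already established in Sections~\ref{sec:supp_pressure_regimes} and~\ref{sec:supp_spectral_tuning}. The first step fixes the parameters in a definite order, using only local bounds. Choose $k_{\mathrm d},k_{\mathrm t}$ by Lemmas~\ref{prop:uniform_pressure_bands} and~\ref{prop:pressure_response_tuning}, so that the pressure bands hold and safety-critical pressure varies by at most $\varepsilon_p$ over $\mathcal I^{\mathrm{neg}}$. Then choose $T_{\mathrm{sm}}$ (and $k_{\mathrm{tilt}}$) so that Lemma~\ref{lem:uniform_spectral_bands} gives $\overline\Lambda_{\mathrm w}<\underline\Lambda_{\mathrm c}$, with $\varepsilon_A<\min\{\lambda^\ast-\overline\Lambda_{\mathrm w},\underline\Lambda_{\mathrm c}-\lambda^\ast,\delta_{\mathrm{sp},c}/2\}$ as required by Corollary~\ref{cor:weak_spectral_robustness}. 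Finally pick $\lambda^\ast$ in the gap and set $u_0=\rho_z\rho_u/[S'(0)\lambda^\ast]$. Every bound involved depends only on $M_{\max}$, $\bar t$, $\underline t_{\mathrm c}$, $\underline\nu,\bar\nu$, $\bar p$, $p_{\mathrm c}$, $p_{\mathrm w}$, so none of these choices depends on $N$ or on the realized group size. This is the size-independence claim.

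Item 1 then follows from Lemma~\ref{lem:jacobian_steady_state} and Equation~\eqref{eq:steady_state_jacobian}. In the weak regime every eigenvalue of $-\rho_z\boldsymbol I+S'(0)(u_0/\rho_u)\boldsymbol A$ has negative real part. This uses the real-part bound of Corollary~\ref{cor:weak_spectral_robustness} together with the Schur-complement reduction, which shows the attention and speed blocks are Hurwitz by Assumption~\ref{ass:tracking}. A safety-critical edge forces $\widehat\lambda_{c,1}>\lambda^\ast$, so collective indecision is unstable. The qualifier ``before either agent enters'' comes from the sensing budget~\eqref{eq:supp_negotiation_sensing_budget}. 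That budget gives $t^{e,\ast}>0$ and ensures the instability is present from mutual detection $t_0$, with $T^{\mathrm{neg}}\le\varepsilon_{\mathrm{trav}}T^e_{\mathrm{trav}}$.

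For item 2, invoke the Lyapunov--Schmidt reduction of Section~\ref{sec:supp_local_bifurcation}, which is stated as a hypothesis here. It gives the branches~\eqref{eq:main_differentiated_branch} tangent to $\boldsymbol\phi_{\max}$, their unfolding by $\bar{\boldsymbol\tau}$, and their attractivity. The argument has two parts. First, a nonzero projection of $\bar{\boldsymbol\tau}$ or of the inherited $\boldsymbol z(t_0)$ onto the left critical eigenvector determines the sign of $a^\dagger$, and hence which attracting branch is selected. Second, reaching the attraction neighborhood within $T^{\mathrm{neg}}$ is exactly the standing uniform-excitation assumption stated before the theorem. I would cite that assumption rather than derive it.

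For item 3, combine Lemma~\ref{prop:smooth_dominant_eigenvector_branch}, which gives smooth continuation of the hyperbolic attracting equilibrium, with the tracking estimate~\eqref{eq:supp_branch_tracking_error}. Take $\varepsilon$ small relative to $\delta_\vartheta$. Then the realized fractional-speed differences keep the signs of the equilibrium ones, since $|\vartheta_i-\vartheta_j|\ge\delta_\vartheta-\mathcal O(\varepsilon)>0$. Proposition~\ref{prop:strict_acyclic_traversal_order} then gives an acyclic order with a source whose equilibrium speed is at least nominal, and hence positive.

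The release claim needs a separate argument. Once a group's conflicts clear, Lemma~\ref{prop:uniform_pressure_bands} item 3 places all pressures in the release band, and the spectrum falls below $\lambda^\ast$. The decision equation $\tau_z\dot z_i=-\rho_z z_i+S(u_ih_i)$ with $h_i\to0$ is then an exponentially stable linear system driven by a vanishing input. Input-to-state stability gives $z_i\to0$. Cascading this through the attention equation gives $u_i\to u_0/\rho_u$, and Assumption~\ref{ass:tracking} gives $v_i\to v_i^{\mathrm{nom}}$. One check is needed here: $u_i$ stays bounded, because $z_i$ is bounded by $1/\rho_z$ (the sigmoid $S$ is bounded by one). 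The main obstacle is item 2's uniform-time claim. The bifurcation analysis is only local, near $\mu=0$, yet $\underline\Lambda_{\mathrm c}-\lambda^\ast$ may be finite. I would try to keep $\mu$ within the reduction's validity by choosing $\lambda^\ast$ close to $\underline\Lambda_{\mathrm c}$, using compactness of the admissible domain to obtain a uniform neighborhood. Otherwise I would openly rest on the stated uniform-excitation assumption.
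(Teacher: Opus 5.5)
Your proposal is correct and follows essentially the same route as the paper's proof: item~1 uses the separated spectral bands with $u_0=\rho_z\rho_u/[S'(0)\lambda^\ast]$, item~2 uses the Lyapunov--Schmidt branches plus the standing uniform symmetry-breaking assumption, item~3 uses Lemma~\ref{prop:smooth_dominant_eigenvector_branch}, Equation~\eqref{eq:supp_branch_tracking_error}, and Proposition~\ref{prop:strict_acyclic_traversal_order}, and release uses the stable decision--attention--speed dynamics driven by vanishing evidence. On the obstacle you flag, resting on the assumed uniform-excitation and attracting-branch hypotheses is exactly what the paper does, whereas placing $\lambda^\ast$ near $\underline\Lambda_{\mathrm c}$ would not keep $\mu$ small, because it only bounds $\mu$ from below while $\lambda_{\max}$ can range up to order $k_{\mathrm g}\bar t\bar\nu\bar p$ over the admissible domain.
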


\begin{proof}
For conclusion 1, choose $\lambda^\ast\in(\overline\Lambda_{\mathrm w},\underline\Lambda_{\mathrm c})$. Lemma~\ref{prop:uniform_pressure_bands} separates the pressure regimes, and Lemma~\ref{lem:uniform_spectral_bands} gives $\lambda_{\max}\leq\overline\Lambda_{\mathrm w}<\lambda^\ast$ for weak interactions and $\lambda_{\max}\geq\underline\Lambda_{\mathrm c}>\lambda^\ast$ with a critical interaction.
Setting $u_0=\rho_z\rho_u/[S'(0)\lambda^\ast]$ realizes the threshold by Equation~\eqref{eq:formation_spectral_condition}. The sensing and response-time conditions of Section~\ref{sec:supp_pressure_regimes} ensure that this loss of stability occurs before any agent enters its contested segment. On a fixed sensed-interaction interval, pressures and softmax weights vary continuously, {so the dominant eigenvalue of the within component conflict sensitivity crosses $\lambda^\ast$ between these regimes; weak nonconflicting evidence may perturb the corresponding stability transition.}

For conclusion 2, Proposition~\ref{prop:interaction_sensitivity_spectrum}
identifies the simple dominant eigenmode of {$\boldsymbol A_{cc}^{0}$}, and
Lemma~\ref{lem:jacobian_steady_state} gives the corresponding zero eigenmode of the full subsystem Jacobian {restricted to conflicting edges.}
The local critical equation~\eqref{eq:supp_critical_equation} establishes the exchange of stability and an attracting differentiated branch {under the stated nondegeneracy
conditions. Under weak nonconflicting interactions, the corresponding hyperbolic differentiated equilibria persist locally.}
{A projected entry-time bias or inherited decision amplitude excites the unstable mode.
	The assumed uniform symmetry-breaking condition, together with the positive instability margin and uniform branch-attraction and speed-tracking bounds, ensures that the decision--attention--speed state reaches the selected branch's attraction neighborhood
	within $T^{\mathrm{neg}}$. Under the assumed uniform symmetry-breaking condition,} a projected entry-time bias or a nonzero inherited amplitude excites the unstable critical decision mode. The critical amplitude initially grows exponentially after which the decision--attention--speed state approaches
	the hyperbolic differentiated branch exponentially. 	Under the prescribed response-time conditions, the state 	reaches the branch's attraction neighborhood within
	$T^{\mathrm{neg}}$. Lemma~\ref{prop:pressure_response_tuning} supplies sustained, slowly varying critical pressure on this interval.

For conclusion 3, Proposition~\ref{prop:strict_acyclic_traversal_order} converts the strict fractional-speed gaps into an acyclic order and identifies a source with positive equilibrium speed. Lemma~\ref{prop:smooth_dominant_eigenvector_branch} continues the differentiated equilibrium under small geometry variations, and Equation~\eqref{eq:supp_branch_tracking_error} bounds the distance from the decision--attention--speed state to this evolving equilibrium. The selected traversal order is preserved when these variations and the equilibrium-tracking error are sufficiently small.

After physical conflict clearance, Lemma~\ref{prop:uniform_pressure_bands} bounds the interaction pressures in the weak and release regimes when the corresponding geometric conditions hold. Falling pressure weakens feedback and returns the local response toward collective indecision. As residual evidence tends to zero, bounded decision and attention states make the decision input vanish, and the stable decision, attention, and speed dynamics give the stated limits. Once all residual evidence from the episode vanishes, $\boldsymbol h=\boldsymbol0$ and $\tau_z\dot{\boldsymbol z}=-\rho_z\boldsymbol z$, so decisions decay exponentially while no new conflict evidence arises.

The tuning depends only on the prescribed local interaction, geometry, and agentwise bounds. Each agent evaluates its own evidence and maintains its own basal attention, so the same local parameters apply independently of the realized group size.
\end{proof}

\subsection{Safety during negotiation and traversal}
\label{sec:parameter_design}

The emerging traversal order must give each leader enough time to leave its contested segment before its follower enters its own. Proposition~\ref{prop:fixed_component_edge_safety} combines speed authority, local yielding, and detection lead into a sufficient sensing-range bound. Theorem~\ref{thm:fixed_component_safe_order} then shows how these same local responses maintain safety and produce finite-time progress. 

\subsubsection{Speed authority and early enough detection enable safe yielding}

To realize the emergent traversal order, the decision-to-speed map must make every speed in each agent's admissible hardware speed interval $[\underline v_i,\overline v_i]$ available as a command. The bifurcation analysis in Section~\ref{sec:supp_local_bifurcation} describes decisions near collective indecision, so we seek this full speed range within a local neighborhood $|z_i|\leq z_{\mathrm{loc}}$. In Equation~\eqref{eq:tanh_speed_map}, $\kappa_V$ sets the maximum fractional-speed departure from nominal, while increasing $\kappa_z$ reduces the decision magnitude needed for any attainable departure. The mapping gains and neighborhood radius $z_{\mathrm{loc}}>0$ are chosen jointly so that the local analysis applies and decision trajectories remain within that neighborhood.

\begin{lemma}[Full admissible speed range near collective indecision]
\label{lem:decision_range_and_speed_extrema}
Consider the speed-command law~\eqref{eq:tanh_speed_map}, with positive nominal speeds $v_i^{\mathrm{nom}}$, finite admissible speed intervals $[\underline v_i,\overline v_i]$, and a bounded, nonzero decision sigmoid $S$. For any prescribed decision radius $z_{\mathrm{loc}}>0$, positive gains $\kappa_V,\kappa_z$ and decision damping $\rho_z$ can be chosen so that every admissible speed can be commanded from a forward-invariant decision interval contained in $|z_i|\leq z_{\mathrm{loc}}$. Decisions initialized in this interval remain there independently of the conflict evidence, including across graph events.
\end{lemma}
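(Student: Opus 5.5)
The argument splits into two independent parts: forward invariance of a decision interval, which uses only the decision equation, and speed coverage on that interval, which uses only the command law~\eqref{eq:tanh_speed_map}.

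\emph{Forward invariance.} Write $\bar S:=\sup_x|S(x)|$. This is finite because $S$ is bounded, and it is positive because $S$ is nonzero; for $S=\tanh$, $\bar S=1$. Equation~\eqref{eq:ndd:decision} gives
\[
\tau_z\dot z_i=-\rho_z z_i+S(u_ih_i),
\qquad
|S(u_ih_i)|\leq\bar S,
\]
and this holds for every value of $u_i\geq0$ and of the evidence $h_i$. Define $z_{\mathrm{inv}}:=\bar S/\rho_z$ and $\mathcal Z:=[-z_{\mathrm{inv}},z_{\mathrm{inv}}]$. At $z_i=z_{\mathrm{inv}}$ we have $\tau_z\dot z_i\leq-\bar S+\bar S=0$, and symmetrically $\dot z_i\geq0$ at $z_i=-z_{\mathrm{inv}}$, so $\mathcal Z$ is forward invariant by the standard boundary/Nagumo argument. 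Alternatively, the comparison lemma gives the explicit bound
\[
|z_i(t)|\leq e^{-\rho_z(t-t_0)/\tau_z}|z_i(t_0)|+z_{\mathrm{inv}}\bigl(1-e^{-\rho_z(t-t_0)/\tau_z}\bigr).
\]
Choosing $\rho_z\geq\bar S/z_{\mathrm{loc}}$ gives $\mathcal Z\subseteq\{|z_i|\leq z_{\mathrm{loc}}\}$. Graph events only change $h_i$, and therefore only the bounded term $S(u_ih_i)$. They do not change $z_i$ itself, because the decision state is continuous across events. Invariance therefore holds across graph events, and also for the piecewise-continuous evidence permitted by Assumption~\ref{ass:episode}.

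\emph{Speed coverage.} On $\mathcal Z$ the command is $v_i^{\mathrm{cmd}}=v_i^{\mathrm{nom}}(1+\kappa_V\tanh(\kappa_zz_i))$. This is continuous and strictly increasing in $z_i$, so its image is the interval
\[
\bigl[v_i^{\mathrm{nom}}(1-\kappa_V\tanh(\kappa_zz_{\mathrm{inv}})),\;v_i^{\mathrm{nom}}(1+\kappa_V\tanh(\kappa_zz_{\mathrm{inv}}))\bigr].
\]
It therefore suffices to make
\[
\kappa_V\tanh(\kappa_zz_{\mathrm{inv}})\geq
m_i:=\max\Bigl\{\frac{\overline v_i}{v_i^{\mathrm{nom}}}-1,\;1-\frac{\underline v_i}{v_i^{\mathrm{nom}}}\Bigr\}
\quad\text{for all }i.
\]
Take $\kappa_V>\max_im_i$; the maximum is finite since the intervals are finite. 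Then choose $\kappa_z$ so large that $\tanh(\kappa_zz_{\mathrm{inv}})\geq\max_im_i/\kappa_V$, which is possible because $\tanh\to1$. By the intermediate value theorem, every $v\in[\underline v_i,\overline v_i]$ is then commanded by some $z_i\in\mathcal Z$. Commands outside the admissible interval are clipped at the command interface, as the practical guide allows, or equivalently excluded by restricting $\mathcal Z$ to the preimage of $[\underline v_i,\overline v_i]$. The parameter choice is now complete: $\rho_z\geq\bar S/z_{\mathrm{loc}}$, then $\kappa_V>\max_im_i$, then $\kappa_z$ large.

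\emph{Expected obstacle.} The main difficulty is not analytic but lies in compatibility with the other design conditions. In particular, zero-speed commands require $\kappa_V\tanh(\kappa_z/\rho_z)>1$. Increasing $\rho_z$ also changes $\lambda^\ast$, but Theorem~\ref{thm:autonomous_spectral_tuning} then retunes $u_0=\rho_z\rho_u/[S'(0)\lambda^\ast]$ to keep the threshold fixed. I would state explicitly that the order of choices above leaves the pressure and spectral tuning unaffected, since $\nu_i=\kappa_V\kappa_z$ enters $\overline\Lambda_{\mathrm w}$ and $\underline\Lambda_{\mathrm c}$ only as a common scale factor. That scale is absorbed by the same retuning of $u_0$, so the band separation in Lemma~\ref{lem:uniform_spectral_bands} is preserved.
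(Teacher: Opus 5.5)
Your proof is correct and follows the paper's argument essentially step for step. You choose $\rho_z\geq\sup|S|/z_{\mathrm{loc}}$, check the boundary signs of $\tau_z\dot z_i$ to get an evidence-independent invariant interval, and use continuity of $z_i$ across graph events. You then take $\kappa_V$ above the largest required fractional departure and $\kappa_z$ large enough, which the paper writes explicitly as $\kappa_z\geq z_{\max}^{-1}\operatorname{arctanh}(\delta_V/\kappa_V)$, so the monotone command image covers $[\underline v_i,\overline v_i]$.

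One caveat concerns an aside. Restricting the interval to the preimage of $[\underline v_i,\overline v_i]$ is not ``equivalent,'' because that preimage generally lies strictly inside $[-z_{\mathrm{inv}},z_{\mathrm{inv}}]$ and is not forward invariant for arbitrary evidence. The lemma does not need this, so the main argument is unaffected.
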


\begin{proof}
Set $s_{\max}:=\sup_{x\in\mathbb R}|S(x)|>0$ and choose $\rho_z\geq s_{\max}/z_{\mathrm{loc}}$, so that $z_{\max}:=s_{\max}/\rho_z\leq z_{\mathrm{loc}}$. At $z_i=z_{\max}$, boundedness of $S$ gives $\tau_z\dot z_i\leq-\rho_z z_{\max}+s_{\max}=0$; at $z_i=-z_{\max}$, it gives $\tau_z\dot z_i\geq\rho_z z_{\max}-s_{\max}=0$. Thus $[-z_{\max},z_{\max}]$ is forward invariant independently of the evidence. Continuity of the decision states preserves this bound across graph events.

To cover the full admissible speed range within this interval, define the maximum required fractional-speed departure by
\begin{equation*}
\delta_V:=\max_i
\frac{\max\{v_i^{\mathrm{nom}}-\underline v_i,
\overline v_i-v_i^{\mathrm{nom}}\}}{v_i^{\mathrm{nom}}}.
\end{equation*}
Choose $\kappa_V>\delta_V$ and $\kappa_z\geq z_{\max}^{-1}\operatorname{arctanh}(\delta_V/\kappa_V)$. Both gains are finite, and $z_{\max}\leq z_{\mathrm{loc}}$ implies the local command-coverage condition
\begin{equation}
\kappa_z\geq\frac{1}{z_{\mathrm{loc}}}
\operatorname{arctanh}\!\left(\frac{\delta_V}{\kappa_V}\right).
\label{eq:local_decision_range_condition}
\end{equation}
Since $\tanh$ is odd, continuous, and increasing, the commands on the invariant interval span
\begin{equation*}
\left[
v_i^{\mathrm{nom}}\bigl(1-\kappa_V\tanh(\kappa_z z_{\max})\bigr),
v_i^{\mathrm{nom}}\bigl(1+\kappa_V\tanh(\kappa_z z_{\max})\bigr)
\right].
\end{equation*}
The chosen gains ensure, for every agent,
\begin{equation}
v_i^{\mathrm{nom}}\kappa_V\tanh(\kappa_z z_{\max})
\geq\max\{v_i^{\mathrm{nom}}-\underline v_i,
\overline v_i-v_i^{\mathrm{nom}}\},
\label{eq:full_speed_range_condition}
\end{equation}
so this command range contains the entire admissible interval: $[\underline v_i,\overline v_i]\subseteq\{v_i^{\mathrm{cmd}}(z_i):|z_i|\leq z_{\max}\}$.
\end{proof}

Negotiation must leave the stopping option available to either agent. For agent $k$, let $v_k^{\mathrm{stop}}$ be the largest commanded speed that invokes its certified stopping response, and let $D_k^{\mathrm{stop}}$ be the associated stopping distance. This distance is measured from the instant the stopping command becomes available and includes the distance traveled while the realized speed responds to that command, including actuation transients. Lemma~\ref{lem:decision_range_and_speed_extrema} makes the stopping range commandable whenever $\underline v_k\leq v_k^{\mathrm{stop}}$.

\begin{lemma}[Early detection keeps either traversal order feasible]
\label{lem:stopping_distance_detection_margin}
Consider a conflicting pair with bounded traversal speeds and attainable certified stopping responses of finite distance. For any finite negotiation duration $T^{\mathrm{neg}}$, detection can occur with enough distance remaining for either agent to stop before it enters its contested segment, provided its stopping command becomes available at negotiation end. Both traversal orders therefore remain physically feasible.
\end{lemma}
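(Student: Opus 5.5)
The argument is a worst-case distance budget. From each agent's remaining distance to its contested segment at detection, subtract the most it can travel before its stopping command becomes available. Then require that what is left still covers its stopping distance. Fix a conflicting pair $\{i,j\}$ with directed interaction $e=(i,j)$, and let $t_0$ be the instant of mutual detection (Assumption~\ref{ass:reciprocal_sensing}).

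\emph{Step 1: distance consumed during negotiation.} Since realized speeds satisfy $v_k(t)\leq\overline v_k$ (including tracking and actuation transients), agent $k\in\{i,j\}$ travels at most
\[
\overline v_k\bigl(T_{\mathrm{delay}}+T^{\mathrm{neg}}\bigr)
\]
along its path on $[t_0,t_0+T^{\mathrm{neg}}]$. Here $T_{\mathrm{delay}}\geq0$ is the sensing and processing delay; it may be set to zero in the idealized continuous-time model.

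\emph{Step 2: stopping after negotiation.} Suppose the certified stopping command becomes available at $t_0+T^{\mathrm{neg}}$. By Lemma~\ref{lem:decision_range_and_speed_extrema}, together with $\underline v_k\leq v_k^{\mathrm{stop}}$, this command is in the commandable range from the forward-invariant decision interval, whatever decision state negotiation has produced. By definition, $D_k^{\mathrm{stop}}$ already includes the distance covered while the realized speed responds. Hence agent $k$ halts within $D_k^{\mathrm{stop}}$ of its position at negotiation end.

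\emph{Step 3: the detection requirement.} It therefore suffices that, at detection,
\[
d_{k^-}^e(t_0)\;\geq\;L_k:=D_k^{\mathrm{stop}}+\overline v_k\bigl(T_{\mathrm{delay}}+T^{\mathrm{neg}}\bigr),\qquad k\in\{i,j\},
\]
with a strict inequality or an uncertainty margin to absorb measurement error. Every quantity here is finite by hypothesis, so $L_k<\infty$. This condition can be combined with Equation~\eqref{eq:supp_negotiation_sensing_budget} by taking the maximum of the two lower bounds.

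\emph{Step 4: converting to a sensing radius.} I expect this to be the main obstacle. The remaining-distance requirement must be turned into a finite sensing radius, which is what "detection can occur" means operationally. Because crossing angles are bounded away from zero (admissibility), the contested half-length $r_{\mathrm c}^e=d^{\mathrm{safe}}/\sin\alpha^e$ is uniformly bounded. Place both agents at along-ray distance at least $L_k+r_{\mathrm c}^e$ from $\boldsymbol q_{\mathrm c}^e$. Their separation is then bounded by the triangle inequality, for example $\|\boldsymbol q_i-\boldsymbol q_j\|\leq (L_i+r_{\mathrm c}^e)+(L_j+r_{\mathrm c}^e)$, or $2(L+r_{\mathrm c})\sin(\alpha/2)$ in the symmetric case of the practical guide. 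Choosing $R_{\mathrm{sens}}$ at least this large, uniformly over admissible geometries, guarantees mutual detection while both $d_{k^-}^e\geq L_k$.

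A subtlety is that curved paths and speed changes before $t_0$ could make the first detection happen closer. The ray projection together with the regularity of Assumption~\ref{ass:regular_projected_geometry} handles this: as agents approach along their paths, the separation shrinks continuously. So the first time it drops below $R_{\mathrm{sens}}$ occurs while the remaining distances still exceed $L_k$, up to a bounded curvature correction absorbed into the margin.

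\emph{Step 5: conclusion.} Once both $d_{k^-}^e(t_0)\geq L_k$ hold, either agent can stop short of its contested segment. The agent that does not stop proceeds, so the orders $i\prec_e j$ and $j\prec_e i$ are both physically realizable. Neither traversal order is excluded before negotiation selects one.
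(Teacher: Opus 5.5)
Your Steps 1--3 and 5 are the paper's argument. You bound the travel during negotiation by $\overline v_kT^{\mathrm{neg}}$, require $d_{k^-}^e(t_0)\geq\overline v_kT^{\mathrm{neg}}+D_k^{\mathrm{stop}}$ as in Equation~\eqref{eq:total_detection_distance_bound}, and note that these distances are finite by hypothesis. You then observe that each agent's stopping reserve lets it wait outside its segment while the other traverses, so both orders remain feasible. Your extra $T_{\mathrm{delay}}$ term only makes the bound more conservative. These steps already prove the lemma as the paper states it.

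Step~4 goes beyond this lemma, and as written it does not hold up in general. The paper converts the distance requirement into a radius only in Lemma~\ref{prop:sensing_range_detection_margin}.

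\begin{itemize}
\item Evaluating the separation bound with both agents at their thresholds $L_k+r_{\mathrm c}^e$ is valid when they reach those positions at the same time, as in the symmetric case from the practical guide.
\item In an asymmetric approach, one agent can reach its threshold while the other is still far out. The separation can then still exceed $R_{\mathrm{sens}}$, so detection comes too late for the nearer agent. Continuity of the separation does not prevent this.
\item Assumption~\ref{ass:regular_projected_geometry} cannot supply the missing correction. It bounds the rates of $d_{\mathrm{ca}}^e$ and of the exit-time offsets on intervals with fixed sensed interactions, i.e., after detection. It says nothing about path curvature or timing before detection.
\end{itemize}

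The paper avoids this by parametrizing both agents by a common remaining nominal time $t$ at a symmetric tie. There the law-of-cosines separation $R^e(t)$ is increasing in $t$, so $R_{\mathrm{sens}}\geq R^e(t_{\mathrm{req}}^e)$ guarantees mutual detection at least $t_{\mathrm{req}}^e$ before either agent reaches its segment. Bounded nominal entry-time offsets $\bar\delta_e$ are then absorbed by the term $\max\{v_i^{\mathrm{nom}},v_j^{\mathrm{nom}}\}\bar\delta_e$ in Equation~\eqref{eq:asymmetric_sensing_radius}. If you keep Step~4, you need that common-time parametrization or some other bound on the arrival offset. Otherwise, drop it.
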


\begin{proof}
At negotiation end, the stopping option remains available to either agent if $d_{i^-}^e\geq D_i^{\mathrm{stop}}$ and $d_{j^-}^e\geq D_j^{\mathrm{stop}}$.
For $i\prec_e j$, the second bound permits agent $j$ to stop outside its contested segment while agent $i$ traverses its own; the first permits the reverse order. Since agent $k$ travels at most $\overline v_kT^{\mathrm{neg}}$ during negotiation, $d_{k^-}^e(t_0+T^{\mathrm{neg}})\geq d_{k^-}^e(t_0)-\overline v_kT^{\mathrm{neg}}$.
Thus it is sufficient to ensure mutual detection at $t_0$ with
\begin{equation}
d_{k^-}^e(t_0)\geq\overline v_kT^{\mathrm{neg}}+D_k^{\mathrm{stop}},
\qquad k\in\{i,j\}.
\label{eq:total_detection_distance_bound}
\end{equation}
These required distances are finite by hypothesis. At a symmetric nominal pairwise tie, let $t_{\mathrm{det}}^e$ be the time from detection until each agent would reach its contested segment at its nominal speed. Substituting $d_{k^-}^e(t_0)=v_k^{\mathrm{nom}}t_{\mathrm{det}}^e$ gives the sufficient detection lead $t_{\mathrm{det}}^e\geq t_{\mathrm{req,stop}}^e$, where
\begin{equation*}
t_{\mathrm{req,stop}}^e
:=\max_{k\in\{i,j\}}
\frac{\overline v_kT^{\mathrm{neg}}+D_k^{\mathrm{stop}}}{v_k^{\mathrm{nom}}}.
\end{equation*}
\end{proof}

To choose a sensing radius that leaves either agent enough distance to stop, we express the required distances to their contested segments in terms of their separation. At a symmetric nominal pairwise tie, the crossing geometry provides this relation and hence the sensing radius needed for a prescribed detection lead time.

\begin{lemma}[Sensing range provides the required detection lead]
\label{prop:sensing_range_detection_margin}
Consider a symmetric nominal pairwise tie with positive nominal speeds, crossing angle $\alpha^e>0$, and finite contested-segment half-length $r_{\mathrm c}^e$. For any finite $t_{\mathrm{req}}^e>0$, a sufficiently large finite common sensing radius ensures mutual detection at least $t_{\mathrm{req}}^e$ before either agent would reach its contested segment at nominal speed.
\end{lemma}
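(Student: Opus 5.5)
The approach is a direct geometric computation at the symmetric nominal tie, followed by monotonicity in the detection lead. At the tie, both agents reach their entry points $i^-$ and $j^-$ after the same time $t_{\mathrm{det}}^e$. Their distances to the entry points are therefore $d_{k^-}^e=v_k^{\mathrm{nom}}t_{\mathrm{det}}^e$ for $k\in\{i,j\}$. Each agent's distance to the crossing point $\boldsymbol q_{\mathrm c}^e$ is this entry distance plus the contested half-length, $\ell_k:=v_k^{\mathrm{nom}}t_{\mathrm{det}}^e+r_{\mathrm c}^e$. This holds because the upstream endpoint $k^-$ lies at distance $r_{\mathrm c}^e$ from $\boldsymbol q_{\mathrm c}^e$ along the ray (Fig.~\ref{fig:contested_regions}, B). The two agents sit on rays that meet at $\boldsymbol q_{\mathrm c}^e$ with angle $\alpha^e$, so the law of cosines gives their separation as
\[
D^e(t_{\mathrm{det}}^e)^2=\ell_i^2+\ell_j^2-2\ell_i\ell_j\cos\alpha^e .
\]
The first step is to write down this expression explicitly. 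For identical speeds it reduces to $2(v^{\mathrm{nom}}t_{\mathrm{det}}^e+r_{\mathrm c}^e)\sin(\alpha^e/2)$, which matches the formula in the practical guide.

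The second step is a bound that holds uniformly along the approach. I would use the triangle inequality, $D^e\le \ell_i+\ell_j$, which gives $D^e\le (v_i^{\mathrm{nom}}+v_j^{\mathrm{nom}})t_{\mathrm{det}}^e+2r_{\mathrm c}^e$. This bound is finite for every finite lead time and is increasing in $t_{\mathrm{det}}^e$. I then set
\[
R_{\mathrm{sens}}\ \ge\ (v_i^{\mathrm{nom}}+v_j^{\mathrm{nom}})\,t_{\mathrm{req}}^e+2r_{\mathrm c}^e .
\]
A sharper choice is the exact value $D^e(t_{\mathrm{req}}^e)$, which is finite because $r_{\mathrm c}^e=d^{\mathrm{safe}}/\sin\alpha^e<\infty$ for $\alpha^e>0$.

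The third step is to argue that detection actually occurs early enough, not merely that the agents are within range at the instant $t_{\mathrm{req}}^e$ before arrival. Along the nominal approach, both $\ell_i$ and $\ell_j$ decrease linearly in time. I need to check that $D^e$ does not leave and then re-enter the sensing disk before the agents reach their contested segments. The upper bound $\ell_i+\ell_j$ is itself monotone decreasing, so once it falls below $R_{\mathrm{sens}}$ it stays below. Hence the agents are mutually within range from the moment the remaining lead equals $t_{\mathrm{req}}^e$ onward. With a common radius, sensing is automatically reciprocal, so Assumption~\ref{ass:reciprocal_sensing} holds. The agents are therefore mutually detected with lead time at least $t_{\mathrm{req}}^e$.

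The main obstacle is this monotonicity and first-detection subtlety. The exact separation $D^e(t)$ is not monotone when speeds are unequal and the angle is obtuse. The fix is to base the radius on the monotone bound $\ell_i+\ell_j$ rather than on the exact separation. Doing so costs only a constant factor relative to the law-of-cosines value, and the result stays finite. Finally, to obtain the detection requirements needed later, I would combine this with Lemma~\ref{lem:stopping_distance_detection_margin} and Equation~\eqref{eq:supp_negotiation_sensing_budget}. This is done by taking $t_{\mathrm{req}}^e$ to be the maximum of $t_{\mathrm{req,stop}}^e$ and $\max_k \overline v_kT^{\mathrm{neg}}/(\varepsilon_{\mathrm{trav}}v_k^{\mathrm{nom}})$.
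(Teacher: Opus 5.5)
Your argument is correct and follows the paper's skeleton: distances $v_k^{\mathrm{nom}}t+r_{\mathrm c}^e$ to the crossing point at the tie, the law of cosines, and a radius sized at remaining lead $t_{\mathrm{req}}^e$. The only place you depart from the paper is the monotonicity step, and there the obstacle you single out does not exist. At a symmetric tie the exact separation is strictly increasing in the remaining lead for every crossing angle in $(0,\pi)$ and every pair of positive speeds, obtuse or not. Expanding \eqref{eq:pairwise_range_at_remaining_time} gives
\[
R^e(t)^2=\bigl[(v_i^{\mathrm{nom}})^2+(v_j^{\mathrm{nom}})^2-2v_i^{\mathrm{nom}}v_j^{\mathrm{nom}}\cos\alpha^e\bigr]t^2+2r_{\mathrm c}^e\bigl(v_i^{\mathrm{nom}}+v_j^{\mathrm{nom}}\bigr)\bigl(1-\cos\alpha^e\bigr)t+2(r_{\mathrm c}^e)^2\bigl(1-\cos\alpha^e\bigr).
\]
The leading coefficient is at least $(v_i^{\mathrm{nom}}-v_j^{\mathrm{nom}})^2\geq0$ and the linear one is strictly positive. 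So the closest approach of this nominal motion occurs at $t<0$, that is, only after both agents have entered their contested segments. Your claim that the exact separation fails to be monotone for unequal speeds and obtuse angles is therefore false.

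The paper simply takes $R_{\mathrm{sens}}\geq R^e(t_{\mathrm{req}}^e)$, and monotonicity keeps the pair in range from lead $t_{\mathrm{req}}^e$ onward. Your bound $\ell_i+\ell_j$ is also monotone and valid, so your conclusion stands. At the tie, though, it only loosens the radius, by exactly the factor $1/\sin(\alpha^e/2)$ for equal speeds. That factor is finite for admissible crossing angles but large at shallow ones, and an inflated $R_{\mathrm{sens}}$ feeds back into $M_{\max}$ and $\bar t$ in the tilt and spectral design conditions.

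What your cruder bound does buy is independence from the tie structure. The sum $\ell_i+\ell_j$ decreases along any approach with both agents upstream of the crossing point. Given a bound on the entry-time offset, it therefore extends to asymmetric approaches with monotonicity intact. The paper instead handles offsets by adding the displacement term in \eqref{eq:asymmetric_sensing_radius} to the tie-based radius.
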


\begin{proof}
Let $t\geq0$ be the time remaining until each agent would reach its contested segment at its nominal speed. Agent $k$ is then at projected distance $x_k^e(t)=v_k^{\mathrm{nom}}t+r_{\mathrm c}^e$ from the crossing point. The law of cosines gives the inter-agent separation
\begin{equation}
R^e(t):=
\left[
\left(v_i^{\mathrm{nom}}t+r_{\mathrm c}^e\right)^2
+\left(v_j^{\mathrm{nom}}t+r_{\mathrm c}^e\right)^2
-2\left(v_i^{\mathrm{nom}}t+r_{\mathrm c}^e\right)
\left(v_j^{\mathrm{nom}}t+r_{\mathrm c}^e\right)\cos\alpha^e
\right]^{1/2}.
\label{eq:pairwise_range_at_remaining_time}
\end{equation}
For $\alpha^e>0$ and positive nominal speeds, $R^e(t)$ increases with $t\geq0$. A common sensing radius $R_{\mathrm{sens}}\geq R^e(t_{\mathrm{req}}^e)$ therefore ensures mutual detection at least $t_{\mathrm{req}}^e$ before either agent would reach its contested segment at nominal speed.

To provide both the stopping reserve and time-scale separation, choose $t_{\mathrm{req}}^e\geq t_{\mathrm{req,stop}}^e$ and $t_{\mathrm{req}}^e\geq(T^{\mathrm{neg}}/\varepsilon_{\mathrm{trav}})\max_{k\in\{i,j\}}(\overline v_k/v_k^{\mathrm{nom}})$. The first inequality gives Equation~\eqref{eq:total_detection_distance_bound}; the second gives Equation~\eqref{eq:supp_negotiation_sensing_budget}, using $d_{k^-}^e(t_0)=v_k^{\mathrm{nom}}t_{\mathrm{det}}^e$. A single finite sensing radius works across the admissible interaction family whenever the pairwise requirements $R^e(t_{\mathrm{req}}^e)$ have a finite common upper bound.
\end{proof}

	For a conflicting pair $e=(i,j)$, let
	$t_k^e:=d_{k^-}^e/v_k^{\mathrm{nom}}$ denote agent $k$'s
	nominal entry time. Let $t_{\mathrm{req}}^e>0$ denote a prescribed detection-lead time. Consider the configuration at which $\min\{t_i^e,t_j^e\}=t_{\mathrm{req}}^e$, and suppose the nominal entry-time difference satisfies $|t_i^e-t_j^e|\leq\bar\delta_e$ at this configuration.
	Relative to the nominal entry-time tie in
	Equation~\eqref{eq:pairwise_range_at_remaining_time},
	one agent is displaced along its projected path by at most
	$\max\{v_i^{\mathrm{nom}},v_j^{\mathrm{nom}}\}\bar\delta_e$.
	The triangle inequality therefore gives the sufficient
	sensing condition
	\begin{equation}
		\min\{R_i,R_j\}\geq R_{\mathrm{req}}^e
		:=R^e(t_{\mathrm{req}}^e)
		+\max\{v_i^{\mathrm{nom}},v_j^{\mathrm{nom}}\}
		\bar\delta_e.
		\label{eq:asymmetric_sensing_radius}
	\end{equation}
	This bound guarantees mutual detection by the required
	nominal entry time and permits heterogeneous sensing radii.

\subsubsection{Local pressure tilt ensures yielding to leaders despite competing cues}

During collective differentiation, an agent can simultaneously be a follower in some conflicts and a leader in others, so its local evidence combines yielding and proceed cues. The pressure tilt in Equation~\eqref{eq:tilted_pressure} of Materials and Methods raises the softmax scores of interactions with the agent's leaders and lowers those of interactions with its followers (Equation~\eqref{eq:pressure_softmax}). We choose $k_{\mathrm{tilt}}$ and $T_{\mathrm{sm}}$ to keep the tilted scores of weak interactions below $p_{\mathrm c}$ and to make an agent's net evidence negative whenever a safety-critical interaction with one of its leaders has the prescribed projected entry-time lead, despite the combined proceed cues from its own followers. Additional leaders reinforce yielding.
Under the conditions below, this dominance makes the agent yield to each such leader, enabling safe realization of the emerging traversal order. Because these leader--follower relations are updated from current local motion measurements, the controller does not reconstruct or store the full traversal order.

Select an entry-time margin $\delta_{\mathrm g}>0$ above the certified projected entry-time uncertainty and a target arrival-order magnitude $g_{\mathrm c}\in(0,1)$. Choosing $k_{\mathrm g}\geq\operatorname{arctanh}(g_{\mathrm c})/\delta_{\mathrm g}$ ensures that a follower $f$ projected to enter its contested segment at least $\delta_{\mathrm g}$ after its leader enters its own has arrival-order evidence $g_f^e\leq-g_{\mathrm c}$ by Equation~\eqref{eq:arrival_order_function}. The next lemma bounds the tilt from above to preserve weak-interaction scores and from below to ensure yielding evidence.

		Define  the upper and lower	tilt thresholds by
		\[
		k_{\mathrm{tilt}}^{\max}
		:=\frac{(p_{\mathrm c}-p_{\mathrm w})(1-p_{\mathrm w})}
		{p_{\mathrm w}},
		\]
		and
		\[
		k_{\mathrm{tilt}}^{\min}(T_{\mathrm{sm}})
		:=
		\begin{cases}
			\displaystyle
			\frac{1-p_{\mathrm c}}{p_{\mathrm c}g_{\mathrm c}}
			\left[
			\bar p-p_{\mathrm c}
			+T_{\mathrm{sm}}\log
			\left(\frac{(M_{\max}-1)\bar p}
			{p_{\mathrm c}g_{\mathrm c}}\right)
			\right]_+,
			& M_{\max}>1,\\[6pt]
			0, & M_{\max}=1,
		\end{cases}
		\]
		where $[x]_+:=\max\{x,0\}$.

\begin{lemma}[Tilt bounds preserve weak interactions and enforce yielding evidence]
\label{lem:tilt_upper_bound}
\label{lem:tilt_lower_bound}
Under the pressure bounds in Equation~\eqref{eq:ordered_pressure_bounds}, suppose each agent senses at most $M_{\max}\geq1$ interactions. For fixed $g_{\mathrm c}\in(0,1)$ and $T_{\mathrm{sm}}>0$, there are finite thresholds $k_{\mathrm{tilt}}^{\max}>0$ and $k_{\mathrm{tilt}}^{\min}(T_{\mathrm{sm}})\geq0$ with the following properties:
\begin{enumerate}
\item If $0\leq k_{\mathrm{tilt}}<k_{\mathrm{tilt}}^{\max}$, every weak interaction retains a tilted score below $p_{\mathrm c}$, regardless of its arrival-order evidence.
\item For each fixed $k_{\mathrm{tilt}}>k_{\mathrm{tilt}}^{\min}(T_{\mathrm{sm}})$, a follower $f$ with a safety-critical interaction $e=(f,\ell)$ with its leader $\ell$, satisfying $p^e\geq p_{\mathrm c}$ and $g_f^e\leq-g_{\mathrm c}$, has net evidence $h_f\leq-h_{\mathrm c}$ for some $h_{\mathrm c}>0$ uniform over admissible observations. This bound holds despite competing proceed cues from the follower's other interactions.
\end{enumerate}
\end{lemma}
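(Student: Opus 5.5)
Both parts will follow from direct estimates on the tilted score $\hat p^e = p^e - k_{\mathrm{tilt}}\,p^e g^e/(1-p^e)$ of Equation~\eqref{eq:tilted_pressure}, combined with softmax weight bounds of the kind used in Lemma~\ref{lem:uniform_spectral_bands}. The two thresholds defined just before the statement come out of these estimates.

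\textbf{Part 1.} Take a weak interaction, $p^e\le p_{\mathrm w}$, with arbitrary $g^e\in[-1,1]$. Its tilted score is largest when $g^e=-1$, where
\[
\hat p^e\le p^e+k_{\mathrm{tilt}}\,\frac{p^e}{1-p^e}.
\]
The map $p\mapsto p/(1-p)$ is increasing on $(0,1)$, so the right side is at most $p_{\mathrm w}+k_{\mathrm{tilt}}p_{\mathrm w}/(1-p_{\mathrm w})$. This bound is strictly less than $p_{\mathrm c}$ exactly when $k_{\mathrm{tilt}}<(p_{\mathrm c}-p_{\mathrm w})(1-p_{\mathrm w})/p_{\mathrm w}=k_{\mathrm{tilt}}^{\max}$. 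The threshold is positive because $p_{\mathrm w}<p_{\mathrm c}$ by Equation~\eqref{eq:ordered_pressure_bounds}.

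\textbf{Part 2.} Split the follower's evidence as $h_f=w_f^e p^e g_f^e+\sum_{e'\ne e}w_f^{e'}p^{e'}g_f^{e'}$. The leader term is at most $-w_f^e p_{\mathrm c} g_{\mathrm c}$. Each competing term is at most $w_f^{e'}\bar p$, and the worst case is a proceed cue with $g\to 1$.

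The key step is bounding the weight ratio $w_f^{e'}/w_f^e=\exp\bigl((\hat p^{e'}-\hat p^e)/T_{\mathrm{sm}}\bigr)$.

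\begin{itemize}
\item For the leader interaction, $g_f^e\le -g_{\mathrm c}$ and $p^e\ge p_{\mathrm c}$ give $\hat p^e\ge p_{\mathrm c}+k_{\mathrm{tilt}}p_{\mathrm c}g_{\mathrm c}/(1-p_{\mathrm c})$. Here I use $p/(1-p)\ge p_{\mathrm c}/(1-p_{\mathrm c})$.
\item For a proceed cue, the tilt only lowers the score, so $\hat p^{e'}\le p^{e'}\le\bar p$.
\end{itemize}

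Summing over at most $M_{\max}-1$ competitors, the competing contribution is at most
\[
w_f^e\,(M_{\max}-1)\,\bar p\,\exp\!\left(\frac{\bar p-p_{\mathrm c}-k_{\mathrm{tilt}}p_{\mathrm c}g_{\mathrm c}/(1-p_{\mathrm c})}{T_{\mathrm{sm}}}\right).
\]
This is strictly less than the leader magnitude $w_f^e p_{\mathrm c}g_{\mathrm c}$ exactly when $k_{\mathrm{tilt}}>k_{\mathrm{tilt}}^{\min}(T_{\mathrm{sm}})$, after taking logs and rearranging. The positive part $[\cdot]_+$ in the definition handles the case where the inequality already holds at zero tilt. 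When $M_{\max}=1$ there are no competitors, so $h_f=p^eg^e\le -p_{\mathrm c}g_{\mathrm c}$ directly.

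\textbf{Main obstacle: a uniform margin $h_{\mathrm c}$.} This needs a uniform lower bound on $w_f^e$ itself. I would get it from $w_f^e\ge\bigl[1+(M_{\max}-1)\exp((\bar p-\hat p^e)/T_{\mathrm{sm}})\bigr]^{-1}$, which matches Equation~\eqref{eq:supp_critical_weight_bound} with the tilt only helping. Then
\[
h_f\le -w_f^e\,p_{\mathrm c}g_{\mathrm c}\,(1-\rho),
\]
where $\rho<1$ is the ratio of the competing bound to the leader magnitude, evaluated at the fixed tilt. This gives $h_{\mathrm c}:=\underline w\,p_{\mathrm c}g_{\mathrm c}(1-\rho)>0$, independent of the observation.

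Two further cases need checking. First, competitors with negative $g$ (additional leaders) only make $h_f$ more negative, so they can be dropped. Second, competitors whose score is raised by a negative $g$ contribute negatively, so they cannot weaken the bound even though they enlarge the normalizer.
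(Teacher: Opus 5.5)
\paragraph{Verdict.} There is a genuine gap, though it is easily repaired. Part~1, the comparison of the critical leader interaction with proceed cues, the rearrangement giving $k_{\mathrm{tilt}}^{\min}(T_{\mathrm{sm}})$, and the $M_{\max}=1$ case all match the paper's proof. Your inequality $h_f\le -w_f^e\,p_{\mathrm c}g_{\mathrm c}(1-\rho)$ is exactly the paper's $h_f\le -w_f^eB$.

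\paragraph{The gap.} It lies in the step you called the main obstacle. Your weight bound $w_f^e\ge\bigl[1+(M_{\max}-1)\exp\bigl((\bar p-\hat p^e)/T_{\mathrm{sm}}\bigr)\bigr]^{-1}$ needs every competing tilted score to be at most $\bar p$. That holds only for interactions with $g_f^{e'}\ge0$.

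An additional leader $e''$ with $g_f^{e''}<0$ has score $p^{e''}\bigl(1+k_{\mathrm{tilt}}|g_f^{e''}|/(1-p^{e''})\bigr)$. This can reach $U:=\bar p\bigl(1+k_{\mathrm{tilt}}/(1-\bar p)\bigr)$. Moreover, $U$ always exceeds $L:=p_{\mathrm c}+k_{\mathrm{tilt}}p_{\mathrm c}g_{\mathrm c}/(1-p_{\mathrm c})$, since $\bar p\ge p_{\mathrm c}$ and $g_{\mathrm c}<1$. So the tilt inflates the normalizer as well as the leader's score.

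Your remark that such interactions ``cannot weaken the bound'' is therefore false. They can absorb most of the weight while contributing only a small magnitude $p^{e''}|g_f^{e''}|$.

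\paragraph{Counterexample.} Take $M_{\max}=2$, $p_{\mathrm c}=0.86$, $\bar p=0.96$, $p_{\mathrm w}=0.05$, $g_{\mathrm c}=0.75$, $T_{\mathrm{sm}}=0.1$ and $k_{\mathrm{tilt}}=1$. This tilt lies inside $(k_{\mathrm{tilt}}^{\min},k_{\mathrm{tilt}}^{\max})\approx(0.03,15.4)$.
\begin{itemize}
\item A leader interaction with $p^e=p_{\mathrm c}$ and $g_f^e=-g_{\mathrm c}$ has $\hat p^e=L\approx5.47$.
\item A second leader with $p^{e''}=0.95$ and $g_f^{e''}\approx-0.238$ has the same score.
\item So $w_f^e=1/2$ and $h_f\approx-(0.645+0.226)/2\approx-0.435$.
\end{itemize}
Your bound instead gives $w_f^e\approx1$ and $h_{\mathrm c}\approx p_{\mathrm c}g_{\mathrm c}=0.645$, which this configuration violates.

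\paragraph{The repair.} Use the paper's bound: every sensed tilted score is at most $U$. This gives, uniformly, $w_f^e\ge\bigl[1+(M_{\max}-1)e^{(U-L)/T_{\mathrm{sm}}}\bigr]^{-1}$. Combine it with your correct inequality $h_f\le-w_f^eB$, where $B=p_{\mathrm c}g_{\mathrm c}-(M_{\max}-1)\bar p\,e^{-(L-\bar p)/T_{\mathrm{sm}}}>0$. The result is $h_{\mathrm c}:=B/\bigl[1+(M_{\max}-1)e^{(U-L)/T_{\mathrm{sm}}}\bigr]$. This constant is exponentially smaller than the one you proposed, but it is uniform over admissible observations, which is all the lemma asserts.
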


\begin{proof}
For the upper bound, set $k_{\mathrm{tilt}}^{\max}:=(p_{\mathrm c}-p_{\mathrm w})(1-p_{\mathrm w})/p_{\mathrm w}>0$ and take $0\leq k_{\mathrm{tilt}}<k_{\mathrm{tilt}}^{\max}$.
For $p^{e'}\leq p_{\mathrm w}$, the tilted score is largest at $g_i^{e'}=-1$ and satisfies
\begin{equation*}
\hat{p}^{e'}\leq
p_{\mathrm w}\left(1+\frac{k_{\mathrm{tilt}}}{1-p_{\mathrm w}}\right)
<p_{\mathrm c}.
\end{equation*}

For the lower bound, write $[x]_+:=\max\{x,0\}$ and choose $k_{\mathrm{tilt}}>k_{\mathrm{tilt}}^{\min}(T_{\mathrm{sm}})$, where
\begin{equation}
\label{eq:yielding_tilt_threshold}
k_{\mathrm{tilt}}^{\min}(T_{\mathrm{sm}}):=
\begin{cases}
\displaystyle
\frac{1-p_{\mathrm c}}{p_{\mathrm c}g_{\mathrm c}}
\left[
\bar p-p_{\mathrm c}
+T_{\mathrm{sm}}\log\left(
\frac{(M_{\max}-1)\bar p}{p_{\mathrm c}g_{\mathrm c}}
\right)\right]_+,
&M_{\max}>1,\\[3ex]
0,&M_{\max}=1.
\end{cases}
\end{equation}
The leader interaction satisfies
\begin{equation*}
\hat{p}^e\geq
p_{\mathrm c}+\frac{k_{\mathrm{tilt}}p_{\mathrm c}g_{\mathrm c}}{1-p_{\mathrm c}}
=:L.
\end{equation*}
Let $\mathcal P_f$ contain the other sensed interactions with $g_f^{e'}\geq0$. For these interactions $\hat{p}^{e'}\leq p^{e'}\leq\bar p$. Define $\Delta_{\ell f}:=L-\bar p$.
Cancellation of the softmax normalization gives
\begin{equation*}
\frac{w_f^{e'}}{w_f^e}
=\exp\left(\frac{\hat{p}^{e'}-\hat{p}^e}{T_{\mathrm{sm}}}\right)
\leq e^{-\Delta_{\ell f}/T_{\mathrm{sm}}},
\qquad e'\in\mathcal P_f.
\end{equation*}
Dropping the nonpositive contributions of other yielding interactions yields
\begin{equation}
\label{eq:yielding_evidence_comparison}
\begin{aligned}
h_f
&\leq-p_{\mathrm c}g_{\mathrm c}w_f^e
+\bar p\sum_{e'\in\mathcal P_f}w_f^{e'}\\
&\leq-w_f^e\left[
p_{\mathrm c}g_{\mathrm c}
-(M_{\max}-1)\bar p\,e^{-\Delta_{\ell f}/T_{\mathrm{sm}}}
\right]=:-w_f^eB.
\end{aligned}
\end{equation}
The required margin is
\begin{equation*}
B=p_{\mathrm c}g_{\mathrm c}
-(M_{\max}-1)\bar p\,e^{-\Delta_{\ell f}/T_{\mathrm{sm}}}>0.
\end{equation*}
For $M_{\max}>1$, this is equivalent to
\begin{equation*}
\Delta_{\ell f}>
T_{\mathrm{sm}}\log\left(\frac{(M_{\max}-1)\bar p}{p_{\mathrm c}g_{\mathrm c}}\right),
\end{equation*}
which follows from~\eqref{eq:yielding_tilt_threshold}. For $M_{\max}=1$, $B=p_{\mathrm c}g_{\mathrm c}>0$. Every sensed interaction also satisfies
\begin{equation*}
\hat{p}^{e'}\leq
U:=\bar p\left(1+\frac{k_{\mathrm{tilt}}}{1-\bar p}\right),
\end{equation*}
so
\begin{equation*}
w_f^e\geq
\frac{1}{1+(M_{\max}-1)e^{(U-L)/T_{\mathrm{sm}}}}.
\end{equation*}
Combining this bound with~\eqref{eq:yielding_evidence_comparison} proves
\begin{equation*}
h_f\leq-h_{\mathrm c},
\qquad
h_{\mathrm c}:=
\frac{B}{1+(M_{\max}-1)e^{(U-L)/T_{\mathrm{sm}}}}>0.
\end{equation*}
\end{proof}

Choosing $p_{\mathrm w}/p_{\mathrm c}$ and then $T_{\mathrm{sm}}$ sufficiently small gives a nonempty interval $k_{\mathrm{tilt}}^{\min}(T_{\mathrm{sm}})<k_{\mathrm{tilt}}<k_{\mathrm{tilt}}^{\max}$ in which weak interactions retain scores below $p_{\mathrm c}$ and a safety-critical leader interaction with $g_f^e\leq-g_{\mathrm c}$ gives uniformly negative net evidence.

Under the preceding conditions, pressure tilt makes evidence for yielding to a safety-critical leader dominate competing proceed cues. The next result bounds the time needed for local NDD feedback to generate a stopping command while this negative evidence persists. The follower's remaining traversal distance to its earliest contested segment must cover both the distance it travels while the stopping command is generated and the certified stopping distance in Lemma~\ref{lem:stopping_distance_detection_margin}.

\begin{proposition}[Persistent yielding evidence generates a stopping command in finite time]
\label{prop:following_decision_invariance}
At $t_{\mathrm s}$, let $e=(f,\ell)$ satisfy part~2 of Lemma~\ref{lem:tilt_lower_bound}, with $u_f(t_{\mathrm s})>0$ and $|z_f(t_{\mathrm s})|\leq z_{\max}$. 
{Let $z_f^{\mathrm{stop}}<0$ be an attainable decision threshold for $v_f^{\mathrm{cmd}}\leq v_f^{\mathrm{stop}}$. Define $\underline u_f:=\min\{u_f(t_{\mathrm s}),u_0/\rho_u\}$ and assume the yielding margin $\varepsilon_f:=S(\underline u_f h_{\mathrm c})-\rho_z|z_f^{\mathrm{stop}}|>0$.} While these conditions persist, the follower's decision reaches $z_f^{\mathrm{stop}}$ in bounded time, generating the stopping command, unless the leader is first certified to leave its contested segment before the follower enters its own. The sensing range can be chosen such that there is enough remaining traversal distance for the follower to respond and stop before it enters its contested segment.
\end{proposition}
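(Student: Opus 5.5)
The argument is a comparison (differential-inequality) argument on the follower's decision equation, followed by a distance budget that turns the elapsed time into a sensing requirement.

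\textbf{Step 1: attention stays above $\underline u_f$.} From the attention equation, $\tau_u\dot u_f=-\rho_u u_f+u_0+\kappa_u z_f^2\geq-\rho_u u_f+u_0$. By comparison, $u_f(t)$ stays above the solution that relaxes from $u_f(t_{\mathrm s})$ toward $u_0/\rho_u$. Hence $u_f(t)\geq\min\{u_f(t_{\mathrm s}),u_0/\rho_u\}=\underline u_f>0$ for all $t\geq t_{\mathrm s}$.

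\textbf{Step 2: the decision is driven down at a uniform rate.} While the hypotheses of part~2 of Lemma~\ref{lem:tilt_lower_bound} persist, $h_f\leq-h_{\mathrm c}<0$. Since $S$ is odd and increasing, $S(u_fh_f)=-S(u_f|h_f|)\leq-S(\underline u_fh_{\mathrm c})$. Lemma~\ref{lem:decision_range_and_speed_extrema} keeps $|z_f|\leq z_{\max}$. Therefore, while $z_f>z_f^{\mathrm{stop}}$,
\[
\tau_z\dot z_f\leq-\rho_z z_f-S(\underline u_fh_{\mathrm c}).
\]
To get a uniform rate I split into two regimes. If $z_f\geq0$, then $\tau_z\dot z_f\leq-S(\underline u_fh_{\mathrm c})$. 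If $z_f^{\mathrm{stop}}<z_f<0$, then $-\rho_z z_f\leq\rho_z|z_f^{\mathrm{stop}}|$, so $\tau_z\dot z_f\leq-\varepsilon_f$. In both regimes $\dot z_f\leq-\varepsilon_f/\tau_z$, because $S(\underline u_fh_{\mathrm c})\geq\varepsilon_f$. Integrating gives
\[
T_{\mathrm{stop}}\leq\frac{\tau_z\,\bigl(z_{\max}+|z_f^{\mathrm{stop}}|\bigr)}{\varepsilon_f}.
\]
The threshold is attainable because $\varepsilon_f>0$ places $z_f^{\mathrm{stop}}$ above the asymptotic level $-S(\underline u_fh_{\mathrm c})/\rho_z$. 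Crossing $z_f^{\mathrm{stop}}$ yields $v_f^{\mathrm{cmd}}\leq v_f^{\mathrm{stop}}$ by monotonicity of $V$, with the range supplied by Lemma~\ref{lem:decision_range_and_speed_extrema}.

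\textbf{Step 3: the exceptional case.} The yielding evidence can lapse before $T_{\mathrm{stop}}$ only if the hypotheses stop holding. That happens when the pressure drops or $g_f^e$ rises, that is, when the leader's projected exit comes before the follower's entry. This is exactly the certified-departure alternative in the statement. Any other lapse would contradict persistence of the hypotheses.

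\textbf{Step 4: distance budget.} During generation of the stopping command the follower travels at most $\overline v_fT_{\mathrm{stop}}$, and afterwards at most $D_f^{\mathrm{stop}}$. So it suffices that $d^e_{f^-}(t_{\mathrm s})\geq\overline v_fT_{\mathrm{stop}}+D_f^{\mathrm{stop}}$. The worst case is $t_{\mathrm s}=t_0+T^{\mathrm{neg}}$. Replacing $T^{\mathrm{neg}}$ by $T^{\mathrm{neg}}+T_{\mathrm{stop}}$ in Equation~\eqref{eq:total_detection_distance_bound} gives a new detection requirement. Lemma~\ref{prop:sensing_range_detection_margin} and Equation~\eqref{eq:asymmetric_sensing_radius} then supply a finite sensing radius meeting it.

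\textbf{Main obstacle.} The delicate step is making $T_{\mathrm{stop}}$ uniform. This needs $\underline u_f$ to be bounded below independently of history, which holds via $u_0/\rho_u$ provided $u_f(t_{\mathrm s})$ is itself bounded below. It also needs $h_{\mathrm c}$ to be uniform, which Lemma~\ref{lem:tilt_lower_bound} provides. Without both, the distance budget in Step~4 would not be finite.
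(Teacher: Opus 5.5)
Your proposal is correct and follows the paper's proof step for step:
\begin{itemize}
\item the attention bound $u_f\ge\underline u_f$;
\item the uniform rate $\tau_z\dot z_f\le\rho_z|z_f^{\mathrm{stop}}|-S(\underline u_fh_{\mathrm c})=-\varepsilon_f$, which holds on all of $z_f\ge z_f^{\mathrm{stop}}$ at once, so your case split is unnecessary;
\item the time bound $\tau_z(z_{\max}-z_f^{\mathrm{stop}})/\varepsilon_f$;
\item the budget $d_{f^-}^e(t_{\mathrm s})\ge\overline v_fT_{\mathrm{stop}}+D_f^{\mathrm{stop}}$, with the conversion to a sensing radius carried out in Proposition~\ref{prop:fixed_component_edge_safety} essentially as in your Step~4.
\end{itemize}

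Two small remarks. First, the same inequality evaluated at $z_f=z_f^{\mathrm{stop}}$ makes $\{z_f\le z_f^{\mathrm{stop}}\}$ forward invariant; the paper states this explicitly to keep the stopping command in force, and you should too. Second, your Step~3 overreaches: the hypotheses can lapse without any certified departure, for example when $g_f^e$ rises above $-g_{\mathrm c}$ because the projected entry-time gap shrinks. The statement only asserts the bound while the conditions persist, so nothing in your argument rests on that step.
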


\begin{proof}
The attention dynamics give $u_f(t)\geq\underline u_f>0$, and the assumed yielding margin is $\varepsilon_f>0$, with
\begin{equation*}
\underline u_f:=\min\left\{u_f(t_{\mathrm s}),\frac{u_0}{\rho_u}\right\},
\qquad
\varepsilon_f:=S(\underline u_fh_{\mathrm c})-\rho_z|z_f^{\mathrm{stop}}|.
\end{equation*}
The evidence bound $h_f\leq-h_{\mathrm c}$ gives, for $z_f\geq z_f^{\mathrm{stop}}$,
\begin{equation}
\tau_z\dot z_f
\leq\rho_z|z_f^{\mathrm{stop}}|-S(\underline u_fh_{\mathrm c})
=-\varepsilon_f<0.
\label{eq:strict_follower_yielding}
\end{equation}
Since the speed-command map increases with $z_f$, the decreasing decision lowers the follower's commanded speed. Unless positive temporal clearance is first certified under the prescribed uncertainty bounds, integrating Equation~\eqref{eq:strict_follower_yielding} bounds the time needed for the decision to reach $z_f^{\mathrm{stop}}$. Adding the certified stopping distance to the maximum travel during this interval gives a sufficient condition on the follower's remaining distance to its contested segment:
\begin{equation}
\varepsilon_f>0,
\qquad
T_f^{\mathrm{yld}}:=
\frac{\tau_z(z_{\max}-z_f^{\mathrm{stop}})}{\varepsilon_f},
\qquad
d_{f^-}^e(t_{\mathrm s})
\geq\overline v_fT_f^{\mathrm{yld}}+D_f^{\mathrm{stop}}.
\label{eq:follower_safety_conditions}
\end{equation}
At $z_f=z_f^{\mathrm{stop}}$, Equation~\eqref{eq:strict_follower_yielding} prevents the decision from increasing above this threshold, so $v_f^{\mathrm{cmd}}\leq v_f^{\mathrm{stop}}$ while the yielding conditions persist. When the command becomes available, this condition ensures that the follower still has at least distance $D_f^{\mathrm{stop}}$ remaining to its contested segment, so the stopping-distance argument of Lemma~\ref{lem:stopping_distance_detection_margin} applies. If $z_f(t_{\mathrm s})\leq z_f^{\mathrm{stop}}$, the command is already available.
\end{proof}

The follower's required distance to its contested segment accounts for motion before the stopping command is generated as well as motion during the certified stop. Positive temporal clearance may be certified before the stopping command is generated, so the follower need not come to a full stop. If the projected entry order changes, the same estimate applies to the new follower whenever its evidence and distance conditions hold.

\subsubsection{Sensing range for safe traversal}

The sensing range must allow agents to detect conflicts early enough to adjust their speeds so that each leader leaves its contested segment before its follower enters its own. Enough distance must also remain for either agent to stop before it enters its contested segment if needed (Proposition~\ref{prop:following_decision_invariance}). The following result gives a finite sensing range that meets these requirements and accounts for the time spent negotiating and yielding.

\begin{proposition}[Sufficient sensing range makes the emerging traversal order safe]
\label{prop:fixed_component_edge_safety}
Assume the conditions of Lemmas~\ref{lem:decision_range_and_speed_extrema} and~\ref{prop:sensing_range_detection_margin}, Propositions~\ref{prop:strict_acyclic_traversal_order} and~\ref{prop:following_decision_invariance}, and the tracking estimate~\eqref{eq:supp_branch_tracking_error}, with positive selected equilibrium speeds. Let the time buffer $\Delta_t^e>0$ account for sensing uncertainty, measurement noise, and the prescribed geometry and motion errors. 
Then there exists a finite sensing range
\begin{equation}
{R_{\mathrm{sens}}\geq\max_e R_{\mathrm{req}}^e,}
\label{eq:complete_sensing_range_bound}
\end{equation}
{Here $R_{\mathrm{req}}^e$ is the pairwise sensing-radius bound defined in Equation~\eqref{eq:asymmetric_sensing_radius}, evaluated at the required detection lead $t_{\mathrm{req}}^e$ constructed in Equation~\eqref{eq:complete_required_detection_time}. } This sensing range preserves safety during negotiation, allows the group to reach $\mathcal Z_{\mathrm{safe}}^c(\boldsymbol q(t))$ before any agent enters its contested segment, and maintains the selected order until physical clearance.
\end{proposition}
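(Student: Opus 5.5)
The plan is to construct the required detection lead $t_{\mathrm{req}}^e$ explicitly as a sum of time budgets covering negotiation, possible yielding, the transient onto the differentiated branch, and realization of positive temporal clearance. The sensing radius then follows by substituting this lead into Lemma~\ref{prop:sensing_range_detection_margin} and Equation~\eqref{eq:asymmetric_sensing_radius}. Concretely, I would define (this is what Equation~\eqref{eq:complete_required_detection_time} should be)
\[
t_{\mathrm{req}}^e:=\max\Bigl\{t_{\mathrm{req,stop}}^e+\max_{k}\tfrac{\overline v_kT_k^{\mathrm{yld}}}{v_k^{\mathrm{nom}}},\ \tfrac{T^{\mathrm{neg}}}{\varepsilon_{\mathrm{trav}}}\max_k\tfrac{\overline v_k}{v_k^{\mathrm{nom}}},\ T^{\mathrm{neg}}+t_{\mathrm{clr}}^e\Bigr\}.
\]
Here $t_{\mathrm{clr}}^e$ is the time needed, after the branch is reached, for the equilibrium fractional-speed gap to open the pairwise clearance $\pi^e_{i\prec j}$ to at least $\Delta_t^e$.

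The steps run as follows.

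\textbf{(i) Safety during negotiation.} On $[t_0,t_0+T^{\mathrm{neg}}]$, Equation~\eqref{eq:supp_negotiation_sensing_budget} keeps every agent outside its contested segment. This follows because each agent travels at most $\overline v_kT^{\mathrm{neg}}$. Lemma~\ref{lem:stopping_distance_detection_margin} keeps both orders feasible. If an instantaneous follower has persistent yielding evidence, Proposition~\ref{prop:following_decision_invariance} together with the $T_k^{\mathrm{yld}}$ term in $t_{\mathrm{req}}^e$ gives Equation~\eqref{eq:follower_safety_conditions}. Hence the follower stops, or is certified clear, before entry. Lemma~\ref{lem:decision_range_and_speed_extrema} guarantees that the stopping command lies inside the forward-invariant decision interval.

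\textbf{(ii) Reaching $\mathcal Z^c_{\mathrm{safe}}$.} Theorem~\ref{thm:autonomous_spectral_tuning} puts the state near the selected branch within $T^{\mathrm{neg}}$. Proposition~\ref{prop:strict_acyclic_traversal_order} gives $\delta_\vartheta>0$ and an acyclic order. For a selected $i\prec_e j$ at equilibrium, write $t^e_{j^-}-t^e_{i^+}=d^e_{j^-}/v_j-(d^e_{i^-}+2r_{\mathrm c}^e)/v_i$. Near a tie $d^e_{k^-}\approx v_k^{\mathrm{nom}}t^{e}$, so this is approximately
\[
t^{e}\Bigl(\tfrac1{\vartheta_j}-\tfrac1{\vartheta_i}\Bigr)-\tfrac{2r_{\mathrm c}^e}{v_i}\ \gtrsim\ t^{e}\,\tfrac{\delta_\vartheta}{\vartheta_{\max}^2}-\tfrac{2r_{\mathrm c}^e}{\underline v}.
\]
This quantity exceeds $\Delta_t^e$ once the remaining lead $t^e$ after negotiation exceeds $\vartheta_{\max}^2(\Delta_t^e+2r_{\mathrm c}^e/\underline v)/\delta_\vartheta$. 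I then absorb the $\mathcal O(\varepsilon)$ tracking error from Equation~\eqref{eq:supp_branch_tracking_error} into $\Delta_t^e$ and add the nominal entry-offset $\bar\delta_e$ via the triangle inequality. This defines $t_{\mathrm{clr}}^e$ and shows $\boldsymbol z\in\mathcal Z^c_{\mathrm{safe}}$ before any entry.

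\textbf{(iii) Persistence until clearance.} Lemma~\ref{prop:smooth_dominant_eigenvector_branch} and the tracking estimate preserve the sign of each pairwise speed gap while the conflict graph is fixed. Since the leader's clearance gap only grows under these conditions, positivity of $\pi^e_{i\prec j}$ is preserved until the leader exits. Any residual follower that would otherwise close the gap is covered by the yielding argument of step (i).

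\textbf{(iv) Finiteness.} Finiteness of $R_{\mathrm{sens}}$ follows because every term of $t_{\mathrm{req}}^e$ is uniformly bounded over admissible episodes. The crossing angle is bounded below, so $r_{\mathrm c}^e$ is bounded; speeds are bounded away from zero; $\delta_\vartheta$ is bounded below; and $M_{\max}$ is finite. Then $R^e(t_{\mathrm{req}}^e)$ has a common bound, and $\max_e R^e_{\mathrm{req}}$ is finite.

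The main obstacle is step (ii). It needs a \emph{uniform} lower bound on $\delta_\vartheta$, whereas Proposition~\ref{prop:strict_acyclic_traversal_order} gives only generic positivity. It also needs the clearance to absorb the tracking error. I would handle this by restricting to a compact admissible set on which the generic gap is bounded below, treating this bound as part of the design conditions. I would then take $\varepsilon$ small enough that the tracking error is at most half of $\Delta_t^e$.
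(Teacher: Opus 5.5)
Your route is essentially the paper's. You compute the clearance gap under constant equilibrium speeds from order formation to get a required remaining lead; your $t_{\mathrm{clr}}^e$ is the paper's $t_{\mathrm{safe}}^e(a)$ in Equation~\eqref{eq:required_clearance_time}, with the offset entering as $\bar\delta_e/\underline\vartheta$. You then stack it with the negotiation, yielding, stopping and time-scale reserves into $t_{\mathrm{req}}^e$, and convert that to a radius via Lemma~\ref{prop:sensing_range_detection_margin} and Equation~\eqref{eq:asymmetric_sensing_radius}. On your ``main obstacle'', the paper does what you propose: it assumes a uniform fractional-speed gap and a bounded numerator as joint design conditions rather than deriving them.

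Two points need repair.

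\textbf{Negotiation reserve.} Your clearance branch $T^{\mathrm{neg}}+t_{\mathrm{clr}}^e$ undercounts the lead consumed during negotiation. Agents may move at up to $\overline v_k$, so the remaining nominal lead at order formation is only guaranteed to be $t_{\mathrm{det}}^e-(\overline v_k/v_k^{\mathrm{nom}})T^{\mathrm{neg}}$. You therefore need $\max_k(\overline v_k/v_k^{\mathrm{nom}})T^{\mathrm{neg}}+t_{\mathrm{clr}}^e$. The paper avoids this by stating the requirement in distances, $d_{k^-}^e(t_0)\geq\overline v_k(T^{\mathrm{neg}}+T_k^{\mathrm{yld}})+\max\{D_k^{\mathrm{stop}},v_k^{\mathrm{nom}}t_{\mathrm{safe}}^{e,k}\}$. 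It also conservatively reserves the yielding time before the clearance branch.

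\textbf{Persistence, step (iii).} The claim that the clearance gap ``only grows'' is false. At constant speeds, $t^e_{f^-}-t^e_{\ell^+}$ is invariant, because both projected times decrease at unit rate. Falling pressure can also relax the differentiation and shrink the gap. Persistence until clearance must instead come, as in the paper, from the buffer $\Delta_t^e$ strictly exceeding the certified loss of temporal separation between order formation and clearance (condition~5 of Section~\ref{sec:supp_joint_design}). You already set up this buffer in step (ii), so invoke it here rather than a monotonicity argument.
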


\begin{proof}
Fix a conflict $e=(i,j)$ with selected order $\ell\prec_e f$ at branch amplitude $a\ne0$, and write $\vartheta_k^\dagger=\vartheta_k^\dagger(a)$. Proposition~\ref{prop:strict_acyclic_traversal_order} and the positive-speed hypothesis give $\vartheta_\ell^\dagger>\vartheta_f^\dagger>0$. At order formation, set $t_{\mathrm{rem}}^e:=d_{f^-}^e/v_f^{\mathrm{nom}}$ and  {Use a constant-speed reference in which the follower's nominal remaining entry time is $t_{\mathrm{rem}}^e$ and the leader's is $t_{\mathrm{rem}}^e-\delta_e$, and each agent moves at speed $v_k^{\mathrm{nom}}\vartheta_k^\dagger$.} Since the leader's segment has length $2r_{\mathrm c}^e$, the time between the leader leaving its contested segment and the follower entering its own under these equilibrium speeds is
\begin{equation*}
{\pi_{\ell\prec f}^{e,\mathrm{ref}}=t_{\mathrm{rem}}^e\left(\frac{1}{\vartheta_f^\dagger}-\frac{1}{\vartheta_\ell^\dagger}\right)+\frac{\delta_e}{\vartheta_\ell^\dagger}-\frac{2r_{\mathrm c}^e}{v_\ell^{\mathrm{nom}}\vartheta_\ell^\dagger}.}
\end{equation*}
The coefficient of $t_{\mathrm{rem}}^e$ is positive. {Accounting for the nominal entry-time offset $\delta_e$, suppose $|\delta_e|\leq\bar\delta_e$ and the selected fractional speeds have a uniform positive lower bound $\underline\vartheta$. A sufficient condition for temporal clearance is then $t_{\mathrm{rem}}^e\geq t_{\mathrm{safe}}^e(a)$, where the dependence on $\bar\delta_e$ is implicit:}
\begin{equation}
{t_{\mathrm{safe}}^e(a):=
\frac{\Delta_t^e+2r_{\mathrm c}^e/(v_\ell^{\mathrm{nom}}\vartheta_\ell^\dagger)+\bar\delta_e/\underline\vartheta}
{1/\vartheta_f^\dagger-1/\vartheta_\ell^\dagger}<\infty.}
\label{eq:required_clearance_time}
\end{equation}

The assumed error bound includes geometry and motion deviations for both commanded- and actual-speed predictions relative to this fixed motion. Hence $t_{\mathrm{rem}}^e\geq t_{\mathrm{safe}}^e(a)$ ensures positive temporal separation for both commanded- and actual-speed predictions until physical clearance. Applied to every conflict, these inequalities give $\boldsymbol z(t)\in\mathcal Z_{\mathrm{safe}}^c(\boldsymbol q(t))$ and $\boldsymbol v(t)\in\mathcal V_{\mathrm{safe}}^c(\boldsymbol q(t))$ by Equations~\eqref{eq:safe_speed_set}--\eqref{eq:safe_decision_set}. These positive temporal separations preserve the selected traversal orders and prevent simultaneous occupancy of each pair's contested segments, which ensures spatial safety by the contested-segment construction.

Combining the stopping and yielding requirements of Lemma~\ref{lem:stopping_distance_detection_margin} and Proposition~\ref{prop:following_decision_invariance} with Equation~\eqref{eq:required_clearance_time} gives
\begin{equation}
d_{k^-}^e(t_0)\geq
\overline v_k(T^{\mathrm{neg}}+T_k^{\mathrm{yld}})
+\max\{D_k^{\mathrm{stop}},v_k^{\mathrm{nom}}t_{\mathrm{safe}}^{e,k}\},
\qquad k\in\{i,j\},
\label{eq:complete_safe_detection_distance}
\end{equation}
where $t_{\mathrm{safe}}^{e,k}$ is $t_{\mathrm{safe}}^e(a)$ for the equilibrium branch on which agent $k$ is the follower in conflict $e$, and $T_k^{\mathrm{yld}}$ is its yielding-response bound from Equation~\eqref{eq:follower_safety_conditions}.  {Define $t_{\mathrm{det}}^e:=\min_{k\in\{i,j\}}d_{k^-}^e(t_0)/v_k^{\mathrm{nom}}$, so that $d_{k^-}^e(t_0)\geq v_k^{\mathrm{nom}}t_{\mathrm{det}}^e$ for both agents.} Including the time-scale requirement~\eqref{eq:supp_negotiation_sensing_budget} therefore gives the lower bound on detection lead time
\begin{equation}
t_{\mathrm{det}}^e\geq t_{\mathrm{req}}^e:=
\max_{k\in\{i,j\}}\max\left\{
\frac{\overline v_k(T^{\mathrm{neg}}+T_k^{\mathrm{yld}})
+\max\{D_k^{\mathrm{stop}},v_k^{\mathrm{nom}}t_{\mathrm{safe}}^{e,k}\}}
{v_k^{\mathrm{nom}}},
\frac{\overline v_kT^{\mathrm{neg}}}{\varepsilon_{\mathrm{trav}}v_k^{\mathrm{nom}}}
\right\}<\infty.
\label{eq:complete_required_detection_time}
\end{equation}
Lemma~\ref{prop:sensing_range_detection_margin} gives a finite sensing radius that provides this detection lead time. {Taking the supremum of the pairwise sensing requirements
	over the domain, with entry-time offsets accounted for by
	Equation~\eqref{eq:asymmetric_sensing_radius}, yields a finite
	common sensing bound provided these requirements are uniformly bounded.}
\end{proof}

Earlier detection allows smaller fractional-speed differences to produce safe temporal separation. For fixed geometry and zero timing bias, Equations~\eqref{eq:branch_edge_fractional_speed} and~\eqref{eq:required_clearance_time} give, as $a\to0$ with its sign fixed,
\begin{equation*}
t_{\mathrm{safe}}^e(a)\sim
\frac{\Delta_t^e+2r_{\mathrm c}^e/v_\ell^{\mathrm{nom}}}
{|a|\,|\nu_\ell\phi_{\max,\ell}-\nu_f\phi_{\max,f}|}.
\end{equation*}
The numerator is the leader's nominal traversal time through its contested segment plus the robustness buffer $\Delta_t^e$. The denominator is the leading-order difference between the leader's and follower's fractional speeds. Thus, near collective indecision and with geometry and buffer fixed, halving this speed difference approximately doubles the follower's required distance to its contested segment at order formation under this bound. Longer contested segments or a larger robustness buffer also require earlier detection. The full detection lead in Equation~\eqref{eq:complete_required_detection_time} additionally accounts for negotiation, yielding, stopping, and time-scale separation.

Local NDD feedback selects the critical decision amplitude and collective decision pattern, producing both the traversal order and the speed differences that realize it safely. Sufficient sensing range lets this emergent response establish the required temporal separation without prescribed pairwise speed targets or traversal schedules. {With heterogeneous sensing radii, each conflicting pair instead requires $\min \{R_i,R_j\}\geq R_{\mathrm{req}}^e$ in Equation~\eqref{eq:asymmetric_sensing_radius}.}

\subsubsection{Joint parameter design for safe-and-live deconfliction}
\label{sec:supp_joint_design}

The \emph{operating design domain} (ODD) consists of the admissible conflict episodes specified in Materials and Methods (\textit{Admissible conflict episodes and scope}), subject to Assumptions~\ref{ass:episode}--\ref{ass:uniform_spectral_observations} and the prescribed bounds on geometry, speeds, stopping distances, sensing errors, and actuation and speed-tracking performance. One set of controller parameters and one sensing range serve this domain when the following five conditions hold.
\begin{enumerate}
\item \emph{Pressure and spectral separation.} The pressure gains satisfy the ordered pressure bounds in Equation~\eqref{eq:ordered_pressure_bounds}. Together with the softmax temperature, they also give separated weak and critical spectral bounds, $\overline\Lambda_{\mathrm w}<\underline\Lambda_{\mathrm c}$, as required by Lemma~\ref{lem:uniform_spectral_bands}. {For nonconflicting sensed interactions, choose the pressure and softmax parameters to satisfy the $C^1$ and symmetrized spectral perturbation margins of Section~\ref{sec:supp_spectral_tuning}.}

\item \emph{Autonomous differentiation.} Choose the spectral threshold $\lambda^\ast$ between $\overline\Lambda_{\mathrm w}$ and $\underline\Lambda_{\mathrm c}$, and set the basal attention input to $u_0=\rho_z\rho_u/[S'(0)\lambda^\ast]$. {Under the assumed uniform symmetry-breaking condition, a projected entry-time bias or a nonzero inherited decision amplitude excites the unstable critical mode and selects an attracting differentiated branch.} As required by Theorem~\ref{thm:autonomous_spectral_tuning}, the fast subsystem state reaches the selected equilibrium's attraction neighborhood within $T^{\mathrm{neg}}$ and tracks that equilibrium as the agents move. The equilibrium fractional-speed differences on the selected branch are strictly nonzero on every physical conflict. {During the required traversal-order interval, the selected equilibrium remains on a compact, uniformly hyperbolic and attracting branch with a uniform strict fractional-speed gap; no further loss of hyperbolicity occurs before clearance or renewed negotiation after a graph event.}

\item \emph{Local decisions with full speed authority.} Decision damping and speed-map gains satisfy $z_{\max}=s_{\max}/\rho_z\leq z_{\mathrm{loc}}$ and Equations~\eqref{eq:full_speed_range_condition} and~\eqref{eq:local_decision_range_condition}. For $|z_i(0)|\leq z_{\max}$, the decision remains in $[-z_{\max},z_{\max}]$. Every admissible speed command, including a certified stopping command, is attainable within this interval.

\item \emph{Response time and sensing.}
{Under uniform symmetry breaking, the negotiation
	interval $T^{\mathrm{neg}}$ accounts for any delay before
	strict instability, growth of the unstable decision mode,
	attraction to the differentiated branch, and the associated
	tracking and actuation response.}
The decision, attention, and speed dynamics thereby establish
the differentiated response and positive temporal clearance
within $T^{\mathrm{neg}}$, while
Equation~\eqref{eq:supp_pressure_variation_design}
limits pressure variation during negotiation.
The required detection lead accounts for negotiation,
yielding, stopping, temporal clearance, time-scale separation,
and sensing and processing delays.
In particular, a uniformly bounded numerator and a uniformly
positive denominator in Equation~\eqref{eq:required_clearance_time}
bound the required nominal lead $t_{\mathrm{safe}}^e(a)$.
{Nominal entry-time offsets satisfy
	$|\delta_e|\leq\bar\delta_e$ over the prescribed domain
	and are accounted for by the pairwise sensing
	bound~\eqref{eq:asymmetric_sensing_radius}, separately
	from the buffer for sensing, tracking, actuation,
	and geometry errors.}
The sensing range satisfies
Equation~\eqref{eq:complete_sensing_range_bound}.
A single sensing range serves the ODD provided these
pairwise requirements are uniformly bounded.

\item \emph{Yielding and clearance persistence.}
The entry-time margin $\delta_{\mathrm g}$ exceeds the
projected entry-time uncertainty. Choosing
$k_{\mathrm g}\geq\operatorname{arctanh}(g_{\mathrm c})/
\delta_{\mathrm g}$ gives arrival-order evidence at most
$-g_{\mathrm c}$ whenever the follower is projected to
enter its contested segment at least $\delta_{\mathrm g}$
after its leader enters its own.
The pressure-tilt gain satisfies
$k_{\mathrm{tilt}}^{\min}(T_{\mathrm{sm}})
<k_{\mathrm{tilt}}<k_{\mathrm{tilt}}^{\max}$,
and each possible follower satisfies
Equation~\eqref{eq:follower_safety_conditions}.
As required by
Proposition~\ref{prop:fixed_component_edge_safety},
the buffer $\Delta_t^e$ strictly exceeds the certified
uniform bound on the loss of temporal separation due to
geometry variations and sensing, actuation, tracking,
and implementation errors from order formation until
physical clearance.
{During admissible conflict activations and
	clearances, group splits, and mergers, each continuing
	conflict either retains its certified positive
	temporal-clearance buffer or leaves the affected follower
	sufficient time and distance to yield and stop under
	Proposition~\ref{prop:following_decision_invariance}
	until positive clearance is restored.}

\end{enumerate}

These conditions couple sensing to feedback control tuning. A larger sensing radius can increase local congestion $M_{\max}$ and the projected-time bound $\bar t$, changing both the spectral and yielding estimates. The same softmax temperature and arrival-order gain enter both estimates. Basal attention also links the bifurcation threshold to the yielding margin through $u_0/\rho_u=\rho_z/[S'(0)\lambda^\ast]$, so threshold placement must retain $\varepsilon_f>0$. The bounds $\underline t_{\mathrm c}$, $\bar t$, $\underline\nu$, and $\bar\nu$ therefore correspond to the chosen sensing, geometry, and speed-map design. Certification uses these prescribed bounds, without online estimates of eigenvalues or group size.

Theorem~\ref{thm:autonomous_spectral_tuning} establishes a traversal order through decision and speed differentiation. The joint conditions above preserve safety while that order forms and allow a leading mover to traverse its contested segment and clear a physical conflict.

\begin{theorem}[Safe traversal and finite-time progress in a fixed group]
\label{thm:fixed_component_safe_order}
Consider a locally engaged group $c$ in an episode within the ODD of Section~\ref{sec:supp_joint_design}, whose physical-conflict set remains fixed for at least $T^{\mathrm{neg}}$. Suppose the hypotheses of Theorem~\ref{thm:autonomous_spectral_tuning}, the joint parameter conditions of Section~\ref{sec:supp_joint_design}, and the physical-safety conditions of Proposition~\ref{prop:fixed_component_edge_safety} hold. Then:
\begin{enumerate}
\item every conflicting pair remains at least $d^{\mathrm{safe}}$ apart during negotiation;
\item within $T^{\mathrm{neg}}$, the group's decision state belongs to $\mathcal Z_{\mathrm{safe}}^c(\boldsymbol q(t))$ and defines a strict acyclic traversal order. Each conflicting pair has positive temporal clearance under the prescribed uncertainties: $\pi_{i\prec j}^e(\boldsymbol q(t),\boldsymbol v(t))>0$ or $\pi_{j\prec i}^e(\boldsymbol q(t),\boldsymbol v(t))>0$;
\item each selected pairwise order persists until physical clearance in the absence of new conflict activations; and
\item the set of leading movers $\mathcal M^c(t)$ becomes nonempty, and at least one physical conflict clears in finite time.
\end{enumerate}
\end{theorem}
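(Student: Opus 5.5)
The plan is to prove the four conclusions in order, assembling results already established rather than doing new analysis. Throughout, fix the group $c$ and the interval $[t_0,t_0+T^{\mathrm{neg}}]$ on which its physical-conflict set is constant, with $t_0$ the instant of mutual detection.

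\emph{Safety during negotiation (conclusion 1).} By Lemma~\ref{prop:sensing_range_detection_margin} and the sensing bound~\eqref{eq:complete_sensing_range_bound}, every conflicting pair is detected with $d_{k^-}^e(t_0)$ satisfying~\eqref{eq:complete_safe_detection_distance}. Since realized speeds are bounded by $\overline v_k$, each agent travels at most $\overline v_kT^{\mathrm{neg}}$ during negotiation. Hence it remains outside its contested segment, with at least $\overline v_kT_k^{\mathrm{yld}}+D_k^{\mathrm{stop}}$ still to go at $t_0+T^{\mathrm{neg}}$. Outside both contested segments, the contested-segment construction gives separation at least $d^{\mathrm{safe}}$. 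If projected occupancy intervals still overlap at the end of negotiation, Lemma~\ref{lem:tilt_lower_bound} (part 2) and Proposition~\ref{prop:following_decision_invariance} apply to the instantaneous follower. Its decision reaches $z_f^{\mathrm{stop}}$ within $T_f^{\mathrm{yld}}$, and it stops outside its segment, so safety persists. Lemma~\ref{lem:decision_range_and_speed_extrema} guarantees that this stopping command is attainable within the invariant decision interval.

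\emph{Order formation and persistence (conclusions 2 and 3).} Theorem~\ref{thm:autonomous_spectral_tuning} supplies the ingredients. The safety-critical conflict destabilizes indecision, a generic bias selects an attracting differentiated branch, and the state enters its attraction neighborhood within $T^{\mathrm{neg}}$. Proposition~\ref{prop:strict_acyclic_traversal_order} converts the strict fractional-speed gap $\delta_\vartheta>0$ into a strict acyclic traversal-order graph. Proposition~\ref{prop:fixed_component_edge_safety} then gives $t_{\mathrm{rem}}^e\geq t_{\mathrm{safe}}^e(a)$ for every conflict, which places $\boldsymbol z\in\mathcal Z_{\mathrm{safe}}^c$ and $\boldsymbol v\in\mathcal V_{\mathrm{safe}}^c$ with positive clearance. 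The buffer $\Delta_t^e$ covers the tracking error~\eqref{eq:supp_branch_tracking_error} together with sensing and actuation errors.

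Persistence follows from Lemma~\ref{prop:smooth_dominant_eigenvector_branch} and the compact, uniformly hyperbolic branch assumed in joint condition 2. Together with tracking error $\mathcal O(\varepsilon)$ below the gap, these keep each pairwise sign of $\vartheta_i-\vartheta_j$ fixed until physical clearance. Pressure stays $\geq p_{\mathrm c}$ while the conflict is critical, so the evidence cannot relax the order prematurely. If relaxation reduces the gap, the pressure feedback of Lemma~\ref{prop:uniform_pressure_bands}, together with the yielding lemma, restores it.

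\emph{Leading movers and finite progress (conclusion 4).} Proposition~\ref{prop:strict_acyclic_traversal_order} gives a source $s$ of maximal fractional speed whose equilibrium speed is at least $v_s^{\mathrm{nom}}$. By admissibility, $s$ then keeps speed $\geq\underline v$ up to the tracking error. Since $s$ leads every one of its conflicts with $\pi_{s\prec j}^e>0$, we have $s\in\mathcal M^c(t)$. The remaining distance through its earliest contested segment is finite, $d_{s^-}^e+2r_{\mathrm c}^e$. At speed $\geq\underline v$, $s$ therefore exits within $(d_{s^-}^e+2r_{\mathrm c}^e)/\underline v$, which clears that conflict.

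The main obstacle is showing that $s$ keeps moving at $\geq\underline v$ throughout this interval. The equilibrium branch may shift as geometry evolves, and $s$ could in principle become a follower in some conflict. I would handle this first by invoking the persistence of conclusion 3, since orders are fixed until clearance and absent new activations. Then $s$ remains a source, and the tracking estimate keeps its speed near a value at least nominal. As a fallback, any agent that remains a source along the continuous branch serves as the leading mover.
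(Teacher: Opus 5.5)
Your proposal is correct and follows essentially the same route as the paper: the detection-distance budget \eqref{eq:complete_safe_detection_distance} with Proposition~\ref{prop:following_decision_invariance} for safety during negotiation, Theorem~\ref{thm:autonomous_spectral_tuning} with Propositions~\ref{prop:strict_acyclic_traversal_order} and~\ref{prop:fixed_component_edge_safety} for the safe acyclic order, Lemma~\ref{prop:smooth_dominant_eigenvector_branch} with the tracking bound \eqref{eq:supp_branch_tracking_error} for persistence, and the maximal-fractional-speed source as the leading mover. The one extra is your remark that pressure feedback ``restores'' a shrinking gap, which is neither proved nor needed: persistence rests solely on the continuation and tracking assumptions of joint condition~2 and the buffer $\Delta_t^e$.
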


\begin{proof}
For conclusion 1, consider a conflicting pair from detection until temporal clearance is established. Equation~\eqref{eq:complete_safe_detection_distance} leaves each agent enough distance to its contested segment to complete negotiation and, if selected as the follower, yield and stop if necessary before it enters that segment. Under the follower-evidence conditions, Proposition~\ref{prop:following_decision_invariance} generates the stopping command within the allowed time unless temporal clearance occurs first. The follower therefore remains outside its contested segment while the projected occupancy intervals can overlap. By construction of the contested segments, this excludes pairwise separation below $d^{\mathrm{safe}}$ during negotiation.

For conclusion 2, Theorem~\ref{thm:autonomous_spectral_tuning} brings the decision--attention--speed state into the attraction neighborhood of a differentiated equilibrium whose fractional-speed differences define a strict acyclic traversal order. Proposition~\ref{prop:fixed_component_edge_safety} supplies the sensing lead needed for those speed differences to produce positive temporal clearance and for the decision state to belong to $\mathcal Z_{\mathrm{safe}}^c(\boldsymbol q(t))$. The response-time conditions in Section~\ref{sec:supp_joint_design} ensure that the group establishes this safe order within $T^{\mathrm{neg}}$.

For conclusion 3, smooth continuation of the selected equilibrium and the prescribed bound on the subsystem's tracking error preserve each pairwise order and its temporal clearance until the conflict clears physically or a new physical conflict activates.

For conclusion 4, Proposition~\ref{prop:strict_acyclic_traversal_order} identifies an agent that is a source of the traversal-order graph and whose equilibrium speed exceeds its nominal speed on the selected local branch. The prescribed bound on the difference between realized and equilibrium speeds ensures that this agent has realized speed at least $\underline v>0$, so it belongs to $\mathcal M^c(t)$. It traverses its finite-length contested segments and clears a physical conflict in finite time. Section~\ref{sec:decision_to_global_deconfliction} extends this argument to the graph changes caused by physical clearances and conflict activations.
\end{proof}

\subsection{Safe-and-live deconfliction across a changing conflict graph}
\label{sec:decision_to_global_deconfliction}

Theorems~\ref{thm:autonomous_spectral_tuning} and~\ref{thm:fixed_component_safe_order} establish safe traversal within a fixed group. We now consider physical-conflict activations and clearances. Conflict activations may occur within a group, involve agents entering the population, or merge groups; clearances may split a group. The following theorem extends the safe-and-live deconfliction guarantees to the updated conflict set: traversal orders adapt as agents' states evolve continuously while preserving safety, and the remaining conflicts clear in finite time after the final conflict activation in the episode.

\begin{theorem}[Safe adaptation under conflict-graph reconfiguration]
\label{thm:graph_reconfiguration_induction}
Under Assumption~\ref{ass:episode} and the joint design of Section~\ref{sec:supp_joint_design}, consider a conflict episode within the ODD. Suppose every newly activated physical conflict satisfies Equation~\eqref{eq:complete_safe_detection_distance}, the speed gains satisfy Equations~\eqref{eq:full_speed_range_condition} and~\eqref{eq:local_decision_range_condition}, and each agent entering the population satisfies the ODD's physical conditions and starts at the nominal fast subsystem equilibrium. Then:
\begin{enumerate}
\item physical clearance removes a pair from $\mathcal P(t)$ and preserves safety and any established orders on the remaining pairs, including when a component $\mathcal G^c(t)$ splits;
\item a conflict activation adds a pair to $\mathcal P(t)$. The emergent traversal order in the affected component $\mathcal G^c(t)$ adapts as existing agents' states evolve continuously while preserving safety. This includes conflict activations within a component, those involving an agent entering the population, and those merging previously separate components. Whenever the updated set of physical conflicts $\mathcal P^c(t)$ remains fixed for $T^{\mathrm{neg}}$, a strict acyclic order is established within that interval and may differ from the previous order on any conflict; and
\item after the final conflict activation in the episode, all remaining physical conflicts clear in finite time: $\mathcal P(t_{\mathrm f})=\varnothing$ for some finite $t_{\mathrm f}$, which ends that episode.
\end{enumerate}
\end{theorem}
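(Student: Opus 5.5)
I will argue by induction over the finitely many graph events of the episode, which Assumption~\ref{ass:episode} orders as $t_1<t_2<\cdots<t_K$ with no Zeno accumulation. Between consecutive events the physical-conflict set $\mathcal P(t)$ is fixed, so each component is a fixed group and Theorems~\ref{thm:autonomous_spectral_tuning} and~\ref{thm:fixed_component_safe_order} apply on that interval. The inductive invariant I will carry across each $t_k$ has three parts: (i) pairwise separation at least $d^{\mathrm{safe}}$ holds up to $t_k$; (ii) every continuing conflict either retains a certified positive temporal-clearance buffer $\Delta_t^e$ or its follower still satisfies the distance condition~\eqref{eq:follower_safety_conditions}; (iii) $|z_i|\leq z_{\max}$ for every agent. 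Part (iii) is immediate from Lemma~\ref{lem:decision_range_and_speed_extrema}, since forward invariance holds independently of the evidence and entering agents start at $z_i=0$. The key structural fact throughout is that $(\boldsymbol q,\boldsymbol z,\boldsymbol u,\boldsymbol v)$ is continuous across events; only the evidence $\boldsymbol h$, which is piecewise smooth, can jump.

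For claim 1 (clearance), I note that removing a pair does not change the positions, speeds or decisions of any agent at $t_k$. Each remaining pair's temporal gap $\pi^{e}$ depends only on that pair's states, so it stays positive at $t_k^+$. After $t_k$ the pressure of the cleared pair falls into the release band of Lemma~\ref{prop:uniform_pressure_bands}. Its contribution to $\boldsymbol A$ therefore shrinks, and Corollary~\ref{cor:weak_spectral_robustness} treats it as a weak perturbation of each new component's block $\boldsymbol A^0_{cc}$. On each resulting component, including after a split, the inherited state lies near the continued hyperbolic branch by Lemma~\ref{prop:smooth_dominant_eigenvector_branch}. The tracking estimate~\eqref{eq:supp_branch_tracking_error}, together with condition 5 of Section~\ref{sec:supp_joint_design}, then preserves the existing orders and their buffers.

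For claim 2 (activation), the new pair satisfies~\eqref{eq:complete_safe_detection_distance} by hypothesis, so at activation each of its agents has room to negotiate, yield and stop. While orders are being re-established, any agent that is currently a follower on a critical edge with entry-time lag at least $\delta_{\mathrm g}$ receives $h_f\leq-h_{\mathrm c}$ by Lemma~\ref{lem:tilt_lower_bound}. Proposition~\ref{prop:following_decision_invariance} then keeps that follower outside its contested segment, which preserves safety. If the updated $\mathcal P^c$ stays fixed for $T^{\mathrm{neg}}$, the inherited continuous state serves as an initial condition with nonzero critical-mode amplitude or entry-time bias. Theorem~\ref{thm:fixed_component_safe_order}, parts 1--2, then yields a strict acyclic safe order within $T^{\mathrm{neg}}$. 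This order need not agree with the old one. If instead another event occurs before $T^{\mathrm{neg}}$ elapses, I restart the argument at that event, using invariant (ii).

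Claim 3 then follows by a counting argument. After the last activation, only clearances remain, and the invariant holds. Theorem~\ref{thm:fixed_component_safe_order}, part 4, gives a leading mover in each unresolved component, which clears a conflict in finite time. Since $|\mathcal P|$ is finite and strictly decreases at each clearance, $\mathcal P(t_{\mathrm f})=\varnothing$ after at most $|\mathcal P|$ finite steps. The main obstacle I expect is claim 2 when events come faster than $T^{\mathrm{neg}}$, so that no fixed-group negotiation ever completes. Safety must then rest entirely on the yielding mechanism rather than on an established order. I also need to show that agents who are ``followers'' in the projected-entry sense exist on every unresolved critical edge, and that at exact entry-time ties (lag below $\delta_{\mathrm g}$) the contested segments are still far enough away. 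My first attempt would be to show that the distance budget in~\eqref{eq:complete_safe_detection_distance}, together with the non-Zeno and finite-activation assumptions, is consumed at most once per edge.
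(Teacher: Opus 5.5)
Your proposal is correct and follows essentially the same route as the paper. Both arguments use continuity of $(\boldsymbol q,\boldsymbol z,\boldsymbol u,\boldsymbol v)$ across graph events, preserve established orders through equilibrium continuation, tracking, and the reconfiguration clause of condition~5 in Section~\ref{sec:supp_joint_design}, handle activations through the detection reserve~\eqref{eq:complete_safe_detection_distance} together with Proposition~\ref{prop:following_decision_invariance} and Theorem~\ref{thm:fixed_component_safe_order}, and obtain termination from finitely many activations plus induction on $|\mathcal P|$.

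The obstacle you flag at the end is not proved in the paper; it is a hypothesis, since condition~5 imposes exactly your invariant~(ii) at every activation, clearance, split, and merger. You do still owe one explicit step in the termination argument, which the paper states: after the final activation, either some conflict clears within $T^{\mathrm{neg}}$, or the conflict set stays fixed for $T^{\mathrm{neg}}$. Only the second case supplies the hypothesis needed to invoke Theorem~\ref{thm:fixed_component_safe_order}, part~4.
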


\begin{proof}
For conclusion 1, when a physical conflict clears, its reciprocal conflict edges are removed while the agents' states $(\boldsymbol q,\boldsymbol z,\boldsymbol u,\boldsymbol v)$ remain continuous. If the pair remains sensed, it continues to contribute conflict evidence as its pressure decreases through the weak and release bands. For an established traversal order, the prescribed continuation and tracking bounds keep the subsystem state close to the evolving selected equilibrium, as in Lemma~\ref{prop:smooth_dominant_eigenvector_branch} and Equation~\eqref{eq:supp_branch_tracking_error}. The strict fractional-speed differences on unresolved conflicts retain their signs, and Proposition~\ref{prop:fixed_component_edge_safety} preserves positive temporal clearance. When a weak interaction subsequently leaves sensing range, its evidence contribution is removed and the affected agents recompute their normalized weights. If negotiation is still in progress, the yielding and stopping conditions of Proposition~\ref{prop:following_decision_invariance} preserve physical safety. Thus a group split preserves safety and any established orders on the remaining conflicts.

{During changes in the sensed-interaction sets, 	including physical conflict clearances, each continuing conflict either maintains positive temporal clearanc eafter accounting for the buffer $\Delta_t^e$ or allows	sufficient time and distance for the follower to yield	and stop safely.}

For conclusion 2, at a conflict activation, existing agents retain their continuous states, including agents in groups that merge. Agents entering the population start at the nominal fast subsystem equilibrium. For each new conflict, Equation~\eqref{eq:complete_safe_detection_distance} leaves enough distance for negotiation and for either candidate follower to yield and, unless temporal clearance is established first, stop before it enters its contested segment. Proposition~\ref{prop:following_decision_invariance} applies to the instantaneous follower on existing conflicts while the group adapts to the new conflict. Lemma~\ref{lem:decision_range_and_speed_extrema} keeps decisions in the prescribed local neighborhood and makes every required speed command, including a certified stopping command, attainable.

{Activation of a new conflict may alter the speed of an agent involved in an unresolved conflict. The reconfiguration conditions of Section~\ref{sec:supp_joint_design} ensure that each continuing conflict either maintains positive temporal clearance after accounting for $\Delta_t^e$ or retains sufficient negative yielding evidence and stopping distance for the follower to stop safely until positive clearance is restored. Equation~\eqref{eq:complete_safe_detection_distance} provides the corresponding detection and stopping reserve for the newly activated pair.}

The inherited decision state supplies the initial amplitude along the dominant eigenmode of the updated interaction-sensitivity matrix $\boldsymbol A$, while new projected entry-time offsets contribute to the unfolding term $\eta_\tau$ in Equation~\eqref{eq:supp_critical_equation}. If the group's updated physical-conflict set remains fixed for $T^{\mathrm{neg}}$, the branch-selection and response-time conditions of Theorem~\ref{thm:autonomous_spectral_tuning} ensure that these cues select an attracting differentiated equilibrium and that the fast subsystem state reaches its attraction neighborhood within that interval. Theorem~\ref{thm:fixed_component_safe_order} then establishes a safe acyclic traversal order. Proposition~\ref{prop:strict_acyclic_traversal_order} identifies a source of the updated traversal-order graph with positive equilibrium speed; this agent need not have been a source before the conflict activation.

A further conflict activation repeats this argument on the updated group. Physical clearances during negotiation preserve safety and remove traversal-order constraints. Each remaining group establishes a safe order if its physical-conflict set remains fixed for $T^{\mathrm{neg}}$. Previously selected orders may change as the agents adapt to new conflicts.

For conclusion 3, the finite-activation condition in Assumption~\ref{ass:episode} gives a finite time after which no further physical conflict activates in the current episode. From any later time within that episode with unresolved conflicts, either a physical conflict clears within $T^{\mathrm{neg}}$, or the conflict set remains fixed throughout that interval. In the latter case, Theorem~\ref{thm:fixed_component_safe_order} supplies a leading mover that clears a physical conflict in finite time. Each clearance reduces the finite number of unresolved physical conflicts, including when it splits a group. Induction on that number proves that every remaining conflict clears in finite time, ending the episode.
\end{proof}

\subsubsection{Proof of Theorem~\ref{thm:finite_time_safe_deconfliction}}
\label{sec:supp_proof_global}

Applying the group results concurrently and inducting over graph events establishes the population-level guarantee stated in the main text.

\begin{proof}
The joint conditions of Section~\ref{sec:supp_joint_design} give controller parameters, sensing radii, and robustness margins that apply throughout the ODD. Their construction uses the prescribed bounds on local congestion, geometry, sensing errors, actuation, speed tracking, and stopping distances. It also requires uniform bounds on the time to reach the selected equilibrium's attraction neighborhood and on the sensing lead needed for positive temporal clearance. These design bounds are independent of the realized group and population sizes, establishing the existence of one common design over the ODD.

For conclusion 1, each summand of Equation~\eqref{eq:ndd:evidence} uses the agent's own state and one sensed pairwise observation. Each agent performs one motion projection per sensed interaction, aggregates scalar evidence, and integrates its local decision, attention, and speed dynamics. This computation is linear in the number of sensed interactions, which is bounded by $M_{\max}$. Equation~\eqref{eq:speed_response} accommodates heterogeneous tracking and actuation within the stated bounds. No calculation requires group-state reconstruction, communication, centralized scheduling, or combinatorial order assignment.

For conclusion 2, the components partition the physical conflicts at every time. Theorem~\ref{thm:fixed_component_safe_order} gives safety on fixed conflict sets. Theorem~\ref{thm:graph_reconfiguration_induction} preserves it when conflicts clear, groups split, or new conflicts activate within a group, involve agents entering the population, or merge groups. The design buffers $\Delta_t^e$ cover the prescribed geometry variations and sensing, tracking, actuation, and implementation errors, leaving positive temporal clearance. Assumption~\ref{ass:episode} ensures finitely many graph events on each bounded time interval, so induction over those events gives $\|\boldsymbol q_i(t)-\boldsymbol q_j(t)\|\geq d^{\mathrm{safe}}$ for every $\{i,j\}\in\mathcal P(t)$.

For conclusion 3, Theorem~\ref{thm:fixed_component_safe_order} establishes a safe acyclic traversal order whenever a group's physical-conflict set remains fixed for $T^{\mathrm{neg}}$. Theorem~\ref{thm:graph_reconfiguration_induction} preserves established orders when physical conflicts clear and groups split. Traversal orders adapt safely to new conflicts as existing decisions and speeds evolve continuously, including when agents entering the population join a group or previously separate groups merge. The affected group establishes a new safe order if its updated conflict set remains fixed for $T^{\mathrm{neg}}$; the leader and follower may change on any conflict.

For conclusion 4, the finite-activation condition in Assumption~\ref{ass:episode} ensures that conflict activations cease after a finite time within the current episode, including those involving agents entering the population or merging groups. Theorem~\ref{thm:graph_reconfiguration_induction} then gives a finite $t_{\mathrm f}$ with $\mathcal P(t_{\mathrm f})=\varnothing$, ending that episode. Different components share no physical conflict and can progress concurrently. Proposition~\ref{prop:strict_acyclic_traversal_order} likewise excludes a physical conflict between two sources of the traversal-order graph, permitting concurrent progress by nonconflicting leading movers.

For conclusion 5, consider agent $i$ after its final physical conflict in the current episode clears, and suppose no new conflict involving $i$ activates. As its remaining sensed interactions enter the release regime, their contributions to $h_i$ decay. As $h_i\to0$, the stable decision dynamics in Equation~\eqref{eq:ndd:decision} give $z_i\to0$; once $h_i=0$, the decay is exponential because $\tau_z\dot z_i=-\rho_z z_i$. Equation~\eqref{eq:ndd:attention} then gives $u_i\to u_0/\rho_u$. The speed command tends to $v_i^{\mathrm{nom}}$, and stable speed tracking gives $v_i\to v_i^{\mathrm{nom}}$.
\end{proof}

\subsection{Local bifurcation analysis at collective indecision}
\label{sec:supp_local_bifurcation}

At the spectral threshold, $\boldsymbol\phi_{\max}$ identifies the critical direction in decision space. Keeping the traversal distances fixed as in Section~\ref{sec:supp_spectral_tuning}, we expand the conflict-evidence map $\boldsymbol h$ and the reduced equilibrium map $\boldsymbol F$ about collective indecision. We solve the noncritical projection of $\boldsymbol F=\boldsymbol 0$ and obtain a scalar equation for the equilibrium amplitude along $\boldsymbol\phi_{\max}$.

The maps $\boldsymbol h$ and $\boldsymbol F$ are smooth in the neighborhood used for the expansion, where the traversal distances and sensed interaction set are fixed. A smooth function of squared closest-approach distance is one suitable realization of the spatial pressure $P_{\mathrm d}$. The local controller design uses the simple isolated dominant mode of Proposition~\ref{prop:interaction_sensitivity_spectrum}, with the noncritical fast modes stable.

Use the unfolding parameter $\mu=\lambda_{\max}(\boldsymbol A)-\lambda^\ast$ and let $\bar{\boldsymbol\tau}$ collect the projected entry-time offsets at collective indecision. Set $u_\ast:=u_0/\rho_u$ and $s_1:=S'(0)$, and denote the reduced Jacobian in Equation~\eqref{eq:steady_state_jacobian} by $\boldsymbol J(\mu)$. At $\mu=0$, its kernel is spanned by the right eigenvector $\boldsymbol\phi_{\max}$. Choose the corresponding left eigenvector $\boldsymbol\psi_{\max}$ so that $\boldsymbol\psi_{\max}^{\mathsf T}\boldsymbol\phi_{\max}=1$, and continue both eigenvectors smoothly in $\mu$.

\subsubsection{Conflict-evidence expansion at collective indecision}
\label{sec:supp_evidence_expansion}

For a directed interaction $e=(i,j)$, define $\tau_i^e:=t^e_{j^-}-t^e_{i^-}$ and $G_i^e:=\tanh(k_{\mathrm g}\tau_i^e)$. The reciprocal directed interaction $\bar e=(j,i)$ satisfies $\tau_j^{\bar e}=-\tau_i^e$ and $G_j^{\bar e}=-G_i^e$.

At a speed-tracking equilibrium, the realized speed equals the commanded speed, $v_k=v_k^{\mathrm{nom}}+V_k(z_k)$. Holding $d^e_{k^-}$ fixed therefore gives
\begin{equation*}
t^e_{k^-}(z_k)
=\frac{d^e_{k^-}}{v_k^{\mathrm{nom}}+V_k(z_k)}
=t_k^{e,-,\ast}\left(1+\frac{V_k(z_k)}{v_k^{\mathrm{nom}}}\right)^{-1},
\qquad
t_k^{e,-,\ast}:=\frac{d^e_{k^-}}{v_k^{\mathrm{nom}}}.
\end{equation*}
Thus $t_k^{e,-,\ast}$ is the time until agent $k$, moving at its nominal speed, reaches its contested segment; it factors out of the Taylor expansion in $z_k$. For the shared decision-to-speed gains in Equation~\eqref{eq:tanh_speed_map}, define $\nu:=\kappa_V\kappa_z$ and $\chi:=\kappa_V\kappa_z^3(1/3-\kappa_V^2)$. The same calculation applies componentwise with heterogeneous gains. Expanding the reciprocal gives
\begin{equation*}
t^e_{k^-}(z_k)
=
t_k^{e,-,\ast}
\left(
1-\nu z_k+\nu^2z_k^2+\chi z_k^3
\right)
+
\mathcal O(|z_k|^4).
\end{equation*}
At the reference group tie, agents $i$ and $j$ have the same projected time $t^{e,\ast}$ until they enter their respective contested segments. Subtracting their expansions and allowing a small projected entry-time bias gives
\begin{equation}
\tau_i^e(\boldsymbol  z)
=
\bar\tau_i^e
+
t^{e,\ast}
\left[
\nu(z_i-z_j)
-
\nu^2(z_i^2-z_j^2)
-
\chi(z_i^3-z_j^3)
\right]
+
\mathcal O\left(
\|\boldsymbol  z\|^4
+
|\bar\tau_i^e|\|\boldsymbol  z\|
\right),
\label{eq:supp_tau_expansion}
\end{equation}
where $\bar\tau_i^e:=\tau_i^e(\boldsymbol  0)$ is a small projected entry-time bias. In the expansion about the group tie, $\bar\tau_i^e=0$.

Write $\tau_i^e=\bar\tau_i^e+\tau_{i,1}^e+\tau_{i,2}^e+\tau_{i,3}^e+\cdots$, where $\tau_{i,q}^e$ is homogeneous of degree $q$ in $\boldsymbol  z$. Equation~\eqref{eq:supp_tau_expansion} gives
\begin{equation*}
\tau_{i,1}^e
=
t^{e,\ast}\nu(z_i-z_j),
\qquad
\tau_{i,2}^e
=
-t^{e,\ast}\nu^2(z_i^2-z_j^2),
\qquad
\tau_{i,3}^e
=
-t^{e,\ast}\chi(z_i^3-z_j^3).
\end{equation*}
At the group tie,
\begin{equation}
\label{eq:supp_arrival_evidence_expansion}
G_i^e
=
k_{\mathrm g}\tau_{i,1}^e
+
k_{\mathrm g}\tau_{i,2}^e
+
\left[
k_{\mathrm g}\tau_{i,3}^e
-
\frac{k_{\mathrm g}^3}{3}(\tau_{i,1}^e)^3
\right]
+
\mathcal O(\|\boldsymbol  z\|^4).
\end{equation}
The first cubic term comes from the entry-time response, while the second comes from saturation of the arrival-order map.

Let $\omega_i^e(\boldsymbol z):=P_i^e(\boldsymbol z)W_i^e(\boldsymbol z)$. With the traversal distances and sensed interaction set fixed, expand $\omega_i^e$ about $\boldsymbol z=\boldsymbol 0$, retaining the dependence of pressure and weights on decision-dependent speeds. Here $D\omega_i^e(\boldsymbol 0)$ is the first derivative with respect to $\boldsymbol z$, a linear map, and $D^2\omega_i^e(\boldsymbol 0)$ is the second derivative, a bilinear map:
\begin{equation}
\label{eq:supp_pressure_weight_expansion}
\omega_i^e(\boldsymbol  z)
=
\omega_i^{e,\ast}
+
D\omega_i^e(\boldsymbol 0)\,\boldsymbol  z
+
\frac{1}{2}D^2\omega_i^e(\boldsymbol 0)(\boldsymbol  z,\boldsymbol  z)
+
\mathcal O(\|\boldsymbol  z\|^3).
\end{equation}
Since $h_i(\boldsymbol  z)=\sum_{e\in\mathcal E_i}\omega_i^e(\boldsymbol  z)G_i^e(\boldsymbol  z)$, multiplying Equations~\eqref{eq:supp_arrival_evidence_expansion} and~\eqref{eq:supp_pressure_weight_expansion} gives the contribution of edge $e$ at each order:
\begin{equation*}
\begin{aligned}
\text{linear:}\quad
&
\omega_i^{e,\ast}k_{\mathrm g}\tau_{i,1}^e,
\\[2pt]
\text{quadratic:}\quad
&
\omega_i^{e,\ast}k_{\mathrm g}\tau_{i,2}^e
+
D\omega_i^e(\boldsymbol 0)\,\boldsymbol  z\,k_{\mathrm g}\tau_{i,1}^e,
\\[2pt]
\text{cubic:}\quad
&
\omega_i^{e,\ast}
\left[
k_{\mathrm g}\tau_{i,3}^e
-
\frac{k_{\mathrm g}^3}{3}(\tau_{i,1}^e)^3
\right]
+
D\omega_i^e(\boldsymbol 0)\,\boldsymbol  z\,k_{\mathrm g}\tau_{i,2}^e
+
\frac{1}{2}D^2\omega_i^e(\boldsymbol 0)(\boldsymbol  z,\boldsymbol  z)\,k_{\mathrm g}\tau_{i,1}^e.
\end{aligned}
\end{equation*}
To collect the quadratic and cubic evidence terms, define the symmetric multilinear maps $\mathcal H_{\mathrm{quad}}:=D_{\boldsymbol z}^2\boldsymbol h(\boldsymbol 0;0)$ and $\mathcal H_{\mathrm{cub}}:=D_{\boldsymbol z}^3\boldsymbol h(\boldsymbol 0;0)$ at the critical group tie. These are vector-valued functions of two and three vector arguments, respectively, linear in each argument separately. Evaluating them with every argument equal to $\boldsymbol z$ gives terms quadratic and cubic in $\boldsymbol z$; the factors $1/2$ and $1/6$ are the Taylor coefficients. Summing over the sensed directed interactions gives
\begin{equation}
\boldsymbol  h(\boldsymbol  z;\mu)
=
\boldsymbol  A(\mu)\boldsymbol  z
+
\frac{1}{2}\mathcal H_{\mathrm{quad}}(\boldsymbol  z,\boldsymbol  z)
+
\frac{1}{6}\mathcal H_{\mathrm{cub}}(\boldsymbol  z,\boldsymbol  z,\boldsymbol  z)
+
\mathcal O\left(
\|\boldsymbol  z\|^4+|\mu|\|\boldsymbol  z\|^2
\right).
\label{eq:main_evidence_expansion}
\end{equation}
Because $G_i^e(\boldsymbol 0)=0$ at the group tie, derivatives of pressure and weights do not enter the linear coefficient $\boldsymbol A$; only their values at the tie enter. Their first and second derivatives contribute to $\mathcal H_{\mathrm{quad}}$ and $\mathcal H_{\mathrm{cub}}$, respectively.

Similarly, let $\mathcal B_{\mathrm{quad}}:=D_{\boldsymbol z}^2\boldsymbol F(\boldsymbol 0;0)$ and $\mathcal C_{\mathrm{cub}}:=D_{\boldsymbol z}^3\boldsymbol F(\boldsymbol 0;0)$ be the bilinear and trilinear derivative maps of the reduced equilibrium map $\boldsymbol F$. Substituting Equation~\eqref{eq:main_evidence_expansion} into Equation~\eqref{eq:reduced_steady_state} yields
\begin{equation}
\boldsymbol  F(\boldsymbol  z;\mu)
=
\boldsymbol  J(\mu)\boldsymbol  z
+
\frac{1}{2}\mathcal B_{\mathrm{quad}}(\boldsymbol  z,\boldsymbol  z)
+
\frac{1}{6}\mathcal C_{\mathrm{cub}}(\boldsymbol  z,\boldsymbol  z,\boldsymbol  z)
+
\mathcal O\left(
\|\boldsymbol  z\|^4+|\mu|\|\boldsymbol  z\|^2
\right),
\label{eq:main_reduced_map_expansion}
\end{equation}
The definitions above give $\mathcal B_{\mathrm{quad}}=s_1u_\ast\mathcal H_{\mathrm{quad}}$. The map $\mathcal C_{\mathrm{cub}}$ combines the cubic evidence response, saturation of $S$, and decision-dependent attention.

Let $s_3:=S'''(0)$. Oddness of $S$ gives $S''(0)=0$. Evaluating the trilinear map $\mathcal C_{\mathrm{cub}}$ along a direction $\boldsymbol x$ gives
\begin{equation}
\label{eq:supp_cubic_tensor}
\begin{aligned}
\mathcal C_{\mathrm{cub}}(\boldsymbol  x,\boldsymbol  x,\boldsymbol  x)
&=
s_1u_\ast
\mathcal H_{\mathrm{cub}}(\boldsymbol  x,\boldsymbol  x,\boldsymbol  x)
+
s_3u_\ast^3(\boldsymbol  A\boldsymbol  x)^{\odot3}
+
\frac{6s_1\kappa_u}{\rho_u}
(\boldsymbol  x\odot\boldsymbol  x)\odot\boldsymbol  A\boldsymbol  x.
\end{aligned}
\end{equation}
These terms arise from the cubic evidence response, saturation of $S$, and decision-dependent attention, respectively.

Expanding the evidence at $\boldsymbol z=\boldsymbol 0$ with nonzero projected entry-time offsets gives the bias term in the scalar critical equation. At $\boldsymbol  z=\boldsymbol  0$, $G_i^e=k_{\mathrm g}\bar\tau_i^e+\mathcal O((\bar\tau_i^e)^3)$, so
\begin{equation}
\label{eq:supp_unfolding_bias}
(h_\tau)_i
=
\sum_{e\in\mathcal E_i}
\omega_i^{e,\ast}k_{\mathrm g}\bar\tau_i^e
+
\mathcal O(\|\bar{\boldsymbol \tau}\|^2),
\qquad
\eta_\tau
=
s_1u_\ast\boldsymbol \psi_{\max}^{\mathsf T}\boldsymbol  h_\tau.
\end{equation}
Thus $\eta_\tau=0$ at the group tie, while a small projected entry-time mismatch near the group tie supplies the leading constant term in the scalar critical equation.

The multilinear maps in Equation~\eqref{eq:main_reduced_map_expansion} and the unfolding term $\eta_\tau$ in Equation~\eqref{eq:supp_unfolding_bias} are the inputs to the Lyapunov--Schmidt reduction.

\subsubsection{Lyapunov--Schmidt split into range and critical equations}
\label{sec:supp_ls_split}

Apply the Lyapunov--Schmidt reduction~\cite{golubitsky1985singularities} first at zero entry-time bias. The matrices $\boldsymbol\Pi_c(\mu):=\boldsymbol\phi_{\max}(\mu)\boldsymbol\psi_{\max}(\mu)^{\mathsf T}$ and $\boldsymbol\Pi_s(\mu):=\boldsymbol I-\boldsymbol\Pi_c(\mu)$ project onto the critical eigendirection and its complementary noncritical subspace, respectively. We therefore write $\boldsymbol z=a\boldsymbol\phi_{\max}(\mu)+\boldsymbol y$, where $a=\boldsymbol\psi_{\max}(\mu)^{\mathsf T}\boldsymbol z$ and $\boldsymbol y=\boldsymbol\Pi_s(\mu)\boldsymbol z$. In particular, $\boldsymbol\psi_{\max}(\mu)^{\mathsf T}\boldsymbol y=0$.

Applying these projections to the equilibrium condition $\boldsymbol F(\boldsymbol z;\mu)=\boldsymbol 0$ gives two equations. The \emph{range equation} sets the noncritical projection of the equilibrium residual $\boldsymbol F$ to zero:
\begin{equation*}
\boldsymbol R(a,\boldsymbol y,\mu)
:=\boldsymbol\Pi_s(\mu)
\boldsymbol F(a\boldsymbol\phi_{\max}(\mu)+\boldsymbol y;\mu)
=\boldsymbol 0.
\end{equation*}
At $\mu=0$, this complementary subspace is the range of $\boldsymbol J(0)$. The \emph{critical projection} is $\boldsymbol\Pi_c(\mu)\boldsymbol F=\boldsymbol\phi_{\max}(\mu)(\boldsymbol\psi_{\max}(\mu)^{\mathsf T}\boldsymbol F)$. Setting it to zero is equivalent to the scalar condition
\begin{equation*}
\boldsymbol\psi_{\max}(\mu)^{\mathsf T}
\boldsymbol F(a\boldsymbol\phi_{\max}(\mu)+\boldsymbol y;\mu)=0.
\end{equation*}
We solve the range equation for $\boldsymbol y$ first, then substitute that solution into the scalar condition to determine $a$.

At the bifurcation point, $\boldsymbol R(0,\boldsymbol 0,0)=\boldsymbol 0$, and its derivative with respect to the noncritical variable is $D_{\boldsymbol y}\boldsymbol R(0,\boldsymbol 0,0)=\boldsymbol J_s$. Here $\boldsymbol J_s$ is the restriction of $\boldsymbol J(0)$ to $\ker(\boldsymbol\psi_{\max}^{\mathsf T})$. The spectral gap makes this restriction invertible, so the implicit function theorem gives a unique local smooth solution $\boldsymbol y=\boldsymbol y(a,\mu)$. Because $\boldsymbol\Pi_s(\mu)\boldsymbol J(\mu)\boldsymbol\phi_{\max}(\mu)=\boldsymbol 0$, this solution has no term linear in $a$ and begins at order $a^2$. Substituting $\boldsymbol y(a,\mu)$ into the range equation gives the leading balance
\begin{equation*}
\boldsymbol  0
=
\boldsymbol  J_s\boldsymbol  y
+
\frac{a^2}{2}
\boldsymbol \Pi_s
\mathcal B_{\mathrm{quad}}(\boldsymbol \phi_{\max},\boldsymbol \phi_{\max})
+
\mathcal O(|a|^3+|\mu|a^2).
\end{equation*}
Thus, the noncritical component of the equilibrium decision state is
\begin{equation}
\boldsymbol  y(a,\mu)
=
-\frac{a^2}{2}
\boldsymbol  J_s^{-1}\boldsymbol \Pi_s
\mathcal B_{\mathrm{quad}}(\boldsymbol \phi_{\max},\boldsymbol \phi_{\max})
+
\mathcal O(|a|^3+|\mu|a^2).
\label{eq:main_range_solution}
\end{equation}
In the leading coefficient of Equation~\eqref{eq:main_range_solution}, the eigenvector, projection, and derivative map are evaluated at $\mu=0$. For a small entry-time bias $\bar{\boldsymbol\tau}$, the same implicit-function argument gives a continuation $\boldsymbol y(a,\mu;\bar{\boldsymbol\tau})$, which reduces to $\boldsymbol y(a,\mu)$ at zero bias. Substituting this noncritical response into the scalar condition defines the \emph{critical equation}:
\begin{equation*}
g(a,\mu;\bar{\boldsymbol \tau})
:=
\boldsymbol \psi_{\max}(\mu)^{\mathsf T}
\boldsymbol  F\left(
a\boldsymbol \phi_{\max}(\mu)
+
\boldsymbol  y(a,\mu;\bar{\boldsymbol\tau});
\mu,\bar{\boldsymbol \tau}
\right)
=
0.
\end{equation*}

\subsubsection{Critical-equation coefficients and parameter dependence}
\label{sec:supp_critical_coefficients}

The linear term in the scalar critical equation is $\alpha\mu a$, where $\alpha=s_1u_\ast=S'(0)u_0/\rho_u>0$, because $\mu=\lambda_{\max}(\boldsymbol A)-\lambda^\ast$. The coefficient of $a^2$ is
\begin{equation}
\label{eq:supp_quadratic_coefficient}
\beta_2
=
\frac{1}{2}
\boldsymbol \psi_{\max}^{\mathsf T}
\mathcal B_{\mathrm{quad}}(\boldsymbol \phi_{\max},\boldsymbol \phi_{\max}).
\end{equation}
This coefficient depends on $u_\ast$ and $s_1$; on the group-tie pressures and weights through the dominant eigenvectors; on the first pressure and softmax variations through $D\omega_i^e(\boldsymbol 0)$; and on the arrival-order and decision-to-speed controller gains through $k_{\mathrm g}$, $\nu$, and $\nu^2$. Thus $T_{\mathrm{sm}}$, $k_{\mathrm{tilt}}$, and the pressure gains enter $\beta_2$ through derivatives of $\omega_i^e$. Define
\begin{equation*}
\boldsymbol  y_2
:=
-\frac{1}{2}
\boldsymbol  J_s^{-1}\boldsymbol \Pi_s
\mathcal B_{\mathrm{quad}}(\boldsymbol \phi_{\max},\boldsymbol \phi_{\max}),
\end{equation*}
so that $\boldsymbol  y(a,\mu)=a^2\boldsymbol  y_2+\mathcal O(|a|^3+|\mu|a^2)$. Substituting the noncritical coefficient $\boldsymbol y_2$ and the trilinear map in Equation~\eqref{eq:supp_cubic_tensor} into the critical equation gives the coefficient of $a^3$:
\begin{equation}
\label{eq:supp_cubic_coefficient}
\begin{aligned}
\beta_3
&=
\frac{1}{6}
\boldsymbol \psi_{\max}^{\mathsf T}
\mathcal C_{\mathrm{cub}}(\boldsymbol \phi_{\max},\boldsymbol \phi_{\max},\boldsymbol \phi_{\max})
+
\boldsymbol \psi_{\max}^{\mathsf T}
\mathcal B_{\mathrm{quad}}(\boldsymbol \phi_{\max},\boldsymbol  y_2)
\\
&=
\frac{1}{6}
\boldsymbol \psi_{\max}^{\mathsf T}
\mathcal C_{\mathrm{cub}}(\boldsymbol \phi_{\max},\boldsymbol \phi_{\max},\boldsymbol \phi_{\max})
\\
&\quad
-
\frac{1}{2}
\boldsymbol \psi_{\max}^{\mathsf T}
\mathcal B_{\mathrm{quad}}
\left(
\boldsymbol \phi_{\max},
\boldsymbol  J_s^{-1}\boldsymbol \Pi_s
\mathcal B_{\mathrm{quad}}(\boldsymbol \phi_{\max},\boldsymbol \phi_{\max})
\right).
\end{aligned}
\end{equation}
The first term projects the cubic term of $\boldsymbol F$ onto the critical direction. The second accounts for the contribution of the noncritical component $a^2\boldsymbol y_2$ through the quadratic term of $\boldsymbol F$. Accordingly, $\beta_3$ depends on the third derivative of the evidence, $S'''(0)$, $\kappa_u/\rho_u$, first and second pressure and softmax variations, $k_{\mathrm g}$ and $k_{\mathrm g}^3$, and the speed coefficients $\nu$, $\nu^2$, and $\chi$.

Continuing the expansion of the scalar critical equation gives higher-order coefficients, which determine whether additional folds occur within its neighborhood of validity. Decision-dependent attention can shift $\beta_3$ and, together with a stabilizing higher-order term, create a finite-amplitude attracting equilibrium branch and hysteresis. Such hysteresis can sustain the selected differentiated response as conflict pressures decrease. Equations~\eqref{eq:supp_quadratic_coefficient} and~\eqref{eq:supp_cubic_coefficient} give the quadratic and cubic contributions to the equilibrium branches for a specified controller. The fast subsystem time constants do not enter these equilibrium coefficients. They govern the rates of decay toward attracting equilibria and growth away from unstable equilibria, and hence whether the subsystem reaches the selected branch's attraction neighborhood before the conflict geometry or dominant interaction eigenvector changes appreciably.

\subsubsection{Critical branches and timing bias}

The equation $g(a,\mu;\bar{\boldsymbol\tau})=0$ determines the equilibrium critical amplitude $a$ for given $\mu$ and projected entry-time offsets $\bar{\boldsymbol\tau}$. Substituting Equation~\eqref{eq:main_range_solution} and the leading bias term $\eta_\tau$ gives
\begin{equation}
0
=
g(a,\mu;\bar{\boldsymbol \tau})
=
\eta_\tau
+\alpha\mu a
+\beta_2a^2
+\beta_3a^3
+\text{higher-order terms}.
\label{eq:supp_critical_equation}
\end{equation}

If $\beta_2\neq0$, the nontrivial branch at zero entry-time bias satisfies $a_*(\mu)=-\alpha\mu/\beta_2+\mathcal O(\mu^2)$. Differentiating the critical equation gives
\begin{equation*}
g_a(0,\mu;\boldsymbol 0)=\alpha\mu,
\qquad
g_a(a_*(\mu),\mu;\boldsymbol 0)=-\alpha\mu+\mathcal O(\mu^2).
\end{equation*}

The noncritical modes of the fast subsystem remain stable under the local controller design. Near the crossing, $g_a>0$ corresponds to growth of a perturbation along the critical mode, and $g_a<0$ corresponds to decay. Thus, for a small $\mu>0$ and zero projected entry-time bias, the collective-indecision equilibrium is unstable, whereas the full fast subsystem equilibrium reconstructed from $a_*(\mu)$ is locally asymptotically stable.

Each root $a^\dagger$ of the scalar critical equation, together with the range solution $\boldsymbol y(a^\dagger,\mu;\bar{\boldsymbol\tau})$, reconstructs an equilibrium decision state in Equation~\eqref{eq:main_differentiated_branch}. The higher-order terms determine how the equilibrium branches bend, saturate, and change stability. At zero projected entry-time bias, $\boldsymbol\phi_{\max}$ has components of both signs, so sufficiently small nonzero equilibria contain both positive and negative decision states. To leading order, the signs of $a^\dagger\phi_{\max,i}$ determine which agents speed up or slow down; the magnitudes of these products and the decision-to-speed gains determine their speed changes. A projected entry-time bias can also produce an equilibrium decision component of order $\|\bar{\boldsymbol\tau}\|$ in the noncritical subspace.

\paragraph{Unfolding by initial timing bias}
At an exact group tie, $\eta_\tau=0$ and collective indecision $a=0$ is an equilibrium. Infinitesimally small initial projected entry-time offsets contribute the constant unfolding term $\eta_\tau$ in Equation~\eqref{eq:supp_critical_equation}. When $\eta_\tau\neq0$, $a=0$ is no longer a root of the displayed leading-order equation, and the equilibrium branches shift and deform. The sign and magnitude of the bias, together with $\mu$ and the nonlinear coefficients, determine the resulting local branch structure. Such timing offsets arise naturally from differences in positions and speeds, asynchronous updates, and sensing and actuation variability. Near collective indecision, an arbitrarily small perturbation with a nonzero component along the unstable critical direction grows exponentially in that component.

Two illustrative scalar dynamics, $\dot a=g_+(a,\mu)$ and $\dot a=g_-(a,\mu)$, show how the sign of the cubic coefficient changes the equilibrium branches. With $\alpha=1$ and $\beta_2=-0.5$, define
\begin{equation*}
\begin{aligned}
g_+(a,\mu)&=\eta_\tau+\mu a-0.5a^2+a^3-0.3a^5,\\
g_-(a,\mu)&=\eta_\tau+\mu a-0.5a^2-a^3.
\end{aligned}
\end{equation*}
In the positive-cubic model, the negative quintic term stabilizes the outer branches; folds join stable and unstable segments and permit multiple stable responses. The negative-cubic model saturates at cubic order. At $\eta_\tau=0$, both exhibit a transcritical exchange of stability at $(\mu,a)=(0,0)$. A positive bias $\eta_\tau=0.05$ unfolds the crossing, removes the zero-amplitude equilibrium, and favors positive amplitude without necessarily eliminating competing stable equilibria. Stability in these scalar illustrations is determined by $g_a<0$. The coefficients of these scalar dynamics illustrate possible equilibrium branches and are not derived from a particular controller.

\subsection{Numerical illustration of bifurcation-driven deconfliction}
\label{sec:numerical_bifurcation_illustration}

Consider a group tie among three agents, labeled $1$, $2$, and $3$, with all three pairs in conflict. Using illustrative interaction matrices and a scalar amplitude model, we show how safety-critical interactions destabilize collective indecision and amplify a small decision asymmetry, selecting an attracting branch that determines which agent passes first in each conflict. We then follow the selected response as the $1$--$3$ conflict clears. The example connects branch selection to traversal order and shows how a cleared conflict changes the collective response.

\emph{From local interactions to a collective response.} Near the group tie, small decision changes alter speeds and relative arrival times, changing the evidence computed by each agent. With conflict-geometry and interaction topology frozen on the fast time scale, let $\delta\boldsymbol z$ denote a decision perturbation and $\delta\boldsymbol h$ its first-order effect on the evidence. Lemma~\ref{lem:jacobian_steady_state} gives $\delta\boldsymbol h=\boldsymbol A\,\delta\boldsymbol z$. The matrix $\boldsymbol A$ is $3\times3$, with rows and columns ordered by agent labels $1,2,3$. Its entry $A_{ij}$ measures the change in agent $i$'s evidence per unit change in agent $j$'s decision, including the associated speed change. For this illustration, take equal softmax denominators $s_1^\ast=s_2^\ast=s_3^\ast$ at the reference group tie, with the denominators defined in Equation~\eqref{eq:supp_spectral_factor_definitions}. Choose a common fractional-speed gain $\nu=V_i'(0)/v_i^{\mathrm{nom}}$, so decision and fractional-speed perturbations satisfy $\delta v_i/v_i^{\mathrm{nom}}=\nu\,\delta z_i$.

We collect the linearized interaction weights in a weighted adjacency matrix $\boldsymbol W$, with $W_{ii}=0$. In this specialization, the reciprocal directed interactions $e=(i,j)$ and $\bar e=(j,i)$ have equal weights, $W_{ij}=W_{ji}=k_{\mathrm g}t^{e,\ast}p^{e,\ast}w_i^{e,\ast}\nu$. This weight combines the conflict pressure $p^{e,\ast}$, the normalized softmax weight $w_i^{e,\ast}$, the shared projected entry time $t^{e,\ast}$, the arrival-order gain $k_{\mathrm g}$, and the fractional-speed gain $\nu$.

Summing the weighted decision differences gives
\begin{equation*}
\delta h_i
=\sum_{j\ne i}W_{ij}(\delta z_i-\delta z_j)
=\left(\sum_{j\ne i}W_{ij}\right)\delta z_i
-\sum_{j\ne i}W_{ij}\delta z_j.
\end{equation*}
The coefficient of a neighbor's decision change is therefore $A_{ij}=-W_{ij}$, while that of the agent's own decision change is $A_{ii}=\sum_{j\ne i}W_{ij}$. Hence $\boldsymbol A=\boldsymbol D-\boldsymbol W$, where $\boldsymbol D$ is the diagonal matrix of weighted degrees. This symmetric weighted Laplacian is a special case of the scaled weighted-Laplacian structure in Proposition~\ref{prop:interaction_sensitivity_spectrum}.

\emph{Before and after one conflict clears.} For schematic sensitivity snapshots, let superscripts $-$ and $+$ denote snapshots before and after clearance of the $1$--$3$ conflict. With all three pressures in the critical band at the reference tie, choose the initial weights $W_{12}^-=1$, $W_{13}^-=0.8$, and $W_{23}^-=0.6$, with equal weights in the reciprocal directions. We illustrate a release-stage snapshot in which $p^{12,+},p^{23,+}\geq p_{\mathrm c}$ on the unresolved conflicts, while $p^{13,+}\leq p_{\mathrm r}$ on the cleared pair. To represent its small residual influence, we reduce the $1$--$3$ weight to $W_{13}^+=W_{31}^+=0.05$ and leave the other weights unchanged. Forming $\boldsymbol A^\pm=\boldsymbol D^\pm-\boldsymbol W^\pm$ gives
\begin{equation*}
\boldsymbol A^-=
\begin{pmatrix}
1.8 & -1 & -0.8\\
-1 & 1.6 & -0.6\\
-0.8 & -0.6 & 1.4
\end{pmatrix},
\qquad
\boldsymbol A^+=
\begin{pmatrix}
1.05 & -1 & -0.05\\
-1 & 1.6 & -0.6\\
-0.05 & -0.6 & 0.65
\end{pmatrix}.
\end{equation*}
For agent~1, the first row of each matrix gives:
\begin{equation*}
\begin{aligned}
\delta h_1^-&=(\delta z_1-\delta z_2)+0.8(\delta z_1-\delta z_3),\\
\delta h_1^+&=(\delta z_1-\delta z_2) +0.05(\delta z_1-\delta z_3).
\end{aligned}
\end{equation*}
The reduced $1$--$3$ weight changes the diagonal entry from $1.8=1+0.8$ to $1.05=1+0.05$. Its nonzero residual value represents the cleared pair's continuing influence while it remains sensed.

\emph{The collective decision pattern.} The largest eigenvalue $\lambda_{\max}$ of $\boldsymbol A$ and its associated eigenvector $\boldsymbol\phi_{\max}$ connect the local interaction sensitivities to a collective decision pattern. When $\lambda_{\max}$ reaches the threshold $\lambda^\ast$, the full fast Jacobian has a zero eigenvalue, and the decision component of the corresponding critical eigenvector is proportional to $\boldsymbol\phi_{\max}$ (Lemma~\ref{lem:jacobian_steady_state}). The positive and negative components of $\boldsymbol\phi_{\max}$ (Proposition~\ref{prop:interaction_sensitivity_spectrum}) describe decision changes in opposite directions across the group.

For the two snapshots, the dominant eigenpairs are
\begin{equation}
\label{eq:supp_example_eigenpairs}
\begin{aligned}
\lambda_{\max}^-&\approx2.746,
&\boldsymbol\phi_{\max}^-&\approx(0.789,-0.577,-0.211)^{\mathsf T},
\\
\lambda_{\max}^+&\approx2.476,
&\boldsymbol\phi_{\max}^+&\approx(0.551,-0.797,0.247)^{\mathsf T}.
\end{aligned}
\end{equation}
Each eigenvector is normalized to unit norm, with its overall sign chosen so that its first component is positive. The three components correspond to agents $1$, $2$, and $3$, respectively. Since $\boldsymbol A^-$ and $\boldsymbol A^+$ are symmetric with zero row sums, their eigenvectors associated with positive eigenvalues lie in the differentiated plane $\mathcal D:=\{\boldsymbol z:z_1+z_2+z_3=0\}$. The change from $\boldsymbol\phi_{\max}^-$ to $\boldsymbol\phi_{\max}^+$ represents a rotation of the dominant direction within this plane.

To interpret this rotation, use the orthonormal basis
\begin{equation*}
\boldsymbol c=\frac{1}{\sqrt{3}}(1,1,1)^{\mathsf T},
\qquad
\boldsymbol {\xi_1}=\frac{1}{\sqrt{2}}(1,-1,0)^{\mathsf T},
\qquad
\boldsymbol {\xi_2}=\frac{1}{\sqrt{6}}(1,1,-2)^{\mathsf T}.
\end{equation*}
The common-mode vector is $\boldsymbol c$, and $\boldsymbol {\xi_1}$ and $\boldsymbol {\xi_2}$ span $\mathcal D=\boldsymbol c^\perp$. The modal coordinates are $\xi_1=(z_1-z_2)/\sqrt{2}$ and $\xi_2=(z_1+z_2-2z_3)/\sqrt{6}$. Along $\boldsymbol {\xi_1}$, agents~1 and~2 have equal and opposite decision components and agent~3 has zero component. Along $\boldsymbol {\xi_2}$, agents~1 and~2 have equal components, opposed by agent~3's component of twice their individual magnitude. In this basis, the dominant eigenvectors are
\begin{equation*}
\begin{aligned}
\boldsymbol\phi_{\max}^-&\approx0.966\boldsymbol {\xi_1}+0.259\boldsymbol {\xi_2},\\
\boldsymbol\phi_{\max}^+&\approx0.953\boldsymbol {\xi_1}-0.302\boldsymbol {\xi_2}.
\end{aligned}
\end{equation*}
Thus, weakening the $1$--$3$ interaction rotates the dominant eigenvector across the $\boldsymbol {\xi_1}$ axis and changes the sign of its $\boldsymbol {\xi_2}$ coordinate. For positive amplitude $a$, agent~3's component of $a\boldsymbol\phi_{\max}$ changes from negative to positive, while agent~2 retains the smallest decision component.

\emph{Branch selection.} Set the instability threshold to $\lambda^\ast=2.2$ and define $\mu=\lambda_{\max}-\lambda^\ast$. The two snapshots give $\mu^-\approx0.546$ and $\mu^+\approx0.276$, placing both above threshold. Let $a$ denote the amplitude of the decision pattern along $\boldsymbol\phi_{\max}$, with the sign convention in Equation~\eqref{eq:supp_example_eigenpairs}. We illustrate its evolution with the scalar model $\dot a=g(a,\mu)=a(\mu-0.65a-a^2)$. The illustrative coefficients $\alpha=1$, $\beta_2=-0.65$, and $\beta_3=-1$ give a cubic truncation of the critical equation~\eqref{eq:supp_critical_equation} with zero projected entry-time bias. The quadratic term gives the branches' unequal amplitude magnitudes, and the negative cubic term provides saturation. The controller-specific coefficients can be computed from the formulas in Supplementary Sec.~\ref{sec:supp_critical_coefficients}.

For $\mu>0$, the equilibrium $a=0$ is unstable, and the two attracting equilibria are
\begin{equation*}
a_{\mathrm{up}}(\mu)=\frac{-0.65+\sqrt{0.65^2+4\mu}}{2},
\qquad
a_{\mathrm{low}}(\mu)=\frac{-0.65-\sqrt{0.65^2+4\mu}}{2}.
\end{equation*}
In this scalar model, a small positive initial amplitude selects the upper branch, and a small negative initial amplitude selects the lower branch.

\emph{Upper branch and traversal order.} At the two snapshots, the upper-branch amplitudes are $a_{\mathrm{up}}^-\approx0.482$ and $a_{\mathrm{up}}^+\approx0.293$. The leading-order relation $\boldsymbol z^\dagger\approx a\boldsymbol\phi_{\max}$ from Equation~\eqref{eq:main_differentiated_branch} then gives
\begin{equation*}
\begin{aligned}
\boldsymbol z_{\mathrm{up}}^-&\approx(0.381,-0.279,-0.102)^{\mathsf T},
\\
\boldsymbol z_{\mathrm{up}}^+&\approx(0.161,-0.234,0.072)^{\mathsf T}.
\end{aligned}
\end{equation*}
The decision ranking is $z_1>z_3>z_2$ at both snapshots. For identical, strictly increasing fractional-speed response maps, the fractional speeds have the same ranking. With the temporal separation required by Proposition~\ref{prop:fixed_component_edge_safety}, this ranking gives the initial traversal order $1\prec3\prec2$.

Once the $1$--$3$ conflict clears, the remaining orders $1\prec2$ and $3\prec2$ allow agents~1 and~3 to progress concurrently. In the upper-branch snapshot after release, both have positive decisions, while agent~2 continues to yield. The rotation of $\boldsymbol\phi_{\max}$ and the decrease in $a_{\mathrm{up}}$ together change the direction and reduce the magnitude of the leading-order decision equilibrium. Under the equilibrium continuation and tracking conditions of Lemma~\ref{prop:smooth_dominant_eigenvector_branch} and Equation~\eqref{eq:supp_branch_tracking_error}, with positive fractional-speed gaps as in Equation~\eqref{eq:edge_fractional_speed_margin}, the decision--attention--speed states follow the evolving equilibrium while preserving the selected orders on both unresolved conflicts.

\emph{Lower-branch choice.} The lower-branch amplitudes are $a_{\mathrm{low}}^-\approx-1.132$ and $a_{\mathrm{low}}^+\approx-0.943$. The corresponding leading-order decision vectors have the ranking $z_2>z_3>z_1$ at both snapshots. Under the same response-map and temporal-clearance conditions, this gives the initial traversal order $2\prec3\prec1$. For comparison, applying the same $1$--$3$ clearance event leaves the orders $2\prec1$ and $2\prec3$, with agent~2 passing first in both remaining conflicts. The actual clearance sequence depends on the selected branch and the resulting motion.

The two branch choices illustrate how decision asymmetry selects a traversal order and how conflict release reshapes the selected response.

\clearpage
\begin{figure}[h!]
\centering
\includegraphics[width=\textwidth,height=0.52\textheight,keepaspectratio]{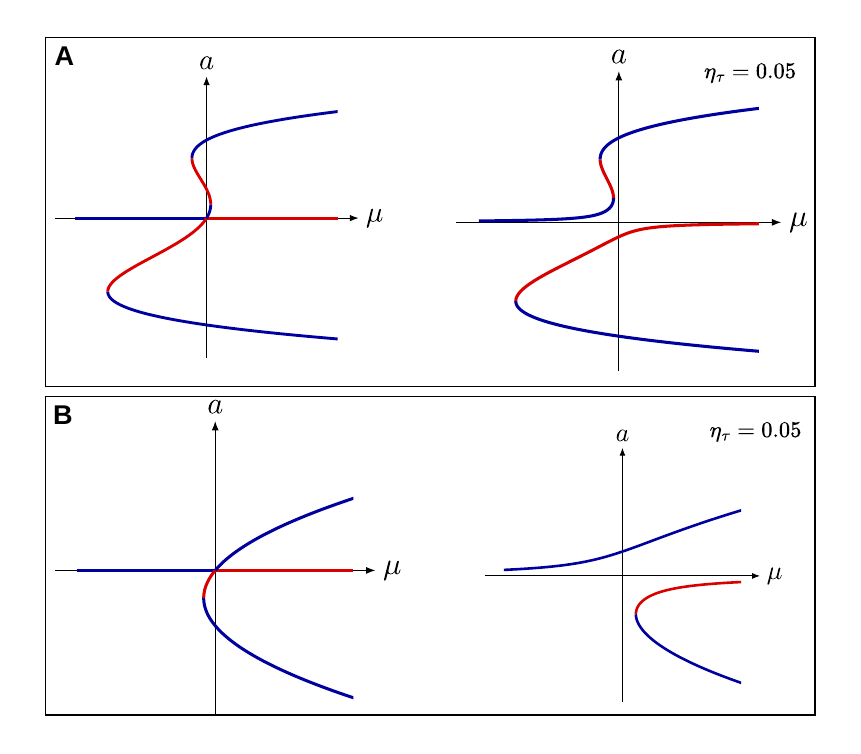}
\caption{\textbf{Bifurcation diagrams and unfolding by timing bias.} Equilibrium branches illustrate the \edit{scalar critical equation~\eqref{eq:supp_critical_equation}} with $\alpha=1$ and $\beta_2=-0.5$. \edit{The horizontal axis shows $\mu=\lambda_{\max}-\lambda^\ast$, the offset of the dominant eigenvalue of the interaction-sensitivity matrix $\boldsymbol A$ from the instability threshold $\lambda^\ast$. The vertical axis shows $a$, the signed amplitude of the differentiated decision pattern along the dominant eigenvector $\boldsymbol\phi_{\max}$; its equilibrium value $a^\dagger$ appears in Equation~\eqref{eq:main_differentiated_branch}.} Solid blue and solid red denote stable ($g_a<0$) and unstable ($g_a>0$) equilibria, respectively, where $g_a:=\partial g/\partial a$ is evaluated at each equilibrium. \textbf{(A)} The positive-cubic model $g=\eta_\tau+\mu a-0.5a^2+a^3-0.3a^5$ includes a negative quintic term that stabilizes the outer branches. Folds connect stable and unstable segments, allowing multiple stable responses to coexist. \textbf{(B)} The negative-cubic model $g=\eta_\tau+\mu a-0.5a^2-a^3$ provides cubic saturation without a quintic term. \textbf{Left column:} With timing bias $\eta_\tau=0$, the nonzero quadratic coefficient produces a transcritical exchange of stability at $(\mu,a)=(0,0)$ in both models. \textbf{Right column:} A positive timing bias $\eta_\tau=0.05$ unfolds the crossing, removes the equilibrium at $a=0$, and biases the response toward positive amplitude without eliminating all competing stable equilibria. These coefficients illustrate possible branch geometries and are not derived for a specific controller.}
\label{fig:supp_cubic_bifurcation_comparison}
\end{figure}

\clearpage

\subsection*{Notation reference}
\begingroup
\footnotesize
\linespread{1}\selectfont
\setlength{\LTcapwidth}{\textwidth}
\renewcommand{\arraystretch}{1.12}
\begin{longtable}{p{0.26\textwidth}p{0.66\textwidth}}
\caption{\textbf{Notation: agent states and controller parameters.} Bold symbols denote vectors or matrices; a dagger denotes a differentiated fast subsystem equilibrium.}\label{tab:principal_notation}\\
\hline
\textbf{Symbol} & \textbf{Definition}\\
\hline
\endfirsthead
\multicolumn{2}{l}{\textbf{Table~\thetable\ (continued)}}\\
\hline
\textbf{Symbol} & \textbf{Definition}\\
\hline
\endhead
\hline
\multicolumn{2}{r}{Continued on next page}\\
\endfoot
\hline
\endlastfoot
$i,j$; $f,\ell$ & Agent indices; instantaneous follower and leader, respectively, in an ordered interaction.\\
$\boldsymbol q_i$, $\boldsymbol v_i$ & Agent $i$'s position and velocity vector; $v_i$ is the scalar traversal speed.\\
$z_i$, $u_i$ & Agent $i$'s decision and attention states. Positive decisions favor speeding up; negative decisions favor slowing down.\\
$v_i^{\mathrm{nom}}$, $v_i^{\mathrm{cmd}}$ & Agent $i$'s nominal and commanded traversal speeds.\\
$\underline v_i$, $\overline v_i$ & Lower and upper admissible traversal speeds of agent $i$; $\overline v_i$ bounds realized motion, including tracking and actuation transients.\\
$\boldsymbol x=(\boldsymbol z,\boldsymbol u,\boldsymbol v)$ & State of the fast decision--attention--speed subsystem, comprising decisions, attention states, and realized traversal speeds.\\
$\mathcal X_i$, $\mathcal X_i^e$ & Agent $i$'s local observation state and its pairwise observations for interaction $e$.\\
$\mathcal X_{\boldsymbol z}$ & Observations evaluated at speeds $\boldsymbol v^{\mathrm{nom}}+\boldsymbol V(\boldsymbol z)$, with conflict geometry and interaction topology frozen on the fast time scale.\\
$h_i=h(\mathcal X_i)$ & Aggregate signed conflict evidence driving agent $i$'s decision.\\
$S$, $s_{\max}$ & Bounded odd decision sigmoid and its maximum absolute magnitude, $s_{\max}=\sup_x|S(x)|$.\\
$V_i(z_i)$ & Agent $i$'s decision-dependent speed adjustment, with $v_i^{\mathrm{cmd}}=v_i^{\mathrm{nom}}+V_i(z_i)$.\\
$\tau_z$, $\tau_u$, $\tau_v$ & Decision, attention, and speed-tracking time-scale parameters.\\
$\rho_z$, $\rho_u$ & Decision and attention damping parameters; the corresponding relaxation times are $\tau_z/\rho_z$ and $\tau_u/\rho_u$.\\
$u_0$, $\kappa_u$ & Basal attention input and decision-dependent attention reinforcement gain; basal attention is $u_0/\rho_u$.\\
$\kappa_V$, $\kappa_z$ & Maximum fractional-speed departure and decision sensitivity in the saturating speed-command law.\\
$z_{\max}$, $z_{\mathrm{loc}}$ & Bound $z_{\max}=s_{\max}/\rho_z$ defining the invariant interval $[-z_{\max},z_{\max}]$, and radius of the prescribed local decision interval.\\
$z_f^{\mathrm{stop}}$ & Follower decision threshold at or below which a stopping command is available.\\
\end{longtable}

\begin{longtable}{p{0.26\textwidth}p{0.66\textwidth}}
\caption{\textbf{Notation: interaction structure and conflict evidence.}}\label{tab:notation_interactions}\\
\hline
\textbf{Symbol} & \textbf{Definition}\\
\hline
\endfirsthead
\multicolumn{2}{l}{\textbf{Table~\thetable\ (continued)}}\\
\hline
\textbf{Symbol} & \textbf{Definition}\\
\hline
\endhead
\hline
\multicolumn{2}{r}{Continued on next page}\\
\endfoot
\hline
\endlastfoot
$\mathcal N_i(t)$, $\mathcal E_i(t)$ & Sensed neighbors and sensed directed interactions of agent $i$; $\mathcal E_i=\{(i,j):j\in\mathcal N_i\}$.\\
$e=(i,j)$, $\bar e=(j,i)$ & Reciprocal directed interactions. Under the edge convention $i\rightarrow j$, agent $j$'s motion influences agent $i$'s evidence.\\
$\mathcal P(t)$, $\mathcal E^{\mathrm{conf}}(t)$ & Unordered physical-conflict pairs and their associated directed interactions. These exclude cleared pairs, even if still sensed.\\
$\mathcal G(t)$, $\mathcal V(t)$ & Weighted directed physical-conflict graph and its agent set.\\
$\mathcal C(t)$, $\mathcal G^c(t)$ & Index set of strongly connected components of the conflict graph and component $c$. Each component is a locally engaged group.\\
$\mathcal V^c$, $\mathcal P^c$ & Agents and unordered physical-conflict pairs in group $c$.\\
$M_{\max}$ & Uniform bound on each agent's number of sensed interactions.\\
$p_i^e=p^e=P(\mathcal X_i^e)$ & Pair-invariant conflict pressure; $p_i^e=p_j^{\bar e}$ and $0<p^e\leq\bar p<1$ for the chosen pressure map.\\
$\bar p$, $p_{\mathrm c}$, $p_{\mathrm w}$, $p_{\mathrm r}$ & Uniform pressure upper bound, critical-pressure lower bound, weak-pressure upper bound, and release-pressure upper bound; $0<p_{\mathrm r}\leq p_{\mathrm w}<p_{\mathrm c}\leq\bar p<1$.\\
$\sigma(x;k)$ & Logistic sigmoid $(1+e^{-kx})^{-1}$ with gain $k>0$, used in the pressure map.\\
$\hat{p}^e$ & Arrival-order-tilted pressure for directed interaction $e=(i,j)$: $p_i^e[1-k_{\mathrm{tilt}}g_i^e/(1-p_i^e)]$, used as the softmax score.\\
$w_i^e=W(\mathcal X_i^e;\mathcal X_i)$ & Normalized importance of interaction $e$ among agent $i$'s sensed interactions.\\
$g_i^e=G(\mathcal X_i^e)$ & Signed arrival-order value: positive favors agent $i$ proceeding, negative favors yielding, and zero denotes a projected entry-time tie.\\
$\omega_i^e=p^e w_i^e$ & Agent $i$'s directed interaction weight on $e$; the signed evidence contribution is $\omega_i^e g_i^e$.\\
$k_{\mathrm d}$, $k_{\mathrm t}$, $k_{\mathrm g}$ & Spatial-pressure, temporal-pressure, and arrival-order sigmoid gains.\\
$T_{\mathrm{sm}}$, $k_{\mathrm{tilt}}$ & Softmax temperature and arrival-order pressure-tilt gain.\\
\end{longtable}

\begin{longtable}{p{0.26\textwidth}p{0.66\textwidth}}
\caption{\textbf{Notation: conflict geometry, timing, and safety.}}\label{tab:notation_geometry}\\
\hline
\textbf{Symbol} & \textbf{Definition}\\
\hline
\endfirsthead
\multicolumn{2}{l}{\textbf{Table~\thetable\ (continued)}}\\
\hline
\textbf{Symbol} & \textbf{Definition}\\
\hline
\endhead
\hline
\multicolumn{2}{r}{Continued on next page}\\
\endfoot
\hline
\endlastfoot
$d^{\mathrm{safe}}$, $R_{\mathrm{sens}}$ & Required minimum spatial separation and sensing radius.\\
$\boldsymbol q_{\mathrm c}^e$, $\alpha^e$ & Projected crossing point and crossing angle.\\
$r_{\mathrm c}^e$ & Half-length of a contested segment, $d^{\mathrm{safe}}/\sin\alpha^e$.\\
$d_{\mathrm{ca}}^e$, $t_{\mathrm{ca}}^e$ & Projected closest-approach distance and time.\\
$d_{\mathrm{prox}}$, $t_{\mathrm{prox}}$ & Spatial and temporal proximity buffers in the pressure map.\\
$d_{k^-}^e$ & Agent $k$'s remaining distance to its contested-segment entry boundary.\\
$t^e_{k^-}$, $t^e_{k^+}$ & Projected entry and exit times for agent $k$'s contested segment.\\
$t^e_{k^+}-t_{\mathrm{ca}}^e$ & Projected exit-time offset relative to closest approach; its rate is $\dot t^e_{k^+}-\dot t_{\mathrm{ca}}^e$.\\
$L_{\mathrm d}$, $L_{\mathrm t}$ & Upper bounds on $|\dot d_{\mathrm{ca}}^e|$ and $|\dot t^e_{k^+}-\dot t_{\mathrm{ca}}^e|$, respectively.\\
$L_p(k_{\mathrm d},k_{\mathrm t})$, $\varepsilon_p$ & Bound on the pressure rate $|\dot p^e|$ and allowed pressure change during negotiation, respectively.\\
$\mathcal I_k^e$ & Projected contested-segment occupancy interval $[t^e_{k^-},t^e_{k^+}]$.\\
$i\prec_e j$ & Traversal order in which agent $i$ clears its contested segment before agent $j$ enters its own.\\
$\pi_{i\prec j}^e$ & Temporal-clearance gap $t^e_{j^-}-t^e_{i^+}$ between agent $i$ leaving its contested segment and agent $j$ entering its own.\\
$\Delta_t^e$ & Total temporal-clearance design buffer for nominal or equilibrium predictions; strictly exceeds the certified maximum loss of temporal separation.\\
$\mathcal V_{\mathrm{safe}}^c$, $\mathcal Z_{\mathrm{safe}}^c$ & Group speed and decision sets with positive pairwise temporal-clearance gaps.\\
$T^{\mathrm{neg}}$, $\mathcal I^{\mathrm{neg}}$ & Negotiation horizon for establishing a traversal order and interval $[t_0,t_0+T^{\mathrm{neg}}]$, starting at conflict detection $t_0$.\\
$T_{\mathrm{fast}}$ & Characteristic response time of the decision--attention--speed subsystem, $\max\{\tau_z/\rho_z,\tau_u/\rho_u,\tau_v\}$.\\
$\varepsilon_{\mathrm{trav}}$ & Chosen small upper bound on the fraction of each agent's remaining distance to its contested segment traveled during negotiation; sets the detection lead needed for time-scale separation.\\
$T_{\mathrm{trav}}^e$, $T_{\mathrm{trav}}$ & Lower bound on the time from detection until either agent reaches its contested segment: $T_{\mathrm{trav}}^e=\min_{k\in\{i,j\}}d_{k^-}^e(t_0)/\overline v_k$ for a pair; $T_{\mathrm{trav}}$ is the corresponding group minimum.\\
$t_{\mathrm{safe}}^e(a)$ & Sufficient lower bound on $d_{f^-}^e/v_f^{\mathrm{nom}}$ at order formation for the selected equilibrium speeds to provide temporal separation $\Delta_t^e$ and positive clearance after errors.\\
$t_{\mathrm{safe}}^{e,k}$ & $t_{\mathrm{safe}}^e(a)$ evaluated on the candidate branch where $k$ follows; uncertainty allowances are included through $\Delta_t^e$.\\
$t_{\mathrm{req}}^e$ & Lower bound~\eqref{eq:complete_required_detection_time} on nominal detection lead, combining negotiation, yielding, stopping, temporal clearance, and time-scale separation.\\
\end{longtable}

\begin{longtable}{p{0.26\textwidth}p{0.66\textwidth}}
\caption{\textbf{Notation: spectral and bifurcation quantities.}}\label{tab:notation_spectral}\\
\hline
\textbf{Symbol} & \textbf{Definition}\\
\hline
\endfirsthead
\multicolumn{2}{l}{\textbf{Table~\thetable\ (continued)}}\\
\hline
\textbf{Symbol} & \textbf{Definition}\\
\hline
\endhead
\hline
\multicolumn{2}{r}{Continued on next page}\\
\endfoot
\hline
\endlastfoot
$\boldsymbol A$ & Sensitivity of conflict evidence to decision-induced speed changes at frozen conflict geometry and interaction topology; not a frozen interaction-weight matrix.\\
$t^{e,\ast}$, $\bar t$, $\underline t_{\mathrm c}$ & Common projected time until the agents reach their respective contested segments at a group tie, its absolute upper bound, and its positive lower bound for safety-critical reference observations, respectively.\\
$\lambda_{\max}$, $\boldsymbol\phi_{\max}$ & Dominant eigenvalue of $\boldsymbol A$ and associated eigenvector defining the leading-order collective decision pattern.\\
$\lambda^\ast$ & Spectral threshold for loss of stability of collective indecision.\\
$\mu=\lambda_{\max}-\lambda^\ast$ & Distance from the spectral threshold in the local critical equation.\\
$a$, $\boldsymbol x^\dagger$ & Critical-mode amplitude and differentiated equilibrium of the fast subsystem with state $(\boldsymbol z,\boldsymbol u,\boldsymbol v)$.\\
$\eta_\tau$ & Unfolding term contributed by projected entry-time offsets in the local critical equation.\\
$\vartheta_i^\dagger(a)$ & Equilibrium fractional speed $v_i^\dagger(a)/v_i^{\mathrm{nom}}$.\\
$\nu_i$, $\eta_i$ & Agent $i$'s local fractional-speed gain $V_i'(0)/v_i^{\mathrm{nom}}$ and critical-mode coefficient $\nu_i\phi_{\max,i}$; $\eta_i$ is distinct from $\eta_\tau$.\\
$\delta_\vartheta(a)$ & Minimum absolute equilibrium fractional-speed difference across unresolved physical-conflict pairs in a group.\\
\end{longtable}
\endgroup

\clearpage
\section*{{Captions for supplementary movies}}


\noindent{\textbf{Movie S1. Antipodal exchange in simulation.}
Simulated agents approach from positions around a circle and traverse a shared central region. The left panel shows agent motion and recent trajectories; the upper and lower right panels show decision and speed histories, respectively, over the displayed simulation time.}
\par\medskip

\noindent{\textbf{Movie S2. Multiple intersecting paths in simulation.}
Simulated agents traverse intersecting paths from multiple approach directions. The left panel shows agent motion and recent trajectories; the upper and lower right panels show the corresponding decision and speed histories, respectively, over the displayed simulation time.}
\par\medskip

\noindent{\textbf{Movie S3. A larger group traversing intersecting paths in simulation.}
Simulated agents distributed along the sides of a shared workspace move through its interior on intersecting paths. The left panel shows agent motion and recent trajectories; the upper and lower right panels show the corresponding decision and speed histories, respectively, over the displayed simulation time.}
\par\medskip

\noindent{\textbf{Movie S4. Crowded passage through a shared open area in simulation.}
Simulated agents initially distributed around an open area move through its shared interior and disperse toward the perimeter. The left panel shows agent motion and recent trajectories; the upper and lower right panels show the corresponding decision and speed histories, respectively, over the displayed simulation time.}
\par\medskip

\noindent{\textbf{Movie S5. Crossing and turning paths in a simulated road network.}
Simulated agents traverse a road network along straight and turning paths. The left panel shows agent motion and recent trajectories; the upper and lower right panels show the corresponding decision and speed histories, respectively, over the displayed simulation time.}
\par\medskip

\noindent{\textbf{Movie S6. Antipodal crossing with six robots, trial 1.}
Six TurtleBot3 robots use neuromorphic decision dynamics (NDD) to adjust forward speed along prescribed paths during an antipodal crossing. The robots coordinate passage without inter-robot communication or preassigned priorities. The nominal forward speed is $0.2\,\mathrm{m/s}$.}
\par\medskip

\noindent{\textbf{Movie S7. Antipodal crossing with six robots, trial 2.}
A second six-TurtleBot3 antipodal crossing trial shows coordinated passage under NDD. Each robot adjusts its forward speed along a prescribed path without inter-robot communication or preassigned priorities. The nominal forward speed is $0.2\,\mathrm{m/s}$.}
\par\medskip

\noindent{\textbf{Movie S8. Orthogonal crossings with six robots.}
Six TurtleBot3 robots negotiate orthogonal crossings using NDD. Continuous adjustments of forward speed coordinate passage along prescribed paths without inter-robot communication. The nominal forward speed is $0.2\,\mathrm{m/s}$.}
\par\medskip

\noindent{\textbf{Movie S9. Mixed-lane crossings with six robots.}
Six TurtleBot3 robots use NDD to coordinate passage through multiple intersections in a mixed-lane arrangement. The robots adjust forward speed along prescribed paths without inter-robot communication. The nominal forward speed is $0.2\,\mathrm{m/s}$.}
\par\medskip

\end{document}